\newcommand{\arxiv}[1]{\iftoggle{iclr}{}{#1}}
\newcommand{\loose}{\looseness=-1}
\newcommand{\neutralize}[1]{\expandafter\let\csname c@#1\endcsname\count@}
\newenvironment{thmmod}[2]
  {%
   \neutralize{theorem}\phantomsection
   \begin{theorem}}
  {\end{theorem}}
\declaretheorem[name=Theorem,parent=section]{theorem}
\declaretheorem[name=Lemma,parent=section]{lemma}
\declaretheorem[name=Assumption, parent=section]{assumption}
\declaretheorem[name=Definition, parent=section]{definition}
\declaretheorem[name=Condition, parent=section]{condition}
\declaretheorem[name=Corollary, parent=section]{corollary}
\declaretheorem[name=Remark, parent=section]{remark}
\declaretheorem[name=Proposition, parent=section]{proposition}
\newcommand{\pfref}[1]{Proof of \cref{#1}}
\renewcommand{\eqref}[1]{\texorpdfstring{\hyperref[#1]{(\ref*{#1})}}{(\ref*{#1})}}
\Crefname{assumption}{Assumption}{Assumptions}
\crefname{fact}{Fact}{Facts}
    \let\Cref\crtCref
    \let\cref\crtcref
\renewenvironment{proof}[1][Proof]%
{%
  \par\noindent{\bfseries\upshape {#1.}\ }%
}%
{\qed\newline}
\xpatchcmd{\proof}{\itshape}{\normalfont\proofnameformat}{}{}
\newcommand{\proofnameformat}{\bfseries}
\newtcolorbox{mainbox}[1]{
  colframe=blue!10!black,
  colbacktitle=blue!50!black!30!white,
  colback=blue!2!white,
  enhanced,
  fonttitle=\bfseries,
  attach boxed title to top left={yshift=-2.5mm},
  boxed title style={size=small,colframe=blue!40!black,colback=blue!40!black},
  title={\small\textcolor{white}{\textsc{#1}}}
}
\newtcolorbox{minbox}[1]{
  colframe=blue!10!black,
  colbacktitle=blue!50!black!30!white,
  colback=blue!2!white,
  enhanced,
  fonttitle=\bfseries,
}
\let\OldStatex\Statex
\renewcommand{\Statex}[1][3]{%
  \setlength\@tempdima{\algorithmicindent}%
  \OldStatex\hskip\dimexpr#1\@tempdima\relax}
\DeclarePairedDelimiter{\abs}{\lvert}{\rvert} %
\DeclarePairedDelimiter{\brk}{[}{]}
\DeclarePairedDelimiter{\crl}{\{}{\}}
\DeclarePairedDelimiter{\prn}{(}{)}
\DeclarePairedDelimiter{\nrm}{\|}{\|}
\DeclarePairedDelimiter{\tri}{\langle}{\rangle}
\DeclareMathOperator{\En}{\mathbb{E}}
\DeclareMathOperator*{\argmin}{arg\,min} %
\DeclareMathOperator*{\argmax}{arg\,max}
\newcommand{\wt}[1]{\widetilde{#1}}
\newcommand{\wh}[1]{\widehat{#1}}
\newcommand{\wb}[1]{\widebar{#1}}
\def\ddefloop#1{\ifx\ddefloop#1\else\ddef{#1}\expandafter\ddefloop\fi}
\def\ddef#1{\expandafter\def\csname bb#1\endcsname{\ensuremath{\mathbb{#1}}}}
\def\ddefloop#1{\ifx\ddefloop#1\else\ddef{#1}\expandafter\ddefloop\fi}
\def\ddef#1{\expandafter\def\csname b#1\endcsname{\ensuremath{\mathbf{#1}}}}
\def\ddef#1{\expandafter\def\csname sf#1\endcsname{\ensuremath{\mathsf{#1}}}}
\def\ddef#1{\expandafter\def\csname c#1\endcsname{\ensuremath{\mathcal{#1}}}}
\def\ddef#1{\expandafter\def\csname h#1\endcsname{\ensuremath{\widehat{#1}}}}
\def\ddef#1{\expandafter\def\csname hc#1\endcsname{\ensuremath{\widehat{\mathcal{#1}}}}}
\def\ddef#1{\expandafter\def\csname t#1\endcsname{\ensuremath{\widetilde{#1}}}}
\def\ddef#1{\expandafter\def\csname tc#1\endcsname{\ensuremath{\widetilde{\mathcal{#1}}}}}
\def\ddefloop#1{\ifx\ddefloop#1\else\ddef{#1}\expandafter\ddefloop\fi}
\def\ddef#1{\expandafter\def\csname scr#1\endcsname{\ensuremath{\mathscr{#1}}}}
\newcommand{\ind}{\mathbbm{1}}    %
\newcommand{\eps}{\epsilon}
\newcommand{\veps}{\varepsilon}
\newcommand{\ldef}{\vcentcolon=}
\newcommand{\rdef}{=\vcentcolon}
\newcommand{\ratecoarse}{\cC_{\texttt{coarse}}(\Pi,N,n)}
\newcommand{\ratefine}{\cC_{\texttt{fine}}(\Pi,n)}
\newcommand{\leqsim}{\approxleq}
\newcommand{\varname}{inherent variance\xspace}
\newcommand{\phibar}[1][\pistar]{\widebar{\phi}_{#1}}
\newcommand{\phith}[1][\theta]{\widebar{\phi}_{#1}}
\newcommand{\thGD}{\vartheta^{\texttt{TTT}}}
\newcommand{\piGD}{\pi^{\texttt{TTT}}}
\newcommand{\Wmax}{W_{\texttt{max}}}
\newcommand{\pibon}[1][N]{\pihat^{\texttt{BoN}}_{#1}}
\newcommand{\passn}[1][N]{\mathsf{Pass@}{#1}}
\newcommand{\iidsim}{\overset{\text{i.i.d.}}{\sim}}
\newcommand{\cdist}{\mathcal{\mu}}
\newcommand{\covname}{coverage profile\xspace}
\newcommand{\Covname}{Coverage Profile\xspace}
\newcommand{\pidata}{\pi_{\texttt{D}}}
\newcommand{\pistar}{\pidata}
\newcommand{\rstar}{r_{\texttt{T}}}
\newcommand{\pitask}{\pi_{\texttt{T}}}
\newcommand{\data}{\texttt{D}}
\newcommand{\task}{\texttt{T}}
\newcommand{\yx}[1][{}]{y\sups{#1}\mid x\sups{#1}}
\newcommand{\hth}{\widehat{\theta}}
\newcommand{\pos}[1]{\brk*{#1}_+}
\newcommand{\ths}{\theta^\star}
\newcommand{\sigs}{\sigma_\star}
\newcommand{\Varpist}[1]{\Var_{\pistar}(#1)}
\newcommand{\Dseq}[2]{D_{\mathsf{seq},N}\prn*{#1\,\|\,#2}}
\newcommand{\yh}[1][h]{y_{1:#1}}
\newcommand{\xyh}[1][h]{x,y_{1:#1}}
\newcommand{\xyhm}{\xyh[h-1]}
\newcommand{\cDmini}{\cD}
\newcommand{\Dxyh}[4][{h-1}]{#2{#3(\cdot\mid\xyh[#1])}{#4(\cdot\mid\xyh[#1])}}
\newcommand{\KLxyh}[3][{h-1}]{\Dkl{#2(\cdot\mid\xyh[#1])}{#3(\cdot\mid\xyh[#1])}}
\newcommand{\pcov}[1][{\N}]{\texttt{Cov}_{#1}}
\newcommand{\Pcov}[1][{\N}]{\texttt{Cov}_{#1}}
\newcommand{\Cmp}[1][{\N}]{\smash{\wh{\texttt{Cov}}}_{#1}} %
\newcommand{\Cmpvv}[2]{\Pcov[#1]^{#2}}
\newcommand{\covinf}{\cN_\infty}
\newcommand{\DEgamma}[3][M]{\cE_{#1}\prn*{#2\,\|\,#3}}
\newcommand{\cSM}[1][N]{\cS_{#1}}
\newcommand{\epsapp}{\veps_{\texttt{apx}}}
\newcommand{\Lhat}{\wh{L}_n}
\newcommand{\Lpos}[1][C]{L_{#1}^+}
\newcommand{\Lneg}[1][C]{L_{#1}^-}
\newcommand{\whp}[1][\delta]{with probability at least $1-#1$}
\newcommand{\Df}[2]{D_{f}\prn*{#1\dmid{}#2}}
\newcommand{\gbar}{\wb{g}}
\newcommand{\Unif}{\mathsf{Unif}}
\newcommand{\phistar}{\phi^{\star}}
\newcommand{\pibar}{\wb{\pi}}
\newcommand{\var}{\mathrm{Var}}
\newcommand{\Var}{\var}
\newcommand{\filt}{\mathscr{F}}
\newcommand{\M}[1]{^{{\scriptscriptstyle M}}}  %
\newcommand{\sups}[1]{^{{\scriptscriptstyle#1}}}
\newcommand{\pihat}{\wh{\pi}}
  \newcommand{\ghat}{\wh{g}}
\newcommand{\thetahat}{\wh{\theta}}
\newcommand{\approxleq}{\lesssim}
\newcommand{\approxgeq}{\gtrsim}
\newcommand{\Id}{I}
\renewcommand{\ind}[1]{^{{\scriptscriptstyle#1}}}
\newcommand{\bigoh}{O}
\newcommand{\bigoht}{\wt{O}}
\newcommand{\bigom}{\Omega}
\newcommand{\bigthetat}{\wt{\Theta}}
\newcommand{\indic}{\mathbb{I}}
\newcommand{\poly}{\mathrm{poly}}
\newcommand{\kl}[2]{D_{\mathsf{KL}}\prn*{#1\,\|\,#2}}
\newcommand{\Dkl}[2]{D_{\mathsf{KL}}\prn*{#1\,\|\,#2}}
\newcommand{\Dce}[2]{D_{\mathsf{CE}}\prn*{#1\,\|\,#2}}
\newcommand{\Dcov}[3][{\N}]{\Pcov[#1]\prn*{#2\,\|\,#3}}
\newcommand{\Dhel}[2]{D_{\mathsf{H}}\prn*{#1,#2}}
\newcommand{\Dhels}[2]{D^{2}_{\mathsf{H}}\prn*{#1,#2}}
\newcommand{\Ber}{\mathrm{Ber}}
\newcommand{\dmid}{\;\|\;}
\def\multiset#1#2{\ensuremath{\left(\kern-.3em\left(\genfrac{}{}{0pt}{}{#1}{#2}\right)\kern-.3em\right)}}
\newcommand{\grad}{\nabla}
\newcommand{\iid}{i.i.d.\xspace}
\newcommand{\phat}{\wh{p}}
\DeclareMathOperator*{\EE}{\mathbb{E}}
\newcommand{\RR}{\mathbb{R}}
\newcommand{\st}{\star}
\newcommand{\Alg}{\texttt{Alg}}
\newcommand{\epstat}{\varepsilon_{\mathsf{stat}}}
\DeclareMathOperator{\Proj}{Proj}
\newcommand{\ball}{\mathbb{B}_2}
\newcommand{\pstar}{p^\st}
 \newcommand{\dfc}[1]{}
 \newcommand{\smc}[1]{}
 \newcommand{\ah}[1]{}
\let\oldparagraph\paragraph
\renewcommand{\paragraph}[1]{\oldparagraph{#1.}}
\newcommand{\textbfc}[1]{\textbf{\textcolor{blue!40!black}{#1}}}
\g@addto@macro\appendix{%
  \crefalias{section}{appendixsection}%
  \crefalias{subsection}{appendixsubsection}%
  \crefalias{subsubsection}{appendixsubsubsection}%
}
\crefname{appendixsection}{Appendix}{Appendices}
\Crefname{appendixsection}{Appendix}{Appendices}
\crefname{appendixsubsection}{Appendix}{Appendices}
\Crefname{appendixsubsection}{Appendix}{Appendices}
\crefname{appendixsubsubsection}{Appendix}{Appendices}
\Crefname{appendixsubsubsection}{Appendix}{Appendices}
\title{\huge The Coverage Principle: \\How Pre-Training Enables Post-Training}
  \author{}
  \date{}
\begin{document}

\maketitle

\arxiv{ 
\vspace{-4em}
\begin{center}
\large
\setlength{\tabcolsep}{10pt}
\begin{tabular}{cccc}
\makecell{Fan Chen\footnotemark[1]}
&
\makecell{Audrey Huang\footnotemark[2]}
&
\makecell{Noah Golowich\footnotemark[3]}
&
\makecell{Sadhika Malladi\footnotemark[3]}
\end{tabular}\vspace{1em}
\begin{tabular}{cccc}
\makecell{Adam Block\footnotemark[4]}
&
\makecell{Jordan T. Ash\footnotemark[3]}
&
\makecell{Akshay Krishnamurthy\footnotemark[3]}
&
\makecell{Dylan J. Foster\footnotemark[3]}
\end{tabular}
\end{center}
\vspace{2em}

\footnotetext[1]{MIT, \texttt{fanchen@mit.edu}. Work partially completed during an internship at Microsoft Research.}
\footnotetext[2]{UIUC, \texttt{audreyh5@illinois.edu}. Work partially completed during an internship at Microsoft Research.}
\footnotetext[3]{Microsoft Research NYC, \texttt{nzg@mit.edu}, \texttt{sadhika.malladi98@gmail.com}, \texttt{\{ash.jordan, akshaykr, dylanfoster\}@microsoft.com}}
\footnotetext[4]{Columbia University, \texttt{adam.block@columbia.edu}}
}

\begin{abstract}
Language models demonstrate remarkable abilities when pre-trained on large text corpora and fine-tuned for specific tasks, but how and why pre-training shapes the success of the final model remains poorly understood.
Notably, although pre-training success is often quantified by cross-entropy loss, cross-entropy can be a poor predictor of downstream performance. Instead, we provide a theoretical perspective on this relationship
 through the lens of \emph{coverage}, which quantifies the probability mass the pre-trained model places on high-quality responses and which is necessary and sufficient for post-training and test-time scaling methods such as Best-of-N to succeed. 
Our main results develop an understanding of \emph{the coverage principle}, a phenomenon whereby next-token prediction (more generally, maximum likelihood) implicitly optimizes toward a model with good coverage.
In particular, we uncover a mechanism that explains the power of coverage in predicting downstream performance: \emph{coverage generalizes faster than cross-entropy}, avoiding spurious dependence on problem-dependent parameters such as the sequence length.
We also study practical algorithmic interventions with provable benefits for improving coverage, including (i) model/checkpoint selection procedures, (ii) gradient normalization schemes, and (iii) test-time decoding strategies.

\end{abstract}

\section{Introduction}
\label{sec:intro}
The remarkable capabilities of language models stem from a two-stage training process: (1) large-scale pre-training via next-token prediction with the cross-entropy loss (predicting what token should follow a prefix) and (2) targeted post-training---typically via reinforcement learning---to adapt the model to specific domains and tasks. 
Investing more compute and data into pre-training often enables post-training to produce a stronger model, but theoretical understanding of how these stages interact is limited. 
Indeed, despite substantial investment into scaling pre-training~\citep{gadre2025language,sardana2024beyond,hoffmann2022training}, several works have demonstrated that starting post-training from a better next-token predictor does not ensure stronger performance on downstream tasks~\citep{liu2022same,zeng2025can,chen2025rethinking,lourie2025scaling\arxiv{,springer2025overtrained}}. Motivated by this disconnect, we theoretically investigate the connection between pre-training objectives and downstream success, asking:\loose
\begin{center}
\emph{Can we precisely characterize the relationship between the next-token prediction loss and downstream performance? 
    What metrics are most predictive of downstream success?}
\end{center}

Motivated by the recent interest in test-time scaling, we focus our attention on post-training via Best-of-$N$ (BoN) sampling or reinforcement learning with verifiable rewards. For a prompt $x$, Best-of-N draws $N$ responses $y$ from the model and returns the best response according to a task-specific reward.
Several prior works have demonstrated that the performance of BoN is strongly indicative of how well the model will perform after post-training via reinforcement learning~\citep{yue2025does,wu2025invisible}.\loose

Our starting point is the observation that cross-entropy alone cannot provide meaningful answers to the questions above; see \cref{fig:teaser}, which illustrates that cross-entropy can be \emph{anti-correlated} with BoN performance, echoing~\citet{chen2025rethinking}. Instead, we show that the missing link is the \emph{\covname}, a refinement of cross-entropy that explicitly quantifies the model's ability to assign sufficient probability to rare but high-quality responses. \loose

\begin{definition}[Coverage profile]\label{def:coverage}
    The coverage profile of a model $\pihat$ for a distribution $\pi$ is
    \begin{align}\label{eq:pcov}
      \Pcov[N](\pi\dmid\pihat) \ldef \bbP_{x\sim\cdist, y\sim\pi(\cdot\mid{}x)}\brk*{\frac{\pi(y\mid{}x)}{\pihat(y\mid{}x)} \geq{} N},
    \end{align}
      where $N\geq 1$ is the number of Best-of-N sampling attempts. 
\end{definition}
Here, $y$ is the full response when prompted with $x$, $\pi$ represents the pre-training data distribution, which we presuppose covers downstream tasks of interest, and $\pihat$ is the pre-trained model. We prove that a \textbfc{good coverage profile is necessary and sufficient for Best-of-N to succeed} (see \cref{sec:prelim}, as well as  \cref{prop:bon-upper,prop:bon-lower}). 
This is highlighted in~\cref{fig:teaser}, where we find that the coverage profile is correlated with downstream performance for Best-of-N (which is exactly $\passn$), even when cross-entropy is not.\footnote{Formally, the coverage profile refines cross-entropy/KL divergence; see \cref{rem:cdf}.\loose}
 Motivated by this characterization of BoN performance, we ask: \emph{When, and through what mechanism, does next-token prediction produce a model $\pihat$ with good coverage?}\loose

\arxiv{\vspace{-0.5em}}

\arxiv{\begin{figure*}[tp]}
        \centering
        \includegraphics[width=\textwidth]{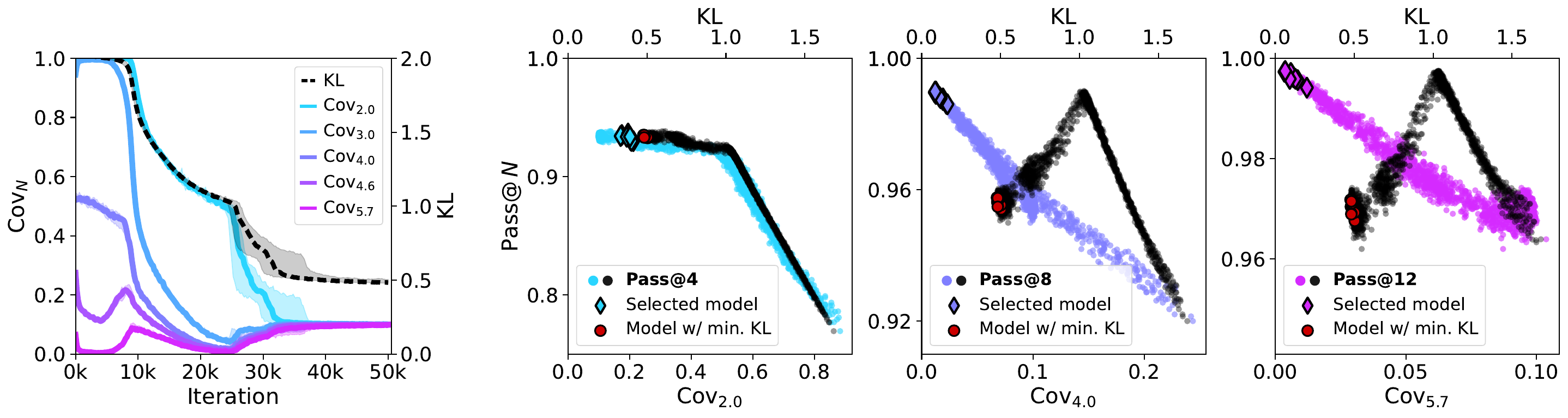}
        \arxiv{\vspace{-2em}}
        \caption{
        \textbf{The coverage profile predicts $\passn$ better than KL divergence.} 
          \emph{We train models in a graph reasoning task and record KL divergence, coverage profile (both measured w.r.t. $\pidata$), and $\passn$ performance; see \cref{sec:experiments} for details.
          Left: Convergence of coverage and KL divergence over training, showing that KL improves monotonically but coverage can \emph{degrade} with training. 
          Right: Scatter plots of KL (top axis), $\Pcov[N/2]$ (lower axis) and $\passn$ of checkpoints. 
          Although KL and $\Pcov[N]$ exhibit comparable predictive power  for small $N$, $\Pcov[N]$ is a better predictor for large $N$. 
          Also visualized are checkpoints selected 
          via the tournament procedure of \cref{eq:vanilla-tournament} (marked $\Diamond$) and by minimizing KL (marked red),
          demonstrating that the former selects better models for $\passn$.}\loose 
          \vspace{-0.35cm}
        }
    \label{fig:teaser}
    \end{figure*}

\subsection{Contributions}
We develop a theoretical understanding of \textbfc{the coverage principle}, whereby next-token prediction implicitly optimizes toward a model with good coverage, inheriting the training corpus' coverage over tasks of interest.\loose 

\arxiv{\vspace{-0.5em}}

\paragraph{Cross-entropy: Scaling laws and limitations (\cref{sec:cross-entropy})}
We begin by deriving provable scaling laws that link cross-entropy---specifically, a certain sequence-level notion---to coverage and hence downstream performance, but show that cross-entropy can be sensitive to sequence length and other problem parameters, leading to vacuous predictions; this motivates our main results.\loose

\arxiv{\vspace{-0.5em}}

\paragraph{Next-token prediction implicitly optimizes coverage (\cref{sec:mle})}
The first of our main theoretical results (\cref{thm:mle-main}) is a new generalization analysis for next-token prediction (more generally, maximum likelihood) that exploits the unique structure of the logarithmic loss to show that \textbfc{coverage can generalize faster 
than cross-entropy}; we refer to this as the coverage principle.  Concretely, our analysis shows that the coverage profile
for models learned with next-token prediction (i) avoids spurious dependence on problem-dependent parameters such as sequence length (in contrast to cross-entropy), and (ii) converges \emph{faster} still as the tail parameter $N$ is increased. Our analysis---which is similar in spirit to Mendelson's \emph{small ball method} \citep{mendelson2014learning,mendelson2017extending}---can be viewed as giving a new, fine-grained understanding of maximum likelihood\arxiv{ \citep{wong1995probability,Sara00,zhang2006from}}\arxiv{, which we expect to be of broader interest}.

\paragraph{Stochastic gradient descent through the lens of coverage (\cref{sec:sgd})}
The preceding results apply to general model classes $\Pi$, but consider the empirical maximizer of the next-token prediction (maximum likelihood) objective, in the vein of classical techniques in learning theory. For the second of our main results, we focus on a specific model class---overparameterized autoregressive linear models \eqref{eq:linear}---but take a more realistic approach and analyze stochastic gradient descent (SGD) on the next-token prediction objective, in the one-pass (``compute-optimal'') regime. We show that while SGD provably optimizes the coverage profile, it experiences suboptimal dependence on the sequence length $H$. We then show that \emph{gradient normalization} (which is loosely connected to Adam-like updates \citep{bernstein2024old}) provably improves coverage, removing dependence on the sequence length.\arxiv{ We also consider the \emph{expert distillation} setting where $\pistar$ represents a teacher network for which token-level logits are available, and give a novel gradient normalization scheme which enjoys improved coverage even further.}
\loose

\paragraph{Interventions for better coverage (\cref{sec:interventions})} Finally, we look beyond standard next-token prediction and explore families of new interventions aimed at improving coverage in theory. \\
\textbf{(i) Test-time (\cref{sec:optimization}).}  We show that for standard token-level SGD, a decoding strategy inspired based on \emph{test-time training} \citep{krause2019dynamic,sun2024learning,akyurek2025surprising} provably improves coverage. %
\loose\\\textbf{(ii) Model/checkpoint selection (\cref{sec:selection}).} For selecting the best model (or checkpoint) from a small number of candidates, we give \emph{tournament} procedures that enjoy significantly better coverage profile (particularly with respect to the tail parameter $N$) than na\"{i}ve validation with cross-entropy.

\paragraph{Additional results (\cref{sec:further})} Beyond the results above, we show that\arxiv{ coverage profile satisfies additional properties, including}: (1) \arxiv{maximum likelihood} can find models with low coverage even in the presence of severe misspecification\arxiv{ (e.g., even if no model with low cross-entropy exists) (\cref{sec:mle-convex})}; (2) coverage can generalize better under additional structural properties of the model class such as convexity (\cref{sec:mle-convex}).\loose

In summary, we believe that coverage offers a new perspective on the connection between pre-training objectives and downstream post-training success. Our results demonstrate that this perspective is mathematically rich and fundamental, opening the door to a deeper understanding\arxiv{. To this end, we highlight a number of fascinating directions for future research in \cref{sec:conclusion}.}%

\section{Problem Setup}
\label{sec:prelim}
We now introduce the formal problem setup for the remainder of the paper.

\paragraph{Next-token prediction and maximum likelihood}
We work in the following setting, which subsumes next-token prediction: $\cX$ is the prompt space, $\cY$ is the response space, and $\pistar:\cX\to\Delta(\cY)$ is the data distribution. We are given a dataset $\cD=\crl*{(x\ind{i},y\ind{i})}_{i=1}^{n}$ where $x\ind{i}\sim\cdist$ and $y\ind{i}\sim\pistar(\cdot\mid{}x\ind{i})$. We consider the maximum likelihood objective%
\begin{align}
    \Lhat(\pi)\ldef\sum_{i=1}^{n}\log\pi(y\ind{i}\mid{}x\ind{i}).
\end{align}and refer to $\pihat\ldef{}\argmax_{\pi\in\Pi}\Lhat(\pi)$ as the \emph{maximum likelihood estimator} for a user-specified model class $\Pi$. This is a generalization of the next-token prediction, where $\cY=\cV^{H}$ is a token sequence and $\pi(y\mid{}x)=\prod_{h=1}^{H}\pi(y_h\mid{}x,y_{1:h-1})$ is explicitly autoregressive, so that $\Lhat(\pi)=\sum_{i=1}^{n}\sum_{h=1}^{H}\log\pi(y_h\ind{i}\mid{}x\ind{i},y_{1:h-1}\ind{i})$. We specialize to next-token prediction at certain points but otherwise focus on the general setting. We make the following realizability assumption throughout. %
\begin{assumption}[Realizability]
    \label{ass:realizability}
    The data distribution $\pistar$ is realizable by some model $\pi\in\Pi$.\loose
\end{assumption}
This formulation captures pre-training and \arxiv{supervised fine-tuning}, with some caveats; see \cref{sec:simplifications}.\loose

\paragraph{Post-training and the coverage profile}
Given a reward function $\rstar(x,y)\in\crl*{0,1}$ representing success at a downstream task $\task$, the goal is to fine-tune $\pihat$---through reinforcement learning or test-time scaling---to obtain near-optimal reward. We show (\cref{prop:bon-upper,prop:bon-lower}) that for any task-specific comparator policy $\pitask:\cX\to\Delta(\cY)$, Best-of-N sampling with $\bigthetat(N)$ samples satisfies   $\En_{x\sim\cdist}\brk*{
\rstar(x,\pitask(x))-\rstar(x,\pibon[](x))}
\asymp \Pcov[N](\pitask\dmid\pihat)$, so a good coverage profile for $\pitask$ is sufficient for high reward. %
 Further, while less well understood, some form of coverage is thought to be necessary for the success of post-training methods like GRPO \citep{yue2025does}.\loose

Returning to pre-training, it is clear that there is little hope that next-token prediction will produce a model $\pihat$ with good coverage with respect to a downstream task unless the data distribution $\pistar$ itself has reasonable coverage with respect to this task. We therefore posit that the data distribution covers such a downstream task, in the sense that it includes high-reward responses with some bounded-below probability. Since coverage satisfies a transitivity property, it follows that coverage with respect to $\pistar$ implies coverage with respect to the optimal policy for the downstream task. For example, if $\pidata$ has a 10\% chance of generating a correct response, and $\Pcov[N/10](\pidata\dmid\pihat)=\veps$, then we get $10\veps$ error.\footnote{See \cref{prop:pcov-chain} for formal results.}
 Thus, \textbfc{going forward, we focus on understanding when next-token prediction achieves good coverage $\Pcov(\pistar\dmid{}\pihat)$ relative to the data distribution $\pistar$ itself,} and avoid concerning ourselves with specific details of the task policy $\pitask$ or the specific relationship between $\pitask$ and $\pidata$.\loose

 \arxiv{\paragraph{Autoregressive linear models}}
We analyze next-token prediction and maximum likelihood for general model classes $\Pi$, but our running example throughout the paper will be the class $\Pi$ of \emph{autoregressive linear models}, defined by a known feature map $\phi: \cX \times \cV^\star \to \RR^d$. For each parameter $\theta \in\Theta\subset \RR^d$, the model $\pi_\theta=(\pi_{\theta})_{h=1}^H$ is defined by 
\begin{equation}
  \label{eq:linear}
  \pi_{\theta}(y_h\mid{}x,y_{1:h-1}) \propto \exp(\langle \theta,
    \phi(x,y_{1:h})\rangle).
\end{equation}
In practice, autoregressive sequence models---such as those based on transformers---generate each token by sampling from a softmax distribution whose logits are given by a linear combination of learned features \citep{radford2019language}. \cref{eq:linear} simplifies this by freezing the feature map, yet remains expressive enough to model complex non-Markovian dependencies, depending on the choice of features. \loose

\begin{assumption}\label{asmp:autoregressive-linear}
  We assume $\Theta\subseteq \crl*{\theta: \nrm{\theta}\leq 1}$ is convex, and $\sup_{h,\xyh}\nrm{\phi(\xyh)}\leq B$\arxiv{ for some $B\geq{}1$}.\loose
  \end{assumption}

\arxiv{
\paragraph{Additional notation}
   We adopt
    standard big-oh notation, and write $f=\bigoht(g)$ to denote that
    $f = \bigoh(g\cdot{}\max\crl*{1,\mathrm{polylog}(g)})$,
    $a\approxleq{}b$ as shorthand for $a=\bigoh(b)$, and $a \asymp{} b$ as shorthand for $a = \Theta(b)$. \loose
}   

\arxiv{
\subsection{Properties of the Coverage Profile}
\label{sec:properties}

Before proceeding, we briefly discuss some properties of the coverage profile that will be helpful to keep in mind.\loose

\begin{remark}[Coverage profile as a refinement of cross-entropy]
\label{rem:cdf}
The coverage profile can be viewed as a fine-grained, inference budget-sensitive \emph{refinement} of cross-entropy. Concretely, if we write
\begin{align}
\Pcov(\pistar\dmid\pihat) = \bbP_{\pistar}\brk*{\log\frac{\pistar(y\mid{}x)}{\pihat(y\mid{}x)} \geq{} \log{}N},
\end{align}
it becomes clear that the coverage profile is simply the cumulative distribution function (CDF) of the log density ratio $X\ldef\log\frac{\pistar(y\mid{}x)}{\pihat(y\mid{}x)}$, while KL-divergence corresponds to the mean: $\En_{\pistar}\brk*{X}$. It is well known that the CDF of a random variable is a more informative statistic than its mean \citep{durrett2019probability}; the former can be much more sensitive to the model's behavior at the tail than the latter. Indeed, the coverage profile can behave very differently across scales, as shown by \cref{fig:teaser}.\footnote{Interestingly, we show (\cref{prop:pcov-kl}) that if the coverage profile satisfies a certain growth condition uniformly for all scales $M$, then it implies a bound on KL-divergence---a weak converse to \cref{prop:pcov-conversion-basic}.}

\end{remark}

\begin{remark}[KL divergence and coverage profile are not estimable]
    \label{rem:kl-pcov-not-observable}
    We emphasize that KL-divergence and the coverage profile are not estimable quantities in general, due to the fact both depend on the unknown density $\pistar(y\mid{}x)$ for the data distribution. This motivates the use of cross-entropy in practice, as the former is an estimable upper bound on $\Dkl{\pistar}{\pihat}$. Analogously, we show in \cref{sec:selection} that various estimable proxies for the coverage profile can be used to select models with good coverage. One exception is the \emph{expert distillation} setting (see \cref{sec:distillation}), where $\pistar$ is a teacher network for which the log-probabilities $\log\pistar(y\mid{}x)$ are available. %

\end{remark}

}

\section{Cross-Entropy and Coverage: Scaling Laws and Limitations}
\label{sec:cross-entropy}

A natural approach to understanding when next-token prediction achieves good coverage is to appeal to cross-entropy---perhaps first showing that next-token prediction achieves low cross-entropy (which is true asymptotically), and then relating cross-entropy to coverage. In this section we motivate our main results by showing that while this is possible in a weak sense, it does not yield predictive guarantees for downstream performance in the finite-sample regime.

Define the \emph{sequence-level} cross-entropy for $\pihat$ as $\Dce{\pistar}{\pihat} \ldef \En_{\pistar}\brk*{\sum_{h=1}^{H}\log\frac{1}{\pihat(y_h\mid{}x,y_{1:h-1})}}$.
Since $\En_{\cD\iidsim\pistar}\brk[\big]{\Lhat(\pi)}=-n\cdot\Dce{\pistar}{\pi}$, one expects that as we scale up compute, number of samples $n$, and model capacity $\Pi$, $\Dce{\pistar}{\pihat}\to\Dce{\pistar}{\pistar}$, 
 or equivalently $\Dkl{\pistar}{\pihat}\to{}0$, where $\Dkl{\pistar}{\pihat}\ldef \En_{\pistar}\brk*{\sum_{h=1}^{H}\log\frac{\pistar(y_h\mid{}x,y_{1:h-1})}{\pihat(y_h\mid{}x,y_{1:h-1})}}$ is the sequence-level KL divergence. 
 \arxiv{\vspace{-0.75em}}

 \paragraph{A simple scaling law for cross-entropy}
 We show below that if the model $\pihat$ has reasonable KL divergence to the data distribution, the coverage profile can be bounded:%
 \begin{proposition}[KL-to-coverage; see \cref{prop:pcov-kl}]
    \label{prop:pcov-conversion-basic}
    For all $N\geq{}e$,
    $\Pcov(\pistar\dmid\pihat) \leq \frac{\Dkl{\pistar}{\pihat}}{\log(N/e)}$.
 \end{proposition}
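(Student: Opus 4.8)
The plan is to view the coverage profile as a one-sided tail probability for the log-density ratio and then run a Markov-type argument in which the only structural fact used about $\pihat$ is that it is a genuine conditional distribution (its mass integrates to at most one). Concretely, I would set
\[
X \;\ldef\; \log\frac{\pistar(y\mid x)}{\pihat(y\mid x)}, \qquad x\sim\cdist,\ y\sim\pistar(\cdot\mid x),
\]
so that $\En[X] = \Dkl{\pistar}{\pihat}$ and $\Pcov(\pistar\dmid\pihat) = \bbP[X \ge \log N]$ directly from the definitions. The subtlety---and the reason a direct application of Markov's inequality fails---is that $X$ is \emph{signed}: it can take large negative values, so $\bbP[X\ge \log N]$ is not controlled by $\En[X]/\log N$ alone. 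The one extra ingredient that tames the negative tail is the bound $\En[e^{-X}] = \En_{x\sim\cdist}\!\big[\sum_{y:\,\pistar(y\mid x)>0}\pihat(y\mid x)\big] \le 1$.

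The key step I would isolate is a deterministic pointwise inequality: for every $u\in\RR$ and every $N>e$,
\[
\log(N/e)\cdot\indic\{u \ge \log N\} \;\le\; u - 1 + e^{-u}.
\]
This is checked by cases on the indicator. When $u < \log N$ the left-hand side is zero and the right-hand side is nonnegative since $e^{-u}\ge 1-u$. When $u\ge \log N$, the left-hand side equals $\log N - 1$, and since $t\mapsto t+e^{-t}$ is nondecreasing on $[0,\infty)$ while $\log N\ge 1$, we get $u + e^{-u} \ge \log N$, i.e.\ $u-1+e^{-u}\ge \log N - 1 = \log(N/e)$.

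Finally I would substitute $u = X$ and take expectations over $(x,y)$, using the two facts above:
\[
\log(N/e)\cdot\Pcov(\pistar\dmid\pihat) \;\le\; \En[X] - 1 + \En[e^{-X}] \;\le\; \Dkl{\pistar}{\pihat} - 1 + 1 \;=\; \Dkl{\pistar}{\pihat},
\]
and divide by $\log(N/e)>0$; the boundary case $N=e$ is vacuous. I expect the only genuine obstacle to be the one already flagged: because KL divergence is the mean of the signed variable $X$, some control on the event that $X$ is very negative is needed before any Markov-type bound can go through, and the clean way to obtain it is $\En[e^{-X}]\le 1$---the slack this introduces is exactly the additive ``$-1$'' that upgrades $\log N$ to $\log(N/e)$. (A slightly lossier route splits $\En[X]$ into its parts on $\{X\ge\log N\}$ and $\{X<\log N\}$ and lower-bounds the second using $X \ge 1 - e^{-X}$; this yields the same bound up to the constant inside the logarithm, but the one-line pointwise inequality above is cleanest.)
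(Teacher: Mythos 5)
Your proof is correct. The pointwise inequality
\[
\log(N/e)\cdot\indic\{u \ge \log N\} \;\le\; u - 1 + e^{-u}
\]
checks out in both cases, the deduction $\En[e^{-X}]\le 1$ is valid because $\pihat(\cdot\mid x)$ integrates to at most one, and taking expectations and dividing by $\log(N/e)>0$ finishes it (with $N=e$ vacuous, as you note). The route is genuinely different from the paper's: the paper proves the sharper constant $\Pcov(\pistar\dmid\pihat)\le \Dkl{\pistar}{\pihat}/(\log N - 1 + 1/N)$ by invoking a general-purpose $f$-divergence Markov lemma (Lemma 27 of Block et al., applied with $f(x)=x\log x - x + 1$), and the stated proposition is then a weakening since $\log(N/e)\le \log N - 1 + 1/N$. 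Your argument is an elementary, self-contained reproof of the same bound: your pointwise inequality is in essence the lower bound $f(x)\ge \frac{f(N)}{N}\,x$ for $x\ge N$ written in the variable $u=\log x$, and the step $\En[e^{-X}]\le 1$ supplies exactly the normalization that the $f$-divergence formalism handles implicitly. Worth noting: if you keep the $1/N$ term you dropped in Case 2 (since $e^{-u}\ge 1/N$ there), you recover the paper's tighter constant with no extra work. What the paper's route buys is generality (the same lemma gives the Hellinger-to-coverage bound in \cref{prop:pcov-hellinger}); what your route buys is a one-line proof with no external citation.
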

Combining \cref{prop:pcov-conversion-basic} with \cref{prop:bon-upper} and our assumption that $\pistar$ has good coverage with respect to the downstream task yields a simple ``scaling law'' for test-time compute with BoN:\loose
\begin{minbox}~
Consider a task of interest with reward $\rstar(x,y)$, and suppose the data distribution $\pistar$ itself has constant probability of success (i.e., sampling $y\sim\pistar(\cdot\mid{}x)$ with $\rstar(x,y)=1$).
To achieve sub-optimality $\veps$ with Best-of-N, it suffices to choose the compute budget $N$ as 
\begin{align}
    \label{eq:scaling-law}
    N \approx \exp\prn*{
    \frac{\Dkl{\pistar}{\pihat}}{\veps}      
    }.
\end{align}
\end{minbox}
    That is, for a fixed model $\pihat$ and KL-divergence level $\Dkl{\pistar}{\pihat}\leq\Dce{\pistar}{\pihat}$,  \cref{eq:scaling-law} predicts that test-time compute should increase exponentially with the desired accuracy $\veps$.\footnote{Neither KL divergence nor the coverage profile are observable quantities (though cross-entropy is an estimable upper bound on KL), so this is a theoretical prediction rather than a practical one as-is; see \cref{rem:kl-pcov-not-observable}.}

    \arxiv{\begin{figure*}[tp]}
            \centering
            \includegraphics[width=\textwidth]{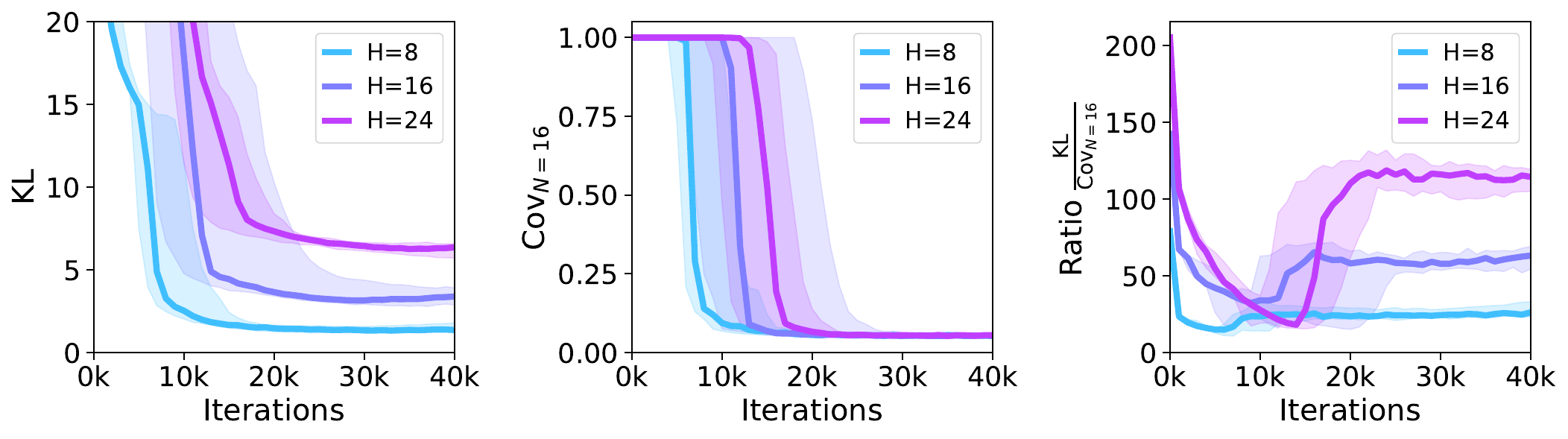}
            \arxiv{\vspace{-0.6cm}}
            \caption{
              \textbf{The coverage profile avoids spurious dependence on sequence length.} \emph{We train models in a graph reasoning task and record their KL divergence and coverage profile, measured w.r.t. $\pidata$ as we vary the problem horizon (sequence length); see \cref{sec:experiments} for details. 
              Left: Convergence of KL over training for three horizons $H$, demonstrating that KL at convergence scales linearly in the horizon $H$.
              Center: Convergence of $\Pcov[N]$ over training, manifesting no dependence on $H$ at convergence. 
              Right: Ratio of KL over $\Pcov[N]$, showing that~\cref{prop:pcov-conversion-basic} can be overly conservative.}\loose  
            \vspace{-0.35cm}
              }
        \label{fig:horizon}
        \end{figure*}
        \arxiv{\vspace{-0.5em}}    
    \paragraph{Insufficiency of cross-entropy}
    At first glance, this seems to be in line with empirical test-time scaling laws \citep{openai2024o1}, but there is an issue: While \emph{token-level} cross-entropy has been observed to be modest in contemporary language models \citep{kaplan2020scaling,hoffmann2022training,xia2022training}, the \emph{sequence-level} cross-entropy (and KL-divergence) generally grows with the length $H$ of the sequence, so that \cref{eq:scaling-law} predicts exponential test-time scaling in the sequence length. Moreover,  such a law cannot hold if we only assume token-level cross-entropy is bounded; see \cref{prop:bon-lower}.

Is this the end of the story?  On the one hand, it is simple to show (\cref{prop:markov-tight}) that \cref{prop:pcov-conversion-basic} is tight for a worst-case pair of models. Moreover, even for the autoregressive linear model in \cref{eq:linear}, sequence-level KL divergence scales linearly with the sequence length $H$, as shown in the next result. \loose 
\begin{proposition}\label{prop:autoregressive-kl}
    Fix $H \in \mathbb{N}$ and $d=1$. There exists\arxiv{ a feature map} $\phi: \cX\times\cV^\star\to [-1,1]$ and induced autoregressive linear class $\Pi$ with parameter space $\Theta=[-1,1]$, distribution $\mu$ over $\cX$, such that for any \emph{proper} estimator $\pihat=\pihat(\cD)\in\Pi$, there exists data distribution $\pistar\in\Pi$ such that \arxiv{with probability} at least $0.25$, \loose 
    \arxiv{
    \begin{align}
      \label{eq:lin-softmax-lower}
      \Dkl{\pistar}{\pihat}\geq \frac{H}{4n}.
    \end{align}}
    \end{proposition}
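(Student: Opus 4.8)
The plan is a minimax (Le Cam--type) lower bound whose only nonstandard ingredient is a construction in which the scalar parameter $\theta$ is revealed \emph{only on a rare event of probability $\asymp 1/n$}, so that with constant probability the dataset $\cD$ carries no information whatsoever about $\theta$ and the estimator incurs a constant-order error in $\hth$. The key point is that such a constant error translates, through the sequence structure, into a \emph{sequence-level} KL of order $H/n$: on the rare ``informative'' event all $H$ tokens have $\theta$-dependent conditionals, each contributing $\Theta(1)$ to the KL, while the rare event itself has probability $\Theta(1/n)$, giving an average contribution $\Theta(H/n)$. Concretely, writing $\sigma(z)\ldef(1+e^{-z})^{-1}$, I would take $\cV=\crl{0,1}$, $\cX=\crl{x_1,\dots,x_K}$ with $K\asymp n$, $\mu=\Unif(\cX)$, and the one-dimensional feature map $\phi(x_1,y_{1:h})=2y_h-1$ and $\phi(x_j,y_{1:h})=0$ for $j\ge 2$, which is valued in $[-1,1]$; with $\Theta=[-1,1]$ this realizes \cref{eq:linear}, and under $\pi_\theta$ the $H$ tokens are i.i.d.\ $\Ber(\sigma(2\theta))$ given prompt $x_1$ and i.i.d.\ $\Ber(1/2)$ (independent of $\theta$) given any other prompt.

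First I would record the KL identity, which follows from the chain rule for KL since only prompt-$x_1$ trajectories have $\theta$-dependent conditionals:
\[
  \Dkl{\pi_\theta}{\pi_{\theta'}} \;=\; \frac{1}{K}\cdot H\cdot \dkl{\Ber(\sigma(2\theta))}{\Ber(\sigma(2\theta'))}, \qquad \theta,\theta'\in[-1,1].
\]
Next comes the information-theoretic step. Fix any proper estimator $\pihat=\pi_{\hth(\cD)}$, and let $E$ be the event that $x_1$ does not appear among $x\ind{1},\dots,x\ind{n}$. For $K\asymp n$, $\Pr\brk{E}=(1-1/K)^n$ is bounded below by an absolute constant; moreover, conditioned on $E$ the law of $\cD$ does not depend on $\theta$ (all observed tokens are $\Ber(1/2)$), so there is a fixed distribution $\nu_0$ with $\hth\mid E\sim\nu_0$ under every $\theta\in\crl{-1,+1}$. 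I would then pick $\theta^\star\in\crl{-1,+1}$ with $\Pr_{\hth\sim\nu_0}\brk{\mathrm{sign}(\hth)\ne\mathrm{sign}(\theta^\star)}\ge \tfrac12$, which is possible since those two probabilities sum to at least one. On the event $E\cap\crl{\mathrm{sign}(\hth)\ne\mathrm{sign}(\theta^\star)}$---of probability at least $\tfrac12\Pr\brk{E}$ under the data-generating process with parameter $\theta^\star$---the value $\sigma(2\hth)$ lies on the opposite side of $1/2$ from $\sigma(2\theta^\star)$, so $\dkl{\Ber(\sigma(2\theta^\star))}{\Ber(\sigma(2\hth))}\ge\dkl{\Ber(\sigma(2))}{\Ber(1/2)}=:\kappa>0$ by monotonicity of $q\mapsto\dkl{\Ber(p)}{\Ber(q)}$. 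Combined with the identity above, $\Dkl{\pi_{\theta^\star}}{\pihat}\ge H\kappa/K$ there. Choosing $K\asymp n$ so that $\tfrac12(1-1/K)^n\ge\tfrac14$ and $H\kappa/K\ge H/(4n)$ simultaneously then yields the claim.

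The step I expect to be the main obstacle is exactly this last calibration of $K$: demanding $(1-1/K)^n$ bounded below forces $1/K$ down to order $1/n$, while demanding $H\kappa/K\ge H/(4n)$ forces $1/K$ up, so the two windows overlap only if the per-informative-token constant $\kappa$ is large enough. The naive binary construction gives $\kappa=\dkl{\Ber(\sigma(2))}{\Ber(1/2)}\approx 0.33$, which is slightly short of what the literal constant $1/4$ requires; I would close the gap by sharpening the estimator analysis---e.g.\ averaging over a $\Unif\crl{-1,+1}$ prior on $\theta^\star$ and extracting more than the crude worst-case factor $\tfrac12$, or replacing $\crl{-1,+1}$ by a well-chosen grid---or, if a universal constant suffices, by stating the bound as $cH/n$. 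Two remarks for the writeup: the conclusion is naturally a high-probability (not in-expectation) statement, since the informative event either occurs---pinning down $\theta$ and making $\Dkl{\pi_{\theta^\star}}{\pihat}$ small---or does not, in which case it is $\Theta(H/n)$, with nothing in between; and the construction depends on $n$ through $K$, which is the usual convention for such minimax lower bounds (equivalently, one exhibits a suitable $(\cX,\phi,\mu)$ for each sample size $n$).
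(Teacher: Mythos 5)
Your proposal is essentially the same argument the paper uses: a binary alternative $\theta^\star\in\{-1,+1\}$, a prompt distribution placing $\Theta(1/n)$ mass on a single informative prompt, an event $E$ of constant probability under which the estimator sees no informative samples, and a two-point argument on $E$. The paper takes $\cX=\{0,1\}$ with $\mu(1)=1/(2n)$ rather than $K\asymp n$ uniform prompts, and lower-bounds $\max\{\Dkl{\pi_1}{\pihat},\Dkl{\pi_{-1}}{\pihat}\}$ by observing that $\theta\mapsto\Dkl{\pi_1}{\pi_\theta}+\Dkl{\pi_{-1}}{\pi_\theta}$ is minimized over $[-1,1]$ at $\theta=0$; your sign-mismatch event extracts exactly this quantity. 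So there is no genuinely different route here.

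The calibration concern you flag at the end is a real one, and the paper's own proof does not escape it: the paper asserts $\Dkl{\pi_1}{\pihat}+\Dkl{\pi_{-1}}{\pihat}\geq\frac{H}{2n}\cdot 2\Dkl{\Ber(\frac{e}{e+e^{-1}})}{\Ber(\frac12)}\geq\frac{H}{2n}$, but $2\Dkl{\Ber(\frac{e}{e+e^{-1}})}{\Ber(\frac12)}\approx 0.66<1$, so the final inequality does not hold and the argument as written only delivers $\Dkl{\pistar}{\pihat}\geq cH/n$ with $c\approx 0.16$, short of the stated $1/4$. Of the fixes you propose, averaging over a prior or refining the grid on $\theta^\star$ cannot close the gap on its own, because the bottleneck is $\min_{\hth\in[-1,1]}\max_{\theta^\star\in\{-1,1\}}\Dkl{\Ber(\frac{e^{\theta^\star}}{e^{\theta^\star}+e^{-\theta^\star}})}{\Ber(\frac{e^{\hth}}{e^{\hth}+e^{-\hth}})}$, which is attained by the ``blind'' response $\hth=0$ and equals $\Dkl{\Ber(\frac{e}{e+e^{-1}})}{\Ber(\frac12)}\approx 0.33$ whenever $|\theta\phi|\leq 1$ on a binary vocabulary. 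The clean repair is exactly your last suggestion: restate the bound as $cH/n$ for an absolute constant $c>0$ (or, if the proposition statement can be changed, enlarge $\Theta$ or the feature range so that the per-token KL against $\hth=0$ exceeds $1/2$).
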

    \arxiv{\vspace{-1em}}
    This behavior is reflected empirically in \cref{fig:horizon} for a graph reasoning task.
    Yet, for this task, we find (\cref{fig:horizon}) that in spite of large cross-entropy/KL, next-token prediction learns a model $\pihat$ with a good coverage profile across a range of sequence lengths and that downstream Best-of-N succeeds. Why is this happening? In light of the discussion above, it must be related to specific inductive bias of the next-token prediction objective itself.

    \paragraph{A glimmer of hope: Case study in Bernoulli models} 
To see why large cross-entropy may not be a barrier to coverage, consider perhaps the simplest setting, \emph{Bernoulli models}, where $\cX=\crl{\perp}$, $\cY=\crl{0,1}$, $\Pi=\crl*{\Ber(p)}_{p\in(0,1/2)}$, and $\pidata=\Ber(\pstar)$ for some small $\pstar\in(0,1/2)$.

The maximum likelihood model is $\pihat=\Ber(\phat)$, where $\phat$ is the empirical frequency of $y=1$ in the dataset. We observe that with positive probability (and constant probability if $n\leq\nicefrac{1}{\pstar}$), the dataset $\cD$ will only contain examples where $y=0$, so that the maximum likelihood model is $\pihat=\Ber(0)$. This implies that expected KL divergence is infinite: $\En\brk*{\Dkl{\pistar}{\pihat}} = +\infty$. %
However, the coverage profile turns out to be well-behaved; \arxiv{to see this, we consider two cases:
\begin{enumerate}[leftmargin=*]
\item If $n\approxgeq\nicefrac{\log(\delta^{-1})}{\pstar}$, a Binomial tail bound implies that $\phat \geq \frac{\pstar}{2}$ with 
probability at least $1-\delta$, so $\Pcov[2](\pistar\dmid\pihat)=0$.
\item If $n\approxleq\nicefrac{\log(\delta^{-1})}{\pstar}$, we can bound $\Pcov[N](\pistar\dmid\pihat)\leq{}\pstar
\approxleq \frac{\log(\delta^{-1})}{n}$ by simply writing off the missing mass.%
\end{enumerate}
Combining these cases, we see that} $\Pcov[N](\pistar\dmid\pihat)\approxleq{}\frac{\log(\delta^{-1})}{n}$ with probability at least $1-\delta$ for all $N\geq{}2$; this gives hope that even though cross-entropy itself is infinite, maximum likelihood may actually learn a model with good coverage in the background. In what follows, we will show that this is not a fluke, but a general phenomenon.\loose

\begin{remark}[Missing mass]\label{rem:missing}
    The underlying issue in both of the preceding examples is \emph{missing mass}: there are responses that even a well-generalizing learner will fail to cover, and for these we may incur a large contribution to the KL-divergence. More generally, KL-divergence and cross-entropy are susceptible to contributions of the scale $\log \Wmax$ where $\Wmax=\max_{\pi\in\Pi}\nrm*{\frac{\pistar}{\pi}}_{\infty}$ (which could be as large as $H$, as in~\cref{prop:autoregressive-kl}) when the model does not have enough information to generalize/extrapolate. 
    This phenomenon is particularly pronounced when the prompt distribution is heterogeneous.
\end{remark}

\section{Next-Token Prediction Implicitly Optimizes Coverage}
\label{sec:mle}
We now present our main result (\cref{thm:mle-main}): due to the unique structure of the logarithmic loss, maximum likelihood can learn models with a good coverage profile even when the cross-entropy is vacuously large. Henceforth, we abbreviate $\Pcov(\pi)\ldef \Dcov{\pistar}{\pi}$. We make use of the following covering number.

\begin{definition}\label{def:covering}
  For a class $\Pi$ and $\alpha\geq 0$, we let $\covinf(\Pi,\alpha)$
  denote the size of the smallest cover
  $\Pi'\arxiv{\subset\crl*{\cX\to\Delta(\cY)}}$ such that for all $\pi\in\Pi$,
  there exists $\pi'\in\Pi'$ such that %
  \arxiv{$
  \sup_{x\in\cX,y\in\cY}\abs{\log \pi(y\mid x)-\log\pi'(y\mid x)}\leq\alpha$.}
\end{definition}
  \begin{theorem}[Fast generalization for coverage]
  \label{thm:mle-main}
  Fix $N\geq 8$ and let $c>0$ be an absolute constant. Suppose \cref{ass:realizability} holds. With probability at least $1-\delta$, the maximum likelihood estimator\arxiv{ $\pihat\ldef\argmax_{\pi\in\Pi}\Lhat(\pi)$} has\loose
\begin{align}
  \label{eq:mle-main}
  \Pcov[N](\pihat)
  \approxleq \frac{1}{\log{}N}\cdot\underbrace{\inf_{\veps>0}\crl*{\frac{\log\covinf(\Pi,\veps)}{n} + \veps}}_{\rdef\;\ratefine}
  + \underbrace{\frac{\log
  \covinf\left(\Pi,c\log{}N\right)+\log(\delta^{-1})}{n}}_{\rdef\;\ratecoarse}.
\end{align}
\end{theorem}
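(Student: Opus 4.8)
The plan is to handle $\Pcov[N](\pihat)=\bbP_\pistar[Z_\pihat\geq\log N]$, where $Z_\pi(x,y):=\log\tfrac{\pistar(y\mid x)}{\pi(y\mid x)}$ is the log-density-ratio, by combining a Markov-type reduction to a \emph{clipped} excess log-loss with a Mendelson-style small-ball analysis of maximum likelihood. Since $\min\{(Z_\pi)_+,\log N\}=\log N$ on $\{Z_\pi\geq\log N\}$, one has $\Pcov[N](\pi)\leq\tfrac{1}{\log N}\En_\pistar\brk{\min\{(Z_\pi)_+,\log N\}}$, so it suffices to prove $\En_\pistar\brk{\min\{(Z_\pihat)_+,\log N\}}\approxleq\ratefine+(\log N)\cdot\ratecoarse$; using the elementary inequality $\min\{\log t,\log N\}\leq 2\log N\,(1-t^{-1/\log N})$ for $t\geq 1$, this reduces to controlling a \emph{tilted} Hellinger-type functional $\En_\pistar\brk{1-(\pihat/\pistar)^{\eta}}$ at exponent $\eta\asymp 1/\log N$, up to an ``overcoverage'' correction discussed below.

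The structural input that makes log-loss special is the exponential inequality: for any $\pi$ and $\eta\in(0,1)$, $\En_{y\sim\pistar(\cdot\mid x)}\brk{(\pi/\pistar)^\eta}=1-G^{\eta}_x(\pi)$ with $G^\eta_x(\pi)\in[0,1]$, and since $(1-G)e^{G}\leq 1$ the process $M_k=\prod_{i\leq k}\prn{\pi(y^i\mid x^i)/\pistar(y^i\mid x^i)}^{\eta}\exp\prn{G^\eta_{x^i}(\pi)}$ is a supermartingale with $\En[M_n]\leq 1$. Applying Markov's inequality to $M_n$, union-bounding over a cover $\Pi'$ of $\Pi$ at a fine scale $\veps$, and invoking optimality of the MLE (which gives $\sum_i\log\tfrac{\pihat(y^i\mid x^i)}{\pistar(y^i\mid x^i)}\geq 0$, hence $\tfrac1n\sum_i G^\eta_{x^i}(\pihat)\approxleq\tfrac{\log(\abs{\Pi'}/\delta)}{n}+\veps$), then passing from the empirical to the population average of the bounded quantity $G^\eta_x$ via one more concentration step, yields $\En_\pistar\brk{1-(\pihat/\pistar)^\eta}\approxleq\ratefine$ — a \emph{fast} (square-free) rate, which after the tilting step feeds the $\tfrac{1}{\log N}\ratefine$ term (with the $\log\delta^{-1}$ part absorbed into $\ratecoarse$). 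The same argument at $\eta=\tfrac12$ gives the auxiliary guarantee $\Dhels{\pistar}{\pihat}\approxleq\ratefine$, used below.

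For the $\ratecoarse$ term I would pass to the \emph{coarse} scale $c\log N$. Letting $\pihat'$ be the representative of $\pihat$ in a $(c\log N)$-cover $\Pi''$, we have $\{Z_\pihat\geq\log N\}\subseteq\{Z_{\pihat'}\geq(1-c)\log N\}$, and a one-sided Bernstein bound for the indicators $\mathbbm{1}\crl{Z_{\pi''}\geq(1-c)\log N}$ union-bounded over $\Pi''$ gives $\Pcov[N](\pihat)\approxleq \tfrac1n\abs{\crl{i:Z_\pihat(x^i,y^i)\geq(1-2c)\log N}}+\tfrac{\log(\abs{\Pi''}/\delta)}{n}$. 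The empirical count of such ``badly fit'' points is controlled by MLE optimality: each one depresses $\Lhat(\pihat)$ relative to $\Lhat(\pistar)$ by $\geq(1-2c)\log N$, so since $\Lhat(\pihat)\geq\Lhat(\pistar)$ their number is at most $\tfrac{2}{(1-2c)\log N}\sum_i\prn{\log\tfrac{\pihat(y^i\mid x^i)}{\pistar(y^i\mid x^i)}}_+$, i.e., controlled by the total ``overcoverage mass.''

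The step I expect to be the main obstacle is precisely the control of this overcoverage mass — the analogue of the missing-mass phenomenon of \cref{rem:missing}, now in reverse — equivalently the leakage term $\En_\pistar\brk{((\pihat/\pistar)^{1/\log N}-1)_+}$ from the tilting step. Bounding it naively through $\Dhels{\pistar}{\pihat}$ only yields $\sqrt{\ratefine}$, which is too weak; the argument must instead exploit that coverage is a genuine \emph{tail} event and localize — after passing to the coarse cover, run a localized small-ball/multiplier-process estimate \emph{within} the cell, using that the MLE's log-density-ratio is sharply concentrated, so that the tail \emph{probability} (not merely the expectation of the clipped loss) decays at the fast rate $\ratefine/\log N$. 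Getting the interplay between the coarse cover, the fine localization, and the overcoverage bound right — and choosing the absolute constant $c$ small enough — is the crux.
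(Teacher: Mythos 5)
You have the right strategic picture (a one-sided exponential inequality for the log-loss, together with a miss-count step that charges each badly-fit point $\asymp\log N$), and your coarse-cover reduction of $\Pcov[N]$ to an empirical miss count is essentially the paper's \cref{lem:Cmp-to-Pstar}. But both legs of the argument, as written, have concrete gaps.

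\emph{The tilting at $\eta\asymp1/\log N$ does not produce the $1/\log N$ factor.} Your algebraic inequality gives $\En_\pistar\brk{\min\{(Z_\pihat)_+,\log N\}}\leq 2\log N\cdot\En_\pistar\brk{(1-(\pihat/\pistar)^\eta)_+}$; after dividing by $\log N$ via Markov, the factors of $\log N$ cancel, leaving $\Pcov[N](\pihat)\approxleq\En_\pistar[1-(\pihat/\pistar)^\eta]+(\text{overcoverage})$. Your supermartingale/MLE step gives $\En_\pistar[1-(\pihat/\pistar)^\eta]\approxleq\ratefine$ with no extra factor of $\eta$ --- the $\log(\covinf/\delta)/n$ term coming from the union bound is independent of $\eta$ --- so this chain terminates at $\Pcov[N](\pihat)\approxleq\ratefine$. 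That is the Hellinger-type rate (cf.\ \cref{prop:mle-hellinger,prop:pcov-hellinger}), which does not improve as $N\to\infty$; there is no choice of $\eta$ that recovers the factor.

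\emph{Reducing the bad-point count to the overcoverage mass is a dead end.} Your MLE-optimality bound on $\abs{\{i:Z_\pihat(x\ind{i},y\ind{i})\geq(1-2c)\log N\}}$ in terms of $\sum_i\bigl(\log\tfrac{\pihat(y\ind{i}\mid x\ind{i})}{\pistar(y\ind{i}\mid x\ind{i})}\bigr)_+$ is correct, but that sum is generically $\gg\ratefine\cdot n$. For $\pistar=\Ber(1/2)$ and $\pi=\Ber(1/2+\eps)$, one has $\En_\pistar\max\{1,\sqrt{\pi/\pistar}\}=\tfrac12+\tfrac12\sqrt{1+2\eps}>1$, so the one-sided MGF of the positive part exceeds $1$ and the sum has expected size $\Theta(n\eps)$; with $\pihat$ the MLE, $\eps\asymp n^{-1/2}$, giving $\Theta(\sqrt n)\gg\log n\asymp\ratefine\cdot n$. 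The one-sided exponential inequality for log-loss (\cref{lem:MLE-clipped-log-MGF}) gives $\En_\pistar\exp\bigl(\tfrac12\max\{\log\tfrac{\pi}{\pistar},-C\}\bigr)\leq 1$ for $C\geq\log 4$: it is the \emph{signed}, lower-clipped log-ratio that has MGF $\leq1$, and the negative contributions (down to $-C$) are essential. Passing to the positive part discards exactly the cancellation that makes the fast rate possible.

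The paper closes both gaps with one move. Fix a small absolute constant $C\geq\log 4$ (not $\log N$-scaled, and not $0$), and use the exact identity
\begin{align*}
  \Lhat(\pi)-\Lhat(\pistar)=\underbrace{\sum_{i=1}^n\max\Bigl\{\log\tfrac{\pi(y\ind{i}\mid x\ind{i})}{\pistar(y\ind{i}\mid x\ind{i})},\,-C\Bigr\}}_{\Lpos(\pi)}-\underbrace{\sum_{i=1}^n\max\Bigl\{0,\;\log\tfrac{\pistar(y\ind{i}\mid x\ind{i})}{\pi(y\ind{i}\mid x\ind{i})}-C\Bigr\}}_{\Lneg(\pi)}.
\end{align*}
MLE optimality gives $\Lneg(\pihat)\leq\Lpos(\pihat)$; the one-sided MGF inequality, applied uniformly over a fine cover, gives $\Lpos(\pihat)\approxleq\ratefine\cdot n$ --- the fast rate, with no tilting of the exponent. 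Each index with $\log\tfrac{\pistar(y\ind{i}\mid x\ind{i})}{\pihat(y\ind{i}\mid x\ind{i})}\geq(1-2c)\log N$ contributes at least $(1-2c)\log N-C\approxgeq\log N$ to $\Lneg(\pihat)$, so the empirical miss count is $\approxleq\ratefine\cdot n/\log N$; your coarse-cover step (which is \cref{lem:Cmp-to-Pstar}) then converts this to the population coverage up to $\ratecoarse$. The $1/\log N$ comes from the per-bad-point cost in $\Lneg$, not from an $\eta\to0$ limit, and the overcoverage is handled for free by the exact decomposition.
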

\cref{eq:mle-main} has a \emph{fine-grained term} $\ratefine$ and \emph{coarse-grained term} $\ratecoarse$; we interpret each below.

 \textbf{Fine-grained term.} %
 $\ratefine$ evaluates the covering number $\covinf(\Pi,\veps)$ at a small scale $\veps$ (typically $\veps\approx\poly(1/n)$), which matches typical bounds for conditional density estimation (e.g., \citet{bilodeau2023minimax}) in KL divergence; however, unlike KL-based bounds this term has \emph{no explicit dependence on sequence length $H$ or density ratios $\log\Wmax$}. The term is further scaled by $1/\log{}N$, which implies that \emph{coverage enjoys faster convergence as we move further into the tail} by increasing $N$; this reflects the unique structure of the logarithmic loss, and may be viewed as a new form of implicit bias.\loose

 Summarizing, the fine-grained term in \cref{eq:mle-main} witnesses the phenomenon we term the \emph{coverage principle}: the coverage profile enjoys faster generalization than cross-entropy; roughly, the rate is what we would expect (via \cref{prop:pcov-conversion-basic}) if we could somehow control KL without paying for the sequence length $H$ or density ratio $\log\Wmax$. See \cref{sec:mle-comparison} for a detailed comparison to standard (asymptotic and non-asymptotic) generalization bounds for maximum likelihood based on Hellinger distance and KL-divergence.

\textbf{Coarse-grained term.}
 The coarse-grained term $\ratecoarse$ captures the \emph{missing mass} phenomenon exemplified by the Bernoulli example in the prequel. This term is not explicitly normalized by $1/\log{}N$ (compared to the fine-grained term), but depends on the covering number $\covinf(\Pi,\alpha)$ only at a very large scale $\alpha\approx\log{}N$. As such, the dependence on the complexity/richness of $\Pi$ in this term vanishes as we increase $N$. 

Overall, while the guarantee in \cref{eq:mle-main} might look surprising at first glance (particularly the coarse term, as we are not aware of any existing generalization bounds with dependence on covering numbers at such a large scale), we show in \cref{prop:mle-tight} (\cref{sec:proofs_mle}) that both terms are tight in general.

\arxiv{
  \paragraph{Coverage can converge under severe misspecification}
  In \cref{thm:mle-main}, we assume realizability, i.e., the data distribution $\pistar$ lies within the model class $\Pi$ (\cref{ass:realizability}).
  In the general \emph{misspecified} setting where $\pistar \notin \Pi$, the coverage may instead scale with the approximation error $\min_{\pi \in \Pi} \Dkl{\pistar}{\pi}$ (\cref{prop:mispec-MLE-lower}), which is undesirable.
  Nevertheless, we show that when $\Pi$ is \emph{convex}, the MLE in fact enjoys a better coverage bound that depends only on the  coverage profile of the \emph{best-in-class} approximation to $\pistar$ (\cref{sec:mle-convex}).
  Further, in \cref{sec:selection}, we propose tournament-style estimators with coverage guarantees scaling as $\min_{\pi \in \Pi} \Pcov(\pi)$ for \emph{any} (possibly misspecified, non-convex) class $\Pi$.
}

\subsection{Examples}
\label{sec:mle-examples}
To build intuition, we analyze the behavior of \cref{thm:mle-main} under a\arxiv{ general} growth assumption on the covering number, then specialize to autoregressive linear models, showing how they exemplify the coverage principle.\loose

\begin{corollary}\label{cor:mle-pcov}
\textbf{(i) Parametric regime:} Suppose that there are parameters $d\geq 2$ and $C\geq 2$ such that $\log\covinf(\Pi,\alpha)\leq d\log(C/\alpha)$ for $\alpha\in(0,C/2]$. Then for any $N\geq 8$, \whp, %
\arxiv{\begin{align*}
  \Pcov[N](\pihat)\approxleq \frac{d\brk*{\brk*{\log(C/\log N)}_++\frac{\log(Cn)}{\log N}} +\log(1/\delta)}{n}.
\end{align*}}

\textbf{(ii) Nonparametric regime:} Suppose that there are parameters $C\geq 2$ and $p>0$ such that $\log\covinf(\Pi,\alpha)\leq (C/\alpha)^{p}$ for $\alpha\in(0,C/2]$. Then for any $N\geq 8$ and $n\geq{}\log^{1/p}N\cdot(C/\log{}N)^{p}$, \whp, %
\arxiv{\begin{align*}
  \Pcov(\pihat)\approxleq \frac{1}{\log N}\prn*{\frac{C^p}{n}}^{\frac{1}{p+1}} +\frac{\log(1/\delta)}{n}.
\end{align*}}
\end{corollary}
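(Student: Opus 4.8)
The plan is to specialize \cref{thm:mle-main}: substitute each hypothesized growth rate for $\covinf(\Pi,\cdot)$ into \eqref{eq:mle-main}, optimize the free parameter $\veps$ inside the fine-grained term $\ratefine$, and bound the coarse-grained term $\ratecoarse$ via the covering number at the single large scale $c\log N$. Throughout, $c$ denotes the absolute constant from \cref{thm:mle-main}; for part (ii) I treat $p$ as a constant and fold $p$-dependent factors into $\approxleq$. The one structural fact used repeatedly is that $\alpha\mapsto\covinf(\Pi,\alpha)$ is non-increasing, which lets me transfer the hypotheses (stated only on $\alpha\in(0,C/2]$) to larger scales.

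\emph{(i) Parametric regime.} For the fine-grained term I would insert $\log\covinf(\Pi,\veps)\le d\log(C/\veps)$ and pick $\veps=\min\{d/n,\,C/2\}$; since $\veps\mapsto \tfrac{d}{n}\log(C/\veps)+\veps$ is convex with interior minimizer $d/n$, either choice of the minimum gives $\ratefine\approxleq \tfrac{d\log(Cn)}{n}$ (using $\log(Cn/d)\le\log(Cn)$ and absorbing the residual $\tfrac{d}{n}$), hence $\tfrac{1}{\log N}\ratefine\approxleq \tfrac{d\log(Cn)}{n\log N}$. For the coarse-grained term, monotonicity gives $\log\covinf(\Pi,c\log N)\le\log\covinf\!\big(\Pi,\min\{c\log N,C/2\}\big)\le d\,\pos{\log\!\big(C/\min\{c\log N,C/2\}\big)}$, which I would bound by $d\,\pos{\log(C/\log N)}$ up to the $O(1)$ offset coming from the constants $c$ and $\log 2$. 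Adding $\tfrac{\log(\delta^{-1})}{n}$ then recovers the stated bound.

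\emph{(ii) Nonparametric regime.} For the fine-grained term I would insert $\log\covinf(\Pi,\veps)\le (C/\veps)^p$ and minimize $\veps\mapsto (C/\veps)^p/n+\veps$ over $\veps>0$; the minimizer is $\veps^\star\asymp (C^p/n)^{1/(p+1)}$, which lies in $(0,C/2]$ under the assumed lower bound on $n$, yielding $\ratefine\approxleq (C^p/n)^{1/(p+1)}$ and thus $\tfrac{1}{\log N}\ratefine\approxleq \tfrac{1}{\log N}(C^p/n)^{1/(p+1)}$. For the coarse-grained term, monotonicity and the hypothesis give $\log\covinf(\Pi,c\log N)\approxleq (C/\log N)^p$ (plus an $O(1)$ additive term when $c\log N>C/2$), and it then remains to check $\tfrac{(C/\log N)^p}{n}\approxleq \tfrac{1}{\log N}(C^p/n)^{1/(p+1)}$. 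Raising both sides to the power $\tfrac{p+1}{p}$ and simplifying, this last inequality is equivalent to $n\ge \log^{1/p}N\cdot (C/\log N)^p$ --- exactly the hypothesis --- so the coarse-grained term contributes only $\tfrac{\log(\delta^{-1})}{n}$ beyond the fine-grained term, and the claim follows.

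\emph{Main obstacle.} I do not expect a conceptual difficulty; the only care required is the boundary case analysis forced by the covering-number hypotheses holding only for scales $\alpha\in(0,C/2]$ --- dispatched uniformly by monotonicity of $\covinf(\Pi,\cdot)$ whenever the relevant scale ($\veps^\star$ in (ii), or $c\log N$ in either part) exits this range, with the resulting $O(1)$-level slack folded into $\approxleq$ --- together with the exponent bookkeeping in part (ii), where the point is precisely that the assumed lower bound on $n$ is the crossover at which the coarse-grained covering-number contribution becomes dominated by the fine-grained one.
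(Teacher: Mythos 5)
Your strategy---plug each covering-number hypothesis into \cref{thm:mle-main}, minimize over $\veps$ in $\ratefine$, and bound $\log\covinf(\Pi,c\log N)$ by monotonicity---is the right one (the paper states this corollary without an explicit proof, so there is nothing more to compare against). The fine-grained optimization in both parts, and the exponent algebra in part (ii) identifying $n\ge\log^{1/p}N\cdot(C/\log N)^p$ as exactly the crossover at which the coarse term is dominated, are carried out correctly.

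The step that does not close as written is the assertion that the coarse-grained term incurs only an ``$O(1)$ offset.'' In part (i), monotonicity plus the hypothesis give $\log\covinf(\Pi,c\log N)\le d\,\pos{\log(C/\log N)}+d\max\{\log(1/c),\log 2\}$: the constant-level overshoot from $c$ and $\log 2$ multiplies $d$, so the slack is $O(d)$, not $O(1)$. That extra $O(d)/n$ is absorbed by the stated right-hand side only when $\pos{\log(C/\log N)}+\log(Cn)/\log N\approxgeq 1$, which holds if $\log N\lesssim C$ or $\log N\lesssim\log(Cn)$, but can fail when $\log N\gtrsim C\gg\log(Cn)$ (e.g.\ take $\log N=C$ with $C$ exponentially larger than $n$; the bracket is then $\approx 0$ while $\log\covinf(\Pi,c\log N)$ can still be $\asymp d$, and the $\ratecoarse$ term from \cref{thm:mle-main} contributes an irreducible $\asymp d/n$). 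So the claimed bound does not follow from the theorem uniformly over $N\geq 8$: one would need to restrict to $\log N\lesssim\max\{C,\log(Cn)\}$, insert a ``$+1$'' inside the bracket, or strengthen the hypothesis to $\log\covinf(\Pi,\alpha)\le d\pos{\log(C/\alpha)}$ for all $\alpha>0$ (so $\covinf=1$ past scale $C$) together with keeping $c\log N$ outside $(C,C/c)$. The analogous $O(2^p)/n$ residual in part (ii) is benign under the conventions you adopt (constants depending on $p$, and $\delta\le 1/2$ so that $\log\delta^{-1}\gtrsim 1$). The mismatch in (i) is most plausibly an imprecision already present in the corollary statement rather than a flaw you introduced, but your write-up labels the slack $O(1)$ when it is $O(d)$, and that is exactly where the gap hides.
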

This result shows that for sufficiently rich classes (e.g., when $p>0$), the fine-grained term dominates the coarse-grained term for $n$ sufficently large. On the other hand, for simple classes (e.g., when $p=0$), the coarse-grained term can dominate the fine-grained term.

\paragraph{Autoregressive linear models: Low dimension}
We now consider the autoregressive linear model in \cref{eq:linear}. When the dimension $d$ is small, this class satisfies $\log\covinf(\Pi,\alpha)\asymp{}d\log(BH/\alpha)$ (corresponding to the parametric regime in \cref{cor:mle-pcov}), 
\arxiv{which gives the following coverage upper bound for next-token prediction.
\begin{corollary}
Consider the autoregressive linear model in \cref{eq:linear}. For any $N\geq 8$, it holds that \whp, next-token prediction achieves
\begin{align*}
  \Pcov[N](\pihat)\approxleq \frac{d\brk*{\brk*{\log(BH/\log N)}_++\frac{\log(BHn)}{\log N}} +\log(1/\delta)}{n}.
\end{align*}
\end{corollary}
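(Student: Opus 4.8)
This is the instantiation of \cref{cor:mle-pcov}(i) to the autoregressive linear class $\Pi$ defined by \cref{eq:linear}, so the plan is to verify its single hypothesis for this class---parametric covering-number growth $\log\covinf(\Pi,\alpha)\le d\log(C/\alpha)$ with $C\asymp BH$---and then read off the conclusion.

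The bulk of the work is the covering estimate, which I would obtain by reducing the $\ell_\infty$-type log-density metric of \cref{def:covering} to the Euclidean metric on the parameter ball $\Theta$. Fix $\theta,\theta'\in\Theta$ and write a single autoregressive factor as $\log\pi_\theta(y_h\mid x,y_{1:h-1})=\tri{\theta,\phi(x,y_{1:h})}-\log\sum_{y'_h\in\cV}\exp(\tri{\theta,\phi(x,y_{1:h-1},y'_h)})$. Cauchy--Schwarz together with the feature-norm bound in \cref{asmp:autoregressive-linear} controls the linear-term difference by $B\nrm{\theta-\theta'}$, while $1$-Lipschitzness of the log-partition function in $\ell_\infty$ controls the normalization difference by $\max_{y'_h}\abs{\tri{\theta-\theta',\phi(x,y_{1:h-1},y'_h)}}\le B\nrm{\theta-\theta'}$. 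Summing the resulting per-token bound over the $H$ coordinates of $y\in\cV^H$ gives $\sup_{x,y}\abs{\log\pi_\theta(y\mid x)-\log\pi_{\theta'}(y\mid x)}\le 2BH\nrm{\theta-\theta'}$. Consequently any Euclidean $\tfrac{\alpha}{2BH}$-net of the unit ball $\Theta\subseteq\crl{\theta:\nrm{\theta}\le 1}$ induces an $\alpha$-cover in the sense of \cref{def:covering}, and the standard volumetric bound for Euclidean balls (at most $(C/\alpha)^d$ net points, with $C$ an absolute multiple of $BH$) yields $\log\covinf(\Pi,\alpha)\le d\log(C/\alpha)$ for all $\alpha\in(0,C/2]$.

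It then remains to feed this into \cref{cor:mle-pcov}(i) (recall \cref{ass:realizability} is in force here). Taking $d\ge 2$---or replacing $d$ by $\max\{d,2\}$ when $d=1$, which only affects absolute constants---and noting $C\asymp BH\ge 2$, the corollary directly gives, \whp, $\Pcov[N](\pihat)\approxleq\frac{d\brk*{\brk*{\log(C/\log N)}_+ + \log(Cn)/\log N}+\log(1/\delta)}{n}$; substituting $C\asymp BH$ and absorbing the $\asymp$-constants into $\approxleq$ (and the polylog factors it hides) produces the claimed bound in terms of $\log(BH/\log N)$ and $\log(BHn)$.

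The one subtlety worth flagging is the appearance of $H$ inside the logarithm: because \cref{def:covering} constrains the \emph{sequence-level} log-density, which is a sum of $H$ autoregressive factors, the per-token parameter-Lipschitz constant $2B$ is amplified to $2BH$, and this is exactly why the covering bound---and hence the final guarantee---carries $\log(BH/\alpha)$ rather than $\log(B/\alpha)$. Everything else (invoking \cref{cor:mle-pcov}, tracking absolute constants, the $d=1$ edge case, and restricting $\alpha$ to the admissible range) is routine bookkeeping.
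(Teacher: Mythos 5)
Your proposal is correct and follows exactly the route the paper intends: the paper simply asserts $\log\covinf(\Pi,\alpha)\asymp d\log(BH/\alpha)$ and reads off the bound from \cref{cor:mle-pcov}(i), and your Lipschitz/volumetric argument (linear term via Cauchy--Schwarz and the feature-norm bound, log-partition via $1$-Lipschitzness of log-sum-exp in $\ell_\infty$, summed over $H$ tokens, then a Euclidean $\tfrac{\alpha}{2BH}$-net of the unit ball) supplies precisely the missing covering-number verification. The edge cases you flag ($d=1$, the admissible range of $\alpha$) are indeed just constant-level bookkeeping and do not affect the stated $\approxleq$-bound.
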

Thus---in line with the coverage principle---}coverage generalizes in a (nearly) horizon-independent fashion\arxiv{ for autoregressive linear models}, in stark contrast to the cross-entropy lower bound in \cref{prop:autoregressive-kl}. The only drawback (which is fundamental) is that since the class has low capacity, the coarse-grained term dominates for most parameter regimes, and the improvement as $N$ scales is quite modest.\loose

\paragraph{Autoregressive linear models: High dimension} As a more interesting example, we next look at the behavior of next-token prediction for autoregressive linear models in an ``overparameterized'' regime where the dimension $d$ is arbitrarily large \citep{zhang2002covering,neyshabur2015norm-based,bartlett2017spectrally}. Here, we control the richness of the class $\Pi$ by the norm parameter $B$. In this regime, it turns out that in the worst-case, the capacity $\log\covinf(\Pi,\alpha)$ scales polynomially in $H$. To address, this we prove a refined version of \cref{thm:mle-main} that adapts to the variance in the data distribution $\pistar$, avoiding explicit dependence on sequence length.\loose

Define the \emph{\varname} for the data distribution as 
\begin{align}
  \sigs^2\ldef \En_{\pistar}\brk*{ \sum_{h=1}^H \nrm[\big]{\phi(\xyh)-\phibar(\xyhm)}^2 },
\end{align}
where $\phibar(\xyhm)\ldef \En_{y_h\sim \pistar(\cdot\mid \xyhm)}\brk*{\phi(\xyh)}$ is the average feature vector given the prefix $(\xyhm)$.
We can interpret the \varname $\sigs^2$ as a notion of effective sequence length; 
it captures the number tokens that are ``pivotal'' in the sense that they have high variation conditioned on the prefix; the name reflects a noted phenomenon in language modeling that most tokens are near-deterministic and easy to predict given their prefix, with only a few having high entropy \citep{abdin2024phi}. Thus, while $\sigs^2$ can be as large as $B^2H$ in the worst case, we expect it to be smaller in general.\loose
\begin{theorem}[Overparameterized autoregressive linear models]
  \label{thm:MLE-autoregressive-linear}
Consider the autoregressive linear model \eqref{eq:linear}, and suppose \cref{ass:realizability,asmp:autoregressive-linear} hold. For any $N\geq 2$, next-token prediction achieves\loose
\begin{align}
  \label{eq:mle-high-dim}
  \En\brk*{ \Pcov(\pihat) }\approxleq \sqrt{\frac{\sigs^2 }{n\cdot \log N}} + \frac{B^2}{n}. 
\end{align}
\end{theorem}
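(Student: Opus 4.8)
The plan is to instantiate the variance-adaptive refinement of \cref{thm:mle-main} announced in the text, tracking how a ``local'' covering number for the autoregressive linear class \eqref{eq:linear} can be controlled in terms of the \varname $\sigs^2$ rather than the raw dimension $d$ or sequence length $H$. First I would recall that for the softmax-linear parametrization $\pi_\theta$, the log-density ratio between two models $\pi_\theta$ and $\pi_{\theta'}$ at a prefix $(\xyh)$ is a bounded (by $O(B)$) function that is Lipschitz in $\theta-\theta'$ with local Lipschitz constant governed by the conditional feature covariance $\En_{y_h\sim\pistar(\cdot\mid\xyhm)}[\,\nrm{\phi(\xyh)-\phibar(\xyhm)}^2\,]$; summing over $h$ gives $\sigs^2$. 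This is the mechanism by which the effective complexity of $\Pi$, as seen through the logarithmic loss along directions that matter for coverage, is $\sigs^2$ rather than $B^2 H$. I would make this precise via a chaining/Dudley-type bound (or a direct $\ell_2$-covering argument in the metric induced by $\pistar$), showing $\log\covinf$-type quantities localized around $\pistar$ scale like $\sigs^2 \log(\cdot)/\veps^2$ up to the uniform $B$-scale truncation.

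Next I would plug this localized complexity control into the fast-rate analysis underlying \cref{thm:mle-main}: the small-ball / Mendelson-style argument that gives the fine-grained term $\ratefine$. Optimizing the trade-off $\inf_\veps\{\log\covinf(\Pi,\veps)/n + \veps\}$ with $\log\covinf$ replaced by the variance-localized bound $\approx \sigs^2/\veps^2$ (plus lower-order terms) yields, after the $1/\log N$ scaling from the coverage-vs-cross-entropy gain, a term of order $\sqrt{\sigs^2/(n\log N)}$ — this is the first term on the right of \eqref{eq:mle-high-dim}. The coarse-grained term $\ratecoarse$ evaluates the covering number at the large scale $\alpha\asymp\log N$; at that scale the softmax-linear class (being $B$-bounded in log-density) is effectively trivial once $\log N \gtrsim B$, and more carefully the missing-mass contribution is controlled by the norm budget, contributing the $B^2/n$ term. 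I would need to be slightly careful about whether the statement is in expectation (it is — note $\En[\Pcov(\pihat)]$) rather than high-probability, which actually \emph{simplifies} matters: I can integrate the high-probability bound of \cref{thm:mle-main}-type results over $\delta$, or run the argument directly in expectation, absorbing the $\log(1/\delta)/n$ into the $B^2/n$ term.

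The convexity of $\Theta$ (\cref{asmp:autoregressive-linear}) should be used to ensure the MLE is well-behaved and that the localization around $\pistar=\pi_{\ths}$ is legitimate — i.e., the relevant ``star-shaped'' or convexity property needed for the fast-rate/small-ball machinery, analogous to how \cref{sec:mle-convex} exploits convexity. I expect the main obstacle to be the variance-localized covering bound: one must show that, in the pseudometric $d(\theta,\theta')^2 = \En_{\pistar}\sum_h \langle \theta-\theta', \phi(\xyh)-\phibar(\xyhm)\rangle^2$ (which controls the relevant log-loss excess near $\pistar$ to second order), the class has metric entropy $\lesssim \sigs^2/\veps^2$ uniformly in $d$ — this is the overparameterized-regime statement, and it requires the truncation at the $B$-scale to be handled so that the quadratic (variance) term dominates in the regime of interest while the linear-in-$B$ tail contributes only the additive $B^2/n$. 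A secondary subtlety is handling the gap between the second-order (variance) approximation to the log-loss and the actual log-loss at scales where $\langle\theta-\theta',\phi\rangle$ is not small; this is where the $B$-boundedness and a standard ``self-bounding'' property of the logarithmic loss (curvature lower bound $e^{-O(B)}$) come in, and where the $\log N$ factor from the coverage definition ultimately buys the improved rate.
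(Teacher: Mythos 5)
Your high-level narrative is consistent with the paper: the inherent variance $\sigs^2$ replaces $B^2H$ as the effective complexity, the $1/\log N$ gain comes from targeting coverage rather than cross-entropy, and the coarse $B^2/n$ term is a residual from the finite norm budget. Also correct is the observation that the in-expectation statement lets one integrate a tail bound over $\delta$ (the paper does exactly this at the end of the proof). However, the route you propose is materially different from the paper's, and one of its central ingredients is wrong.

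First, the paper does not literally instantiate \cref{thm:mle-main} with a variance-adapted covering number; it proves a separate result (\cref{thm:H-linear-softmax-cov-KL}) bounding the \emph{stopped} KL divergence $\Dseq{\pistar}{\pihat}$ and then converts to coverage via \cref{prop:KL-to-PCov-H}. The stopped KL is essential: by \cref{prop:autoregressive-kl}, the sequence-level $\Dkl{\pistar}{\pihat}$ genuinely scales with $H$ for this class, so any argument that first controls sequence-level KL (or a second-order localization of sequence-level log-loss) and then converts to coverage is doomed. Your proposal never introduces the truncation at scale $\log N$ that neutralizes this, and that truncation is not a cosmetic detail — it is what turns the per-step KL sum into a bounded quantity amenable to concentration. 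Second, the paper's $\sigs/\sqrt{n}$ term does not come from a metric-entropy or small-ball argument at all; it comes from the first-order optimality (KKT) condition of the MLE combined with martingale concentration of $\sum_h\phistar$ (\cref{lem:MLE-auto-upper-1}), i.e., convexity is used as a gradient inequality, not as a license for localization. Third, and most importantly, the uniform-convergence step uses a \emph{Gaussian fractional cover} $\{\theta_j\}_{j\le J}\iidsim\cN(0,\Delta^2 I)$ (\cref{lem:MLE-auto-upper-2,lem:Gaussian-prior-frac-refined}), precisely because — as the paper states explicitly — the standard data-dependent $L_2$-covering-plus-symmetrization route you propose is technically subtle here; the fractional cover is drawn independently of $\cD$ and localizes around $\ths$ automatically, with the approximation error $\Delta^2\sigs^2$ depending only on the variance at the ground truth.

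The concrete gap in your proposal is the invocation of a ``curvature lower bound $e^{-O(B)}$'' as a self-bounding device. No such uniform curvature bound exists for softmax-linear models: the token-level Hessian is minus the conditional covariance of $\phi$ under $\pi_\theta$, which is arbitrarily small when a token is near-deterministic (which the paper expects to be the typical case — this is exactly why $\sigs^2\ll B^2H$ is interesting). Moreover, even if such a bound held, it would contribute a $\poly(e^{B})/n$ term, not the $B^2/n$ term in \eqref{eq:mle-high-dim}. In the paper, the $B^2/n$ term instead arises from the constraint $\Delta\lesssim 1/B$ on the fractional-cover bandwidth (needed so the linear error $8B\Delta\,\eps_\theta$ in \cref{pfeq:phi-diff-to-KL} can be absorbed), which contributes $\frac{\log N}{n\Delta^2}\gtrsim B^2\log N/n$ in stopped KL and hence $B^2/n$ in coverage — a fundamentally different mechanism. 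To make your route work you would need to (i) replace the sequence-level localization with the stopped-KL truncation, (ii) replace the curvature argument with the gradient/optimality argument, and (iii) carry out the symmetrization + $L_2$-covering step with the truncation in place — at which point you would essentially be re-deriving the paper's structure with the more delicate classical machinery it was designed to avoid.
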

Similar to \cref{thm:mle-main}, the first term in \cref{eq:mle-high-dim} can be viewed as ``fine-grained'' and the second term as ``coarse-grained''; the former is typically larger, but decreases with the tail parameter $N$, while the latter does not decrease with $N$ but is typically smaller to begin with. We prove (details in \cref{prop:sigma-star-necessary}) that this result is tight in the sense that if $\sigs^2\asymp{}H$, $n\geq{}H$ is indeed necessary to achieve good non-trivial coverage in the overparameterized regime.\loose

\arxiv{We mention in passing that we} view the introduction of the \varname $\sigs^2$ as an instance-dependent notion of complexity for autoregressive models to be a non-trivial conceptual contribution, which may find broader use.\loose

\arxiv{
\subsection{Proof Sketch}
\label{sec:proof-sketch}

\newcommand{\Pibad}{\Pi_{\texttt{bad}}(\veps)}

The basic idea behind the proof of \cref{thm:mle-main} is to interpret the condition  $\Pcov(\pi)\geq{}\veps$ as an small-ball like \emph{anti-concentration} condition in the vein of \citet{mendelson2014learning,mendelson2017extending}. That is, for models $\pi\in\Pi$ where coverage is large, the condition $\Pcov(\pi)\geq{}\veps$ witnesses a \emph{one-sided} tail bound which implies that the empirical likelihood of $\pi$ is \emph{not too large} with high probability, and hence $\pi$ cannot be a maximum-likelihood solution. \loose

Let $c\in(0,1/2)$ be the absolute constant in \cref{thm:mle-main}, and let $C\geq{}\log{}4$ be another absolute constant. Fix $N$ such that $\log{}N\geq{}4C$. For each model $\pi\in\Pi$, let $\cSM(\pi)\ldef{} \frac1n\abs{\crl[\big]{i\in\brk{n}\mid{}\frac{\pistar(y\ind{i}\mid{}x\ind{i})}{\pi(y\ind{i}\mid{}x\ind{i})}\geq{}N^{1-2c}}}$ denote the empirical probability that $\pi$ fails to cover $\pistar$. 
Our first step is to show via covering and concentration that with high-probability, all $\pi\in\Pi$ satisfy
\begin{align}
  \label{eq:proof-step1}
  \cSM(\pi)
  \geq \frac12\Pcov(\pi)-\ratecoarse.
\end{align}
That is, a large coverage profile implies that the number of points in the data where $\pi$ fails to cover $\pistar$ is large. This argument only depends on the covering number at a coarse $\log{}N$ scale---leading to the coarse-grained term in \cref{thm:mle-main}---because we only need to show that coverage concentrates, not the log-loss itself.\footnote{The set $\cS_N(\pi)$ is defined with the threshold as $N^{1-2c}$ rather than $N$ to account for approximation errors incurred by covering.}

We now argue that models with large coverage profile must have low log-likelihood compared to $\pistar$. In particular, using \cref{eq:proof-step1}, we have
\begin{align}
  \Lhat(\pi) - \Lhat(\pistar)
  &= -\sum_{i=1}^n \pos{\log\frac{\pistar(y\ind{i}\mid{}x\ind{i})}{\pi(y\ind{i}\mid{}x\ind{i})}-C}
  + \sum_{i=1}^n \log\frac{\pi(y\ind{i}\mid{}x\ind{i})}{\pistar(y\ind{i}\mid{}x\ind{i})} \vee (-C) \notag\\
  &\overset{(\star)}{\leq} -\abs*{\cSM(\pi)}\prn*{(1-2c)\log{}N-C}
  + \sum_{i=1}^n \log\frac{\pi(y\ind{i}\mid{}x\ind{i})}{\pistar(y\ind{i}\mid{}x\ind{i})} \vee (-C)\notag\\
  &\leq -\frac{n}{4}\log{}N\cdot\Pcov(\pi)+ \ratecoarse\cdot{}\bigoh(n\log{}N)
  + \sum_{i=1}^n \log\frac{\pi(y\ind{i}\mid{}x\ind{i})}{\pistar(y\ind{i}\mid{}x\ind{i})} \vee (-C),\label{eq:proof-step2}
\end{align}
as long as $c\leq{}1/8$ and $\log{}N\geq{}4C$. We view step $(\star)$ as using a form of implicit bias in the logarithmic loss: If an example $(x\ind{i}, y\ind{i})$ has $\nicefrac{\pistar(y\ind{i}\mid{}x\ind{i})}{\pi(y\ind{i}\mid{}x\ind{i})}\geq{}N$ (i.e., $\pi$ fails to cover $\pistar$ on this example), this witnesses a negative contribution of order $\log{}N$ to the difference $\Lhat(\pi)-\Lhat(\pistar)$.

Next, using a variation of a standard \emph{one-sided} tail bound for the logarithmic loss \citep{Sara00,zhang2006from},\footnote{That the bound is one-sided is critical, as this allows us to avoid paying for the range of the density ratios under consideration. For details, see \cref{prop:MLE-Lpos-upper}.} we show that with high probability, all $\pi\in\Pi$ satisfy
\begin{align}
  \sum_{i=1}^n \log\frac{\pi(y\ind{i}\mid{}x\ind{i})}{\pistar(y\ind{i}\mid{}x\ind{i})} \vee (-C)
  \approxleq{} \ratefine\cdot{}n,
  \label{eq:proof-step3}
\end{align}
as long as $C\geq \log 4$. Combining \cref{eq:proof-step2} and \cref{eq:proof-step3}, we conclude that all $\pi\in\Pi$ have
\begin{align}
  \label{eq:proof-step4}
  \Pcov(\pi)\approxleq \frac{\Lhat(\pistar)-\Lhat(\pi)+\ratefine\cdot{}n }{n\log N}
  + \ratecoarse.
\end{align}
Since the maximum likelihood estimator $\pihat$ has $\Lhat(\pistar)-\Lhat(\pihat)\leq{}0$, the result follows.

To summarize the key ideas as they relate to the final guarantee in \cref{thm:mle-main}: The coarse-grained term $\ratecoarse$ enters because we only need to show that the coverage profile concentrates, not the log loss itself. The fine-grained term $\ratefine$ enters concentration of the empirical likelihood, with the $1/\log{}N$ scaling arising from implicit bias. The reason this argument avoids dependence on the sequence length $H$ or other spurious parameters that would otherwise affect cross-entropy is that the argument is fundamentally \emph{one-sided}: the conclusion \cref{eq:proof-step4} only shows that models with large coverage profile have low log-likelihood compared to $\pistar$.\loose

\arxiv{
\paragraph{Discussion}  We emphasize that while covering numbers are a fundamental and widely used notion of capacity in statistical learning and estimation \citep{Sara00,zhang2002covering,StatNotes2012,\arxiv{shalev2014understanding,}bilodeau2023minimax}, they are conservative from a modern generalization perspective. Nonetheless, \cref{thm:mle-main} shows that they are sufficient to capture rich aspects of generalization for coverage, and we expect that our core analysis techniques can be combined with contemporary advances in generalization theory for overparameterized models \citep{belkin2019reconciling,bartlett2020benign}.

We believe there are many exciting avenues for refined results that build on the basic techniques here. For example, in \cref{thm:mle-convex} (\cref{sec:mle-convex}), we show that for convex model classes, the coverage profile for maximum likelihood converges at a $1/\poly(N)$ rate instead of the $1/\log{}N$ rate in \cref{thm:mle-main}.

}

}

\arxiv{
\subsection{Tightness of Theorem \ref*{thm:mle-main}}
To conclude, we show that the coarse and fine-grained terms in \cref{thm:mle-main} are both tight in general.
\begin{proposition}
  \label{prop:mle-tight}
The following lower bounds on coverage hold for the maximum likelihood estimator. %

\textbf{(a) Coarse rate:}
For any $n\geq d\geq 2$ and $B\geq \log(5n)$, there exists a class $\Pi$ with $\log\covinf(\Pi,\alpha)\approxleq d\log(B/\alpha)\vee{}1$ and $\pistar\in\Pi$ such that with probability at least $0.5$, it holds that for any $N\leq e^{B}$,
\begin{align*}
  \Pcov(\pihat)
  \geq c\cdot \frac{d}{n}.
\end{align*}

\textbf{(b) Fine rate:} For any $d\geq 1, n\geq 2d, N\geq 2$, there exists a class $\Pi$ and $\pistar\in\Pi$ such that $\abs{\Pi}=2^d+1$ and $\covinf(\Pi,\alpha)\leq 2$ for any $\alpha\geq \sqrt{\frac{d}{n}}$, and with probability at least $0.1$, it holds that
\begin{align*}
  \Pcov(\pihat)\geq c\cdot \frac{d}{n\cdot \log N}.
\end{align*}

\end{proposition}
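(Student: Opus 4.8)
The plan is to realize the \emph{missing mass} phenomenon of \cref{rem:missing} ``at scale $d$'' using a \emph{heterogeneous} prompt distribution, so that $\Theta(d)$ prompts are each observed only $O(1)$ times. Concretely, take $\cX=\crl{x_1,\dots,x_d,x_\circ}$, $\cY=\crl{0,1}$, let $\cdist$ place mass $\Theta(1/n)$ on each ``pivotal'' prompt $x_j$ and the rest on a dummy prompt $x_\circ$, and let $\Pi$ be the tabular family in which each conditional $\pi(\cdot\mid x)$ is a Bernoulli whose parameter is clipped to $[e^{-2B},1-e^{-2B}]$; one checks via a multiplicative ($\log$-scale) net over the $d$ coordinates that $\log\covinf(\Pi,\alpha)\lesssim d\log(B/\alpha)\vee 1$. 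Take $\pistar=\Ber(\tfrac12)$ on each $x_j$ and a near-deterministic (hence realizable) Bernoulli on $x_\circ$. Since $\cdist(x_j)\asymp 1/n$, a binomial computation shows a constant fraction of the $x_j$ are observed exactly once, and on each such prompt the MLE is forced to the boundary of the clip, so $\pihat(1-y\mid x_j)=e^{-2B}$ for the unobserved label; this yields density ratio $\tfrac12 e^{2B}\ge N$ for all $N\le e^{B}$ (using $B\ge\log(5n)$). Summing the $\cdist$-mass of these prompts and applying a Chernoff bound for the count of once-observed pivotal prompts yields $\Pcov[N](\pihat)\ge c\,d/n$ with probability at least $1/2$; the dummy prompt $x_\circ$ soaks up the remaining samples and, being essentially deterministic, contributes negligibly.

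\textbf{Part (b): the fine rate.} The point here is that the MLE's $\Theta(d/n)$ ``excess risk'' can be spent entirely on undercoverage at scale exactly $N$, which costs $\Theta(\log N)$ per affected sample and hence degrades coverage only by a $1/\log N$ factor. I would take $\Pi$ to consist of the truth $\pistar$ together with a hypercube-indexed family $\crl{\pi_v}_{v\in\crl{0,1}^d}$ of ``competitors'' (plus one redundant model, so $\abs{\Pi}=2^d+1$) engineered so that: (i) the $\pi_v$ all lie within $\sqrt{d/n}$ of a common center in the $\log$-sup metric --- so $\crl{\pistar,\text{center}}$ is a $2$-point cover at every scale $\alpha\ge\sqrt{d/n}$ --- while (ii) each $\pi_v$ places mass $\asymp q/N$ on a block of $d$ rare outcomes carrying $\pistar$-mass $q\asymp d/(n\log N)$ apiece, with the remaining $O(e^{\sqrt{d/n}})$ slack among the $\pi_v$ realized on common outcomes so that a well-chosen competitor can overfit sampling noise. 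Every competitor then undercovers $\pistar$ by a factor $\ge N$ on a set of $\pistar$-mass $\asymp d/(n\log N)$, so it suffices to prove that, with probability at least $0.1$, some competitor strictly beats $\pistar$ in empirical likelihood --- whence $\pihat\in\crl{\pi_v}$ and $\Pcov[N](\pihat)\gtrsim d/(n\log N)$. This last step is an anti-concentration argument dual to the small-ball proof of \cref{thm:mle-main}: $\pistar$ fails to be the maximum-likelihood solution precisely on an atypical-but-constant-probability event, controlled through a second-moment estimate on the log-likelihood-ratio increments.

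\textbf{Main obstacle.} Part (a) is essentially bookkeeping once the prompt distribution is fixed; the only care needed is aligning the clip level $e^{-2B}$ with the saturation threshold $e^{B}$ of the density ratio. Part (b) is the delicate step: forcing the MLE into the competitor cluster with constant probability is a tight bias--variance balance --- the cluster's excess-KL cost relative to $\pistar$ is of order $d$, while a $d$-parameter cluster confined to a $\sqrt{d/n}$-ball buys only a comparable overfitting gain --- so the construction sits near the edge of feasibility and must be tuned precisely. Equally delicate is getting the \emph{tail} behavior right: to obtain undercovered mass $\asymp d/(n\log N)$ (rather than the $\asymp d/n$ produced by naive missing mass) one must take $q\asymp d/(n\log N)$ and control the log-likelihood-ratio fluctuations one-sidedly, exactly as in the proof of \cref{thm:mle-main}.
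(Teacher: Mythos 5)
Your plan is correct, and the two parts diverge from the paper's proof to different degrees. For part (a), the paper realizes the missing-mass phenomenon with a \emph{single} prompt and a rich response space $\cY=[d]$ under the linear-softmax parameterization $\phi(y)=Be_y$, $\Theta=\{\theta:\|\theta\|_\infty\le 1\}$, with $\pistar$ so skewed that $\Theta(d)$ outcomes go unobserved and the KKT conditions for the constrained MLE push their logits to the boundary of $\Theta$. You instead spread the same mass over $\Theta(d)$ rarely-seen Bernoulli prompts, each with $\mu$-mass $\Theta(1/n)$; both work and have comparable covering-number bookkeeping, though the paper's single-prompt version has the small advantage of avoiding the tie-breaking issue you implicitly sidestep by restricting to singly-observed prompts (a never-observed prompt places no constraint on the MLE, so an adversarial tie-break could set $\pihat(\cdot\mid x_j)=\pistar(\cdot\mid x_j)$ there). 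For part (b), your proposal is essentially the paper's argument: a hypercube of competitors within $\eps=c_0\sqrt{d/n}$ of a common center in log-sup, an undercovered region of total $\pistar$-mass $\asymp d/(n\log N)$ on which every competitor loses a factor $N$, and a second-moment / anti-concentration step (Khintchine plus Paley--Zygmund in the paper) showing that with constant probability some competitor strictly out-fits $\pistar$ on the $d$ common prompts. Two small slips in your description are worth flagging: (i) the count $2^d+1$ is simply $\pistar$ plus the $2^d$ hypercube corners, so there is no ``plus one redundant model''; and (ii) the rare block carries $\pistar$-mass $\asymp d/(n\log N)$ \emph{in total}, not ``apiece'' --- in the paper it is a single rare prompt $x=0$ with $\mu(0)=p\asymp d/(n\log N)$, $\pistar(1\mid 0)=1$, and every competitor placing mass $1/N$ on $y=1$. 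Your identification of the constant-probability overfitting step as the crux is right, and the paper executes it via precisely the anti-concentration argument you sketch.
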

Informally, case (a) shows that for the class $\Pi$ under consideration, the coverage does not decrease with $\log{}N$ until $N$ is trivially large such that $\log\covinf(\Pi,\log{}N)=0$; this is precisely the behavior of the coarse term in \cref{thm:mle-main}, so this implies there is no hope of removing this term. Meanwhile, case (b) can be interpreted as showing that there is no hope of replacing the high-precision covering number found in the fine-grained term in \cref{thm:mle-main} with a coarser notion (e.g, at the scale in the coarse-grained term), since the rate grows with $d\approx\log\abs{\Pi}$ even though $\log\cN(\Pi,\alpha)$ is constant for $\alpha\geq\sqrt{\frac{d}{n}}$. We note that \cref{prop:mle-tight} is an algorithm-specific lower bound, not an information-theoretic lower bound; we show in \cref{sec:selection} that it \emph{is} possible to improve over \cref{thm:mle-main} with algorithms explicitly designed to optimize for coverage.

}

\section{Stochastic Gradient Descent Through the Lens of Coverage}
\label{sec:sgd}

The coverage-based generalization guarantees for next-token prediction in the prequel apply to general model classes $\Pi$, but consider the empirical maximizer $\pihat=\argmax_{\pi\in\Pi}\Lhat(\pi)$ of the next-token prediction (maximum likelihood) objective, in the vein of classical techniques in learning theory. For our second set of main results, we focus on autoregressive linear models \eqref{eq:linear} but take a more realistic approach and analyze stochastic gradient descent (SGD)\arxiv{ on the next-token prediction objective,} in the single-pass regime. This setup is motivated by contemporary (``compute-optimal'') language model training, which typically uses one or fewer passes over the training corpus \citep{kaplan2020scaling,hoffmann2022training}.\loose

\subsection{Stochastic Gradient Descent has Suboptimal Coverage}
For the next-token prediction objective, single-pass stochastic gradient descent (SGD) takes the form\footnote{$\Proj_\Theta(\cdot)$ denotes Euclidean projection onto $\Theta$, so this is\arxiv{ the SGD update} on the loss $L(\theta)\ldef \En[-\log \pi_\theta(y\mid x)]$.\loose}\loose
\begin{align}\label{eq:autoregressive-linear-SGD}
  \theta\ind{t+1}\gets\Proj_{\Theta}\prn*{\theta\ind{t} + \eta \nabla \log \pi_{\theta\ind{t}}(y\ind{t}\mid x\ind{t})},
\end{align}
for $x\ind{t}\sim\mu$ and $y\ind{t}\sim\pistar(\cdot\mid{}x\ind{t})$, where $\eta>0$ is the learning rate. As the next-token prediction loss $L(\theta)\ldef \En_{\pistar}[-\log \pi_\theta(y\mid x)]$ is convex under the parameterization \eqref{eq:linear}, we can show that SGD converges to $\pistar$ in KL divergence. This implies a coverage bound, albeit a suboptimal one.

\begin{proposition}[SGD for autoregressive linear models]\label{prop:autoregressive-SGD}
\textbf{Upper bound:} Suppose \cref{ass:realizability,asmp:autoregressive-linear} hold. As long as $\eta\leq \frac{1}{2HB^2}$, it holds that $\En\brk[\big]{\frac{1}{T}\sum_{t=1}^T \kl{\pistar}{\pi_{\theta\ind{t}}} }\leq \frac{4}{\eta{}T}+2\eta \sigs^2$. Choosing $\eta$ to minimize this bound gives\loose
\arxiv{\begin{align}
  \label{eq:sgd-upper}
  \En\brk*{\frac{1}{T}\sum_{t=1}^{T}\Pcov(\pi_{\theta\ind{t}})}
  \approxleq\frac{1}{\log{}N}\cdot\prn*{\sqrt{\frac{\sigs^2}{T}} + \frac{B^2H}{T}}.
\end{align}}%
\textbf{Lower bound:} Suppose that $B\geq c\cdot\log^2(TH)$. Then there exists an autoregressive linear class $\Pi$ such that for any constant step size $\eta>0$, there exists an instance $\pistar\in\Pi$ with $\sigs\leq 1$ such that with probability at least $0.5$, the SGD iterates satisfy~\loose
\arxiv{
\begin{align}
  \label{eq:sgd-lower}
  \Dcov{\pistar}{\pi_{\theta\ind{t}}} \geq c\cdot \min\crl*{\frac{H}{T\log N}, 1}, \qquad \forall t\in[T].
\end{align}
}
\end{proposition}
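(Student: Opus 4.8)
The plan is to prove the two bounds separately: the upper bound is a (careful) convex‑SGD calculation, and the lower bound is a dedicated hard‑instance construction together with a sample‑path analysis.

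\textbf{Upper bound.} Under the parameterization \eqref{eq:linear} the population loss $L(\theta)=\En_{\pistar}[-\log\pi_\theta(y\mid x)]$ is convex, its minimizer $\ths$ lies in $\Theta$ by \cref{ass:realizability}, and $L(\theta)-L(\ths)=\Dkl{\pistar}{\pi_\theta}$ (the sequence‑level KL of \cref{sec:cross-entropy}). I would invoke the standard projected‑SGD bound: since the iterates stay in $\Theta\subseteq\{\nrm{\theta}\le1\}$, $\sum_{t}\En[\Dkl{\pistar}{\pi_{\theta\ind{t}}}]\le\frac{\nrm{\theta\ind1-\ths}^2}{2\eta}+\frac{\eta}{2}\sum_t\En\nrm{g\ind{t}}^2$ for the stochastic gradient $g\ind{t}=-\nabla\log\pi_{\theta\ind{t}}(y\ind{t}\mid x\ind{t})$, and the only nontrivial step is bounding $\En\nrm{g\ind{t}}^2$. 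Writing $g\ind{t}=\sum_{h}\bigl(\mu_h(\theta\ind{t})-\phi(\xyh)\bigr)$ with $\mu_h(\theta)=\En_{\pi_\theta(\cdot\mid\xyhm)}[\phi(\xyh)]$, I split each summand around $\phibar(\xyhm)$ into a martingale difference $\phibar(\xyhm)-\phi(\xyh)$, whose squared norms sum in expectation to exactly the \varname $\sigs^2$, and a drift term $\mu_h(\theta\ind{t})-\phibar(\xyhm)$, whose squared norm is $O\bigl(B^2\,\KLxyh{\pistar}{\pi_{\theta\ind{t}}}\bigr)$ via $\nrm{\phi}\le B$ and Pinsker. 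Cauchy--Schwarz over the $H$ tokens and the KL chain rule then give $\En\nrm{g\ind{t}}^2\lesssim\sigs^2+HB^2\,\En[\Dkl{\pistar}{\pi_{\theta\ind{t}}}]$, and using $\eta\le\frac{1}{2HB^2}$ to absorb the second term to the left yields $\frac{1}{T}\sum_t\En[\Dkl{\pistar}{\pi_{\theta\ind{t}}}]\lesssim\frac{1}{\eta T}+\eta\sigs^2$. Finally \cref{prop:pcov-conversion-basic} converts KL to coverage, and optimizing $\eta$ over $(0,\frac{1}{2HB^2}]$ (unconstrained optimum $\eta\asymp(T\sigs^2)^{-1/2}$, clipped at the endpoint) produces the two terms of \eqref{eq:sgd-upper}.

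\textbf{Lower bound.} I would take $\Pi$ to be a ``product of conditionally‑independent rare tokens'' autoregressive linear class with feature scale $B$: features chosen so the $H$ tokens are independent given $\theta$, each assigning probability $p$ to a designated outcome, with induced \varname $\asymp Hp(1-p)$; thus $\sigs\le1$ forces $p\lesssim 1/H$, and $B\gtrsim\log^2(TH)$ is what keeps $\ths$ inside the unit ball even for the smallest $p$ the construction needs. Fixing this class, the adversary picks $p=p(\eta)$ after seeing $\eta$. The heart of the argument is a sample‑path analysis of the one‑dimensional SGD recursion for a rescaled logit $z$ of $p$: started at a constant $\approx0$, the descent is \emph{nearly deterministic}, since the stochastic term $S_t=\sum_h y_h\ind{t}$ has mean $Hp\le1$ while the drift term is $\Theta(H)$, so $z$ obeys (in continuous time) $\dot z\approx-\eta B^2 H\,\sigma(z)$ and the model's per‑token probability stays $\gtrsim\min\{(\eta B^2 H t)^{-1},\tfrac12\}$ for all $t\le T$; conversely, when $\eta B^2 H$ is large a single step overshoots, the probability drops far below $p$ and oscillates there (an occasional $y_h\ind{t}=1$ cannot restore it, as $S_t\ll H$). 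Choosing $p(\eta)$ so that this over‑ or under‑estimate is by a factor a fresh sample cannot tolerate, the sequence‑level log‑density ratio of $y\sim\pistar$ exceeds $\log N$ with probability $\Omega(1)$; the explicit Bernoulli‑product coverage formula (summing the $\Theta(H)$ per‑token contributions) then turns this into $\Dcov{\pistar}{\pi_{\theta\ind{t}}}\gtrsim\min\{\tfrac{H}{T\log N},1\}$ for all $t\in[T]$ simultaneously, with probability at least $\tfrac12$.

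\textbf{Main obstacle.} The convex‑SGD half is routine; the delicate part is controlling the SGD \emph{trajectory} uniformly over all step sizes $\eta$ at once. This needs (i) an ODE‑comparison/induction argument for $q\ind{t}\asymp(\eta B^2 H t)^{-1}$ together with a union bound over $t$ (and over tokens) ruling out that the rare‑token ``bumps'' derail the descent, (ii) a matching treatment of the large‑$\eta$ overshoot‑and‑oscillation regime, and (iii) the reduction from ``$\pi_{\theta\ind{t}}$ miscalibrates the per‑token rare probability'' to ``$\Pcov$ is large'', which is not a black box because coverage depends on the full sequence‑level density ratio rather than any single coordinate; it is exactly in balancing the over‑ and under‑estimation regimes against the budget $T$ that the threshold $\min\{H/(T\log N),1\}$ emerges.
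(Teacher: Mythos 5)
Your upper-bound argument matches the paper's up to minor technical variation: both invoke the standard projected-SGD regret bound, derive a self-bounding estimate $\En_{\pistar}\nrm{\nabla\log\pi_\theta}^2 \lesssim \sigs^2 + HB^2\,\Dkl{\pistar}{\pi_\theta}$ (the paper via cocoercivity of the $HB^2$-smooth sequence-level log loss, you via a per-token Pinsker/Cauchy--Schwarz split -- both give the same estimate), absorb the second term using $\eta\le\frac{1}{2HB^2}$, and convert KL to coverage with \cref{prop:pcov-conversion-basic}. This half is fine.

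The lower-bound construction, however, has a genuine gap. You place the rarity at the \emph{token} level (each of the $H$ tokens a rare Bernoulli with parameter $p$), so $\sigs^2 \asymp H\Bbar^2 p$ for feature scale $\Bbar$, and the adversary tunes $p$ to $\eta$. But with a single prompt and $H$ conditionally independent Bernoulli tokens, the log density ratio $\log\frac{\pistar(y\mid x)}{\pi_{\theta\ind{t}}(y\mid x)}$ concentrates around $H\cdot\En\bigl[\log(\pistar_h/\hat{p}_h)\bigr]$, so as the learned per-token probability $\hat{p}$ moves from $1/2$ toward $p$ the coverage profile behaves like a step function of $\hat{p}$: roughly $1$ when $H(\hat{p}-p)\gtrsim\log N$ and roughly $0$ otherwise. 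The only way to get an intermediate value is the undershoot regime $\hat{p}<p/N$, where coverage fails exactly on the rare-token event and $\Pcov\asymp Hp$. To hit $\Pcov\asymp\frac{H}{T\log N}$ this way you need $p\asymp\frac{1}{T\log N}$, yet $\sigs\le 1$ together with $\Bbar\gtrsim\log(TH)$ (which you need to keep $\ths$ in the unit ball) forces $p\lesssim\frac{1}{H\Bbar^2}\lesssim\frac{1}{H\log^2(TH)}$; these two constraints are incompatible precisely when $T\lesssim H\log^2(TH)/\log N$, which is exactly the regime in which $\frac{H}{T\log N}$ lies strictly between $\frac{1}{\log^2(TH)}$ and $1$. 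The paper's small-$\eta$ construction avoids this by putting the rarity at the \emph{prompt} level: a two-prompt family with $\mu(+)\asymp\frac{BH}{T\Bbar^2\log N}$ and $\pistar(\cdot\mid x)$ nearly deterministic given each prompt, so $\sigs$ is decoupled from the desired coverage magnitude, and the learner simply never sees enough ``$+$'' prompts to correct the corresponding coordinate -- the coverage profile is then $\gtrsim\mu(+)$ by construction. (The large-$\eta$ overshoot/oscillation case is a separate hard instance yielding a constant lower bound; the $\min$ in the statement arises by combining the two cases.) The missing ingredient, in short, is prompt heterogeneity; a homogeneous product-Bernoulli instance cannot produce the intermediate $\frac{H}{T\log N}$ lower bound uniformly over all step sizes while respecting $\sigs\le 1$.
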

The coverage bound in \cref{eq:sgd-upper} (which follows by passing from KL to coverage through \cref{prop:pcov-conversion-basic}) is similar to \cref{thm:MLE-autoregressive-linear}, except that the second term $\frac{B^2H}{T}$ has an unfortunate dependence on the sequence length $H$. The lower bound in \cref{eq:sgd-lower} shows that this dependence is tight, and SGD can indeed experience poor coverage.
This failure of SGD is related to \emph{heterogeneity} across prompts: there are some prompts for which the effective scale of the gradient in \cref{eq:autoregressive-linear-SGD} grows with $H$, leading to divergence unless we use a small learning rate $\eta\approxleq\frac{1}{HB}$. Yet for other prompts, the effective gradient range is small, leading to slow convergence (on the order of $\bigom(H)$ steps) unless $\eta\gg\frac{1}{HB}$.\loose

\begin{remark}[Sequence-level SGD]
  The update in \cref{eq:autoregressive-linear-SGD} can be interpreted as a ``sequence-level'' form of SGD, since we perform a single gradient step for each full sequence $y\ind{t}$ (note that $\grad\log\pi_{\theta\ind{t}}(y\ind{t}\mid{}x\ind{t})=\sum_{h=1}^{H}\grad\log\pi_{\theta\ind{t}}(y_h\ind{t}\mid{}x\ind{t},y_{1:h-1}\ind{t})$). %
  We view this as a model for what is done in practice, whereby one performs SGD on sequences of tokens spanning some fixed context window. While this context window may be shorter than the full training example (e.g., a long article), understanding the implications of a limited context window is beyond the scope of this work. 
\end{remark}

\subsection{Gradient Normalization Improves Coverage}
To address the suboptimality of SGD, we consider \emph{gradient normalization} as a simple intervention. For a mini-batch $\cDmini=\crl*{(x\ind{i},y\ind{i})}_{i=1}^K$ of $K$ samples from $\pistar$, define the batch stochastic gradient as %
\arxiv{\begin{align}\label{def:seq-var}
  \ghat(\theta; \cDmini)=\frac{1}{\abs{\cDmini}}\sum_{(x,y)\in \cDmini} \nabla \log \pi_{\theta}(y\mid x).
\end{align}}
We consider the following normalized SGD update:
\arxiv{
  \begin{align}
\theta\ind{t+1}\gets\Proj_{\Theta}\prn*{\theta\ind{t} + \eta \cdot \frac{\ghat(\theta\ind{t}; \cDmini\ind{t})}{\lambda+\nrm{\ghat(\theta\ind{t}; \cDmini\ind{t})}}};
\label{eq:normalized-sgd}
\end{align}
}%
here $\cDmini\ind{t}$ is a mini-batch with $K$ fresh samples drawn \iid from $\pistar$, and $\lambda>0$ is a regularization parameter for numerical stability. We show that this update achieves a horizon-independent coverage bound.\loose
\begin{theorem}\label{prop:autoregressive-normalized-SGD}
Suppose \cref{ass:realizability} and \cref{asmp:autoregressive-linear} hold. Let $T, K\geq 1, N\geq 3$ be given. For an appropriate choice of $\eta, \lambda>0$, the normalized SGD update \eqref{eq:normalized-sgd} achieves the following \arxiv{coverage }bound:
\begin{align}
  \label{eq:normalized-sgd-upper}
  \En\brk*{ \frac1T \sum_{t=1}^T \Pcov(\pi_{\theta\ind{t}}) }\approxleq \sqrt{\frac{\sigs^2 }{T\cdot \log N}} + \frac{B^2}{T}+\frac{B}{K\cdot \log N}.
\end{align}
To achieve $\En\brk*{\Pcov(\pihat)}\leq \veps$ for a target level $\veps>0$, it suffices to choose $T=\bigoh\prn[\big]{\frac{\sigs^2}{\veps^2\log N}+\frac{B^2}{\veps}}$, $K=\bigoh\prn[\big]{\frac{B}{\veps\log N}+1}$, giving total sample complexity $n=TK = \bigoh\prn[\big]{\frac{\sigs^2B}{\veps^3\log^2 N}+\frac{B^3+\sigs^2}{\veps^2\log{}N}+\frac{B^2}{\veps}}$.
\end{theorem}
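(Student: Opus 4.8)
The starting point is that, under the parameterization \eqref{eq:linear}, the population next-token loss $L(\theta)\ldef\En_{\pistar}\brk*{-\log\pi_\theta(y\mid x)}$ is convex, and by \cref{ass:realizability} it satisfies $L(\theta)-L(\thetastar)=\Dkl{\pistar}{\pi_\theta}$; hence $\gstar(\theta)\ldef\En\brk*{\ghat(\theta;\cDmini)}=-\grad L(\theta)$ obeys the first-order convexity inequality $\tri*{\gstar(\theta),\thetastar-\theta}\geq\Dkl{\pistar}{\pi_\theta}\geq 0$. The plan is to control the averaged divergence $\tfrac1T\sum_{t=1}^T\En\brk*{\Dkl{\pistar}{\pi_{\theta\ind{t}}}}$ along the normalized-SGD trajectory \eqref{eq:normalized-sgd} and then pass to coverage via \cref{prop:pcov-conversion-basic}, the point being to do so with an ``effective scale'' governed by $\sigs$ and $B$ rather than $B^2H$ --- this is what removes the $\tfrac{B^2H}{T}$ term that appears for plain SGD in \cref{prop:autoregressive-SGD}. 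Throughout I write $\wt{g}\ind{t}\ldef\ghat(\theta\ind{t};\cDmini\ind{t})\big/\prn*{\lambda+\nrm*{\ghat(\theta\ind{t};\cDmini\ind{t})}}$ for the normalized increment (so that $\nrm*{\wt{g}\ind{t}}\leq 1$) and decompose $\ghat(\theta\ind{t};\cDmini\ind{t})=\gstar(\theta\ind{t})+\xi\ind{t}$ with $\En\brk*{\xi\ind{t}\mid\theta\ind{t}}=0$.

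\emph{Step 1 (trajectory inequality).} Expanding $\nrm*{\theta\ind{t+1}-\thetastar}^2$, using nonexpansiveness of $\Proj_\Theta$ (valid since $\Theta$ is convex, \cref{asmp:autoregressive-linear}) and $\nrm*{\wt{g}\ind{t}}\leq 1$, gives $\nrm*{\theta\ind{t+1}-\thetastar}^2\leq\nrm*{\theta\ind{t}-\thetastar}^2+2\eta\tri*{\wt{g}\ind{t},\theta\ind{t}-\thetastar}+\eta^2$; summing over $t\in[T]$ and using $\nrm*{\theta\ind{1}-\thetastar}^2\leq 4$ yields $\sum_{t=1}^T\En\tri*{\wt{g}\ind{t},\thetastar-\theta\ind{t}}\leq\tfrac{2}{\eta}+\tfrac{\eta T}{2}$. \emph{Step 2 (decoupling the normalization).} The work is to lower bound $\En\tri*{\wt{g}\ind{t},\thetastar-\theta\ind{t}}$ by a constant times $\En\brk*{\Dkl{\pistar}{\pi_{\theta\ind{t}}}}\big/\prn*{\lambda+\nrm*{\gstar(\theta\ind{t})}}$ minus lower-order errors. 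Because the factor $\lambda+\nrm*{\ghat}$ is correlated with the direction of $\ghat$, one cannot simply take a conditional expectation; instead, conditioning on $\theta\ind{t}$, I split on whether the mini-batch is ``good'' ($\nrm*{\xi\ind{t}}\lesssim\lambda+\nrm*{\gstar(\theta\ind{t})}$), on which $\lambda+\nrm*{\ghat}$ is within a constant factor of $\lambda+\nrm*{\gstar(\theta\ind{t})}$ so the contribution is at least $c\,\Dkl{\pistar}{\pi_{\theta\ind{t}}}\big/\prn*{\lambda+\nrm*{\gstar(\theta\ind{t})}}$ up to an additive error of order $\En\nrm*{\xi\ind{t}}$, versus the ``bad'' event, on which I use the crude bound $\tri*{\wt{g}\ind{t},\thetastar-\theta\ind{t}}\geq -2$ (from $\nrm*{\wt{g}\ind{t}}\leq 1$ and $\nrm*{\theta\ind{t}-\thetastar}\leq 2$) and pay the bad-event probability, which decays in the mini-batch size $K$. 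This is precisely why the algorithm uses a mini-batch rather than per-example normalization. A second-moment/Bernstein estimate gives $\En\nrm*{\xi\ind{t}}\lesssim\sqrt{\En_{\pistar}\nrm*{\grad\log\pi_{\theta\ind{t}}(y\mid x)}^2\big/K}$.

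\emph{Step 3 (variance adaptation).} It remains to bound $\nrm*{\gstar(\theta)}$ and $\En_{\pistar}\nrm*{\grad\log\pi_\theta(y\mid x)}^2$ by quantities governed by $\sigs^2$ and the loss gap rather than by $B^2H$. Writing $\grad\log\pi_\theta(y\mid x)=\sum_h\prn*{\phi(\xyh)-\phibar[\theta](\xyhm)}$ and inserting $\phibar(\xyhm)$, each summand splits into a martingale-difference part $\phi(\xyh)-\phibar(\xyhm)$ (whose summed squared norm has expectation $\sigs^2$ under $\pistar$) and a bias part $\phibar(\xyhm)-\phibar[\theta](\xyhm)$ that I bound per coordinate $h$ in terms of the conditional KL divergence using strong convexity of the log-partition function. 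This self-bounding property --- the same mechanism behind \cref{thm:MLE-autoregressive-linear}, which pays $\sigs^2$ instead of $B^2H$ --- lets the effective scale $\lambda+\nrm*{\gstar(\theta\ind{t})}$ be controlled by $\sigs$, $B$, and the (small) loss gap, removing the $\eta\leq\tfrac{1}{2HB^2}$ restriction responsible for the $\tfrac{B^2H}{T}$ term in \cref{prop:autoregressive-SGD}. \emph{Step 4 (assembling).} Combining Steps 1--3, choosing $\eta$ to balance $\tfrac{2}{\eta}$ against $\tfrac{\eta T}{2}$ and $\lambda$ to balance the regularization bias against the loss (with the choices depending on $\sigs,B,T$ and, to land exactly on \eqref{eq:normalized-sgd-upper} after the conversion, on $N$), and feeding in the second-moment bound from Step 3, yields an averaged KL bound which, pushed term by term through \cref{prop:pcov-conversion-basic} ($\Pcov[N](\pi)\leq\Dkl{\pistar}{\pi}/\log(N/e)$), gives \eqref{eq:normalized-sgd-upper}. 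The sample-complexity statement then follows by forcing each of the three terms of \eqref{eq:normalized-sgd-upper} to be at most $\veps$ --- i.e.\ $T\asymp\tfrac{\sigs^2}{\veps^2\log N}+\tfrac{B^2}{\veps}$ and $K\asymp\tfrac{B}{\veps\log N}+1$ --- and reading off $n=TK$.

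\emph{Main obstacle.} The crux is Step 2: cleanly decoupling the nonlinear normalizer $1/(\lambda+\nrm*{\ghat})$ from the stochastic direction of $\ghat$ while keeping every additive error at a rate that vanishes at the advertised speed in $T$ and $K$. The good/bad mini-batch dichotomy makes this tractable, but quantitatively tracking the interplay of the bad-event probability, the fluctuation $\En\nrm*{\xi\ind{t}}$, and the bias term from Step 3 --- and in particular verifying that the resulting effective scale does not reintroduce the sequence length $H$ --- is where essentially all of the technical effort lies.
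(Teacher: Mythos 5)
Your plan controls the averaged KL $\frac1T\sum_t\En[\Dkl{\pistar}{\pi_{\theta\ind{t}}}]$ and then converts to coverage via \cref{prop:pcov-conversion-basic}. This is a genuinely different route from the paper's, and I believe it has a gap that cannot be closed as stated.

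The paper's proof never bounds the untruncated KL. It bounds the \emph{stopped} divergence $\Dseq{\pistar}{\pi_\theta}=\En_{\pistar}\min\{\log N,\sum_h\eps_\theta(\xyhm)\}$ (equivalently the truncated empirical quantity $\Delta_\theta=\En_{\cD}\min\{\log N,\hEn[\sum_h\eps_\theta(\xyhm)]\}$) and then passes to coverage via the autoregressive-specific \cref{prop:KL-to-PCov-H}, not the generic \cref{prop:pcov-conversion-basic}. The truncation is not a convenience; it is the load-bearing step. The reason is exactly the saturation phenomenon you gesture at in Step 2 but do not resolve: after decoupling the normalizer, one has roughly
\begin{align*}
\Lambda_\theta\;\ldef\;\En\tri*{-\wt{g}(\theta;\cD),\theta-\ths}
\;\gtrsim\;
\frac{\eps_\theta(\cD)}{\lambda+\nrm*{\gbar_\theta(\cD)}+\nrm*{z(\cD)}},
\end{align*}
and $\nrm{\gbar_\theta(\cD)}$ itself scales like $\sqrt{\sigma^2(\cD)\,\eps_\theta(\cD)}+B\,\eps_\theta(\cD)$ (cf.\ \cref{pfeq:phi-diff-to-KL}). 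When $\eps_\theta(\cD)$ is large the denominator grows at the same rate, so $\Lambda_\theta$ saturates at roughly $1/B$ and tells you nothing further about how large the KL is. Your good/bad mini-batch dichotomy addresses the noise $\xi\ind{t}$, which is the wrong culprit: the saturation comes from the \emph{population} gradient norm $\nrm{\gstar(\theta)}$ scaling with KL, not from mini-batch fluctuations. Consequently, the trajectory inequality only lets you infer a bound on something like $\min\{\eps_\theta(\cD),\,\text{(saturation level)}\}$, never on $\eps_\theta(\cD)$ itself --- which is precisely what the paper's Lemma~\ref{lem:normalized-sgd-2} captures, with the saturation level explicitly set to $M=\log N$ via the identity $\min\{M,x\}\max\{M,x\}=Mx$.

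Two concrete symptoms of the gap. First, if you could in fact bound the untruncated averaged KL at the rate you would need (something like $\sqrt{\sigs^2/T}+B^2/T+B/K$), then feeding it through \cref{prop:pcov-conversion-basic} would give a coverage rate with a $1/\log N$ factor on \emph{every} term --- strictly better than \eqref{eq:normalized-sgd-upper} --- which should be a warning sign. Second, your Step~4 balance ``$\tfrac2\eta$ against $\tfrac{\eta T}{2}$'' gives $\eta\propto T^{-1/2}$, but the paper's argument forces $\eta\propto(M/(\sigs^2T))^{1/4}$, i.e.\ a $T^{-1/4}$ rate with an explicit $\log N$ dependence, because the right objective to minimize (after fixing $\lambda=M/(16\eta)$) is $(\eta\sigs)^2+M/(T\eta^2)+B/K$, not the symmetric $1/\eta+\eta T$ tradeoff. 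The $\log N$ enters the choice of $\lambda,\eta$ through the truncation level, and this cannot be recovered by first bounding KL (an $N$-free quantity) and dividing by $\log N$ at the end.

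What is salvageable: Steps~1 and~3 of your plan are sound and do match the paper's ingredients (the projected-descent inequality and the self-bounding control of $\nrm{\gstar(\theta)}$ and $\En_{\pistar}\nrm{\nabla\log\pi_\theta}^2$ via $\sigs^2$ and per-step KL). The fix is to replace ``control KL then apply \cref{prop:pcov-conversion-basic}'' with ``control $\Dseq{\pistar}{\pi_\theta}$ then apply \cref{prop:KL-to-PCov-H}'', and to introduce the truncation $\min\{\log N,\cdot\}$ already at the point where you decouple the normalizer, so that the saturation level is tied to $N$ rather than floating freely.
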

\cref{prop:autoregressive-normalized-SGD} shows that gradient normalization achieves horizon-independent coverage with a qualitatively similar rate to the guarantee for next-token prediction in \cref{thm:MLE-autoregressive-linear}: To achieve coverage $\veps$, both rates scale as $\poly\prn[\big]{\frac{\sigs^2}{\log{}N},B,\veps^{-1}}$, though the dependence on $\veps$ for \cref{prop:autoregressive-normalized-SGD} is worse. We emphasize that minibatching alone is not enough to achieve this result; rather, minibatching is necessary to avoid excessive bias once we introduce gradient normalization. 

\arxiv{
Somewhat speculatively, we believe that it may be possible to use similar techniques to \cref{prop:autoregressive-normalized-SGD} to show that \emph{Adam} \citep{kingma2014adam} and relatives enjoy improved coverage relative to SGD. Adam is believed to behave similarly to the SignSGD update \citep{balles2018dissecting,bernstein2018signsgd,bernstein2024old}, which takes the form
\begin{align}
  \label{eq:sign-sgd}
  \theta\ind{t+1}\gets\theta\ind{t} + \eta \cdot \mathrm{sign}\prn[\big]{\ghat(\theta\ind{t}; \cDmini\ind{t})}.
\end{align}
In fact, Adam reduces to \cref{eq:sign-sgd} when EMA and bias correction are disabled.
 This is very similar form of gradient normalization to \cref{eq:normalized-sgd}, except that it normalizes per-coordinate rather than globally; this distinction is important for deep learning models, where different modules or layers can have very different scales, but we expect that it grants similar benefits with respect to sequence length.
}

\section{Interventions for Better Coverage}
\label{sec:interventions}

In this section, we develop new interventions that improve coverage (and downstream performance) beyond the conventional algorithms analyzed in \cref{sec:mle,sec:sgd}. We view these results as promising proofs of concept for further research into interventions driven by coverage.

\subsection{Improving Coverage at Test Time}
\label{sec:optimization}

  In this section, we show that a modified decoding strategy based on \emph{test-time training} (or, \emph{dynamic evaluation}) \citep{mikolov2010recurrent,krause2018dynamic,krause2019dynamic,sun2024learning,akyurek2025surprising} leads to improved coverage when combined with token-level SGD.\loose 

We focus on autoregressive linear models, but depart from \cref{eq:autoregressive-linear-SGD} by learning models with a \emph{token-level} SGD update, defined as
\begin{align}\label{eq:tokne-SGD}
  \theta\ind{t,h+1}=\Proj_\Theta\prn*{\theta\ind{t,h}+\eta\nabla \log \pi_{\theta\ind{t,h}}(y\ind{t}_h\mid x\ind{t},y\ind{t}_{1:h-1})}, ~~\text{for } h=0,\cdots,H-1, 
\end{align}
and $\theta\ind{t+1}\equiv\theta\ind{t+1,0}\ldef\theta\ind{t,H}$ for $t\in[T]$, and where $(x\ind{t},y\ind{t}_{1:H})\sim \pistar$. %
We will show that---when combined with a test-time training-like update that performs token-level gradient updates \emph{during test time}---the updates in \cref{eq:tokne-SGD} can circumvent the $H$-dependence in the lower bound of \cref{prop:autoregressive-SGD}. 

\arxiv{
Concretely, for a parameter $\theta$ and prompt $x$, define the following test-time parameter update recursively for $h=0,1,\cdots,H-1$:\loose
\begin{align}
  \thGD(\xyh; \theta)\ldef \Proj_\Theta\prn*{ \thGD(\xyhm; \theta) + \eta\nabla \log \pi_{\thGD(\xyhm; \theta)}(y_h\mid \xyhm)}.
\end{align}
We then define a distribution $\piGD_\theta: \cX\to\Delta(\cY^H)$ as
\begin{align}
  \piGD_\theta(\cdot\mid \xyhm)\ldef \pi_{\thGD(\xyhm; \theta)}(\cdot \mid \xyhm).
\end{align}
The distribution $\piGD_\theta$ can be interpreted as an augmented version of the autoregressive linear model $\pi_\theta$ that performs test-time training during generation: }%
Given a prompt $x$, we first sample $y_1\sim \pi_\theta(\cdot\mid x)$, then perform a gradient step $\theta'\gets \Proj_\Theta\prn*{\theta + \eta\nabla \log \pi_\theta(y_1\mid x)}$ to increase the probability of the token we just sampled. We then sample $y_2\sim \pi_{\theta'}(\cdot\mid x,y_1)$, update $\theta''\gets \Proj_\Theta\prn*{\theta' + \eta\nabla \log \pi_{\theta'}(y_2\mid x,y_1)}$, and so on. Once the full sequence $y_{1:H}$ is sampled, we reset back to $\theta$ (so that we can process the next test-time example). This bears similarity to many test-time training methods in the literature, and specifically coincides with the method used in \citet{krause2019dynamic,rannen2024revisiting}.
We show that when augmented with this test-time sampling scheme, token-level SGD achieves a horizon-independent coverage bound that matches and even slightly improves upon the bound for next-token prediction in \cref{thm:MLE-autoregressive-linear}.\loose
\begin{theorem}[Token-level SGD with test-time training]
  \label{thm:test-time-training}
  Suppose \cref{ass:realizability} and \cref{asmp:autoregressive-linear} hold. 
  For a suitably chosen parameter $\eta>0$, token-level SGD \eqref{eq:tokne-SGD} achieves $\En\brk[\big]{ \frac1T \sum_{t=1}^T \kl{\pistar}{\piGD_{\theta\ind{t}}} }
  \approxleq~ \sqrt{\frac{\sigs^2}{T}}+\frac{B^2}{T}$, and thus \loose
  \arxiv{
  \begin{align*}
    \En\brk*{ \frac1T \sum_{t=1}^T \Pcov(\piGD_{\theta\ind{t}}) }
  \approxleq \frac{1}{\log N}\cdot\prn*{\sqrt{\frac{\sigs^2}{T}}+\frac{B^2}{T}}.
  \end{align*}
  }
\end{theorem}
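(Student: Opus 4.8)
The plan is to analyze the token-level update \eqref{eq:tokne-SGD} as \emph{online projected gradient descent} (OGD) on the $TH$ per-token log-losses, and to observe that its cumulative instantaneous excess risk is \emph{exactly} $\sum_{t=1}^{T}\kl{\pistar}{\piGD_{\theta\ind{t}}}$; the claimed coverage bound then drops out after invoking \cref{prop:pcov-conversion-basic}. Index the online rounds by $(t,h)$ with $t\in[T]$, $h\in[H]$, in lexicographic order; write $\theta\ind{t,h-1}$ for the iterate obtained after processing the first $h-1$ tokens of example $t$ (so $\theta\ind{t,0}=\theta\ind{t}$, $\theta\ind{t,H}=\theta\ind{t+1}$), and let $\ell\ind{t,h}(\theta)\ldef-\log\pi_{\theta}(y\ind{t}_h\mid x\ind{t},y\ind{t}_{1:h-1})$. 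Each $\ell\ind{t,h}$ is convex in $\theta$ (a conditional log-partition function minus a linear term), $\Theta$ is convex, and by \cref{ass:realizability} there is a realizing $\theta^{\star}\in\Theta$ with $\pi_{\theta^{\star}}=\pistar$; the textbook projected-OGD regret bound therefore gives
\begin{align*}
  \sum_{t=1}^{T}\sum_{h=1}^{H}\prn*{\ell\ind{t,h}(\theta\ind{t,h-1})-\ell\ind{t,h}(\theta^{\star})}
  \;\le\; \frac{\nrm*{\theta^{\star}-\theta\ind{1}}^{2}}{2\eta}
  +\frac{\eta}{2}\sum_{t=1}^{T}\sum_{h=1}^{H}\nrm*{\nabla\ell\ind{t,h}(\theta\ind{t,h-1})}^{2}.
\end{align*}

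First I would evaluate the left-hand side in expectation. Conditioning on the history just before $y\ind{t}_h$ is sampled — at that point $\theta\ind{t,h-1}$ and $(x\ind{t},y\ind{t}_{1:h-1})$ are frozen and $y\ind{t}_h\sim\pistar(\cdot\mid x\ind{t},y\ind{t}_{1:h-1})$ — and using $\pi_{\theta^{\star}}=\pistar$, the conditional expectation of $\ell\ind{t,h}(\theta\ind{t,h-1})-\ell\ind{t,h}(\theta^{\star})$ is the KL divergence between the $h$-th conditionals of $\pistar$ and $\pi_{\theta\ind{t,h-1}}$ at that prefix. The \emph{key observation} is that $\theta\ind{t,h-1}=\thGD(x\ind{t},y\ind{t}_{1:h-1};\theta\ind{t})$: the test-time-training rollout, run on a sequence drawn from $\pistar$, replays exactly the token updates performed during training on the same example, so $\pi_{\theta\ind{t,h-1}}(\cdot\mid x\ind{t},y\ind{t}_{1:h-1})=\piGD_{\theta\ind{t}}(\cdot\mid x\ind{t},y\ind{t}_{1:h-1})$. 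Summing over $h$ and using the chain rule for KL collapses the $H$ per-token divergences of example $t$ into $\kl{\pistar}{\piGD_{\theta\ind{t}}}$. Setting $R\ldef\sum_{t}\En\brk*{\kl{\pistar}{\piGD_{\theta\ind{t}}}}$ and bounding $\nrm*{\theta^{\star}-\theta\ind{1}}^{2}\le4$ (both parameters lie in the unit ball, \cref{asmp:autoregressive-linear}), the displayed bound becomes $R\le\tfrac{2}{\eta}+\tfrac{\eta}{2}\En\big[\sum_{t,h}\nrm*{\nabla\ell\ind{t,h}(\theta\ind{t,h-1})}^{2}\big]$ in expectation.

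Next I would bound the gradient second moment \emph{without} a factor of $H$; this is where the \varname $\sigs^{2}$ enters. With $\wh\phi_{\theta}(x,y_{1:h-1})\ldef\En_{y'\sim\pi_{\theta}(\cdot\mid x,y_{1:h-1})}\brk*{\phi(x,y_{1:h-1},y')}$ we have $\nabla\ell\ind{t,h}(\theta)=\wh\phi_{\theta}(x\ind{t},y\ind{t}_{1:h-1})-\phi(x\ind{t},y\ind{t}_{1:h})$. Inserting the true conditional mean $\phibar(x\ind{t},y\ind{t}_{1:h-1})$ and applying $(a+b)^{2}\le2a^{2}+2b^{2}$: the ``noise'' part, $\En\big[\sum_{t,h}\nrm*{\phi(x\ind{t},y\ind{t}_{1:h})-\phibar(x\ind{t},y\ind{t}_{1:h-1})}^{2}\big]$, equals exactly $T\sigs^{2}$ by the definition of the \varname; for the ``drift'' part $\nrm*{\phibar-\wh\phi_{\theta\ind{t,h-1}}}^{2}$, I use that the conditional log-partition function $A(\cdot)$ is $B^{2}$-smooth — its Hessian is the feature covariance under $\pi_{\theta}$, hence $\preceq B^{2}\Id$ since $\nrm*{\phi(\cdot)}\le B$ (\cref{asmp:autoregressive-linear}) — together with the exponential-family identity that the $h$-th conditional KL $\kl{\pistar}{\pi_{\theta}}$ equals the Bregman divergence $D_{A}(\theta,\theta^{\star})$, which by $B^{2}$-smoothness is $\ge\tfrac{1}{2B^{2}}\nrm*{\wh\phi_{\theta}-\phibar}^{2}$. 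Thus $\nrm*{\phibar-\wh\phi_{\theta\ind{t,h-1}}}^{2}$ is at most $2B^{2}$ times the $h$-th conditional KL at $\theta\ind{t,h-1}$, and summing over $(t,h)$ gives $\En\big[\sum_{t,h}\nrm*{\nabla\ell\ind{t,h}(\theta\ind{t,h-1})}^{2}\big]\le2T\sigs^{2}+4B^{2}R$. Substituting into the regret bound yields the self-bounded recursion $R\le\tfrac{2}{\eta}+\eta T\sigs^{2}+2\eta B^{2}R$; for $\eta\le\tfrac{1}{4B^{2}}$ this rearranges to $R\le\tfrac{4}{\eta}+2\eta T\sigs^{2}$, and choosing $\eta\asymp\min\{B^{-2},(T\sigs^{2})^{-1/2}\}$ gives $R\lesssim\sqrt{\sigs^{2}T}+B^{2}$, i.e.\ $\tfrac{1}{T}\sum_{t}\En\brk*{\kl{\pistar}{\piGD_{\theta\ind{t}}}}\lesssim\sqrt{\sigs^{2}/T}+B^{2}/T$. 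This is the first claim; applying \cref{prop:pcov-conversion-basic} inside the average (with $\log(N/e)\asymp\log N$) then gives $\En\brk[\big]{\tfrac{1}{T}\sum_{t}\Pcov(\piGD_{\theta\ind{t}})}\lesssim\tfrac{1}{\log N}\prn[\big]{\sqrt{\sigs^{2}/T}+B^{2}/T}$.

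The hard part will be the identity in the second paragraph: seeing that test-time training, applied to a fresh sequence sampled from $\pistar$, reproduces the training-time token-level iterates, so that the sequence-level KL of the augmented decoder $\piGD_{\theta\ind{t}}$ telescopes into the cumulative per-round excess risk of the online learner — this is what lets us avoid any dependence on $H$. Once the coupling is set up, the remainder is a routine self-bounding OGD argument. The secondary subtlety is the feedback term $4B^{2}R$ in the gradient second-moment bound, which has to be absorbed by restricting $\eta\lesssim B^{-2}$; this restriction is exactly what produces the residual ``coarse'' term $B^{2}/T$. Finally, a small measurability check is needed: $\theta\ind{t,h-1}$ depends only on the data observed before $y\ind{t}_h$, so the conditioning step is legitimate, and the $t$-th training example is independent of $\theta\ind{t}$, so $\En\brk*{\kl{\pistar}{\piGD_{\theta\ind{t}}}}$ can be unfolded token by token along a single fresh draw.
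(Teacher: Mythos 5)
Your proposal is correct and follows essentially the same path as the paper's proof: the paper also specializes the projected-OGD bound (its Lemma~\ref{lem:gd-upper}) to the token-level updates, takes a conditional expectation to turn per-round excess risk into per-token KL, bounds the gradient second moment by decomposing around the conditional mean feature and invoking $B^2$-smoothness of the log-partition to get a self-bounding recursion in the cumulative KL, and then uses the same key coupling $\theta\ind{t,h-1}=\thGD(x\ind{t},y\ind{t}_{1:h-1};\theta\ind{t})$ together with the KL chain rule to recognize the sum as $\sum_t\kl{\pistar}{\piGD_{\theta\ind{t}}}$ before finishing via \cref{prop:pcov-conversion-basic}. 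The only cosmetic differences are that the paper works with the linearized quantities $\eps_{t,h}=\En[\langle-\nabla\log\pi_{\theta\ind{t,h}},\theta\ind{t,h}-\ths\rangle]\geq\En[\text{KL}]$ rather than the exact per-token excess risk (which equals the KL), and it derives the drift bound from co-coercivity of the smooth concave log-likelihood rather than from the Bregman-smoothness inequality $D_A\geq\frac{1}{2B^2}\nrm{\nabla A(\theta)-\nabla A(\ths)}^2$; both routes give the same rate up to constants.
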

This improves \cref{thm:MLE-autoregressive-linear} by a factor of $1/\sqrt{\log N}$ on the leading term and a factor of $1/\log{}N$ on the second term. Furthermore, the algorithm bypasses the lower bound on KL divergence for \emph{proper} methods in \cref{prop:autoregressive-kl}, demonstrating a provable benefit of being \emph{improper}.

\arxiv{

\subsection{SGD: Improved Gradient Normalization for Distillation}
\label{sec:distillation}
We next consider a variant of our setting inspired by distillation \citep{hinton2015distilling,kim2016sequence}. We assume that for each example $(x\ind{i},y\ind{i}_{1:H})$, for each $h=1,\ldots,H$, we have access to the true next-token probabilities $\pistar(y_h\mid x\ind{i},y_{1:h-1}\ind{i})$ for all $y_h\in\cV$. This is natural for distillation, where $\pistar$ corresponds to a teacher model (in particular, the next-token probabilities are already computed as part of a standard forward pass through the teacher model). For the distillation setting, we give an improved gradient normalization scheme that improves upon the rate achieved by \cref{prop:autoregressive-normalized-SGD}, closing the gap between SGD and maximum likelihood by matching the guarantee for \cref{thm:MLE-autoregressive-linear}.

Define $\eps_\theta(\xyhm)\ldef \kl{\pistar(\cdot\mid\xyhm)}{\pi_\theta(\cdot\mid\xyhm)}$; note that for the distillation setting, we can compute this quantity in closed form for any prefix $\xyhm$ in the training corpus. We consider the following truncated stochastic gradient estimator, defined for a single sample as follows:
\begin{align}
  \label{eq:ghat-theta}
  \ghat_\theta(y\mid x)=\sum_{h=1}^H \alpha_\theta(\xyhm)\nabla \log \pi_\theta(y_h\mid \xyhm),
\end{align}
where for $A\ldef \log N$, we define
\begin{align}
  \label{eq:alpha-theta}
  \alpha_\theta(\xyhm)=\begin{cases}
    1, &~ \sum_{j\leq h-1} \eps_\theta(\xyh[j])\leq A, \\
    0, &~ \sum_{j<h-1} \eps_\theta(\xyh[j])>A, \\
    \frac{A-\sum_{j<h-1} \eps_\theta(\xyh[j])}{\eps_\theta(\xyhm)}, &~ \text{otherwise}.
  \end{cases}
\end{align}
We use the following SGD update based on this estimator:
\begin{align}
  \label{eq:truncated-sgd}
\theta\ind{t+1}=\Proj_{\Theta}\prn*{\theta\ind{t} + \eta \ghat_{\theta\ind{t}}(y\ind{t}\mid x\ind{t})}.
\end{align}

The idea behind the update in \cref{eq:ghat-theta} is to truncate the token-level gradients at a point where the sum of token-level KL divergences between the teacher and student model  becomes too large, ensuring the sum stays normalized; this is inspired by the structural result \cref{prop:KL-to-PCov-H} in \cref{sec:stopping}, where we show a close connection between the coverage profile and a certain ``stopped'' variant of KL divergence.

\begin{theorem}
  \label{thm:expert-guided-gradient-normalization}
Let $T, N\geq 2$ be given. With a suitably chosen stepsize $\eta>0$, the truncated SGD update \eqref{eq:truncated-sgd} achieves the following coverage bound:
\begin{align}
  \En\brk*{ \frac1T \sum_{t=1}^T \Pcov(\pi_{\theta\ind{t}})}\approxleq \sqrt{\frac{\sigs^2}{T\log N}}+\frac{B^2}{T} .
\end{align}
\end{theorem}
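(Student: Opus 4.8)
The plan is to reduce the coverage bound to a bound on a ``stopped'' variant of the sequence-level KL divergence, and then to analyze the truncated update \eqref{eq:truncated-sgd} as ordinary projected stochastic gradient descent on a sequence of convex surrogates for this stopped KL. \textbf{Step 1 (coverage $\to$ stopped KL).} By the structural result \cref{prop:KL-to-PCov-H}, the coverage profile of any $\pi_\theta$ is controlled by an expected \emph{stopped} KL divergence, $\Pcov(\pi_\theta)\approxleq \frac{1}{\log N}\En_{\pistar}\brk[\big]{\sum_h\alpha_\theta(\xyhm)\,\eps_\theta(\xyhm)}$, where $\eps_\theta(\xyhm)=\kl{\pistar(\cdot\mid\xyhm)}{\pi_\theta(\cdot\mid\xyhm)}$ and the weights $\alpha_\theta$ are exactly those of \eqref{eq:alpha-theta} --- this is precisely why the truncation threshold is taken to be $A=\log N$. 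Hence it suffices to bound $\En\brk[\big]{\tfrac1T\sum_{t=1}^T F_t(\theta\ind t)}$, where $F_t(\theta)\ldef\En_{\pistar}\brk[\big]{\sum_h\alpha_{\theta\ind t}(\xyhm)\,\eps_\theta(\xyhm)}$.

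\textbf{Step 2 (convex surrogate, unbiased gradients).} Under \eqref{eq:linear} the map $\theta\mapsto\eps_\theta(\xyhm)$ is convex (a Bregman divergence of the convex log-partition function), so each $F_t$ is convex; realizability (\cref{ass:realizability}) gives $\theta^\star$ with $\pi_{\theta^\star}=\pistar$, and crucially $F_t(\theta^\star)=0$ for \emph{every} $t$. In the distillation setting $\eps_\theta(\xyhm)$, and hence $\alpha_\theta(\xyhm)$, is computable, and the identity $\En_{y_h\sim\pistar(\cdot\mid\xyhm)}\brk{\nabla\log\pi_\theta(y_h\mid\xyhm)}=-\nabla\eps_\theta(\xyhm)$ together with $\sigma(y_{1:h-1})$-measurability of $\alpha_{\theta\ind t}(\xyhm)$ shows that $\ghat_{\theta\ind t}(y\ind t\mid x\ind t)$ from \eqref{eq:ghat-theta} is, conditionally on $\theta\ind t$, an unbiased estimate of $-\nabla F_t(\theta\ind t)$. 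So \eqref{eq:truncated-sgd} is projected SGD on the convex losses $F_1,F_2,\dots$, and since $F_t(\theta^\star)=0$ the usual telescoping argument (with $\nrm{\theta\ind 1-\theta^\star}\le 2$) gives $\En\brk[\big]{\sum_t F_t(\theta\ind t)}\le \tfrac2\eta + \tfrac\eta2\sum_t\En\nrm{\ghat_{\theta\ind t}}^2$.

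\textbf{Step 3 (the crux: horizon-free control of $\En\nrm{\ghat_\theta}^2$).} The main obstacle is to bound the gradient second moment with no dependence on the horizon $H$, in a self-bounding form. Decompose the score as $\nabla\log\pi_\theta(y_h\mid\xyhm)=M_h(\theta)-\nabla\eps_\theta(\xyhm)$ with $M_h(\theta)\ldef\phi(\xyh)-\phibar(\xyhm)$. Under $\pistar$ the terms $\alpha_\theta(\xyhm)M_h(\theta)$ are martingale differences, so orthogonality and $\alpha_\theta\in[0,1]$ give $\En\nrm[\big]{\sum_h\alpha_\theta(\xyhm)M_h(\theta)}^2\le\sum_h\En\nrm{M_h(\theta)}^2=\sigs^2$, i.e.\ exactly the \varname. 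For the remaining predictable part $\sum_h\alpha_\theta(\xyhm)\nabla\eps_\theta(\xyhm)$ I would combine (i) the exponential-family self-bounding inequality $\nrm{\nabla_\theta\eps_\theta(\xyhm)}^2\le 2B^2\eps_\theta(\xyhm)$, a consequence of $B^2$-smoothness of the log-partition function, with (ii) the defining property of the truncation, namely $\sum_h\alpha_\theta(\xyhm)\eps_\theta(\xyhm)\le A=\log N$ pointwise (and $\approx F_t(\theta\ind t)$ in expectation), to keep this part at scale $\poly(B,\log N)$ --- in particular without the $\sqrt H$ that a naive triangle inequality over the (up to $H$) nonzero weights would incur. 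Executing this step so that it is genuinely horizon-free --- controlling the interaction between the self-bounding inequality, the KL cap, and the cross terms against the martingale part --- is the technical heart of the argument; it is the precise place where the truncation earns its keep, and is the analogue, in the distillation setting, of the gradient normalization used in \cref{prop:autoregressive-normalized-SGD}.

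\textbf{Step 4 (conclude).} Plugging the resulting self-bounding estimate $\En\nrm{\ghat_{\theta\ind t}}^2\approxleq \sigs^2+B^2 F_t(\theta\ind t)+\poly(B,\log N)$ into the Step-2 regret bound yields an inequality that is self-bounding in $\sum_t F_t(\theta\ind t)$; choosing $\eta$ to balance the $1/\eta$ and $\eta\sigs^2 T$ terms (taken small enough that the $B^2 F_t$ contribution is absorbed) gives $\tfrac1T\sum_t F_t(\theta\ind t)\approxleq \sqrt{\tfrac{\sigs^2\log N}{T}}+\tfrac{B^2\log N}{T}$ up to lower-order terms, and dividing by $\log N$ as in Step 1 produces $\En\brk[\big]{\tfrac1T\sum_t\Pcov(\pi_{\theta\ind t})}\approxleq \sqrt{\tfrac{\sigs^2}{T\log N}}+\tfrac{B^2}{T}$, which is the claim.
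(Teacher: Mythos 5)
Your Steps 1 and 2 are sound, and the convex-surrogate framing with frozen weights is an attractive and equivalent way to present the paper's inner-product inequality $\tri{-\En_{\pistar}\ghat_{\theta\ind t},\theta\ind t-\ths}\geq\Dseq{\pistar}{\pi_{\theta\ind t}}$: your $F_t$ is convex in $\theta$, vanishes at $\ths$, and has $\ghat_{\theta\ind t}$ as an unbiased negative gradient at $\theta\ind t$, so the first-order condition recovers exactly that bound. The martingale-orthogonality step for $\sum_h\alpha_\theta M_h$ is also right and matches the paper.

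The gap is in Step 3, at precisely the place you flag as the ``technical heart.'' The exponential-family self-bounding inequality $\nrm{\nabla\eps_\theta}^2\le 2B^2\eps_\theta$ is true, but it is too weak to give a horizon-free bound on the predictable part. Writing $\nrm{\nabla\eps_h}\le\sqrt{2}B\sqrt{\eps_h}$ and summing, Cauchy--Schwarz forces
\begin{align*}
  \sum_{h}\alpha_h\nrm{\nabla\eps_h}\le\sqrt{2}B\sum_h\alpha_h\sqrt{\eps_h}\le\sqrt{2}B\sqrt{\textstyle\sum_h\alpha_h}\cdot\sqrt{\textstyle\sum_h\alpha_h\eps_h}\le\sqrt{2}B\sqrt{H\cdot A},
\end{align*}
since $\sum_h\alpha_h$ can be as large as $H$ (e.g.\ when every per-step KL is $\approx A/H$). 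The truncation cap $\sum_h\alpha_h\eps_h\le A$ controls one factor but not the other, so (i)+(ii) as you stated them do not escape $\sqrt H$. What the paper actually uses in \cref{lem:truncated-sgd-1} is the finer pointwise inequality \cref{pfeq:phi-diff-to-KL} (from \cref{lem:En-diff-to-Var-Dhel}),
\begin{align*}
  \nrm{\nabla\eps_\theta(\xyhm)}=\nrm{\phith[\ths](\xyhm)-\phith(\xyhm)}\le 4\sqrt{\Varpist{\xyhm}\cdot\eps_\theta(\xyhm)}+8B\,\eps_\theta(\xyhm),
\end{align*}
which refines $\sqrt{2}B\sqrt{\eps}$ by replacing the crude factor $B$ with the conditional standard deviation $\sqrt{\Varpist{\xyhm}}$ (plus a second-order $B\eps$ term). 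With this, Cauchy--Schwarz gives
\begin{align*}
  \sum_h\alpha_h\nrm{\nabla\eps_h}\le 4\sqrt{\textstyle\sum_h\Varpist{\xyhm}}\cdot\sqrt{\textstyle\sum_h\alpha_h\eps_h}+8B\sum_h\alpha_h\eps_h\le 4\sqrt{A}\cdot\sqrt{\textstyle\sum_h\Varpist{\xyhm}}+8BA,
\end{align*}
and the pathwise variance integrates to $\sigs^2$, so one lands on $\En\nrm{\ghat_\theta}^2\approxleq A\sigs^2+AB^2\Dseq{\pistar}{\pi_\theta}$. Two further corrections follow: your stated gradient moment bound $\sigs^2+B^2F_t+\poly(B,\log N)$ is missing the $\log N$ prefactor on $\sigs^2$, and with it the Step-4 balance of $1/\eta$ against $\eta T\cdot A\sigs^2$ is what produces the $\sqrt{\sigs^2\log N/T}$ you wrote down; as stated, balancing $1/\eta$ against $\eta\sigs^2 T$ would give $\sqrt{\sigs^2/T}$, which after dividing by $\log N$ does not match the theorem.
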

This guarantee matches the rate of \cref{thm:MLE-autoregressive-linear} for the maximum likelihood estimator. The proof is presented in \cref{appdx:pf-guided-sgd}.
}

\subsection{Selecting for Coverage}
\label{sec:selection}
We last consider the problem of selecting a model (e.g., checkpoint) from a small number of candidates to achieve the best coverage. We introduce 
\arxiv{two tournament-like procedures that improve upon maximum likelihood in two ways: (1) they attain a better coverage profile; and (2) they remove the requirement that $\pistar\in\Pi$ (i.e., they are guaranteed to find a model in the class with good coverage if one exists, even if $\pistar$ itself is not in the class).
As an algorithmic intervention, we envision using these procedures to select a single training checkpoint or hyperparameter configuration to use for RL fine-tuning or test-time scaling.}
Indeed, as demonstrated in~\cref{fig:teaser}, using cross-entropy as a selection criterion---as is standard---may result in poor coverage, while these procedures can select better checkpoints. Our results here concern the general setting in \cref{sec:prelim}, and are not restricted to autoregressive linear models. 

While their main motivation is model/checkpoint selection with a finite class $\Pi$, both estimators can also be applied to general, infinite classes $\Pi$. In this case, they improve upon the coverage achieved by the maximum likelihood estimator in \cref{thm:mle-main}, even in the well-specified case where $\pistar\in\Pi$; informally, the tournament estimators allow us to remove the fine-grained term in \cref{thm:mle-main}, leaving only a coarse-grained term. \loose

\newcommand{\Dcovhat}[3][N]{\Cmp[#1](#2\,\|\,#3)}
\newcommand{\Dcovtri}[4][N]{\Cmpvv{#1}{#2}(#3\,\|\,#4)}
\newcommand{\Dcovhtri}[4][N]{\smash{\wh{\texttt{Cov}}}^{#2}_{#1}(#3\,\|\,#4)}

\paragraph{A simple tournament for maximizing coverage}
\arxiv{To describe the first tournament, given a dataset $\cD = \crl{ (x\sups{i}, y\sups{i})}_{i \in [n]}$, define  }
\begin{align}
\Dcovhat{\pi'}{\pi}\coloneq \frac1n\abs*{\crl*{i\in[n]: \frac{\pi'(y^i\mid x^i)}{\pi(y^i\mid x^i)}\geq N}},
\label{eq:pairwise-coverage}
\end{align}
which can be interpreted as an empirical version of the coverage profile $\Dcov{\pi'}{\pi}$ in \cref{eq:pcov} when $\pi'=\pistar$ (see \cref{lem:Cmp-to-Pstar}). For $N \geq 1$, we consider the estimator
\begin{align}
  \pihat:= \argmin_{\pi\in \Pi}\max_{\pi'\in \Pi} \Dcovhat[N]{\pi'}{\pi}.\label{eq:vanilla-tournament}
  \end{align}
Informally, this estimator chooses the model $\pi$ that minimizes the maximum coverage against any other model $\pi'$ in the class $\Pi$. When $\Pi$ is small, we can implement this tournament by simply evaluating the empirical coverage in \cref{eq:pairwise-coverage} for each pair. The main guarantee for this estimator is as follows.\loose 
\begin{theorem}
  \label{thm:vanilla-tournament}
  Let $N\geq{}1$ be given. Then, for any $a\in[0,1]$, with probability at least $1-\delta$, the tournament estimator \eqref{eq:vanilla-tournament} achieves
  \begin{align}
    \label{eq:vanilla-tournament-rate}
    \Pcov[N^{1+a}](\pihat) \approxleq \min_{\pi\in\Pi} \Pcov[N^a](\pi) + \frac{1}{N^{1-a}}  + \frac{\log(\abs{\Pi}/\delta)}{n}. 
  \end{align}
  \arxiv{
  More generally, for any parameter $c\geq 0$, with probability at least $1-\delta$, it holds that
\begin{align}
  \label{eq:vanilla-tournament-rate-full}
\Pcov[N^{1+a+2c}](\pihat) \approxleq \min_{\pi \in \Pi} \Pcov[N^a](\pi) + \frac{1}{N^{1-a-2c}}  + \frac{\log\covinf(\Pi, c \log N) + \log \delta^{-1}}{n}. 
\end{align}
  }
\end{theorem}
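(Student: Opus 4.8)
The plan is to exploit the fact that the estimator \eqref{eq:vanilla-tournament} directly minimizes $\max_{\pi'\in\Pi}\Dcovhat{\pi'}{\cdot}$, and to ``sandwich'' the value $\max_{\pi'\in\Pi}\Dcovhat{\pi'}{\pihat}$: on one side it upper bounds a proxy for $\Pcov[N^{1+a}](\pihat)$, and on the other side, by optimality of $\pihat$, it is bounded by $\max_{\pi'\in\Pi}\Dcovhat{\pi'}{\pitil}$ for the in-class comparator $\pitil\ldef\argmin_{\pi\in\Pi}\Pcov[N^a](\pi)$. Two ingredients make this precise. (i) \emph{Uniform concentration}: since $n\cdot\Dcovhat{\pi'}{\pi}$ is a sum of i.i.d.\ Bernoulli$\prn*{\Cmpv[N](\pi'\,\|\,\pi)}$ variables with $\Cmpv[N](\pi'\,\|\,\pi)\ldef\bbP_{x\sim\cdist,\,y\sim\pistar(\cdot\mid x)}\brk*{\pi'(y\mid x)/\pi(y\mid x)\ge N}$ (which coincides with $\Pcov[N](\pi)$ when $\pi'=\pistar$; cf.~\cref{lem:Cmp-to-Pstar}), a multiplicative Chernoff/Bernstein bound and a union bound over the $\abs{\Pi}^2$ ordered pairs give, with probability at least $1-\delta$, that \emph{all} $\pi,\pi'\in\Pi$ simultaneously satisfy $\Dcovhat{\pi'}{\pi}\le 2\,\Cmpv[N](\pi'\,\|\,\pi)+\veps_n$ and $\Cmpv[N](\pi'\,\|\,\pi)\le 2\,\Dcovhat{\pi'}{\pi}+\veps_n$ with $\veps_n\asymp \log(\abs{\Pi}/\delta)/n$; condition on this event. (ii) A ``chain-rule''/triangle-type decomposition of the coverage profile trading off the exponent of $N$, in the spirit of \cref{prop:pcov-chain}, developed next.

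The heart of the argument consists of two elementary pointwise implications. First, if $\pistar(y\mid x)/\pihat(y\mid x)\ge N^{1+a}$ then either $\pistar(y\mid x)/\pitil(y\mid x)\ge N^{a}$ or $\pitil(y\mid x)/\pihat(y\mid x)\ge N$ (otherwise $\pitil/\pihat=(\pistar/\pihat)/(\pistar/\pitil)>N^{1+a}/N^{a}=N$); integrating over $x\sim\cdist,\,y\sim\pistar(\cdot\mid x)$ gives $\Pcov[N^{1+a}](\pihat)\le \Pcov[N^{a}](\pitil)+\Cmpv[N](\pitil\,\|\,\pihat)$. Second, for every $\pi'\in\Pi$, if $\pi'(y\mid x)/\pitil(y\mid x)\ge N$ then either $\pistar(y\mid x)/\pitil(y\mid x)\ge N^{a}$ or $\pi'(y\mid x)/\pistar(y\mid x)\ge N^{1-a}$; integrating and bounding the second event by Markov's inequality---using $\En_{y\sim\pistar(\cdot\mid x)}\brk*{\pi'(y\mid x)/\pistar(y\mid x)}\le 1$---yields $\Cmpv[N](\pi'\,\|\,\pitil)\le \Pcov[N^{a}](\pitil)+N^{-(1-a)}$.

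To assemble the finite-$\Pi$ bound \eqref{eq:vanilla-tournament-rate}: by optimality of $\pihat$ in \eqref{eq:vanilla-tournament}, $\max_{\pi'\in\Pi}\Dcovhat{\pi'}{\pihat}\le \max_{\pi'\in\Pi}\Dcovhat{\pi'}{\pitil}$, and bounding the right-hand side via (i) and then the second implication of (ii) gives $\max_{\pi'\in\Pi}\Dcovhat{\pi'}{\pihat}\approxleq \Pcov[N^{a}](\pitil)+N^{-(1-a)}+\veps_n$. Feeding this back through (i)---namely $\Cmpv[N](\pitil\,\|\,\pihat)\le 2\,\Dcovhat{\pitil}{\pihat}+\veps_n\le 2\max_{\pi'\in\Pi}\Dcovhat{\pi'}{\pihat}+\veps_n$---and then the first implication of (ii) yields $\Pcov[N^{1+a}](\pihat)\approxleq \Pcov[N^{a}](\pitil)+N^{-(1-a)}+\veps_n$, which is exactly \eqref{eq:vanilla-tournament-rate} since $\pitil$ is the in-class minimizer and $\veps_n\asymp\log(\abs{\Pi}/\delta)/n$. (For infinite $\Pi$ one instead takes $\pitil$ to be an arbitrarily-near minimizer.)

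For the general bound \eqref{eq:vanilla-tournament-rate-full}, the only change is in ingredient (i): instead of union-bounding over $\Pi\times\Pi$, fix a $(c\log N)$-cover $\Pi'$ of $\Pi$ as in \cref{def:covering}, of size $\covinf(\Pi,c\log N)$, and union-bound over $\Pi'\times\Pi'$. For $\pi\in\Pi$ with cover $[\pi]\in\Pi'$ we have $\pi(y\mid x)/[\pi](y\mid x)\in[N^{-c},N^{c}]$, which sandwiches $\mathbbm{1}\{\pi'(y\mid x)/\pi(y\mid x)\ge N\}$ between $\mathbbm{1}\{[\pi'](y\mid x)/[\pi](y\mid x)\ge N^{1+2c}\}$ and $\mathbbm{1}\{[\pi'](y\mid x)/[\pi](y\mid x)\ge N^{1-2c}\}$; hence $\Dcovhat{\pi'}{\pi}$ is controlled above and below by empirical coverages of cover elements at the thresholds $N^{1-2c}$ and $N^{1+2c}$, which do concentrate. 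Carrying these $N^{\pm 2c}$ distortions through the two implications of (ii) shifts the threshold on the left-hand coverage to $N^{1+a+2c}$ and the change-of-measure term to $N^{-(1-a-2c)}$, and replaces $\log\abs{\Pi}$ by $\log\covinf(\Pi,c\log N)$, giving \eqref{eq:vanilla-tournament-rate-full}. I expect this last step to be the main obstacle: keeping the accumulated density-ratio distortion down to exactly the claimed power $N^{2c}$ (rather than a larger one) requires carefully tracking the $N^{\pm c}$ loss incurred when relating \emph{each} of the two slots of $\Dcovhat{\cdot}{\cdot}$ to its cover, and being deliberate about which slot is replaced by its cover at each point in the assembly; the finite-class case is otherwise routine once the two pointwise implications and the Markov change-of-measure bound are in hand.
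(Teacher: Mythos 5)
Your proof is correct and follows essentially the same route as the paper: optimality of $\pihat$ in \eqref{eq:vanilla-tournament} bounds $\max_{\pi'}\Dcovhat{\pi'}{\pihat}$ by $\max_{\pi'}\Dcovhat{\pi'}{\pitil}$; a union bound plus multiplicative Chernoff (the paper's \cref{lem:Cmp-to-Pstar}, which you re-derive) handles concentration; your two pointwise implications are exactly the paper's chain-rule inequality for $\Dcovtri$ (the inequality~\eqref{pfeq:Dcov-tri-ineq} combined with \eqref{pfeq:Pcov-tri-1}--\eqref{pfeq:Pcov-tri-2}); and the Markov change-of-measure bound $\Dcovtri[A]{\pistar}{\pi}{\pistar}\leq 1/A$ is the same.

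On the covering extension that you flag as the main obstacle: it resolves cleanly, and the paper's proof shows the precise trick. You should \emph{not} cover both slots symmetrically and union-bound over $\Pi'\times\Pi'$. Instead, keep the fixed slot uncovered in each application of concentration: $\pitil$ stays fixed in $\Dcovhat{\pi'}{\pitil}$ (only $\pi'$ is approximated by its cover element), and $\pitil$ again stays fixed in $\Dcovhat{\pitil}{\pihat}$ (only $\pihat$ is covered). This is exactly what \cref{lem:Cmp-to-Pstar} does: it fixes $\pibar$ and covers only the varying $\pi$, incurring an $e^{2\alpha}=N^{2c}$ threshold distortion on that single slot (one factor of $e^{\alpha}$ to pass to the cover, one to pass back). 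Each of the two concentration applications then independently distorts one threshold by $N^{2c}$, and those distortions land on different quantities (one on the LHS coverage threshold $N^{1+a+2c}$, one on the Markov denominator $N^{1-a-2c}$) rather than compounding. If instead you were to cover $\pitil$ as well, you would lose an additional factor of $N^{c}$ relating $\Pcov[N^{a'}]([\pitil])$ back to $\Pcov[N^{a}](\pitil)$, as you anticipated.
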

This shows that the tournament achieves a coverage profile nearly as good as the best-in-class, except for a small polynomial blow up, in that we bound the coverage at level $N^{1+a}$ in terms of the coverage for the best-in-class at level $N^{a}$. 
\arxiv{The additive $1/N^{1-a}$ term is due to the fact that some of the models we need to cover in the tournament could potentially be quite far from $\pistar$.} %

\arxiv{

\paragraph{An improved tournament via on-policy generation}
We next describe an improved tournament estimator that is able to remove that $1/N^{1-a}$ term from \cref{thm:vanilla-tournament}, meaning it achieves nontrivial guarantees even when the coverage parameter $N$ is constant. %
Specifically, we augment the simple tournament estimator in \cref{eq:vanilla-tournament} with an \emph{offset term}:
\begin{align}
\pihat := \argmin_{\pi \in \Pi} \max_{\pi' \in \Pi} \left\{ \Dcovhat{\pi'}{\pi} - 2N^a\cdot  \Dcovhtri{\pi}{\pi'}{\pi}\right\}\label{eq:tournament-2},
  \end{align}
 where we define $\Dcovhtri{\pibar}{\pi'}{\pi} := \frac1n\sum_{i=1}^n\bbP_{y\sim \pibar(\cdot\mid x\sups{i})} \left( \frac{\pi'(y\mid x\sups{i})}{\pi(y\mid x\sups{i})} \geq N \right)$ for models $\pi,\pi', \pibar$.
The offset term is a penalty which accounts for the fact that some of the models in $\Pi$ might be quite far from $\pistar$ and hence hard to cover (this is the root cause of the $1/N^{1-a}$ term in \cref{thm:vanilla-tournament}). 
\arxiv{This algorithm is more complex to implement compared to \cref{thm:vanilla-tournament} because we need to estimate the coverage profile $\Dcovhtri{\pi}{\pi'}{\pi}$ for models $\pi,\pi'$ that we are choosing between.
In practice, $\Dcovhtri{\pi}{\pi'}{\pi}$ can be approximated by sampling a collection of generations from each $\pi$.
}The main guarantee is as follows.

\begin{theorem}
  \label{thm:offset-tournament}
  Fix $N\geq 1$, $a>0$ such that $N^{1-2a}\geq 8$. Suppose that there exists $\pibar\in\Pi$ such that $\abs{\log \pistar(y\mid x)-\log\pibar(y\mid x)}\leq a\log N$ for any $x\in\cX, y\in\cY$.
  Then with probability $1-\delta$, the tournament estimator \eqref{eq:tournament-2} achieves %
  \arxiv{
  \begin{align}
    \Pcov[2N^{1+a}](\pihat) \approxleq \frac{\log(\abs{\Pi}/\delta)}{n}.
  \end{align}
  More generally, for infinite classes $\Pi$, we can suitably instantiate the estimator on a covering of $\Pi$, so that
  with probability $1-\delta$, the estimator achieves
  \begin{align}
    \Pcov[2N^{1+2a}](\pihat) \approxleq \frac{\log\covinf(\Pi, a \log N) + \log \delta^{-1}}{n}.
  \end{align}
  }
\end{theorem}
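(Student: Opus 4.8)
The plan is to combine a one-sided uniform concentration argument with two changes of measure that exploit the offset term. Throughout, let $\pibar\in\Pi$ be the comparator from the hypothesis, so that $N^{-a}\le \pistar(y\mid x)/\pibar(y\mid x)\le N^{a}$ for all $x,y$; $\pibar$ plays the role $\pistar$ would play if it were in the class. Write $\beta\ldef c\log(\abs{\Pi}/\delta)/n$ and let $\bar A_N(\pi'\dmid\pi)\ldef\bbP_{x\sim\cdist,\,y\sim\pistar(\cdot\mid x)}\brk{\pi'(y\mid x)/\pi(y\mid x)\ge N}$ be the population counterpart of $\Dcovhat{\pi'}{\pi}$, so that $\bar A_N(\pistar\dmid\pi)=\Pcov[N](\pi)$. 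All concentration below is via \emph{relative} (multiplicative) Chernoff bounds, applied uniformly over $\pi\in\Pi$ (with $\pibar$ held fixed) before being specialized to the data-dependent $\pihat$; using the multiplicative rather than additive form is what yields a $1/n$ rather than $1/\sqrt n$ rate.

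\textbf{Step 1: the estimator's objective value is small.} I would first certify a small objective by plugging $\pi=\pibar$ into the outer minimization: with probability $1-\delta/3$,
\[
\max_{\pi'\in\Pi}\crl*{\Dcovhat{\pi'}{\pibar}-2N^a\,\Dcovhtri{\pibar}{\pi'}{\pibar}}\le \beta,
\]
and hence, since the estimator attains the outer minimum, the same bound holds with $\pihat$ in place of $\pibar$; taking $\pi'=\pibar$ in $\pihat$'s inner maximum gives $\Dcovhat{\pibar}{\pihat}-2N^a\,\Dcovhtri{\pihat}{\pibar}{\pihat}\le\beta$. To prove the display, condition on the prompts $x\sups{1},\dots,x\sups{n}$: for fixed $\pi'$ the pointwise change of measure $\bbP_{y\sim\pistar(\cdot\mid x)}[\cdot]\le N^a\,\bbP_{y\sim\pibar(\cdot\mid x)}[\cdot]$ (from $\pistar\le N^a\pibar$) shows that the conditional expectation of $\Dcovhat{\pi'}{\pibar}$ given the prompts is at most $N^a\,\Dcovhtri{\pibar}{\pi'}{\pibar}$, and a relative Chernoff bound plus a union bound over $\pi'\in\Pi$ upgrades this to $\Dcovhat{\pi'}{\pibar}\le 2N^a\,\Dcovhtri{\pibar}{\pi'}{\pibar}+\beta$.

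\textbf{Step 2: from the offset objective to coverage.} The crux is a change of measure in the opposite direction. On the event $\crl{\pibar(y\mid x)/\pihat(y\mid x)\ge N}$ one automatically has $\pistar(y\mid x)/\pihat(y\mid x)\ge N^{1-a}$ (again using $\pistar\ge N^{-a}\pibar$), so reweighting $\pihat$-samples by the likelihood ratio $\pistar/\pihat$ inflates the indicator of this event by at least $N^{1-a}$; pointwise in each prompt this gives $\Dcovhtri{\pihat}{\pibar}{\pihat}\le N^{a-1}\,\En\brk{\Dcovhat{\pibar}{\pihat}\mid x\sups{1:n}}$. Combining this with a relative Chernoff bound relating $\En\brk{\Dcovhat{\pibar}{\pihat}\mid x\sups{1:n}}$ to $\Dcovhat{\pibar}{\pihat}$, and invoking $N^{1-2a}\ge 8$ (so that $4N^{2a-1}\le\tfrac12$), I obtain $2N^a\,\Dcovhtri{\pihat}{\pibar}{\pihat}\le\tfrac12\,\Dcovhat{\pibar}{\pihat}+O(\beta)$. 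Substituting into Step 1 gives $\Dcovhat{\pibar}{\pihat}\lesssim\beta$, and one further relative Chernoff bound gives $\bar A_N(\pibar\dmid\pihat)\lesssim\beta$. To conclude, note $\crl{\pistar/\pihat\ge 2N^{1+a}}\subseteq\crl{\pibar/\pihat\ge N}$ — since on the former event $\pibar/\pihat=(\pibar/\pistar)(\pistar/\pihat)\ge N^{-a}\cdot 2N^{1+a}\ge N$ — and both events are evaluated under $\cdist\otimes\pistar$, whence $\Pcov[2N^{1+a}](\pihat)\le \bar A_N(\pibar\dmid\pihat)\lesssim \log(\abs{\Pi}/\delta)/n$.

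\textbf{Step 3 and the main obstacle.} For a general (infinite, possibly misspecified) class I would run the estimator on a minimal $(a\log N)$-cover $\Pi'$ of $\Pi$ in the metric of \cref{def:covering}, so that the cover element of $\pibar$ lies within $2a\log N$ of $\pistar$ in log-density; repeating Steps 1--2 with $2a$ in place of $a$ then yields $\Pcov[2N^{1+2a}](\pihat)\lesssim(\log\covinf(\Pi,a\log N)+\log\delta^{-1})/n$. I expect Step 2 to be the hard part: identifying the right change of measure — that failing to cover $\pibar$ at level $N$ already forces $\pistar/\pihat\ge N^{1-a}$, so that the on-policy offset $\Dcovhtri{\pihat}{\pibar}{\pihat}$ is smaller than $\Dcovhat{\pibar}{\pihat}$ by a factor $N^{1-a}$ — and then verifying that the coefficient $2N^a$ on the offset, paired with the assumption $N^{1-2a}\ge 8$, is exactly calibrated to absorb the resulting $N^{2a-1}$. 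The remainder is routine bookkeeping with relative Chernoff and union bounds, together with care that each concentration step is quantified uniformly over the (finite) class before $\pihat$ is plugged in.
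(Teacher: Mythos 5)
Your proposal is correct and follows essentially the same route as the paper's proof via \cref{thm:offset-tournament-general}: certify a small objective value at $\pibar$ via a $\pistar\to\pibar$ change of measure, control the offset $\Dcovhtri{\pihat}{\pibar}{\pihat}$ by the reverse chain $\pihat\to\pibar\to\pistar\to\text{empirical}$ (gaining a factor $N^{-1}$ on the coverage event, so $N^{1-2a}\ge 8$ absorbs the result into $\tfrac12\Dcovhat{\pibar}{\pihat}$), and pass to $\Pcov[2N^{1+a}]$ by the event inclusion, which is the well-specified form of the paper's coverage chain rule \eqref{pfeq:Dcov-tri-ineq}. The only difference is cosmetic: the paper phrases the density-ratio control via the divergence $\DEgamma[\gamma]{\pistar}{\pibar}$ so that \cref{thm:offset-tournament-general} simultaneously handles misspecification, while you work with the exact ratio bound directly, which is all that \cref{thm:offset-tournament} assumes.
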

Compared to \cref{thm:vanilla-tournament}, this tournament eliminates the additive $1/N^{1-a}$ term. It does, however, require a stronger condition on the best-in-class model $\pibar$ that $\abs{\log \pistar(y\mid x)-\log\pibar(y\mid x)}\leq a\log N$, which implies in particular that $\Pcov[N^a](\pibar)=0$.

}

\arxiv{
\section{Discussion and Future Work}
\label{sec:conclusion}
Our work, through the lens of coverage, takes a first step toward clarifying the mechanisms through which pre-training with next-token prediction leads to models for which post-training is effective.

\subsection{Simplifications in the Problem Formulation}
\label{sec:simplifications}
\newcommand{\ycot}{y_{\texttt{cot}}}
\newcommand{\yans}{y_{\texttt{ans}}}

In the course of the paper we have made various simplifying assumptions. Some of these can be relaxed in a straightforward fashion, while others are more fundamental.\loose
\begin{itemize}
\item %
In language model pre-training, the pre-training corpus consists of sequences $y$ with varying lengths $H$, and does not typically split examples into prompts and responses. Our formulation in \cref{sec:prelim} is a simplification (one that is closer in spirit to supervised fine-tuning), but we expect that the insights derived here can extend to the general setting.
\item Much of our analysis focuses on the realizable/well-specified setting where $\pistar\in\Pi$. We give evidence in \cref{sec:further} that the coverage profile is more tolerant to misspecification than KL-divergence, but we leave a deeper investigation for future work.
\item Our treatment assumes the distribution over prompts $\cdist$ is the same for pre-training and post-training. This is straightforward to relax at the cost of introducing an additional coverage or distribution shift coefficient to handle the mismatch between the two distributions.
\item We show that a good coverage profile is necessary for BoN to succeed on downstream tasks. While there is ample evidence current RL techniques can fail in the absence of coverage \citep{yue2025does,gandhi2025cognitive,wu2025invisible}, it is not clear what the \emph{minimal} conditions required for RL are.%
\item Our results focus on coverage at the \emph{sequence level}. For reasoning tasks, it is natural to explicitly factorize the response $y=(\ycot,\yans)$ into a chain-of-thought (reasoning trajectory) component $\ycot$ and an answer component $\yans$. For this setting, a weaker notion coverage is the following \emph{answer-level coverage profile}:
$\Pcov^{\texttt{ans}}(\pistar\dmid\pihat) \ldef \bbP_{\pistar}\brk*{\frac{\pistar(\yans\mid{}x)}{\pihat(\yans\mid{}x)} \geq{} N}$.
The answer-level coverage profile is sufficient for downstream BoN success for tasks where it is only important to produce the right answer, not a correct reasoning trace. We have $\Pcov^{\texttt{ans}}(\pistar\dmid\pihat) \leq \Pcov(\pistar\dmid\pihat)$, but the former can be strictly smaller in general.
\end{itemize}

\subsection{Future Work}
 Our results open several new directions for future research.
 \paragraph{Interventions for coverage} There is much to be done in understanding and improving existing algorithms such as optimizers through the lens of coverage. Our results in \cref{sec:interventions} show initial promise for using coverage to guide design of optimizers and model selection schemes, but the algorithm design space remains opaque, and there may be significant room for futher improvement. More ambitiously, one could imagine re-structuring the entire language modeling pipeline itself around coverage.\loose
\paragraph{Semantic coverage} The notion of coverage we focus on, the \emph{coverage profile}, is mathematically convenient but may be conservative in regard to downstream performance, since it only depends on the model through its predicted probabilities. An important direction for future work is to understand pre-training and post-training through fine-grained ``semantic'' notions of coverage that more explicitly account for the representations learned by next-token prediction.\loose

}

\arxiv{
  \subsection*{Acknowledgements}
  We thank Clayton Sanford, Matus Telgarsky, and Nati Srebro for helpful discussions.
  FC acknowledges support from ARO through award W911NF-21-1-0328, Simons Foundation and the NSF through awards DMS-2031883 and PHY-2019786, and DARPA AIQ award.
}

\clearpage

\bibliography{refs}

\begin{thebibliography}{97}
\providecommand{\natexlab}[1]{#1}
\providecommand{\url}[1]{\texttt{#1}}
\expandafter\ifx\csname urlstyle\endcsname\relax
  \providecommand{\doi}[1]{doi: #1}\else
  \providecommand{\doi}{doi: \begingroup \urlstyle{rm}\Url}\fi

\bibitem[Abdin et~al.(2024)Abdin, Aneja, Behl, Bubeck, Eldan, Gunasekar, Harrison, Hewett, Javaheripi, Kauffmann, et~al.]{abdin2024phi}
Marah Abdin, Jyoti Aneja, Harkirat Behl, S{\'e}bastien Bubeck, Ronen Eldan, Suriya Gunasekar, Michael Harrison, Russell~J Hewett, Mojan Javaheripi, Piero Kauffmann, et~al.
\newblock Phi-4 technical report.
\newblock \emph{arXiv preprint arXiv:2412.08905}, 2024.

\bibitem[Aky{\"u}rek et~al.(2025)Aky{\"u}rek, Damani, Zweiger, Qiu, Guo, Pari, Kim, and Andreas]{akyurek2025surprising}
Ekin Aky{\"u}rek, Mehul Damani, Adam Zweiger, Linlu Qiu, Han Guo, Jyothish Pari, Yoon Kim, and Jacob Andreas.
\newblock The surprising effectiveness of test-time training for few-shot learning.
\newblock In \emph{Forty-second International Conference on Machine Learning}, 2025.

\bibitem[Allen-Zhu and Li(2025)]{allen-zhu2025physics}
Zeyuan Allen-Zhu and Yuanzhi Li.
\newblock Physics of language models: Part 3.3, knowledge capacity scaling laws.
\newblock In \emph{The Thirteenth International Conference on Learning Representations}, 2025.
\newblock URL \url{https://openreview.net/forum?id=FxNNiUgtfa}.

\bibitem[Bachmann and Nagarajan(2024)]{bachmann2024pitfalls}
Gregor Bachmann and Vaishnavh Nagarajan.
\newblock The pitfalls of next-token prediction.
\newblock \emph{arXiv preprint arXiv:2403.06963}, 2024.

\bibitem[Balles and Hennig(2018)]{balles2018dissecting}
Lukas Balles and Philipp Hennig.
\newblock Dissecting adam: The sign, magnitude and variance of stochastic gradients.
\newblock In \emph{International Conference on Machine Learning}, pages 404--413. PMLR, 2018.

\bibitem[Bansal et~al.(2025)Bansal, Hosseini, Agarwal, Tran, and Kazemi]{bansal2025smaller}
Hritik Bansal, Arian Hosseini, Rishabh Agarwal, Vinh~Q. Tran, and Mehran Kazemi.
\newblock Smaller, weaker, yet better: Training {LLM} reasoners via compute-optimal sampling.
\newblock In \emph{The Thirteenth International Conference on Learning Representations}, 2025.
\newblock URL \url{https://openreview.net/forum?id=3OyaXFQuDl}.

\bibitem[Bartlett and Montanari(2021)]{bartlett2021deep}
Peter~L. Bartlett and Andrea Montanari.
\newblock Deep learning: A statistical viewpoint.
\newblock \emph{Acta Numerica}, 30:\penalty0 87--201, 2021.

\bibitem[Bartlett et~al.(2017)Bartlett, Foster, and Telgarsky]{bartlett2017spectrally}
Peter~L. Bartlett, Dylan~J. Foster, and Matus~J. Telgarsky.
\newblock Spectrally-normalized margin bounds for neural networks.
\newblock In \emph{Advances in Neural Information Processing Systems}, 2017.

\bibitem[Bartlett et~al.(2020)Bartlett, Long, Lugosi, and Tsigler]{bartlett2020benign}
Peter~L. Bartlett, Philip~M. Long, G{\'a}bor Lugosi, and Alexander Tsigler.
\newblock Benign overfitting in linear regression.
\newblock \emph{Proceedings of the National Academy of Sciences (PNAS)}, 117\penalty0 (48):\penalty0 30063--30070, 2020.

\bibitem[Belkin et~al.(2019)Belkin, Hsu, Ma, and Mandal]{belkin2019reconciling}
Mikhail Belkin, Daniel Hsu, Siyuan Ma, and Soumik Mandal.
\newblock Reconciling modern machine-learning practice and the classical bias--variance trade-off.
\newblock \emph{Proceedings of the National Academy of Sciences (PNAS)}, 116\penalty0 (32):\penalty0 15849--15854, 2019.

\bibitem[Bernstein and Newhouse(2024)]{bernstein2024old}
Jeremy Bernstein and Laker Newhouse.
\newblock Old optimizer, new norm: An anthology.
\newblock In \emph{OPT 2024: Optimization for Machine Learning}, 2024.

\bibitem[Bernstein et~al.(2018)Bernstein, Wang, Azizzadenesheli, and Anandkumar]{bernstein2018signsgd}
Jeremy Bernstein, Yu-Xiang Wang, Kamyar Azizzadenesheli, and Animashree Anandkumar.
\newblock signsgd: Compressed optimisation for non-convex problems.
\newblock In \emph{International conference on machine learning}, pages 560--569. PMLR, 2018.

\bibitem[Bilodeau et~al.(2023)Bilodeau, Foster, and Roy]{bilodeau2023minimax}
Blair Bilodeau, Dylan~J Foster, and Daniel~M Roy.
\newblock Minimax rates for conditional density estimation via empirical entropy.
\newblock \emph{The Annals of Statistics}, 2023.

\bibitem[Block and Polyanskiy(2023)]{block2023sample}
Adam Block and Yury Polyanskiy.
\newblock The sample complexity of approximate rejection sampling with applications to smoothed online learning.
\newblock In \emph{The Thirty Sixth Annual Conference on Learning Theory}, pages 228--273. PMLR, 2023.

\bibitem[Brown et~al.(2025)Brown, Juravsky, Ehrlich, Clark, Le, Re, and Mirhoseini]{brown2025large}
Bradley Brown, Jordan Juravsky, Ryan~Saul Ehrlich, Ronald Clark, Quoc~V Le, Christopher Re, and Azalia Mirhoseini.
\newblock Large language monkeys: Scaling inference compute with repeated sampling, 2025.
\newblock URL \url{https://openreview.net/forum?id=0xUEBQV54B}.

\bibitem[Chen and Rakhlin(2025)]{chen2025decision}
Fan Chen and Alexander Rakhlin.
\newblock Decision making in changing environments: Robustness, query-based learning, and differential privacy.
\newblock \emph{Conference on Learning Theory (COLT)}, 2025.

\bibitem[Chen et~al.(2024{\natexlab{a}})Chen, Foster, Han, Qian, Rakhlin, and Xu]{chen2024assouad}
Fan Chen, Dylan~J Foster, Yanjun Han, Jian Qian, Alexander Rakhlin, and Yunbei Xu.
\newblock Assouad, fano, and le cam with interaction: A unifying lower bound framework and characterization for bandit learnability.
\newblock \emph{Advances in Neural Information Processing Systems}, 37:\penalty0 75585--75641, 2024{\natexlab{a}}.

\bibitem[Chen et~al.(2025)Chen, Raventos, Cheng, Ganguli, and Druckmann]{chen2025rethinking}
Feng Chen, Allan Raventos, Nan Cheng, Surya Ganguli, and Shaul Druckmann.
\newblock Rethinking fine-tuning when scaling test-time compute: Limiting confidence improves mathematical reasoning.
\newblock \emph{arXiv preprint arXiv:2502.07154}, 2025.

\bibitem[Chen and Jiang(2019)]{chen2019information}
Jinglin Chen and Nan Jiang.
\newblock Information-theoretic considerations in batch reinforcement learning.
\newblock In \emph{International conference on machine learning}, pages 1042--1051. PMLR, 2019.

\bibitem[Chen et~al.(2024{\natexlab{b}})Chen, Huang, Gao, Wang, Yang, and Ji]{chen2024scaling}
Yangyi Chen, Binxuan Huang, Yifan Gao, Zhengyang Wang, Jingfeng Yang, and Heng Ji.
\newblock Scaling laws for predicting downstream performance in llms.
\newblock \emph{Transactions on Machine Learning Research}, 2024{\natexlab{b}}.

\bibitem[Chu et~al.(2025)Chu, Zhai, Yang, Tong, Xie, Schuurmans, Le, Levine, and Ma]{chu2025sft}
Tianzhe Chu, Yuexiang Zhai, Jihan Yang, Shengbang Tong, Saining Xie, Dale Schuurmans, Quoc~V Le, Sergey Levine, and Yi~Ma.
\newblock {SFT} memorizes, {RL} generalizes: A comparative study of foundation model post-training.
\newblock In \emph{Forty-second International Conference on Machine Learning}, 2025.
\newblock URL \url{https://openreview.net/forum?id=dYur3yabMj}.

\bibitem[Durrett(2019)]{durrett2019probability}
Rick Durrett.
\newblock \emph{Probability: theory and examples}, volume~49.
\newblock Cambridge university press, 2019.

\bibitem[Farahmand et~al.(2010)Farahmand, Szepesv{\'a}ri, and Munos]{farahmand2010error}
Amir-massoud Farahmand, Csaba Szepesv{\'a}ri, and R{\'e}mi Munos.
\newblock Error propagation for approximate policy and value iteration.
\newblock \emph{Advances in Neural Information Processing Systems}, 2010.

\bibitem[Fatemi et~al.(2024)Fatemi, Halcrow, and Perozzi]{fatemi2024talk}
Bahare Fatemi, Jonathan Halcrow, and Bryan Perozzi.
\newblock Talk like a graph: Encoding graphs for large language models.
\newblock In \emph{The Twelfth International Conference on Learning Representations}, 2024.

\bibitem[Finzi et~al.(2025)Finzi, Kapoor, Granziol, Gu, De~Sa, Kolter, and Wilson]{finzi2025compute}
Marc Finzi, Sanyam Kapoor, Diego Granziol, Anming Gu, Christopher De~Sa, J~Zico Kolter, and Andrew~Gordon Wilson.
\newblock Compute-optimal llms provably generalize better with scale.
\newblock \emph{arXiv preprint arXiv:2504.15208}, 2025.

\bibitem[Foster and Rakhlin(2023)]{foster2023foundations}
Dylan~J Foster and Alexander Rakhlin.
\newblock Foundations of reinforcement learning and interactive decision making.
\newblock \emph{arXiv:2312.16730}, 2023.

\bibitem[Foster et~al.(2021)Foster, Kakade, Qian, and Rakhlin]{foster2021statistical}
Dylan~J Foster, Sham~M Kakade, Jian Qian, and Alexander Rakhlin.
\newblock The statistical complexity of interactive decision making.
\newblock \emph{arXiv:2112.13487}, 2021.

\bibitem[Foster et~al.(2022)Foster, Krishnamurthy, Simchi-Levi, and Xu]{foster2022offline}
Dylan~J Foster, Akshay Krishnamurthy, David Simchi-Levi, and Yunzong Xu.
\newblock Offline reinforcement learning: Fundamental barriers for value function approximation.
\newblock In \emph{Conference on Learning Theory}, pages 3489--3489. PMLR, 2022.

\bibitem[Foster et~al.(2024)Foster, Block, and Misra]{foster2024behavior}
Dylan~J Foster, Adam Block, and Dipendra Misra.
\newblock Is behavior cloning all you need? understanding horizon in imitation learning.
\newblock \emph{arXiv preprint arXiv:2407.15007}, 2024.

\bibitem[Foster et~al.(2025)Foster, Mhammedi, and Rohatgi]{foster2025good}
Dylan~J Foster, Zakaria Mhammedi, and Dhruv Rohatgi.
\newblock Is a good foundation necessary for efficient reinforcement learning? the computational role of the base model in exploration.
\newblock \emph{Conference on Learning Theory (COLT)}, 2025.

\bibitem[Gadre et~al.(2024)Gadre, Smyrnis, Shankar, Gururangan, Wortsman, Shao, Mercat, Fang, Li, Keh, et~al.]{gadre2024language}
Samir~Yitzhak Gadre, Georgios Smyrnis, Vaishaal Shankar, Suchin Gururangan, Mitchell Wortsman, Rulin Shao, Jean Mercat, Alex Fang, Jeffrey Li, Sedrick Keh, et~al.
\newblock Language models scale reliably with over-training and on downstream tasks.
\newblock In \emph{The Thirteenth International Conference on Learning Representations}, 2024.

\bibitem[Gadre et~al.(2025)Gadre, Smyrnis, Shankar, Gururangan, Wortsman, Shao, Mercat, Fang, Li, Keh, Xin, Nezhurina, Vasiljevic, Soldaini, Jitsev, Dimakis, Ilharco, Koh, Song, Kollar, Carmon, Dave, Heckel, Muennighoff, and Schmidt]{gadre2025language}
Samir~Yitzhak Gadre, Georgios Smyrnis, Vaishaal Shankar, Suchin Gururangan, Mitchell Wortsman, Rulin Shao, Jean Mercat, Alex Fang, Jeffrey Li, Sedrick Keh, Rui Xin, Marianna Nezhurina, Igor Vasiljevic, Luca Soldaini, Jenia Jitsev, Alex Dimakis, Gabriel Ilharco, Pang~Wei Koh, Shuran Song, Thomas Kollar, Yair Carmon, Achal Dave, Reinhard Heckel, Niklas Muennighoff, and Ludwig Schmidt.
\newblock Language models scale reliably with over-training and on downstream tasks.
\newblock In \emph{The Thirteenth International Conference on Learning Representations}, 2025.
\newblock URL \url{https://openreview.net/forum?id=iZeQBqJamf}.

\bibitem[Gandhi et~al.(2025)Gandhi, Chakravarthy, Singh, Lile, and Goodman]{gandhi2025cognitive}
Kanishk Gandhi, Ayush Chakravarthy, Anikait Singh, Nathan Lile, and Noah~D Goodman.
\newblock Cognitive behaviors that enable self-improving reasoners, or, four habits of highly effective stars.
\newblock \emph{arXiv preprint arXiv:2503.01307}, 2025.

\bibitem[Gao et~al.(2024)Gao, Chang, Zhan, Oertell, Swamy, Brantley, Joachims, Bagnell, Lee, and Sun]{gao2024rebel}
Zhaolin Gao, Jonathan~D Chang, Wenhao Zhan, Owen Oertell, Gokul Swamy, Kiant{\'e} Brantley, Thorsten Joachims, J~Andrew Bagnell, Jason~D Lee, and Wen Sun.
\newblock {REBEL}: Reinforcement learning via regressing relative rewards.
\newblock \emph{arXiv:2404.16767}, 2024.

\bibitem[Ghorbani et~al.(2022)Ghorbani, Firat, Freitag, Bapna, Krikun, Garcia, Chelba, and Cherry]{ghorbani2022scaling}
Behrooz Ghorbani, Orhan Firat, Markus Freitag, Ankur Bapna, Maxim Krikun, Xavier Garcia, Ciprian Chelba, and Colin Cherry.
\newblock Scaling laws for neural machine translation.
\newblock In \emph{International Conference on Learning Representations}, 2022.
\newblock URL \url{https://openreview.net/forum?id=hR_SMu8cxCV}.

\bibitem[Hazan(2016)]{hazan2016introduction}
Elad Hazan.
\newblock Introduction to online convex optimization.
\newblock \emph{Foundations and Trends{\textregistered} in Optimization}, 2\penalty0 (3-4):\penalty0 157--325, 2016.

\bibitem[Hinton et~al.(2015)Hinton, Vinyals, and Dean]{hinton2015distilling}
Geoffrey Hinton, Oriol Vinyals, and Jeff Dean.
\newblock Distilling the knowledge in a neural network.
\newblock \emph{arXiv:1503.02531}, 2015.

\bibitem[Hoffmann et~al.(2022)Hoffmann, Borgeaud, Mensch, Buchatskaya, Cai, Rutherford, de~Las~Casas, Hendricks, Welbl, Clark, et~al.]{hoffmann2022training}
Jordan Hoffmann, Sebastian Borgeaud, Arthur Mensch, Elena Buchatskaya, Trevor Cai, Eliza Rutherford, Diego de~Las~Casas, Lisa~Anne Hendricks, Johannes Welbl, Aidan Clark, et~al.
\newblock Training compute-optimal large language models.
\newblock In \emph{Proceedings of the 36th International Conference on Neural Information Processing Systems}, pages 30016--30030, 2022.

\bibitem[Huang et~al.(2025{\natexlab{a}})Huang, Block, Foster, Rohatgi, Zhang, Simchowitz, Ash, and Krishnamurthy]{huang2025self}
Audrey Huang, Adam Block, Dylan~J Foster, Dhruv Rohatgi, Cyril Zhang, Max Simchowitz, Jordan~T Ash, and Akshay Krishnamurthy.
\newblock Self-improvement in language models: The sharpening mechanism.
\newblock \emph{International Conference on Learning Representations (ICLR)}, 2025{\natexlab{a}}.

\bibitem[Huang et~al.(2025{\natexlab{b}})Huang, Block, Liu, Jiang, Krishnamurthy, and Foster]{huang2025best}
Audrey Huang, Adam Block, Qinghua Liu, Nan Jiang, Akshay Krishnamurthy, and Dylan~J Foster.
\newblock Is best-of-n the best of them? coverage, scaling, and optimality in inference-time alignment.
\newblock \emph{International Conference on Machine Learning (ICML)}, 2025{\natexlab{b}}.

\bibitem[Huang et~al.(2025{\natexlab{c}})Huang, Zhan, Xie, Lee, Sun, Krishnamurthy, and Foster]{huang2025correcting}
Audrey Huang, Wenhao Zhan, Tengyang Xie, Jason~D Lee, Wen Sun, Akshay Krishnamurthy, and Dylan~J Foster.
\newblock Correcting the mythos of kl-regularization: Direct alignment without overoptimization via chi-squared preference optimization.
\newblock In \emph{The Thirteenth International Conference on Learning Representations}, 2025{\natexlab{c}}.

\bibitem[Huang et~al.(2024)Huang, Zhang, Shan, and He]{huang2024compression}
Yuzhen Huang, Jinghan Zhang, Zifei Shan, and Junxian He.
\newblock Compression represents intelligence linearly.
\newblock In \emph{First Conference on Language Modeling}, 2024.

\bibitem[Jacot et~al.(2018)Jacot, Gabriel, and Hongler]{jacot2018neural}
Arthur Jacot, Franck Gabriel, and Cl{\'e}ment Hongler.
\newblock Neural tangent kernel: Convergence and generalization in neural networks.
\newblock In \emph{Advances in Neural Information Processing Systems (NeurIPS)}, 2018.

\bibitem[Ji et~al.(2024)Ji, Kulkarni, Wang, and Xie]{ji2024selfplay}
Xiang Ji, Sanjeev Kulkarni, Mengdi Wang, and Tengyang Xie.
\newblock Self-play with adversarial critic: Provable and scalable offline alignment for language models.
\newblock \emph{arXiv:2406.04274}, 2024.

\bibitem[Jiang and Xie(2024)]{jiang2024offline}
Nan Jiang and Tengyang Xie.
\newblock Offline reinforcement learning in large state spaces: Algorithms and guarantees.
\newblock \emph{Statistical Science}, 2024.

\bibitem[Jin et~al.(2025)Jin, Luan, Lyu, Rabusseau, Rabbany, Precup, and Hamdaqa]{jin2025rlfinetuning}
Hangzhan Jin, Sitao Luan, Sicheng Lyu, Guillaume Rabusseau, Reihaneh Rabbany, Doina Precup, and Mohammad Hamdaqa.
\newblock Rl fine-tuning heals ood forgetting in sft, 2025.
\newblock URL \url{https://arxiv.org/abs/2509.12235}.

\bibitem[Jin et~al.(2021)Jin, Yang, and Wang]{jin2021pessimism}
Ying Jin, Zhuoran Yang, and Zhaoran Wang.
\newblock Is pessimism provably efficient for offline rl?
\newblock In \emph{International conference on machine learning}, pages 5084--5096. PMLR, 2021.

\bibitem[Kaplan et~al.(2020)Kaplan, McCandlish, Henighan, Brown, Chess, Child, Gray, Radford, Wu, and Amodei]{kaplan2020scaling}
Jared Kaplan, Sam McCandlish, Tom Henighan, Tom~B Brown, Benjamin Chess, Rewon Child, Scott Gray, Alec Radford, Jeffrey Wu, and Dario Amodei.
\newblock Scaling laws for neural language models.
\newblock \emph{arXiv preprint arXiv:2001.08361}, 2020.

\bibitem[Kim and Rush(2016)]{kim2016sequence}
Yoon Kim and Alexander~M Rush.
\newblock Sequence-level knowledge distillation.
\newblock In \emph{Proceedings of the 2016 conference on empirical methods in natural language processing}, pages 1317--1327, 2016.

\bibitem[Kingma and Ba(2015)]{kingma2014adam}
Diederik~P Kingma and Jimmy Ba.
\newblock Adam: A method for stochastic optimization.
\newblock \emph{International Conference on Learning Representations}, 2015.

\bibitem[Krause et~al.(2018)Krause, Kahembwe, Murray, and Renals]{krause2018dynamic}
Ben Krause, Emmanuel Kahembwe, Iain Murray, and Steve Renals.
\newblock Dynamic evaluation of neural sequence models.
\newblock In \emph{International Conference on Machine Learning}, pages 2766--2775. PMLR, 2018.

\bibitem[Krause et~al.(2019)Krause, Kahembwe, Murray, and Renals]{krause2019dynamic}
Ben Krause, Emmanuel Kahembwe, Iain Murray, and Steve Renals.
\newblock Dynamic evaluation of transformer language models.
\newblock \emph{arXiv preprint arXiv:1904.08378}, 2019.

\bibitem[Liu et~al.(2022)Liu, Xie, Li, and Ma]{liu2022same}
Hong Liu, Sang~Michael Xie, Zhiyuan Li, and Tengyu Ma.
\newblock Same {{Pre-training Loss}}, {{Better Downstream}}: {{Implicit Bias Matters}} for {{Language Models}}, 2022.

\bibitem[Liu et~al.(2024)Liu, Lu, Zhang, Liu, Guo, Yang, Blanchet, and Wang]{liu2024provably}
Zhihan Liu, Miao Lu, Shenao Zhang, Boyi Liu, Hongyi Guo, Yingxiang Yang, Jose Blanchet, and Zhaoran Wang.
\newblock Provably mitigating overoptimization in {RLHF}: Your {SFT} loss is implicitly an adversarial regularizer.
\newblock \emph{arXiv:2405.16436}, 2024.

\bibitem[Lotfi et~al.(2023)Lotfi, Finzi, Kuang, Rudner, Goldblum, and Wilson]{lotfi2023non}
Sanae Lotfi, Marc Finzi, Yilun Kuang, Tim~GJ Rudner, Micah Goldblum, and Andrew~Gordon Wilson.
\newblock Non-vacuous generalization bounds for large language models.
\newblock \emph{arXiv preprint arXiv:2312.17173}, 2023.

\bibitem[Lotfi et~al.(2024)Lotfi, Kuang, Finzi, Amos, Goldblum, and Wilson]{lotfi2024unlocking}
Sanae Lotfi, Yilun Kuang, Marc Finzi, Brandon Amos, Micah Goldblum, and Andrew~G Wilson.
\newblock Unlocking tokens as data points for generalization bounds on larger language models.
\newblock \emph{Advances in Neural Information Processing Systems}, 37:\penalty0 9229--9256, 2024.

\bibitem[Lourie et~al.(2025)Lourie, Hu, and Cho]{lourie2025scaling}
Nicholas Lourie, Michael~Y Hu, and Kyunghyun Cho.
\newblock Scaling laws are unreliable for downstream tasks: A reality check.
\newblock \emph{arXiv preprint arXiv:2507.00885}, 2025.

\bibitem[Lu et~al.(2024)Lu, Li, Cheng, Ding, Huang, and Qiu]{lu2024scaling}
Xingyu Lu, Xiaonan Li, Qinyuan Cheng, Kai Ding, Xuanjing Huang, and Xipeng Qiu.
\newblock Scaling laws for fact memorization of large language models.
\newblock In Yaser Al-Onaizan, Mohit Bansal, and Yun-Nung Chen, editors, \emph{Findings of the Association for Computational Linguistics: EMNLP 2024}, pages 11263--11282, Miami, Florida, USA, November 2024. Association for Computational Linguistics.
\newblock \doi{10.18653/v1/2024.findings-emnlp.658}.
\newblock URL \url{https://aclanthology.org/2024.findings-emnlp.658/}.

\bibitem[{Mendelson}(2014)]{mendelson2014learning}
Shahar {Mendelson}.
\newblock {Learning without Concentration}.
\newblock In \emph{Conference on Learning Theory}, 2014.

\bibitem[Mendelson(2017)]{mendelson2017extending}
Shahar Mendelson.
\newblock Extending the scope of the small-ball method.
\newblock \emph{arXiv preprint arXiv:1709.00843}, 2017.

\bibitem[Mikolov et~al.(2010)Mikolov, Karafi{\'a}t, Burget, Cernock{\`y}, and Khudanpur]{mikolov2010recurrent}
Tomas Mikolov, Martin Karafi{\'a}t, Lukas Burget, Jan Cernock{\`y}, and Sanjeev Khudanpur.
\newblock Recurrent neural network based language model.
\newblock In \emph{Interspeech}, volume~2, pages 1045--1048. Makuhari, 2010.

\bibitem[Nagarajan and Kolter(2019)]{nagarajan2019uniform}
Vaishnavh Nagarajan and J.~Zico Kolter.
\newblock Uniform convergence may be unable to explain generalization in deep learning.
\newblock In \emph{Advances in Neural Information Processing Systems (NeurIPS)}, 2019.

\bibitem[Nagarajan et~al.(2025)Nagarajan, Wu, Ding, and Raghunathan]{nagarajan2025roll}
Vaishnavh Nagarajan, Chen~Henry Wu, Charles Ding, and Aditi Raghunathan.
\newblock Roll the dice \& look before you leap: Going beyond the creative limits of next-token prediction.
\newblock In \emph{Forty-second International Conference on Machine Learning}, 2025.

\bibitem[Neyshabur et~al.(2015)Neyshabur, Tomioka, and Srebro]{neyshabur2015norm-based}
Behnam Neyshabur, Ryota Tomioka, and Nati Srebro.
\newblock Norm-based capacity control in neural networks.
\newblock In \emph{Conference on Learning Theory (COLT)}, 2015.

\bibitem[OpenAI(2024)]{openai2024o1}
OpenAI.
\newblock Introducing openai o1.
\newblock \emph{Blog}, 2024.
\newblock URL \url{https://openai.com/o1/}.

\bibitem[Polyanskiy(2010)]{polyanskiy2010channel}
Yury Polyanskiy.
\newblock \emph{Channel coding: Non-asymptotic fundamental limits}.
\newblock Princeton University, 2010.

\bibitem[Radford et~al.(2019)Radford, Wu, Child, Luan, Amodei, Sutskever, et~al.]{radford2019language}
Alec Radford, Jeffrey Wu, Rewon Child, David Luan, Dario Amodei, Ilya Sutskever, et~al.
\newblock Language models are unsupervised multitask learners.
\newblock \emph{OpenAI blog}, 1\penalty0 (8):\penalty0 9, 2019.

\bibitem[Rakhlin and Sridharan(2012)]{StatNotes2012}
Alexander Rakhlin and Karthik Sridharan.
\newblock Statistical learning and sequential prediction, 2012.
\newblock Available at {\small \url{http://www.mit.edu/~rakhlin/courses/stat928/stat928_notes.pdf}}.

\bibitem[Rannen-Triki et~al.(2024)Rannen-Triki, Bornschein, Pascanu, Hutter, Gy{\"o}rgy, Galashov, Teh, and Titsias]{rannen2024revisiting}
Amal Rannen-Triki, Jorg Bornschein, Razvan Pascanu, Marcus Hutter, Andras Gy{\"o}rgy, Alexandre Galashov, Yee~Whye Teh, and Michalis~K Titsias.
\newblock Revisiting dynamic evaluation: Online adaptation for large language models.
\newblock \emph{arXiv preprint arXiv:2403.01518}, 2024.

\bibitem[Rohatgi et~al.(2025)Rohatgi, Block, Huang, Krishnamurthy, and Foster]{rohatgi2025computational}
Dhruv Rohatgi, Adam Block, Audrey Huang, Akshay Krishnamurthy, and Dylan~J. Foster.
\newblock Computational-statistical tradeoffs at the next-token prediction barrier: Autoregressive and imitation learning under misspecification.
\newblock \emph{arXiv preprint arXiv:2502.12465}, 2025.

\bibitem[Sanford et~al.(2024)Sanford, Fatemi, Hall, Tsitsulin, Kazemi, Halcrow, Perozzi, and Mirrokni]{sanford2024understanding}
Clayton Sanford, Bahare Fatemi, Ethan Hall, Anton Tsitsulin, Mehran Kazemi, Jonathan Halcrow, Bryan Perozzi, and Vahab Mirrokni.
\newblock Understanding transformer reasoning capabilities via graph algorithms.
\newblock \emph{Advances in Neural Information Processing Systems}, 37:\penalty0 78320--78370, 2024.

\bibitem[Saparov et~al.(2025)Saparov, Pawar, Pimpalgaonkar, Joshi, Pang, Padmakumar, Kazemi, Kim, and He]{saparov2025transformers}
Abulhair Saparov, Srushti~Ajay Pawar, Shreyas Pimpalgaonkar, Nitish Joshi, Richard~Yuanzhe Pang, Vishakh Padmakumar, Mehran Kazemi, Najoung Kim, and He~He.
\newblock Transformers struggle to learn to search.
\newblock In \emph{The Thirteenth International Conference on Learning Representations, {ICLR} 2025, Singapore, April 24-28, 2025}. OpenReview.net, 2025.

\bibitem[Sardana et~al.(2024)Sardana, Portes, Doubov, and Frankle]{sardana2024beyond}
Nikhil Sardana, Jacob Portes, Sasha Doubov, and Jonathan Frankle.
\newblock Beyond chinchilla-optimal: Accounting for inference in language model scaling laws.
\newblock In \emph{Forty-first International Conference on Machine Learning}, 2024.
\newblock URL \url{https://openreview.net/forum?id=0bmXrtTDUu}.

\bibitem[Shalev-Shwartz and Ben-David(2014)]{shalev2014understanding}
Shai Shalev-Shwartz and Shai Ben-David.
\newblock \emph{Understanding machine learning: From theory to algorithms}.
\newblock Cambridge university press, 2014.

\bibitem[Snell et~al.(2025)Snell, Lee, Xu, and Kumar]{snell2025scaling}
Charlie~Victor Snell, Jaehoon Lee, Kelvin Xu, and Aviral Kumar.
\newblock Scaling {LLM} test-time compute optimally can be more effective than scaling parameters for reasoning.
\newblock In \emph{The Thirteenth International Conference on Learning Representations}, 2025.
\newblock URL \url{https://openreview.net/forum?id=4FWAwZtd2n}.

\bibitem[Song et~al.(2024)Song, Swamy, Singh, Bagnell, and Sun]{song2024importance}
Yuda Song, Gokul Swamy, Aarti Singh, J~Bagnell, and Wen Sun.
\newblock The importance of online data: Understanding preference fine-tuning via coverage.
\newblock \emph{Advances in Neural Information Processing Systems}, 37:\penalty0 12243--12270, 2024.

\bibitem[Spokoiny(2012)]{spokoiny2012parametric}
Vladimir Spokoiny.
\newblock Parametric estimation. finite sample theory.
\newblock \emph{The Annals of Statistics}, pages 2877--2909, 2012.

\bibitem[Springer et~al.(2025)Springer, Goyal, Wen, Kumar, Yue, Malladi, Neubig, and Raghunathan]{springer2025overtrained}
Jacob~Mitchell Springer, Sachin Goyal, Kaiyue Wen, Tanishq Kumar, Xiang Yue, Sadhika Malladi, Graham Neubig, and Aditi Raghunathan.
\newblock Overtrained language models are harder to fine-tune.
\newblock In \emph{Forty-second International Conference on Machine Learning}, 2025.
\newblock URL \url{https://openreview.net/forum?id=YW6edSufht}.

\bibitem[Sun et~al.(2024)Sun, Li, Dalal, Xu, Vikram, Zhang, Dubois, Chen, Wang, Koyejo, et~al.]{sun2024learning}
Yu~Sun, Xinhao Li, Karan Dalal, Jiarui Xu, Arjun Vikram, Genghan Zhang, Yann Dubois, Xinlei Chen, Xiaolong Wang, Sanmi Koyejo, et~al.
\newblock Learning to (learn at test time): Rnns with expressive hidden states.
\newblock \emph{arXiv preprint arXiv:2407.04620}, 2024.

\bibitem[Tang et~al.(2025)Tang, Zhang, Li, Chen, and Li]{tang2025graph}
Jianheng Tang, Qifan Zhang, Yuhan Li, Nuo Chen, and Jia Li.
\newblock Grapharena: Evaluating and exploring large language models on graph computation.
\newblock In \emph{The Thirteenth International Conference on Learning Representations}, 2025.

\bibitem[Taylor et~al.(2024)Taylor, Cuturrufo, Yathish, Ma, and Wang]{taylor2024large}
Alexander~K Taylor, Anthony Cuturrufo, Vishal Yathish, Mingyu~Derek Ma, and Wei Wang.
\newblock Are large-language models graph algorithmic reasoners?
\newblock \emph{arXiv preprint arXiv:2410.22597}, 2024.

\bibitem[van~de Geer(2000)]{Sara00}
S.~A. van~de Geer.
\newblock \emph{Empirical Processes in {M}-{E}stimation.}
\newblock Cambridge University Press, 2000.

\bibitem[Van~der Vaart(2000)]{van2000asymptotic}
Aad~W Van~der Vaart.
\newblock \emph{Asymptotic statistics}, volume~3.
\newblock Cambridge University Press, 2000.

\bibitem[Wainwright(2019)]{wainwright2019high}
Martin~J Wainwright.
\newblock \emph{High-dimensional statistics: A non-asymptotic viewpoint}, volume~48.
\newblock Cambridge University Press, 2019.

\bibitem[Wang et~al.(2023)Wang, Feng, He, Tan, Han, and Tsvetkov]{wang2023can}
Heng Wang, Shangbin Feng, Tianxing He, Zhaoxuan Tan, Xiaochuang Han, and Yulia Tsvetkov.
\newblock Can language models solve graph problems in natural language?
\newblock \emph{Advances in Neural Information Processing Systems}, 36:\penalty0 30840--30861, 2023.

\bibitem[Wang et~al.(2025)Wang, Tan, Jin, Wang, Panda, and Shen]{wang2025larger}
Xinyi Wang, Shawn Tan, Mingyu Jin, William~Yang Wang, Rameswar Panda, and Yikang Shen.
\newblock Do larger language models imply better generalization? a pretraining scaling law for implicit reasoning, 2025.
\newblock URL \url{https://arxiv.org/abs/2504.03635}.

\bibitem[Wong and Shen(1995)]{wong1995probability}
Wing~Hung Wong and Xiaotong Shen.
\newblock Probability inequalities for likelihood ratios and convergence rates of sieve mles.
\newblock \emph{The Annals of Statistics}, 1995.

\bibitem[Wu et~al.(2025)Wu, Xuan, Lu, Harchaoui, and Choi]{wu2025invisible}
Fang Wu, Weihao Xuan, Ximing Lu, Zaid Harchaoui, and Yejin Choi.
\newblock The {{Invisible Leash}}: {{Why RLVR May Not Escape Its Origin}}, 2025.

\bibitem[Xia et~al.(2022)Xia, Artetxe, Zhou, Lin, Pasunuru, Chen, Zettlemoyer, and Stoyanov]{xia2022training}
Mengzhou Xia, Mikel Artetxe, Chunting Zhou, Xi~Victoria Lin, Ramakanth Pasunuru, Danqi Chen, Luke Zettlemoyer, and Ves Stoyanov.
\newblock Training trajectories of language models across scales.
\newblock \emph{arXiv preprint arXiv:2212.09803}, 2022.

\bibitem[Xie and Jiang(2020)]{xie2020q}
Tengyang Xie and Nan Jiang.
\newblock Q* approximation schemes for batch reinforcement learning: A theoretical comparison.
\newblock In \emph{Conference on Uncertainty in Artificial Intelligence}, 2020.

\bibitem[Yang and Barron(1998)]{yang1998asymptotic}
Yuhong Yang and Andrew~R Barron.
\newblock An asymptotic property of model selection criteria.
\newblock \emph{IEEE Transactions on Information Theory}, 44\penalty0 (1):\penalty0 95--116, 1998.

\bibitem[Yehudai et~al.(2025)Yehudai, Amsel, and Bruna]{yehudai2025compositional}
Gilad Yehudai, Noah Amsel, and Joan Bruna.
\newblock Compositional reasoning with transformers, rnns, and chain of thought.
\newblock \emph{arXiv preprint arXiv:2503.01544}, 2025.

\bibitem[Yue et~al.(2025)Yue, Chen, Lu, Zhao, Wang, Song, and Huang]{yue2025does}
Yang Yue, Zhiqi Chen, Rui Lu, Andrew Zhao, Zhaokai Wang, Shiji Song, and Gao Huang.
\newblock Does reinforcement learning really incentivize reasoning capacity in llms beyond the base model?
\newblock \emph{arXiv preprint arXiv:2504.13837}, 2025.

\bibitem[Zeng et~al.(2025)Zeng, Hui, Zhuang, Qin, Yue, Zamani, and Alon]{zeng2025can}
Hansi Zeng, Kai Hui, Honglei Zhuang, Zhen Qin, Zhenrui Yue, Hamed Zamani, and Dana Alon.
\newblock Can {{Pre-training Indicators Reliably Predict Fine-tuning Outcomes}} of {{LLMs}}?, 2025.

\bibitem[Zhang et~al.(2017)Zhang, Bengio, Hardt, Recht, and Vinyals]{zhang2017understanding}
Chiyuan Zhang, Samy Bengio, Moritz Hardt, Benjamin Recht, and Oriol Vinyals.
\newblock Understanding deep learning requires rethinking generalization.
\newblock In \emph{International Conference on Learning Representations (ICLR)}, 2017.

\bibitem[Zhang(2002)]{zhang2002covering}
Tong Zhang.
\newblock Covering number bounds of certain regularized linear function classes.
\newblock \emph{Journal of Machine Learning Research}, 2\penalty0 (Mar):\penalty0 527--550, 2002.

\bibitem[Zhang(2006)]{zhang2006from}
Tong Zhang.
\newblock From $\epsilon$-entropy to {KL}-entropy: Analysis of minimum information complexity density estimation.
\newblock \emph{The Annals of Statistics}, 2006.

\end{thebibliography}

\clearpage

\appendix

\renewcommand{\contentsname}{Contents of Appendix}
\addtocontents{toc}{\protect\setcounter{tocdepth}{2}}

{
  \tableofcontents
}

\clearpage

\part{Additional Discussion and Results}

\section{Related Work}
\label{sec:related}

\paragraph{Related empirical observations} 
On the empirical side, our results are connected to a line of work that studies scaling laws for zero-shot downstream performance based on pre-training metrics such as cross-entropy  \citep{gadre2024language,huang2024compression,chen2024scaling,sardana2024beyond}.
Several empirical works have also investigated how specific capabilities scale with additional pre-training, including machine translation~\citep{ghorbani2022scaling}, knowledge capacity and memorization~\citep{allen-zhu2025physics,lu2024scaling}, and multi-hop reasoning~\citep{wang2025larger}.
Our findings are consistent with \citet{liu2022same,zeng2025can,lourie2025scaling,springer2025overtrained}, who observe that cross-entropy is not always sufficient for predicting downstream performance, and in some cases can be anti-correlated.

Perhaps most closely related, \citet{chen2025rethinking} show empirically that decreasing cross-entropy in pre-training does not necessarily lead to better Pass@N performance, and that Pass@N can even degrade as pre-training proceeds---a finding similar to \cref{fig:teaser}.\footnote{Note that \citet{chen2025rethinking} also uses the term ``coverage'', but as a synonym for Pass@N; this is not specifically related to the notion of the coverage profile we consider here.} Our results can be viewed as placing their findings on stronger theoretical footing; conversely, their empirical results provide strong motivation for our theoretical treatment. \citet{chen2025rethinking} also study a modification to the maximum likelihood objective aimed at improving coverage (in the spirit of \cref{sec:interventions}); their approach targets the structure of outcome-based reward, whereas our notion of coverage profile and results are agnostic to the downstream task/reward structure.

We mention in passing some additional works. \citet{chu2025sft} explored the different (synergistic) roles that supervised fine-tuning (SFT) and RL play in language model development, and subsequent work observed that the best checkpoint to start RL from can sometimes be in the middle of SFT training~\citep{jin2025rlfinetuning}.
\citet{bansal2025smaller} empirically identified the coverage of teacher-generated synthetic data as an important indicator for how effective distillation can be for reasoning tasks. 
Several papers have also investigated empirical tradeoffs between model size and reasoning performance under best-of-N sampling~\citep{snell2025scaling,brown2025large}.\loose

\paragraph{Coverage in post-training}
Coverage metrics similar to coverage profile play a central role in theoretical literature on post-training and test-time algorithms \citep{huang2025self,huang2025best,huang2025correcting,foster2025good,liu2024provably,song2024importance,gao2024rebel,liu2024provably,ji2024selfplay}, which analyze algorithms under the assumption that the base model has good coverage; our work can be viewed as providing theoretical motivation for this assumption. Formally, one can use Markov's inequality to bound the coverage profile by the $L_p$-like coverage quantities considered in these works.

Various notions of coverage similar to coverage profile have also appeared in the more classical literature on offline reinforcement learning \citep{farahmand2010error,chen2019information,xie2020q,jin2021pessimism,foster2022offline,jiang2024offline}; here coverage is typically used to quantify the quality of an offline dataset rather than a model/policy itself.

\paragraph{Generalization in deep learning}
Understanding the generalization behavior of deep learning models has been a central focus of the theory community for the last decade \citep{neyshabur2015norm-based,zhang2017understanding,bartlett2017spectrally,jacot2018neural,belkin2019reconciling,nagarajan2019uniform,bartlett2020benign,bartlett2021deep}. Our approach is somewhat complementary, in the sense that it focuses on the specific objective of next-token prediction with the logarithmic loss, and aims to understand when minimizing this loss leads to generalization for an \emph{alternative} objective, coverage profile. We expect that our techniques can be combined with these contemporary generalization results to provide a more refined understanding of generalization for the coverage profile with deep models.

From this line of work, perhaps most closely related are \citet{lotfi2023non,lotfi2024unlocking,finzi2025compute}, which aim to provide non-vacuous generalization bounds for the cross-entropy loss itself for autoregressive models.

\paragraph{Analysis of maximum likelihood}
Our theoretical results are closely related to a classical line of work in statistics \citep{wong1995probability,Sara00,zhang2006from}, which shows that maximum likelihood can converge to the true model in Hellinger distance (or other Renyi divergences) under minimal assumptions, even when KL divergence is poorly behaved (large or infinite); see \cref{sec:mle-comparison} below for a detailed comparison. Our results in \cref{sec:mle} are similar in spirit, but provide a more fine-grained perspective, showing that the coverage profile can converge even faster than these results might suggest, particularly as one ventures further into the tail. Our analysis has some conceptual similarity to the small ball method of \citet{mendelson2014learning,mendelson2017extending}, which we elaborate on in \cref{sec:proof-sketch}.

Our techniques are also related to recent work of \citet{foster2024behavior,rohatgi2025computational}, which specializes the general techniques above to autoregressive models (e.g., under Hellinger distance).\loose

\section{Comparison to Classical Generalization Bounds for MLE}
\label{sec:mle-comparison}
In this section we briefly compare our main coverage-based generalization bound for maximum likelihood to classical generalization bounds for maximum likelihood based on Hellinger distance and KL-divergence.

\paragraph{Comparison to KL concentration}
For general model classes $\Pi$, the best non-asymptotic KL-based generalization bound we are aware of is \cref{prop:mle-kl} (\cref{sec:additional}), which scales as roughly \[
\Dkl{\pistar}{\pihat}\approxleq\log\Wmax\cdot\ratefine
\] under the assumption that all $\pi\in\Pi$ obey a sequence-level density ratio bound $\nrm*{\frac{\pistar}{\pi}}_{\infty}\leq\Wmax$. Note that for the autoregressive linear class, we have $\log\Wmax=BH$, matching \cref{prop:autoregressive-kl}. Combining such a guarantee with \cref{prop:pcov-conversion-basic} gives a coverage bound of roughly \[\Pcov[N](\pihat)\approxleq\frac{\log\Wmax}{\log{}N}\cdot\ratefine;\] this is rather uninteresting since $\Pcov[N](\pihat)=0$ for $N\geq{}\Wmax$; in other words, we do not get a meaningful improvement as we scale $N$.

\paragraph{Comparison to Hellinger concentration}
The Hellinger distance is a standard metric of distribution estimation, defined via $\Dhels{\bbP}{\bbQ}=\frac12\int \prn{\sqrt{\bbP}-\sqrt{\bbQ}}^2$. The guarantees of maximum likelihood estimation~\citep{wong1995probability,van2000asymptotic,zhang2006from} also imply convergence in Hellinger distance.
For general model classes $\Pi$, the best non-asymptotic Hellinger-based generalization bound we are aware of is \cref{prop:mle-hellinger} (\cref{sec:additional}), which scales as roughly 
\[
\Dhels{\pistar}{\pihat}\approxleq\ratefine
\]
Combining such a guarantee with \cref{prop:pcov-conversion-basic} gives a coverage bound of 
\[
\Pcov[N](\pihat)\approxleq\ratefine
\]
for all $N\geq{}2$.
Compare to the KL-based result above, this result gives a non-trivial bound on coverage when $N$ is constant (comparable to \cref{thm:mle-main}), but the issue is that it gives no further improvement as we scale $N$.

    \paragraph{Asymptotic bounds for maximum likelihood}
We also note that the classical theory of maximum likelihood (e.g., \citet{van2000asymptotic}) provides \emph{asymptotic} convergence rates for $d$-dimensional parametric classes $\Pi$ which have the following form:
\begin{align*}
    \Dkl{\pistar}{\pihat}\approxleq \frac{d}{n}\leqsim \ratefine, \qquad \text{as ~}n\to+\infty.
\end{align*}
While this upper bound does \emph{not} scale with $\log \Wmax$, it can only be attained with $n\geq n_0$ for a sufficiently large burn-in cost $n_0$, which itself will typically scale with $\log\Wmax$ or similar problem-dependent parameters; see, e.g., \citet{spokoiny2012parametric} for non-asymptotic bounds of this type. Our lower bounds (e.g., \cref{prop:autoregressive-kl}) imply that there is no hope of removing such a burn-in cost in general.

\clearpage

\section{Experiments}
\label{sec:experiments}

\newcommand*\xor{\oplus}

\newcommand{\snot}[2]{{#1}\mathrm{e}{#2}}

This section presents details for the experiments in \cref{fig:teaser} and \cref{fig:horizon}. We describe the general graph search task used throughout our experiments in \cref{sec:graph},
then detail the specific setups used for \cref{fig:teaser} in \cref{sec:exp-teaser},
and for \cref{fig:horizon} in \cref{sec:exp-horizon}.

\subsection{Graph Reasoning Task}\label{sec:graph}

We evaluate our theoretical predictions using experiments in graph reasoning tasks,
in which transformer models are trained to find paths between source and target nodes in graphs.
Both graph reasoning benchmarks and synthetic datasets
have seen increasing use as abstractions for reasoning problems
and for probing language modeling phenomena
\citep{sanford2024understanding,nagarajan2025roll,saparov2025transformers,bachmann2024pitfalls,yehudai2025compositional,taylor2024large,wang2023can,fatemi2024talk,tang2025graph}. These tasks provide minimal abstractions of core reasoning problems, yet are expressive enough to capture pre-training and fine-tuning phenomena.
They also offer flexibility in problem structure and difficulty:
by specifying different graph topologies and path depths,
we can modulate difficulty and expose sources of hardness.\loose

\subsubsection{Graph Search Task Description}

The graph search tasks for all of our experiments in \cref{sec:exp-teaser} and \cref{sec:exp-horizon} share the same high-level components, and are comprised of
\begin{itemize}
  \item \textbf{Problem instances.} A set of graph search problems $\cG$ that map bijectively to a set of prompts $\cX$.
  \item \textbf{Data distribution.} A distribution over the prompts $\mu \in \Delta(\cX)$.
  and a data collection policy $\pistar: \cX \to \Delta(\cY)$
  \item \textbf{Dataset.} The training dataset $\data = \crl*{\prn*{x, y}}$
  is comprised of prompts $x \sim \mu$ and $y \sim \pistar(x)$.
\end{itemize}

Next, we describe the general details of the graph search task common to all experiments,
as well as how the graph search task is converted to a sequence modeling problem for language models.

\newcommand{\gprob}{\mathbf{G}}

\paragraph{Graph problem instances}

Each graph search problem in $\gprob \in \cG$ is specified by a tuple $\gprob = (G, s, t)$.
Here, $G = (V, E)$ is a graph structure
with nodes (or vertices) $V$
and edges $E = \crl*{(u,v): u,v \in V, u \neq v}$,
$s \in V$ is the source node, and $t$ is the target node. 
The nodes $V$ are represented as integers,
so that $V \subset [m]$ for some fixed $m \in \bbZ$.

For all experiments, we utilize a \emph{layered directed acyclic graph (layered DAG)}
for each graph structure $(G, \_, \_) \in \cG$,
in which nodes are organized into sequential layers with edges flowing only from one layer to the next.
The graph $G = (V, E)$ has $L+2$ layers with disjoint sets of nodes,
so that $V = \sqcup_{i \in \{1,\ldots,L+2\}}{}V^i$
where $V^i$ denotes the set of nodes in layer $i$.
The first and last layers contain only the source and target nodes, respectively, so that
$V^1 = \crl*{s}$ and $V^{L+2} = \crl*{t}$.

The edge structure $E$ connects only a subset of nodes in each layer to the next.
We refer to this subset in each layer $i \in \crl{1, \ldots, L+2}$
as its \emph{passable nodes} $V_*^i \subseteq V^i$,
or the set of nodes with non-zero out-degree,
$$V_*^i = \crl*{v \in V^i : \mathrm{deg}^+(v) > 0}.$$
The passable nodes in layer $i$ are fully connected to all nodes in the next layer, that is,
$$E = \crl*{(u, v) : u \in V_*^i, v \in V^{i+1}, i \in \{1, \ldots, L+1\}}.$$
The remaining nodes in $V^i \setminus V_*^i$ have no outgoing edges,
and are thus nodes the model must learn to avoid in order to output valid paths.

\paragraph{Data distribution}

The model's task is to imitate the data collection policy $\pistar$,
which samples only a subset of the (potentially many) valid paths from source to target based on global features of the graph.
A valid path from $s$ to $t$ is a list of nodes of the form $(s, v_2, \ldots, v_{L+1}, t)$ where $v_i \in V_*^i$ for each $i \in \{2, \ldots, L+1\}$;
that is, the path must start with the source node $s$ and end with the target node $t$,
and each intermediate node in the path must be a passable node from its respective layer.
A graph may have many valid paths, specifically, $\prod_{i \in [L+2]} \abs{V_*^i}$ many.
In order for a model to learn valid paths, learning a simple local rule suffices:
it can output any node in the next layer with $> 0$ out-degree,
which is representable by a fairly shallow transformer.

However, imitating $\pistar$ is a much harder problem.
The data collection policy $\pistar$ samples a subset of these valid paths
determined via \emph{global rules}, or complex functions computed over features of the entire graph that go beyond those required for path validity alone.
By varying the complexity of these rules, we can modulate both the difficulty and the nature of the learning problem.
This structure naturally maps onto reasoning tasks:
following passable nodes corresponds to taking ``reasoning steps'' that make progress towards the solution,
while selecting non-passable nodes corresponds to reasoning errors that lead to invalid solutions.
Moreover, when $\pistar$ selects among valid paths via such global rules, this corresponds to learning high-quality solutions that accurately reflect desired properties for the problem.

\paragraph{Dataset}

Recall that the model learns to imitate $\pistar$
from a dataset $\data = \crl{(x,y)}$,
where each prompt $x$ corresponds to a graph search problem $\gprob = (G, s, t) \in \cG$,
and each response $y \sim \pistar(\cdot \mid{} x)$ is an expert response,
formatted as follows.

We convert a given graph search problem $\gprob = (G, s, t) \in \cG$ with graph structure $G = (V, E)$ to a prompt $x$
by concatenating the edge list $E$, the source node $s$, and the target node $t$,
formatted as
\begin{align*}
  x: ~~ \texttt{u\_1 v\_1 | u\_2 v\_2 | \ldots | u\_k v\_k / s t =}
\end{align*}
where $(u_i, v_i) \in [m]^2$ are the vertices of the $i$-th edge in the edge set $E$.
For formatting, the special character
$\texttt{|}$ separates two edges,
the character $\texttt{/}$ separates the adjacency list from the source and target nodes,
while the character $\texttt{=}$ marks the end of the prompt.

As an example,
for edge set $E = \crl*{(10, 23), (86, 47), \ldots, (45, 32)}$,
the prompt is
\begin{align*}
  x: ~~ \texttt{10 23 | 86 47 | \ldots | 45 32 / 10 45 =} \qquad .
\end{align*}
Next, each response $y$ encodes the path from the source to the target node in $G$ as a sequence of nodes. That is, %
the response takes the form of a string
\begin{align*}
  y: ~~ \texttt{v\_1 v\_2 v\_2 v\_3 \ldots v\_{H-1} v\_H}
\end{align*}
where $v_i \in [m]$ is the $i$'th nodes in the path for each $i \in [H]$,
and $v_1 = s$ while $v_H = t$.
Here, the horizon $H$ corresponds to the path length in $\cG$,
and in the layered DAG we have $H = L+2$.

\paragraph{Summary: Graph search to sequence modeling problem}
In summary, a graph search task with set of problem instances $\cG$ induces an autoregressive sequence modeling problem
with a vocabulary space $\cV = [m] \cup \crl*{\texttt{|}, \texttt{/}, \texttt{=}}$,
prompts $\cX \subseteq \cV^*$ corresponding to search problems in a \emph{layered DAG graph structure} with $L+2$ layers,
and responses $\cY \subseteq \cV^H$ corresponding to paths with length $H = L+2$.
In addition, the task is equipped with $\mu \in \Delta(\cX)$ and $\pistar: \cX \to \Delta(\cY)$ that is used to collect the training dataset
$\data = \crl*{\prn*{x, y}}$, where $x \sim \mu$ and $y \sim \pistar(x)$.

\subsubsection{Model Details}

Next, we describe the common implementation details for the models we train to solve the graph search task.

\paragraph{Tokenizer}

We use a numeral tokenizer, which is standard for graph reasoning tasks \citep{sanford2024understanding,bachmann2024pitfalls}. %
Each node $v \in [m]$ is tokenized as its integer node value,
and the special characters $\texttt{|}$, $\texttt{/}$, and $\texttt{=}$ are tokenized as $m+1, m+2, m+3$, respectively.

\paragraph{Transformer model}
We train causally-masked GPT2-like transformer models to minimize the cross-entropy loss using the Adam optimizer with fixed learning rate, 
and perform a grid search over the parameters displayed in \cref{tab:model-params}.
Parameters with fixed values were chosen based on related papers such as \citet{bachmann2024pitfalls}.
In both experiments, the model architecture with 
4 heads, 6 hidden layers, and 384 hidden dimensions worked best. We use absolute positional encodings.
Training iterations and grid search values for the learning rate are different for each experiment, and discussed further below.

\begin{table}[h]
\centering
\begin{tabular}{ll}
\toprule
\textbf{Hyperparameter} & \textbf{Values} \\
\midrule
Number of heads & \{4, 6, 8\} \\
Number of layers & \{3, 4, 6, 8\} \\
Hidden dimensions & 384 \\
Activation function &  GeLU \\
Batch size & 128 \\
Weight decay & 0.01 \\
\bottomrule
\end{tabular}
\caption{Hyperparameter grid search values for transformer models in graph search.}
\label{tab:model-params}
\end{table}

\subsection{Experiment Details for \cref{fig:teaser}}\label{sec:exp-teaser}

The graph search task for \cref{fig:teaser} exposes natural properties of pre-training data
under which cross-entropy reduction comes at the cost of a worse coverage profile.
The key idea is that because the pre-training data is diverse (with multiple distinct modes or graph classes),
the model is unable to perfectly fit the distribution. As a result, when one mode of behavior is better-represented than another, cross-entropy minimization, which is an average-case distribution-matching metric,
can sacrifice coverage across the different modes in order to increase performance on a single mode.

Concretely, the graph search task for \cref{fig:teaser}
is a mixture of two classes of graph structures.
Due to representational and finite-sample constraints,
the model is unable to fit both perfectly during training,
and, in particular,
fitting one class well (in the sense of cross-entropy loss)
comes at the cost of worse performance on the other.
The checkpoint with the best coverage arises at some middle point in training
when the model learns both classes of graphs equally well,
and has good coverage over both classes (the dip $\Pcov$ in the leftmost subplot of \cref{fig:teaser}).
Further reduction of cross-entropy loss over the latter half of training
requires the model to lose coverage over $\pistar$ in the less-represented graph class
(observed as the increase in $\Pcov$ in the latter half of training iterations).

Even though the task cannot be learned perfectly from the supervised learning feedback,
the model can still learn a policy that always samples a correct path matching $\pistar$'s with $N=O(1)$ Best-of-N sampling attempts,
which means that it leads to efficient downstream post-training
(e.g., on one of the modes or with reward-based feedback),
and also achieves optimal performance with test-time scaling methods.

For the experiments in \cref{fig:teaser},
we first pre-train a model on a larger set of graph structure classes
so that it learns a diverse set of behaviors,
then finetune its behavior on two.
The performance on the fine-tuning task is displayed in \cref{fig:teaser},
and we first describe the fine-tuning dataset, followed by the pre-training dataset.

\subsubsection{Task Description}

All graphs in $\cG$ follow the \emph{layered DAG} structure described in \cref{sec:graph}
with $L=8$ intermediate layers that each have 4 nodes,
i.e., $\abs{V^i} = 4$ for layers $i\in \crl{2,\ldots,9}$
(recall the first and last layers contain only $s$ and $t$, respectively).

Recall that in a layer $i$,
$V_*^i = \crl*{v \in V^i : \mathrm{deg}^+(v) > 0}$ denotes the set of \emph{passable nodes}.
For each graph problem $\gprob = (G,s,t) \in \cG$ with graph structure $G = (V, E)$,
a subset of the layers indexed by $I_2 \subset \crl{2,\ldots,9}$ with $\abs{I_2} = 2$ is randomly selected.
Then, the edges $E$ are defined so that the layers in $I_2$ have two passable nodes each (i.e., $\abs{V_*^i} = 2$ for $i \in I_2$),
while the remaining layers have only one passable node each (i.e., $\abs{V_*^i} = 1$ for $i \in \crl{2,\ldots,9} \setminus I_2$).
The passable nodes in each layer are chosen at random,
but for the layers in $I_2$ are guaranteed to have one even and one odd node.
For each graph in $\cG$, there are $2^2=4$ total valid paths
since $|I_2| = 2$ layers have two passable nodes each while the other layers have one.

\paragraph{Data distribution}
The set of problem instances $\cG = \cG_1 \sqcup \cG_2$
is comprised of two disjoint classes of problems, $\cG_1$ and $\cG_2$. 
The prompt distribution in the fine-tuning task
is a skewed mixture over the two classes
with $\wt\mu \in \Delta(\crl{1,2})$ denoting the probability of each class in the data;
within each class, the graphs are drawn uniformly at random (described at the end of this section).
Although there are 4 valid paths from source to target,
in each class $\cG_1$ or $\cG_2$
the policy $\pistar$ chooses one path based on a different global rule,
described below.

\paragraph{Class $\cG_1$ (probability $\wt\mu(1) = 0.9$)}

For an integer $j \in \bbZ$,
let the function $p(j) = (j\mod{2})$ denote its parity.
For layers $i$ with $\abs{V_*^i} = 1$, $\pistar$ deterministically selects the unique passable node.
For layers $i \in I_2$ (where $\abs{V_*^i} = 2$), the set $V_*^i$ contains one even and one odd node, and
$\pistar$ deterministically chooses the node $v \in V_*^i$ such that $p(v) = p(i)$;
that is, the node whose parity matches the parity of the layer index.

\paragraph{Class $\cG_2$ (probability $\wt\mu(2) = 0.1$)}

For layers $i$ with $\abs{V_*^i} = 1$, $\pistar$ deterministically selects the unique passable node.
For layers $i \in I_2$ (where $\abs{V_*^i} = 2$),
$\pistar$ chooses the node $v \in V_*^i$ such that $p(v) = 1 \xor p(i)$;
that is, the node whose parity is opposite to the parity of the layer index.

The class of a graph is technically identifiable from the prompt by computing a parity-based feature over a randomly selected subset of the nodes,
but this problem is too difficult for the model to learn in the fine-tuning stage.
Let $V' \subseteq V$ be a fixed subset of nodes whose cardinality is half the total number of nodes in the graph (i.e., $\abs{V'} = \abs{V}/2$).
Then all graphs in $\cG_1$ satisfy $1 = \bigoplus_{u \in V'} p(u)$,
while all graphs in $\cG_2$ satisfy $0 = \bigoplus_{u \in V'} p(u)$.
However, determining which nodes belong to $V'$ requires complex reasoning over the graph structure.

\paragraph{Dataset}
Each sample in the dataset $D = \crl*{(x,y)}$ is then generated via the following procedure.
\begin{enumerate}
  \item First sample an index $i \sim \wt\mu$.
  \item Sample $G \in \cG_{i}$ by randomly drawing $V \subset [m]$ without replacement,
  and instantiate the edges according to the description for each class above.
  \item Format the prompt $x$ per \cref{sec:graph}.
  \item Draw $y \sim \pistar(\cdot \mid{} x)$ according to description for each class above.
\end{enumerate}

\subsubsection{Pre-Training Description}

The graph problem instances in the pre-training task, $\cG_{\text{pre}}$,
are a superset of the graphs in the fine-tuning task,
that is, $\cup_{i \in [K]} \cG_i = \cG_{\text{pre}}$ with $K = 3$,
and $\cG_1$ and $\cG_2$ defined as in the previous section for the finetuning dataset.
The data distribution is a uniform mixture of these 3 classes,
$\wt\mu(i) = \frac{1}{K}$ for each $i \in [K]$,
and the third class $\cG_3$ shares the same layered DAG structure as $\cG_1$ and $\cG_2$
(with $L=8$ intermediate layers, where two layers are randomly chosen to have multiple passable nodes).
However, in $\cG_3$, $\pistar$ is a stochastic policy and samples one of the $2^2 = 4$ valid paths at random.
The dataset is then drawn using the same data generation procedure described for the fine-tuning task above.

\subsubsection{Task-Specific Implementation Details}

The transformer model is first pre-trained on a fixed dataset drawn from the pre-training distribution,
with $8 \times 64,000$ prompts in total,
using a learning rate of $\snot{1}{-4}$ for 200k iterations,
which was chosen based on a grid search over learning rates $\crl*{\snot{5}{-5}, \snot{1}{-4}, \snot{5}{-4}}$.

The final checkpoint is then finetuned for 50k iterations in an online fashion,
where fresh samples are drawn for each batch
(this is equivalent to offline training with a dataset that has an equivalent number of samples).
The learning rate is $\snot{5}{-6}$, which was chosen based on a grid search over learning rates $\crl*{\snot{5}{-6}, \snot{1}{-5}}$.

\subsection{Experiment Details for \cref{fig:horizon}}\label{sec:exp-horizon}

For \cref{fig:horizon}, we consider a family of tasks that is parameterized by the horizon $H$, in order to expose the fact that cross-entropy is sensitive to horizon, but the coverage profile is not. This construction leverages the intuition from \cref{rem:missing}.
The training data is heterogeneous, with a fraction consisting of difficult graph problems that the model cannot learn to cover with the given number of training samples.
This un-learnable subset of the data contributes to the large KL-divergence, but does not affect the coverage profile.

\subsubsection{Task Description}

For \cref{fig:horizon}, we devise a family of tasks parameterized by the number of intermediate layers $H \in \crl{8, 16, 24}$.
For a fixed $H$, each task $\cG_H$ utilizes the \emph{layered DAG} graph structure described in \cref{sec:graph} with $L=H$ intermediate layers, each containing 4 nodes,
so that each graph has $H+2$ total layers (including source and target).
The response space is $\cY = \cV^{H+2}$, corresponding to paths of length $H+2$ (including the source and target nodes).

\paragraph{Data distribution} The task is a heterogeneous mixture over 3 classes of graphs described below
that we refer to as $\cG_{H,1} \cup \cG_{H,2} \cup \cG_{H,3} = \cG_H$.
The classes $\cG_{H,2}$ and $\cG_{H,3}$ are significantly harder to learn
and the model will fail to do so with the given number of training samples,
even though $\cG_{H,1}$ is learned quickly (and also provides useful features for learning the other two tasks).
The distribution over these 3 classes is fixed for all $H$ and specified by $\wt\mu \in \Delta(\crl{1, 2, 3})$.

\paragraph{Class $\cG_{H,1}$ (probability $\wt\mu(1) = 0.94$)}

All $H$ intermediate layers have only 1 passable node each (i.e., $\abs{V_*^i} = 1$ for all $i \in \crl{2,\ldots,H+1}$),
so each $G \in \cG_{H, 1}$ has only one valid path from source to target.
For prompts corresponding to graphs in this class, $\pistar$ deterministically selects the unique valid path.

\paragraph{Class $\cG_{H,2}$ (probability $\wt\mu(2) = 0.05$)}
For each graph, half of the intermediate layers (or $H / 2$) are randomly selected to have two passable nodes, while the rest have one.
More formally, 
a subset $I_{H/2} \subset \crl{2,\ldots,H+1}$ with $\abs{I_{H/2}} = H/2$ is randomly selected,
such that $\abs{V_*^i} = 2$ for $i \in I_{H/2}$ and $\abs{V_*^i} = 1$ for $i \in \crl{2,\ldots,H+1} \setminus I_{H/2}$.

There are $2^{H / 2}$ valid paths from source to target, and $\pistar$ deterministically selects one of them.
For layers $i$ with $\abs{V_*^i} = 1$, $\pistar$ selects the unique passable node.
For layers $i \in I_{H/2}$ (where $\abs{V_*^i} = 2$), $\pistar$ selects the node $v \in V_*^i$ by following a difficult, deterministic rule.
This rule requires $\pistar$ to select the node $v$ whose parity matches the parity of the layer index, XOR'ed with the parity of each passable node in the entire graph.
More specifically, recall that $p(j)$ denotes the parity of an integer $j \in [m]$, and let $V_* \ldef \bigcup_{i=2}^{H+1} V_*^i$ denote the set of all passable nodes across all intermediate layers (including those with just one passable node).
Then in layer $i \in I_{H/2}$, $\pistar$ selects the node $v \in V_*^i$ such that $p(v) = p(i) \xor \prn*{\bigoplus_{u \in V_*} p(u) }$.

\paragraph{Class $\cG_{H,3}$ (probability $\wt\mu(3) = 0.01$)}

Regardless of $H$, for each graph a subset $I_4 \subset \crl{2,\ldots,H+1}$ with $\abs{I_4} = 4$ is randomly selected,
such that $\abs{V_*^i} = 2$ for $i \in I_4$ and $\abs{V_*^i} = 1$ for $i \in \crl{2,\ldots,H+1} \setminus I_4$.
There are $2^4=16$ valid paths from source to target.
The policy $\pistar$ samples uniformly at random from these valid paths.

Note that prompts/graphs from each class are distinguishable from each other (or, identifiable)
based on prompt features alone,
so a powerful-enough model can achieve perfect performance across all of them simultaneously.
$\cG_{H,2}$, for example, has more edges and thus a longer prompt than $\cG_{H,1}$;
similar statements apply to $\cG_{H,3}$.
Dataset generation occurs in the same manner as described in \cref{sec:exp-teaser}.

\subsubsection{Task-Specific Implementation Details}

Lastly, we describe experiment-specific implementation details on top of those previously described in \cref{sec:graph}, which are common to all experiments.
In addition to a grid search over the parameters in \cref{tab:model-params},
we perform a search over learning rates $\crl*{\snot{5}{-5}, \snot{1}{-4}, \snot{5}{-4}}$,
for which the learning rate of $\snot{1}{-4}$ exhibited the best validation performance.
The model is trained for 40k iterations over a fixed dataset of
$8 \times 64,000$ samples.

The results in \cref{fig:horizon} are computed from
evaluations of training checkpoints on per-class validation datasets of 1024 prompts from each $\cG_{H,i}$ for $i \in [3]$;
these metrics are then averaged according to the probabilities in $\wt\mu$ to obtain the final result.
In total we ran 16 seeds, and plot their median.
The shaded region in \cref{fig:horizon} displays the region between the $\frac{1}{16}$ quantile and $\frac{15}{16}$ quantile.

\clearpage

\clearpage

\section{Supporting Results}
\label{sec:additional}
This section presents technical results used throughout the paper. \cref{sec:pcov-properties} presents basic properties of the coverage profile. \cref{sec:bon-properties} analyzes the performance of the Best-of-N algorithm under coverage. \cref{sec:mle-properties} presents properties of the maximum likelihood estimator, and \cref{sec:stopping} presents structural results relating the coverage profile to a ``stopped'' KL-divergence, which are useful for analyzing autoregressive models.\loose

\subsection{Properties of the \Covname}
\label{sec:pcov-properties}

This section presents elementary properties of the coverage profile.

\begin{proposition}[KL-to-coverage conversion]
  \label{prop:pcov-kl}
  For all models $\pistar$ and $\pi$ and $M\geq{}2$, we have
  \[
    \pcov(\pi) \leq     \frac{\Dkl{\pistar}{\pi}}{\log N -1 + \frac1N}.
  \]
\end{proposition}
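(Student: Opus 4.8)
The plan is to read the claimed inequality as a one-sided tail bound for the log-density-ratio. Recalling the definition of the coverage profile (\cref{eq:pcov}), set $X \ldef \log\frac{\pistar(y\mid x)}{\pi(y\mid x)}$, viewed as a random variable under the draw $(x,y)\sim\cdist\otimes\pistar(\cdot\mid x)$; then $\pcov(\pi) = \bbP\brk{X \geq \log N}$ and $\Dkl{\pistar}{\pi} = \En\brk{X}$, so the goal is exactly to show $\En\brk{X} \geq \bigl(\log N - 1 + \tfrac1N\bigr)\,\bbP\brk{X\geq\log N}$. If $\bbP\brk{X = +\infty} > 0$ (i.e.\ $\pi$ assigns zero mass to some response $\pistar$ generates with positive probability) then $\Dkl{\pistar}{\pi} = +\infty$ and there is nothing to prove, so I may assume $X<\infty$ almost surely. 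The only extra ingredient beyond $\En\brk{X}$ is the normalization inequality $\En\brk{e^{-X}} = \En_{x\sim\cdist}\sum_{y}\pi(y\mid x)\leq 1$, where the inequality (not equality) accounts for $\pi(\cdot\mid x)$ possibly placing mass outside $\supp\pistar(\cdot\mid x)$; equivalently, $\En\brk{1 - e^{-X}}\geq 0$.

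The crux is a pointwise comparison: I claim that for the constant $c \ldef \log N - 1 + \tfrac1N$, the function
\[
g(x) \;\ldef\; x - c\cdot\mathbbm{1}\brk{x \geq \log N}
\]
satisfies $g(x) \geq 1 - e^{-x}$ for every $x\in\RR$. Granting this, taking expectations and invoking the normalization inequality gives $\En\brk{X} - c\,\bbP\brk{X\geq\log N} = \En\brk{g(X)} \geq \En\brk{1 - e^{-X}} \geq 0$, and since $c>0$ for $N\geq 2$ (the map $N\mapsto \log N - 1 + 1/N$ is increasing and vanishes at $N=1$), dividing through by $c$ yields the proposition.

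It then remains to verify the pointwise claim, which is elementary one-variable calculus split at $x = \log N$. For $x < \log N$ we have $g(x) = x$, and $x - 1 + e^{-x}\geq 0$ holds because this function equals $0$ at $x=0$ with derivative $1 - e^{-x}$, negative for $x<0$ and positive for $x>0$, so $x=0$ is its global minimum. For $x\geq \log N$ we have $g(x) = x - c$, and the inequality rearranges to $\phi(x) \ldef x - \log N - \tfrac1N + e^{-x}\geq 0$; one checks $\phi(\log N) = 0$ and $\phi'(x) = 1 - e^{-x} > 0$ for $x\geq\log N>0$, so $\phi$ is nondecreasing on this interval and hence nonnegative. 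This computation also explains the exact constant: $c$ is the largest value for which the two affine branches of $g$ both stay above $1-e^{-x}$ while agreeing continuously at the threshold $x=\log N$.

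I expect the genuinely delicate point to be the reduction rather than the calculus — namely correctly isolating the $\{X = +\infty\}$ case and observing that only $\En\brk{e^{-X}}\leq 1$ (not equality) is available. After that, the two-branch comparison above is routine, and the same template (replacing the single constant $c$ by a threshold-dependent function) is what one would use for the scale-uniform strengthening recorded in the surrounding discussion.
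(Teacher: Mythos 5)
Your proof is correct and is essentially a self-contained rederivation of the argument the paper imports from Lemma~27 of \citet{block2023sample}: your pointwise inequality $x - c\,\mathbbm{1}\{x\geq\log N\}\geq 1-e^{-x}$ (with $c=\log N -1 +1/N$), after substituting $x=\log u$ and multiplying through by $u$, is precisely the generalized-Markov step $f(u)\geq \tfrac{f(N)}{N}\,u\,\mathbbm{1}\{u\geq N\}$ for $f(u)=u\log u-u+1$ that underlies the cited lemma. The only presentational difference is that you take expectations under $\pistar$ and invoke $\En_{\pistar}[e^{-X}]\leq 1$ to account for any mass $\pi$ places outside $\supp\pistar(\cdot\mid x)$, whereas the paper works through the $f$-divergence $\En_{\pi}[f(\pistar/\pi)]$ directly; both resolve that bookkeeping correctly.
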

\begin{proof}[\pfref{prop:pcov-kl}]
  Lemma 27 of \citet{block2023sample} states that for any $N>{}1$
  and any convex $f:\brk{0,\infty}\to\brk{0,\infty}$ with
  $f(1)=f'(1)=0$,
  \begin{align}
    \pcov(\pi) =
    \bbP_{\pistar}\brk*{\frac{\pistar(y\mid{}x)}{\pi(y\mid{}x)} >N}
    \leq \frac{N\Df{\pistar}{\pi}}{f(N)},
  \end{align}
  where
  $\Df{\pistar}{\pi}\ldef\En_{\pi}\brk*{f\prn*{\frac{\mathrm{d}{\pistar}}{\mathrm{d}\pi}}}$. Applying
  this with KL-divergence, which corresponds to $f(x)=x\log{}x-x+1$
  with $f'(x)=\log{}x$, we have that
  \begin{align}
    \frac{N}{f(N)} = \frac{1}{\log N - 1 + 1/N},
  \end{align}
  which gives the result.
  
\end{proof}

\begin{proposition}[Tightness of KL-to-coverage conversion]
  \label{prop:markov-tight}
For any $N\geq{}2$, there exist models $\pistar$ and $\pihat$ such that 
\[
\Pcov(\pihat) \geq \frac{\Dkl{\pistar}{\pihat}}{\log{}N-\frac{1}{2}+\frac{1}{2N}}.
\]

\end{proposition}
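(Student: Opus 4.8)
The plan is to exhibit an explicit pair of Bernoulli distributions on which the KL-to-coverage conversion of \cref{prop:pcov-kl} is essentially saturated. Take $\cX$ to be a singleton and $\cY = \{0,1\}$, and for a parameter $p \in (0,1/2]$ (for instance $p = 1/2$) set $\pistar = \Ber(p)$ and $\pihat = \Ber(q)$ with $q \ldef p/N$; note $q \in (0,1/2)$ since $N \ge 2$. The crucial choice is $q = p/N$, which makes the likelihood ratio on the atom $y=1$ exactly equal to $N$, so that $y=1$ is the only outcome that contributes to the coverage profile at level $N$. Indeed $\pistar(1)/\pihat(1) = p/q = N$, while $\pistar(0)/\pihat(0) = (1-p)/(1-p/N) \le 1 < N$, so the event $\{\pistar(y)/\pihat(y) \ge N\}$ coincides with $\{y=1\}$; consequently $\Dcov{\pistar}{\pihat} = \bbP_{y\sim\pistar}[y=1] = p$, which is the quantity denoted $\Pcov(\pihat)$ in the statement.

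It remains to upper bound $\Dkl{\pistar}{\pihat}$. Expanding, $\Dkl{\pistar}{\pihat} = p\log N - (1-p)\log\frac{1-p/N}{1-p}$, where the subtracted term is nonnegative because $1-p < 1 - p/N$. The one estimate that matters is a lower bound on $\log\frac{1-p/N}{1-p}$: writing it as $\int_{p/N}^{p}\frac{dt}{1-t}$ and using $\frac{1}{1-t}\ge 1$ on $[0,1)$ gives $\log\frac{1-p/N}{1-p} \ge p - p/N = \frac{p(N-1)}{N}$. Combined with $1-p \ge \tfrac12$, this yields $\Dkl{\pistar}{\pihat} \le p\log N - \tfrac12\cdot\frac{p(N-1)}{N} = p\bigl(\log N - \tfrac12 + \tfrac{1}{2N}\bigr)$. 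Since $\log N - \tfrac12 + \tfrac1{2N} > 0$ for $N \ge 2$, rearranging gives $\Dcov{\pistar}{\pihat} = p \ge \Dkl{\pistar}{\pihat}/\bigl(\log N - \tfrac12 + \tfrac1{2N}\bigr)$, which is the claim.

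I do not expect a real obstacle here; the construction is elementary, and the only point needing care is the sign bookkeeping: to upper bound the KL one must \emph{lower} bound its $\log\frac{1-p/N}{1-p}$ piece, and the crude bound $\frac{1}{1-t}\ge 1$ is precisely what introduces the slack relative to the sharp constant $\log N - 1 + 1/N$ of \cref{prop:pcov-kl}. This slack is harmless for the stated tightness result; one can moreover close it asymptotically by letting $p \to 0$, since a second-order expansion of the same integral gives $\Dkl{\pistar}{\pihat} = p(\log N - 1 + 1/N) + O(p^2)$, so the ratio $\Dkl{\pistar}{\pihat}/\Dcov{\pistar}{\pihat}$ approaches $\log N - 1 + 1/N$. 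Keeping $p$ a fixed constant such as $1/2$ nonetheless makes the witnessing pair fully explicit, which suffices.
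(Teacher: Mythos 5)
Your construction ($\pistar=\Ber(p)$, $\pihat=\Ber(p/N)$ with $p\le 1/2$) and the resulting bound are exactly the paper's; the only cosmetic difference is that you lower bound $\log\frac{1-p/N}{1-p}$ via the integral $\int_{p/N}^{p}\frac{dt}{1-t}\ge p-p/N$, whereas the paper uses $\log x\le x-1$, and both yield the same $p\bigl(\log N - \tfrac12 + \tfrac1{2N}\bigr)$. This is correct and essentially the same proof.
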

\begin{proof}[\pfref{prop:markov-tight}]
Consider $\pistar=\Ber(p)$ and $\pihat=\Ber(p/N)$ with $p\leq \frac12$. Then $\Pcov(\pihat)=p$ and 
\begin{align*}
  \Dkl{\pistar}{\pihat}
  =&~p\log N + (1-p)\log\frac{1-p}{1-\frac{p}{N}}
  \leq p\log N + (1-p)\prn*{\frac{1-p}{1-\frac{p}{N}}-1} \\
  =&~ p\prn*{ \log N - (1-p)\frac{1-\frac{1}{N}}{1-\frac{p}{N}} } \\
  \leq &~ p\cdot \prn*{\log{}N-\frac{1}{2}+\frac{1}{2N}}.
\end{align*}
This is the desired result.
\end{proof}

  \begin{proposition}[Uniform coverage decay implies bounded KL]
    \label{prop:coverage-to-kl}
    Given $\pi,\pistar:\cX\to\Delta(\cY)$, define $\Wmax\ldef{}\sup_{x,y}\frac{\pistar(y\mid{}x)}{\pi(y\mid{}x)}$ and
  \[
  C \ldef{}\sup_{N\geq{}1}\crl*{\pcov(\pi)\cdot\log{}N},
  \]
  where we note that $C\leq\log\Wmax$. It holds that
  \begin{align}
  \Dkl{\pistar}{\pi} \leq C\cdot\prn*{1+\log\prn*{\log(\Wmax)/C}}.
  \end{align}
  
  \end{proposition}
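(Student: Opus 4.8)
The plan is to express the KL divergence as an integral of the coverage profile (the tail function) and then split the integral at a well-chosen threshold. Recall that $\Dkl{\pistar}{\pi} = \En_{\pistar}\brk*{X}$ where $X \ldef \log\frac{\pistar(y\mid x)}{\pi(y\mid x)}$, and that $\pcov(\pi) = \bbP_{\pistar}[X \geq \log N]$ by \cref{rem:cdf}. Since $X \leq \log\Wmax$ pointwise and $X$ need not be nonnegative, I would first note that the negative part of $X$ contributes at most... actually a cleaner route: use $\Dkl{\pistar}{\pi} \leq \En_{\pistar}[\pos{X}]$ is \emph{not} generally true, so instead I would bound $\En_{\pistar}[X] \leq \En_{\pistar}[\pos{X}] = \int_0^{\log\Wmax} \bbP_{\pistar}[X \geq t]\,dt$, using that dropping the negative part only increases the expectation (here $\pos{X} \geq X$ always). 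Wait — we need $\En[X] \le \En[\pos X]$, which does hold since $\pos{X}\ge X$ pointwise.

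Next, substitute $t = \log M$ (so $dt = dM/M$) or, more directly, change variables to write $\int_0^{\log\Wmax}\bbP_{\pistar}[X\ge t]\,dt$. For $t \in [0,\infty)$ the bound $\bbP_{\pistar}[X \ge t] = \pcov[e^{t}](\pi) \leq C/t$ holds by definition of $C$ (taking $N = e^{t}$, valid for $t \ge 0$; at $t=0$ use the trivial bound $\bbP \le 1$). The key step is then to split the integral at a threshold $t_0$: for $t \le t_0$ bound the integrand by $1$ (contributing $t_0$), and for $t_0 \le t \le \log\Wmax$ bound it by $C/t$ (contributing $C\log(\log\Wmax / t_0)$). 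Choosing $t_0 = C$ — which is legitimate since $C \le \log\Wmax$ — yields
\begin{align*}
\Dkl{\pistar}{\pi} \leq t_0 + C\log\frac{\log\Wmax}{t_0} = C + C\log\frac{\log\Wmax}{C} = C\prn*{1 + \log\prn*{\log(\Wmax)/C}},
\end{align*}
which is exactly the claimed bound. One should double-check the edge case $C = 0$ (then the coverage profile is identically zero for all $N > 1$, forcing $\pistar = \pi$ $\pistar$-a.s. and $\Dkl{\pistar}{\pi} = 0$, consistent with interpreting the RHS as its limit) and the case $\Wmax = 1$ (then $X \le 0$ a.s., so $X = 0$ a.s. and KL is zero).

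I expect the main (minor) obstacle to be handling the negative part of $X$ cleanly and making sure the integral representation $\En_{\pistar}[\pos X] = \int_0^\infty \bbP_{\pistar}[X \ge t]\,dt$ is applied with the correct truncation at $\log\Wmax$ (the integrand vanishes for $t > \log\Wmax$ since $X \le \log\Wmax$ pointwise), plus verifying that the bound $\pcov[e^t](\pi)\le C/t$ is vacuous-but-fine at small $t$ so that replacing it by $1$ on $[0,t_0]$ loses nothing. The rest is a one-line optimization over the split point $t_0$, and $t_0 = C$ is the natural choice that balances the two terms up to the logarithmic factor.
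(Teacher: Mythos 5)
Your proof is correct and takes essentially the same approach as the paper's: integrate the tail probability of $\log\frac{\pistar}{\pi}$, split the integral at a threshold, bound the tail by $C/t$ above the threshold, and optimize to set the threshold equal to $C$. Your version of the bookkeeping is in fact slightly cleaner—passing directly to $\En[\pos{X}]$ and applying the layer-cake identity sidesteps a boundary term that the paper's displayed equality silently drops (the correct identity is $\En[Y\indic\{Y>\delta\}] = \delta\bbP[Y>\delta] + \int_{\delta}^{\log\Wmax}\bbP[Y>t]\,\mathrm{d}t$), though that slip does not affect the validity of the paper's final bound.
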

  \begin{proof}[\pfref{prop:coverage-to-kl}]
    Let $\delta>0$ a fixed parameter, and define $X\ldef{}\pistar/\pi$. Then we have
    \begin{align}
    \Dkl{\pistar}{\pi}
    = \En_{\pistar}\brk*{\log(X)}
      &\leq{} \En_{\pistar}\brk*{\log(X)\indic\crl{\log(X) > \delta}} + \delta.
            \end{align}
            Since $X\leq\Wmax$ almost surely, we can write
            \begin{align}
              \En_{\pistar}\brk*{\log(X)\indic\crl{\log(X) > \delta}}
              = \int_{\delta}^{\log(\Wmax)}\bbP_{\pistar}\brk*{\log(X) > t}\mathrm{d}t\\
                = \int_{\delta}^{\log(\Wmax)}\bbP_{\pistar}\brk*{X > e^{t}}\mathrm{d}t\\
                  \leq C\int_{\delta}^{\log(\Wmax)}\frac{1}{t}\mathrm{d}t\\
                  = C\log\prn*{\frac{\log(\Wmax)}{\delta}}.
            \end{align}
            The result now follows by setting $\delta=C$.
  \end{proof}

  \begin{proposition}[Hellinger-to-coverage conversion]
    \label{prop:pcov-hellinger}
    For all models $\pistar$ and $\pi$ and $N>1$, we have
    \[
      \Dcov{\pistar}{\pi} \leq     \frac{2N}{(\sqrt{N}-1)^2}\cdot\Dhels{\pistar}{\pi}.
    \]
  \end{proposition}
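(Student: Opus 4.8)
The plan is to follow the proof of \cref{prop:pcov-kl} essentially verbatim, replacing the KL-divergence generator by the generator of the squared Hellinger distance in the generic $f$-divergence tail bound. Recall that Lemma 27 of \citet{block2023sample} states that for any $N>1$ and any convex $f:[0,\infty)\to[0,\infty)$ with $f(1)=f'(1)=0$, one has $\Dcov{\pistar}{\pi}=\bbP_{\pistar}\brk*{\pistar(y\mid x)/\pi(y\mid x)\geq N}\leq N\,\Df{\pistar}{\pi}/f(N)$, where $\Df{\pistar}{\pi}\ldef\En_{\pi}\brk*{f(\mathrm{d}\pistar/\mathrm{d}\pi)}$. (If the cited lemma is phrased for the strict event $\crl*{\cdot>N}$, one instead applies it at thresholds $N'\uparrow N$ and uses continuity of the resulting bound in $N'$ to pass to the limit.)

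Next I would instantiate this with $f(x)=\tfrac12(\sqrt{x}-1)^2$. This requires checking three things: (i) $f\geq 0$ on $[0,\infty)$, which is immediate; (ii) the normalization $f(1)=f'(1)=0$, which follows since $f'(x)=\frac{\sqrt{x}-1}{2\sqrt{x}}$; and (iii) convexity, which follows from $f''(x)=\tfrac14 x^{-3/2}>0$. With this choice, the induced $f$-divergence is exactly the squared Hellinger distance: writing $\Dhels{\pistar}{\pi}=\tfrac12\int(\sqrt{\pistar}-\sqrt{\pi})^2=\tfrac12\int\pi\prn*{\sqrt{\pistar/\pi}-1}^2=\En_{\pi}\brk*{\tfrac12\prn*{\sqrt{\mathrm{d}\pistar/\mathrm{d}\pi}-1}^2}=\Df{\pistar}{\pi}$, with the usual $f$-divergence convention handling the non--absolutely-continuous part.

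It then remains only to substitute $f(N)=\tfrac12(\sqrt{N}-1)^2$ into the tail bound, which yields $\Dcov{\pistar}{\pi}\leq N\,\Dhels{\pistar}{\pi}/\prn*{\tfrac12(\sqrt{N}-1)^2}=\tfrac{2N}{(\sqrt{N}-1)^2}\,\Dhels{\pistar}{\pi}$, the claimed inequality. There is essentially no obstacle here: the entire content is the observation that the squared Hellinger distance is the $f$-divergence with generator $\tfrac12(\sqrt{x}-1)^2$, together with the (routine) verification of the hypotheses of Lemma 27 of \citet{block2023sample}; the only point requiring a sentence of care is matching the inequality type ($\geq N$ versus $>N$) against the precise statement of that lemma.
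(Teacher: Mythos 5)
Your proof is correct, but it takes a genuinely different route from the paper's. You invoke the general $f$-divergence tail bound (Lemma 27 of \citet{block2023sample}) specialized to the generator $f(x)=\tfrac12(\sqrt{x}-1)^2$ of the squared Hellinger distance, exactly paralleling the paper's own derivation of the KL conversion in \cref{prop:pcov-kl}. The paper's proof of \cref{prop:pcov-hellinger}, by contrast, is elementary and self-contained: working directly with the sum defining $\Dhels{\pistar}{\pi}$, it restricts to the tail set $\crl*{(x,y): \pi(y\mid x)\leq \pistar(y\mid x)/N}$, on which the pointwise bound $\sqrt{\pistar(y\mid x)}-\sqrt{\pi(y\mid x)}\geq (1-1/\sqrt{N})\sqrt{\pistar(y\mid x)}$ holds, so this restricted contribution alone already gives $\tfrac12(1-1/\sqrt{N})^2\Pcov[N](\pi)$, which rearranges to the claim since $\tfrac12(1-1/\sqrt{N})^2=\tfrac{(\sqrt{N}-1)^2}{2N}$. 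Your route is more unifying---it makes transparent that the KL and Hellinger conversions are two instantiations of the same $f$-divergence principle---whereas the paper's route needs no external lemma, works natively with the non-strict event $\crl*{\pistar/\pi\geq N}$ defining $\Pcov[N]$ (so it avoids the $N'\uparrow N$ limiting step you correctly flag for converting $>N$ into $\geq N$), and sidesteps the absolute-continuity bookkeeping you note for the $f$-divergence convention. Both derivations give the identical constant.
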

  \begin{proof}[\pfref{prop:pcov-hellinger}]
    Without loss of generality, we assume $\cY$ is discrete in the following proof.
  By definition,
  \begin{align*}
    \Dhels{\pistar}{\pi}
    =&~\frac12 \En_{x\sim \pistar} \brk*{ \sum_{y} \left(\sqrt{\pistar(y\mid x)}-\sqrt{\pi(y\mid x)}\right)^2 } \\
  \geq&~ \frac12 \En_{x\sim \pistar} \brk*{ \sum_{y} \pistar(y\mid x)\left(1-\frac{1}{\sqrt{N}}\right)^2  \indic\crl*{ \pi(y\mid x)\leq \frac{1}{N}\pistar(y\mid x) } } \\
  =&~\frac12\left(1-\frac{1}{\sqrt{N}}\right)^2 \bbP_{\pistar}\brk*{\frac{\pistar(y\mid{}x)}{\pi(y\mid{}x)}>N},
  \end{align*}
  where the inequality follows from the fact that $\sqrt{\pistar(y\mid x)}-\sqrt{\pi(y\mid x)}\geq \prn*{1-\frac{1}{\sqrt{N}}}\sqrt{\pistar(y\mid x)}$ is implied by $\pi(y\mid x)\leq \frac{1}{N}\pistar(y\mid x)$. Re-organizing completes the proof.
  \end{proof}

\begin{proposition}[Chain rule for coverage profile]
  \label{prop:pcov-chain}
  For any models $\pistar$, $\pitask$, and $\pihat$, and any $M_1, M_2\geq{}2$, we have
  \begin{align}
    \Pcov[M_1]\prn*{\pitask\dmid\pihat}
    \leq M_2\cdot\Pcov[M_1/M_2]\prn*{\pistar\dmid\pihat} + \Pcov[M_2]\prn*{\pitask\dmid\pistar}.
    \end{align}
\end{proposition}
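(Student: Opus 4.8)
The plan is to decompose the density ratio $\pitask/\pihat$ along the telescoping identity $\frac{\pitask(y\mid x)}{\pihat(y\mid x)} = \frac{\pitask(y\mid x)}{\pistar(y\mid x)}\cdot\frac{\pistar(y\mid x)}{\pihat(y\mid x)}$, so that the event $\{\pitask/\pihat \ge M_1\}$ forces at least one of the two factors to be large. Concretely, on the event $\{\pitask(y\mid x)/\pihat(y\mid x)\ge M_1\}$, either $\pistar(y\mid x)/\pihat(y\mid x)\ge M_1/M_2$, or else $\pistar(y\mid x)/\pihat(y\mid x) < M_1/M_2$, in which case $\pitask(y\mid x)/\pistar(y\mid x) > M_2$. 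Thus, as an inequality of indicator functions,
\begin{align*}
\ind\crl*{\tfrac{\pitask(y\mid x)}{\pihat(y\mid x)}\ge M_1}
\le \ind\crl*{\tfrac{\pistar(y\mid x)}{\pihat(y\mid x)}\ge \tfrac{M_1}{M_2}}
+ \ind\crl*{\tfrac{\pitask(y\mid x)}{\pistar(y\mid x)} > M_2}.
\end{align*}

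Next I would take expectations, but note the subtlety: $\Pcov[M_1](\pitask\dmid\pihat)$ is an expectation over $x\sim\cdist$, $y\sim\pitask(\cdot\mid x)$, whereas the coverage quantity $\Pcov[M_1/M_2](\pistar\dmid\pihat)$ is an expectation over $y\sim\pistar(\cdot\mid x)$. This is where the factor $M_2$ enters. For the first indicator term, I would change measure from $\pitask$ to $\pistar$: on the event $\{\pitask(y\mid x)/\pistar(y\mid x) > M_2\}$ the indicator is accounted for by the second term, so on the complement we have $\pitask(y\mid x) \le M_2\,\pistar(y\mid x)$ pointwise, and therefore
\begin{align*}
\En_{y\sim\pitask(\cdot\mid x)}\brk*{\ind\crl*{\tfrac{\pistar(y\mid x)}{\pihat(y\mid x)}\ge \tfrac{M_1}{M_2}}}
\le M_2\cdot \En_{y\sim\pistar(\cdot\mid x)}\brk*{\ind\crl*{\tfrac{\pistar(y\mid x)}{\pihat(y\mid x)}\ge \tfrac{M_1}{M_2}}}
+ \En_{y\sim\pitask(\cdot\mid x)}\brk*{\ind\crl*{\tfrac{\pitask(y\mid x)}{\pistar(y\mid x)} > M_2}}.
\end{align*}
The cleanest way to package this is to split the original indicator three ways from the start — into $\{\pitask/\pistar > M_2\}$, and on its complement into $\{\pistar/\pihat \ge M_1/M_2\}$ — so that on the surviving term we always have the pointwise bound $\pitask \le M_2\pistar$ available for the change of measure. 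Taking the outer expectation over $x\sim\cdist$ then yields $\Pcov[M_1](\pitask\dmid\pihat)\le M_2\cdot\Pcov[M_1/M_2](\pistar\dmid\pihat) + \Pcov[M_2](\pitask\dmid\pistar)$.

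The only mildly delicate step is the bookkeeping in the change of measure — making sure the $\{\pitask/\pistar > M_2\}$ event is carved off \emph{before} bounding $\pitask$ by $M_2\pistar$, rather than after, so that no extra slack (e.g.\ an off-by-one in the thresholds $M_1/M_2$ versus strict/non-strict inequalities) creeps in. Since the statement allows a clean $M_2$ factor and both $\Pcov$ terms use the same non-strict/strict convention as in \eqref{eq:pcov}, this is purely a matter of careful casework on the indicator, and I expect no genuine obstacle. A remark worth including: the bound is essentially tight and exhibits the ``transitivity'' of coverage alluded to after \cref{prop:bon-lower}, with the $M_2$ multiplier quantifying how much coverage budget is spent bridging $\pitask$ to $\pistar$.
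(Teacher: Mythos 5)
Your proposal is correct and, once you settle on the ``cleanest packaging'' you describe at the end, it is exactly the paper's proof: split the event $\{\pitask/\pihat \ge M_1\}$ according to whether $\pitask/\pistar > M_2$ or not, bound the former piece by $\Pcov[M_2](\pitask\dmid\pistar)$, and on the latter piece use $\pitask \le M_2\pistar$ pointwise (which also forces $\pistar/\pihat \ge M_1/M_2$) to change measure and pick up the factor $M_2$. The only thing to watch is that your intermediate two-step version (indicator inequality first, change of measure second) double-counts the $\{\pitask/\pistar > M_2\}$ event and would yield a spurious factor of $2$ on the second term, so the single three-way split you ultimately advocate is indeed the right way to write it up.
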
  
\begin{proof}[\pfref{prop:pcov-chain}]
  We can write
  \begin{align*}
    \Pcov[M_1]\prn*{\pitask\dmid\pihat} &= \bbP_{\pitask}\brk*{\frac{\pitask(y\mid{}x)}{\pihat(y\mid{}x)} > M_1}\\
    &= \bbP_{\pitask}\brk*{\frac{\pitask(y\mid{}x)}{\pihat(y\mid{}x)} > M_1,
    \frac{\pitask(y\mid{}x)}{\pistar(y\mid{}x)} \leq{} M_2
    }
    + \bbP_{\pitask}\brk*{\frac{\pitask(y\mid{}x)}{\pihat(y\mid{}x)} > M_1,
    \frac{\pitask(y\mid{}x)}{\pistar(y\mid{}x)} > M_2
    }\\
    &\leq M_2\bbP_{\pistar}\brk*{\frac{\pistar(y\mid{}x)}{\pihat(y\mid{}x)} > M_1/M_2
        }
    + \bbP_{\pitask}\brk*{
    \frac{\pitask(y\mid{}x)}{\pistar(y\mid{}x)} > M_2
    }\\
    &= M_2\Pcov[M_1/M_2]\prn*{\pistar\dmid\pihat} + \Pcov[M_2]\prn*{\pitask\dmid\pistar}.
  \end{align*}

\end{proof}

\subsection{Analysis of Best-of-N Sampling under a Good Coverage Profile}
\label{sec:bon-properties}

In this section we analyze the performance of the Best-of-N algorithm under a good coverage profile. Let a base model $\pihat$ be given, and let a reward function $\rstar(x,y)\in\brk*{0,1}$ be given. Let %
$\pitask:\cX\to\Delta(\cY)$ denote an arbitrary task-specific comparator policy.

We let $\pibon(x)$ denote the distribution of the Best-of-N algorithm with parameter $N$, which draws $N$ responses $y\ind{1},\ldots,y\ind{N}\iidsim\pihat(\cdot\mid{}x)$ and returns $y=\argmax_{y_i}\rstar(x,y_i)$.

\begin{proposition}[Coverage implies success for BoN]
    \label{prop:bon-upper}
    Let $M\geq{}1$ be given.
     For any $\veps>0$, if $N\geq{}2M\log(\veps^{-1})$ and $\Pcov[M](\pitask\dmid\pihat)\leq\frac{1}{2}$, then we are guaranteed that
\begin{align}
\En_{x\sim\cdist}\brk*{
\rstar(x,\pitask(x))-\rstar(x,\pibon(x))}
\leq\Pcov[M](\pitask\dmid\pihat)+\veps.
\end{align}

\end{proposition}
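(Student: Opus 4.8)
The plan is to decompose the reward gap $\En_{x}[\rstar(x,\pitask(x)) - \rstar(x,\pibon(x))]$ by conditioning on whether Best-of-N manages to draw a sample that is ``representative'' of $\pitask$. The natural event to track, for a fixed prompt $x$ and a single draw $y\sim\pihat(\cdot\mid x)$, is the \emph{good event} $G_x \ldef \crl*{y : \pitask(y\mid x)/\pihat(y\mid x) < M}$, i.e.\ the set of responses on which the density ratio is bounded. On $G_x$, we have $\pitask(y\mid x) < M\cdot\pihat(y\mid x)$, so sampling from $\pihat$ restricted to $G_x$ and rejection-sampling is, up to the factor $M$, a way of sampling from $\pitask$ restricted to $G_x$; more precisely, for any nonnegative function (in particular $\rstar(x,\cdot)$), $\En_{y\sim\pihat}[\rstar(x,y)\ind\crl{y\in G_x}] \geq \frac{1}{M}\En_{y\sim\pitask}[\rstar(x,y)\ind\crl{y\in G_x}]$. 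Meanwhile $\bbP_{y\sim\pitask}[y\notin G_x]$ integrated over $x\sim\cdist$ is exactly $\Pcov[M](\pitask\dmid\pihat)$ by \cref{def:coverage}.

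First I would bound, for each $x$, the probability that \emph{none} of the $N$ i.i.d.\ draws from $\pihat(\cdot\mid x)$ lands in $G_x\cap\crl{y:\rstar(x,y)=1}$. Write $q(x) \ldef \bbP_{y\sim\pihat(\cdot\mid x)}[\rstar(x,y)=1,\, y\in G_x]$. By the density-ratio bound on $G_x$, $q(x) \geq \frac{1}{M}\bbP_{y\sim\pitask(\cdot\mid x)}[\rstar(x,y)=1,\,y\in G_x] \geq \frac{1}{M}\prn*{\rstar(x,\pitask(x)) - \bbP_{y\sim\pitask(\cdot\mid x)}[y\notin G_x]}$, using $\rstar\in\crl{0,1}$ so that $\bbP_{y\sim\pitask}[\rstar(x,y)=1] = \En_{y\sim\pitask}[\rstar(x,y)] = \rstar(x,\pitask(x))$ in the notation of the statement. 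Then the probability BoN fails to find any rewarding, good response is $(1-q(x))^N \leq e^{-Nq(x)}$, and since BoN returns reward $1$ whenever at least one of its draws has reward $1$, we get $\rstar(x,\pibon(x)) \geq \ind\crl{\text{at least one draw is rewarding}} $ in expectation equal to $1-(1-\bbP_{y\sim\pihat}[\rstar(x,y)=1])^N \geq 1-(1-q(x))^N \geq 1 - e^{-Nq(x)}$.

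Next I would integrate over $x\sim\cdist$. Setting $p(x)\ldef \rstar(x,\pitask(x))$ and $\beta(x)\ldef\bbP_{y\sim\pitask(\cdot\mid x)}[y\notin G_x]$ (so $\En_x\beta(x) = \Pcov[M](\pitask\dmid\pihat) \rdef \rho \leq \tfrac12$), the gap is at most $\En_x\brk*{p(x) - (1-e^{-Nq(x)})} \leq \En_x\brk*{p(x) - 1 + e^{-N(p(x)-\beta(x))/M}}$, where I drop the nonnegative $\ind$-constraint inside $q$ in the harmless direction and use $q(x)\geq (p(x)-\beta(x))/M$ (valid since $q(x)\geq 0$ automatically, so we may assume $p(x)\geq\beta(x)$). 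Using $N\geq 2M\log(\veps^{-1})$, the term $e^{-N(p(x)-\beta(x))/M} \leq e^{-2\log(\veps^{-1})(p(x)-\beta(x))} = \veps^{2(p(x)-\beta(x))}$. The cleanest finish: split on whether $p(x) - \beta(x) \geq \tfrac12$ or not. On the former, $\veps^{2(p(x)-\beta(x))}\leq\veps$ and $p(x)-1\leq 0$, contributing $\leq\veps$. On the latter, $p(x) - (1-e^{-Nq(x)}) \leq p(x) \leq \beta(x) + \tfrac12$; but I want something that integrates to $\rho+\veps$, so instead I would argue more carefully: always $p(x) - 1 + e^{-Nq(x)} \leq \beta(x) + \brk*{e^{-Nq(x)} - (1-p(x)+\beta(x))}$ and the bracket is $\leq 0$ when $q(x)$ is not too small — precisely when $N q(x) \geq \log\frac{1}{1-p(x)+\beta(x)}$, which, since $1-p(x)+\beta(x)\geq 1-p(x)\geq 0$, holds once $Nq(x)\gtrsim$ the relevant log; and when $q(x)$ IS small, $p(x)-\beta(x)\leq Mq(x)$ is small too, so $p(x)\leq\beta(x) + Mq(x) \leq \beta(x) + $ small.

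The cleanest version, which I would actually write: bound $p(x) - 1 + e^{-Nq(x)} \leq \beta(x) + \max_{s\geq 0}\crl*{e^{-Ns/M} - (1 - \beta(x) - s)}_+$ where $s = p(x)-\beta(x) \in[0,1]$. The function $s\mapsto e^{-Ns/M} + s$ is decreasing for $s \leq \frac{M}{N}\log\frac{N}{M}$ and its value is controlled; with $N/M \geq 2\log(\veps^{-1})$, calculus gives the maximum of $e^{-Ns/M} - (1-s)$ over $s\in[0,1]$ is at most $\veps$ (achieved essentially at $s$ near $0$ or $s=1$). Hence the per-$x$ gap is $\leq \beta(x) + \veps$, and integrating gives the claimed $\rho + \veps = \Pcov[M](\pitask\dmid\pihat) + \veps$.

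\textbf{Main obstacle.} The delicate point is the last elementary-calculus step: showing that $\sup_{s\in[0,1]}\brk*{e^{-Ns/M} - 1 + s} \leq \veps$ under the hypothesis $N \geq 2M\log(\veps^{-1})$, and doing so with the right constant ($2$ rather than something larger). One has to check the endpoints ($s=0$ gives $0$; $s=1$ gives $e^{-N/M}\leq\veps^2\leq\veps$) and the interior critical point $s^\star = \frac{M}{N}\log\frac{N}{M}$, verifying $e^{-Ns^\star/M} - 1 + s^\star = \frac{M}{N}(1 + \log\frac{N}{M}) - 1 \leq \veps$ whenever $N/M \geq 2\log(1/\veps)$; this needs the elementary inequality $\frac{1+\log t}{t}\leq \veps$ for $t \geq 2\log(1/\veps)$, which holds since $t\mapsto\frac{1+\log t}{t}$ is decreasing for $t\geq 1$ and at $t = 2\log(1/\veps)$ evaluates to $\frac{1+\log 2 + \log\log(1/\veps)}{2\log(1/\veps)}\leq \veps$ for the relevant range (this last step may need the mild assumption $\veps$ small, or absorbing into the $\leq$ by adjusting constants — I expect the paper states the bound with $\leq$ and an implicit absolute constant, so a factor-of-$2$ slack is available). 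Handling the condition $\Pcov[M](\pitask\dmid\pihat)\leq\tfrac12$ — which I have not obviously used — is probably needed precisely to ensure $1-p(x)+\beta(x)$ stays bounded away from regimes where $\log\frac{1}{1-p(x)+\beta(x)}$ blows up, or equivalently to make the ``small $s$'' branch quantitatively clean; I would slot it in there.
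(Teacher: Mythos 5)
Your proof is correct, but it takes a genuinely different (and more self-contained) route than the paper. The paper's proof is a one-line citation: it invokes Lemma~F.1 of \citet{huang2025best}, which is phrased in terms of the divergence $\cE_M$, and then uses the observation $\cE_M(\pitask\dmid\pihat)\leq\Pcov[M](\pitask\dmid\pihat)$; the $\Pcov[M]\leq\tfrac12$ hypothesis is carried along from that external lemma. Your argument is a direct change-of-measure computation on the ``good set'' $G_x$, using $q(x)\geq\tfrac{1}{M}\prn{p(x)-\beta(x)}_+$ and the standard $1-(1-q)^N\geq 1-e^{-Nq}$ bound, and it works. What the paper's route buys is brevity; what yours buys is self-containment and, as it turns out, a slightly stronger statement (you never use $\Pcov[M]\leq\tfrac12$).

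On your ``main obstacle'': you have made things harder than they are. The function $f(s)\ldef e^{-Ns/M}-1+s$ on $s\in[0,1]$ is convex (its second derivative $(N/M)^2e^{-Ns/M}$ is positive), so its maximum over $[0,1]$ sits at an endpoint:
\begin{align*}
\max_{s\in[0,1]}f(s)=\max\crl[\big]{f(0),f(1)}=\max\crl[\big]{0,\,e^{-N/M}}=e^{-N/M}\leq\veps^2\leq\veps
\end{align*}
whenever $N\geq 2M\log(\veps^{-1})$ and $\veps\leq 1$ (and the claim is vacuous for $\veps>1$). The interior critical point $s^\star=\tfrac{M}{N}\log\tfrac{N}{M}$ you were worried about is a \emph{minimum} of $f$, not a maximum, so the $\tfrac{1+\log t}{t}$ inequality is never needed, there is no lost constant, and no assumption on $\Pcov[M]$ is required. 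There is also a small slip in the displayed maximum: you wrote $\beta(x)+\max_{s\geq 0}\crl{e^{-Ns/M}-(1-\beta(x)-s)}_+$, which double-counts $\beta(x)$; the intended bound is simply $\beta(x)+\max_{s\in[0,1]}\crl{e^{-Ns/M}-1+s}_+$, obtained by substituting $s=p(x)-\beta(x)$ in the case $p(x)\geq\beta(x)$ and observing the case $p(x)<\beta(x)$ already gives gap $\leq p(x)<\beta(x)$. With these repairs your write-up is complete and correct.
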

\begin{proof}[\pfref{prop:bon-upper}]
    This is an immediate consequence of Lemma F.1 in \citet{huang2025best}, noting that we can bound $\cE_{M}(\pitask\dmid\pihat)\leq\Pcov[M](\pitask\dmid\pihat)$.
\end{proof}

\begin{proposition}[Coverage is necessary for BoN]
    \label{prop:bon-lower}
For any model $\pihat$ and reference $\pitask$, and for any $N \geq 2$, there exists a reward function $\rstar(x,y)\in\crl*{0,1}$ such that 
\begin{align}
\En_{x\sim\cdist}\brk*{
\rstar(x,\pitask(x))-\rstar(x,\pibon(x))}
\geq \frac12\Dcov[2N]{\pitask}{\pihat}.
\end{align}
\end{proposition}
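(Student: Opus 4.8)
The plan is to exhibit an adversarial binary reward that pays off exactly on the responses where $\pihat$ severely undercovers $\pitask$, so that the comparator $\pitask$ collects reward equal to the level-$2N$ coverage profile while Best-of-$N$ sampling from $\pihat$ can recover only half of it. Concretely, I would set
\[
G \ldef \crl*{(x,y)\in\cX\times\cY \;:\; \frac{\pitask(y\mid x)}{\pihat(y\mid x)} \geq 2N}
\]
(the event witnessing $\Dcov[2N]{\pitask}{\pihat}$), define $\rstar(x,y)\ldef\indic\crl*{(x,y)\in G}$, which is indeed $\crl*{0,1}$-valued, and write $G_x\ldef\crl*{y: (x,y)\in G}$. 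By construction and the definition of the coverage profile,
\[
\En_{x\sim\cdist}\brk*{\rstar(x,\pitask(x))} \;=\; \bbP_{x\sim\cdist,\,y\sim\pitask(\cdot\mid x)}\brk*{(x,y)\in G} \;=\; \Dcov[2N]{\pitask}{\pihat}.
\]

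Next I would bound the Best-of-$N$ value. Since $\rstar$ is binary, $\pibon$ returns a response of reward $1$ if and only if at least one of its $N$ i.i.d.\ draws $y\ind{1},\dots,y\ind{N}\iidsim\pihat(\cdot\mid x)$ falls in $G_x$. Writing $q(x)\ldef\bbP_{y\sim\pihat(\cdot\mid x)}\brk*{y\in G_x}$ and using the elementary inequality $1-(1-q)^N\leq Nq$,
\[
\En_{x\sim\cdist}\brk*{\rstar(x,\pibon(x))} \;=\; \En_{x\sim\cdist}\brk*{1-(1-q(x))^N} \;\leq\; N\cdot\bbP_{x\sim\cdist,\,y\sim\pihat(\cdot\mid x)}\brk*{(x,y)\in G}.
\]
The final ingredient is a change of measure: on $G$ one has the pointwise bound $\pihat(y\mid x)\leq\frac{1}{2N}\pitask(y\mid x)$, so integrating over $G$ (against the dominating measure) gives $\bbP_{\pihat}\brk*{(x,y)\in G}\leq\frac{1}{2N}\bbP_{\pitask}\brk*{(x,y)\in G}=\frac{1}{2N}\Dcov[2N]{\pitask}{\pihat}$. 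Combining with the previous display yields $\En_{x\sim\cdist}\brk*{\rstar(x,\pibon(x))}\leq\frac12\Dcov[2N]{\pitask}{\pihat}$, and subtracting this from the comparator value gives the claimed lower bound of $\frac12\Dcov[2N]{\pitask}{\pihat}$.

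I do not expect a serious obstacle here; the only thing to get exactly right is the bookkeeping that makes the two opposing slacks cancel — the union bound loses a factor $N$, while the change of measure gains a factor $\frac{1}{2N}$ — which is precisely why the guarantee is stated at threshold $2N$ rather than $N$ and with constant $\frac12$. A minor technical point is measurability: for general (e.g.\ continuous) $\cY$, the set $G$ should be checked to be measurable and the "sums over $G_x$" read as integrals against the dominating measure, but the pointwise density-ratio inequality on $G$ goes through verbatim, including the degenerate case $\pihat(y\mid x)=0<\pitask(y\mid x)$, where the ratio is $+\infty$ and the contribution to $\bbP_{\pihat}\brk*{G}$ is simply zero.
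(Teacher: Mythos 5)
Your proof is correct and matches the paper's argument essentially verbatim: same adversarial reward $\rstar(x,y)=\indic\{\pitask(y\mid x)/\pihat(y\mid x)\geq 2N\}$, same union-bound step $1-(1-q)^N\leq Nq$, and same change-of-measure step that gains back a $1/(2N)$ factor. The only cosmetic difference is that the paper performs the change of measure pointwise in $x$ before integrating, whereas you integrate over $(x,y)$ jointly; your added remark on measurability in the continuous case is a small but harmless elaboration.
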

\begin{proof}[\pfref{prop:bon-lower}]
For any $x\in\cX$, we define $S_x\ldef \crl[\big]{y\in\cY: \frac{\pitask(y\mid x)}{\pihat(y\mid x)}\geq 2N}$ and let $\rstar(x,y)=\indic\crl{y\in S_x}$. 

By definition, for any fixed $x\in\cX$, it holds that
\begin{align*}
  \rstar(x,\pibon(x))=&~ \bbP_{y\sim \pibon(x)}\prn*{y\in S_x}
  =\bbP_{y\ind{1},\ldots,y\ind{N}\iidsim\pihat(\cdot\mid{}x)}\prn*{\exists i\in[N], y\ind{i}\in S_x} \\
  =&~ 1-\prn*{1-\bbP_{y\sim \pihat(\cdot\mid{}x)}\prn*{y\in S_x}}^N 
  \leq N\cdot \bbP_{y\sim \pihat(\cdot\mid{}x)}\prn*{y\in S_x} \\
  =&~ N\cdot \sum_{y\in S_x} \pihat(y\mid x)
  \leq N\cdot \sum_{y\in S_x} \frac1{2N}\pitask(y\mid x)=\frac{1}{2}\bbP_{y\sim \pitask(\cdot\mid{}x)}(S_x),
\end{align*}
where we use the fact that $\pihat(y\mid x)\leq \frac1{2N}\pitask(y\mid x)$ for any $y\in S_x$. We also note that $\bbP_{x\sim \mu, y\sim \pitask(\cdot\mid{}x)}(y\in S_x)=\Dcov[2N]{\pitask}{\pihat}$. 
Therefore, 
\begin{align*}
  \En_{x\sim\cdist}\brk*{
\rstar(x,\pitask(x))-\rstar(x,\pibon(x))}
\geq \frac12\Dcov[2N]{\pitask}{\pihat}.
\end{align*}
\end{proof}

  \subsection{Properties of Maximum Likelihood}
  \label{sec:mle-properties}

In this section, we specialize standard guarantees for maximum likelihood \citep{wong1995probability,Sara00,zhang2006from} to derive bounds on the coverage profile; as discussed in \cref{sec:mle-comparison}, these results are not tight compared to \cref{thm:mle-main}.

  \begin{proposition}[Convergence of maximum likelihood in Hellinger distance]
    \label{prop:mle-hellinger}
    Assume that $\pistar\in\Pi$. With probability at least $1-\delta$, the maximum likelihood estimator $\pihat\ldef\argmax_{\pi\in\Pi}\Lhat(\pi)$ satisfies,
  \begin{align}\label{eq:mle-hellinger-bound}
    \Dhels{\pistar}{\pihat} \approxleq{}\inf_{\veps>0}\crl*{\frac{\log\covinf(\Pi,\veps)}{n} + \veps},
  \end{align}
    and consequently
  \begin{align}
    \Pcov[M](\pihat)
    \approxleq \inf_{\veps>0}\crl*{\frac{\log\covinf(\Pi,\veps)}{n} + \veps}.
  \end{align}
  for all $M\geq{}2$.
  \end{proposition}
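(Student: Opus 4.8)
The plan is to derive the coverage bound directly from the Hellinger bound \eqref{eq:mle-hellinger-bound}: for every $M\geq 2$ the constant $\frac{2M}{(\sqrt M-1)^2}$ appearing in \cref{prop:pcov-hellinger} is bounded by an absolute constant, so once \eqref{eq:mle-hellinger-bound} is in hand, $\Pcov[M](\pihat)\lesssim\Dhels{\pistar}{\pihat}\lesssim\inf_{\veps>0}\crl*{\log\covinf(\Pi,\veps)/n+\veps}$ for all $M\geq 2$. It therefore remains to prove \eqref{eq:mle-hellinger-bound}, which is the classical guarantee for maximum likelihood in Hellinger distance \citep{wong1995probability,Sara00,zhang2006from}; I sketch the standard covering-plus-Chernoff argument, emphasizing the one feature that matters here: the concentration step is \emph{one-sided}.

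First I would fix a scale $\veps>0$ and a cover $\Pi'$ of $\Pi$ at scale $\veps$ in the sense of \cref{def:covering}, so $\abs{\Pi'}=\covinf(\Pi,\veps)$. For a fixed $\pi\in\Pi'$, writing $h_\pi(x,y)\ldef\sqrt{\pi(y\mid x)/\pistar(y\mid x)}$, one has $\En_{\pistar}[h_\pi]=\int\sqrt{\pi\pistar}=1-\Dhels{\pistar}{\pi}$, and the event $\crl{\Lhat(\pi)-\Lhat(\pistar)\geq -s}$ is exactly $\crl*{\prod_{i=1}^{n} h_\pi(x\ind{i},y\ind{i})\geq e^{-s/2}}$. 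Markov's inequality together with independence across $i$ then gives
\begin{align}
  \Pr\brk*{\Lhat(\pi)-\Lhat(\pistar)\geq -s}\leq e^{s/2}\prn*{1-\Dhels{\pistar}{\pi}}^n\leq e^{s/2-n\Dhels{\pistar}{\pi}}.
\end{align}
A union bound over $\Pi'$ then shows that, with probability at least $1-\delta$, every $\pi\in\Pi'$ with $\Lhat(\pi)\geq\Lhat(\pistar)-s$ satisfies $\Dhels{\pistar}{\pi}\leq\frac{s/2+\log(\covinf(\Pi,\veps)/\delta)}{n}$.

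To conclude, I would let $\pi'\in\Pi'$ be the cover element for the MLE $\pihat$. Since $\abs{\log\pihat(y\mid x)-\log\pi'(y\mid x)}\leq\veps$ pointwise and $\pistar\in\Pi$ (\cref{ass:realizability}), optimality of $\pihat$ gives $\Lhat(\pi')\geq\Lhat(\pihat)-n\veps\geq\Lhat(\pistar)-n\veps$, so the previous paragraph applies with $s=n\veps$ and yields $\Dhels{\pistar}{\pi'}\lesssim\veps+\frac{\log(\covinf(\Pi,\veps)/\delta)}{n}$. The same pointwise bound gives $\Dhels{\pi'}{\pihat}\lesssim\veps^2$, and since $\Dhel{\cdot}{\cdot}$ obeys the triangle inequality, $\Dhels{\pistar}{\pihat}\leq 2\Dhels{\pistar}{\pi'}+2\Dhels{\pi'}{\pihat}\lesssim\veps+\frac{\log(\covinf(\Pi,\veps)/\delta)}{n}$. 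Taking the infimum over $\veps\in(0,1]$ (for $\veps>1$ the bound is trivial, as $\Dhels{}{}\leq 1$) yields \eqref{eq:mle-hellinger-bound}, with the $\log(1/\delta)$ term absorbed into $\lesssim$ in the usual way.

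The main obstacle---or rather, the one place that requires care---is keeping the concentration \emph{one-sided}: one only controls the probability that a \emph{bad} model ($\Dhels{\pistar}{\pi}$ large) has larger empirical likelihood than $\pistar$, never the fluctuation of $\Lhat$ itself. This is exactly what lets the bound avoid any dependence on the range $\log\Wmax$ of the density ratios $\pistar/\pi$ (cf.\ \cref{rem:missing}); a two-sided concentration of $\Lhat$ around its mean would reintroduce that dependence. The remaining bookkeeping---passing from the $L^\infty$-in-log cover of \cref{def:covering} to a bound on $\Dhels{\pi'}{\pihat}$---costs only $\bigoh(\veps^2)$ and is routine.
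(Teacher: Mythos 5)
Your proof is correct and is essentially a self-contained reconstruction of the classical argument that the paper simply cites (the paper's own proof of this proposition is a one-line reference to Proposition B.2 of Foster et al.\ (2024), followed by an application of \cref{prop:pcov-hellinger}, exactly as you do in the last step). The Chernoff bound $\Pr[\Lhat(\pi)-\Lhat(\pistar)\geq -s]\leq e^{s/2-n\Dhels{\pistar}{\pi}}$ from $\En_{\pistar}[h_\pi]=1-\Dhels{\pistar}{\pi}$, the union bound over a sup-log cover, the passage from $\pi'$ to $\pihat$ via both $\Lhat(\pi')\geq\Lhat(\pihat)-n\veps$ and $\Dhels{\pi'}{\pihat}\lesssim\veps^2$, and the observation that $\frac{2M}{(\sqrt{M}-1)^2}$ is uniformly bounded for $M\geq 2$ are all sound; your emphasis on the one-sidedness of the concentration is also the right thing to highlight.
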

  \begin{proof}[\pfref{prop:mle-hellinger}]
  The first bound follows from Proposition B.2 of \citet{foster2024behavior}. The second bound follows from applying \cref{prop:pcov-hellinger}.
  \end{proof}

\begin{proposition}[Convergence of maximum likelihood in KL]
  \label{prop:mle-kl}
  Assume that $\pistar\in\Pi$, and that all $\pi\in\Pi$ satisfy $\nrm*{\frac{\pistar}{\pi}}_{\infty}\leq\Wmax$. With probability at least $1-\delta$, the maximum likelihood estimator $\pihat\ldef\argmax_{\pi\in\Pi}\Lhat(\pi)$ satisfies,
\begin{align}\label{eq:mle-KL-bound}
  \Dkl{\pistar}{\pihat} \approxleq{}\log\Wmax\cdot\inf_{\veps>0}\crl*{\frac{\log\covinf(\Pi,\veps)}{n} + \veps},
\end{align}
  and consequently
\begin{align}
  \Pcov[M](\pihat)
  \approxleq \frac{\log\Wmax}{\log
  M}\cdot\inf_{\veps>0}\crl*{\frac{\log\covinf(\Pi,\veps)}{n} + \veps},
\end{align}
for all $M\geq{}2$.
\end{proposition}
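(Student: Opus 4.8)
The plan is to obtain \cref{prop:mle-kl} by composing three ingredients, none of which requires new probabilistic work beyond \cref{prop:mle-hellinger}: (i) the Hellinger-distance convergence rate for the maximum likelihood estimator already recorded in \cref{prop:mle-hellinger}; (ii) a purely \emph{deterministic} comparison showing that KL divergence is controlled by squared Hellinger distance, with a multiplicative loss of $\log\Wmax$, whenever the likelihood ratio is bounded by $\Wmax$; and (iii) the KL-to-coverage conversion of \cref{prop:pcov-kl}. The $\log\Wmax$ factor in the statement is entirely an artifact of step (ii), which is also why the resulting coverage bound is weaker than \cref{thm:mle-main} and becomes vacuous only once $M\gtrsim\Wmax$ (cf.\ the tightness example \cref{prop:markov-tight}).

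\textbf{Step 1: a bounded-ratio KL-to-Hellinger comparison.} First I would show that there is an absolute constant $c>0$ such that for any probability measures $P\ll Q$ on a common space with $\nrm*{\tfrac{\mathrm dP}{\mathrm dQ}}_\infty\leq W$,
\[
\Dkl{P}{Q}\;\leq\; c\,(1+\log W)\cdot \Dhels{P}{Q}.
\]
This is the standard argument: writing $r\ldef\tfrac{\mathrm dP}{\mathrm dQ}$ and using $\int(r-1)\,\mathrm dQ=0$, one has $\Dkl{P}{Q}=\int(r\log r-r+1)\,\mathrm dQ$, while $\Dhels{P}{Q}=\tfrac12\int(\sqrt r-1)^2\,\mathrm dQ$, so it suffices to prove the pointwise bound $r\log r-r+1\leq c(1+\log W)(\sqrt r-1)^2$ for all $r\in[0,W]$. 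This reduces to a one-variable calculus check on $g(r)\ldef(r\log r-r+1)/(\sqrt r-1)^2$: the singularity at $r=1$ is removable with $g(1)=2$, $g$ is bounded near $r=0$ with $g(0)=1$, and $g(r)=\Theta(\log r)$ as $r\to\infty$, so $\sup_{r\in[0,W]}g(r)\leq c(1+\log W)$.

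\textbf{Step 2: instantiate and combine with \cref{prop:mle-hellinger}.} Since $\pihat\in\Pi$, the hypothesis gives $\nrm*{\tfrac{\pistar(\cdot\mid x)}{\pihat(\cdot\mid x)}}_\infty\leq\Wmax$ for every $x$, so applying Step 1 to $(P,Q)=(\pistar(\cdot\mid x),\pihat(\cdot\mid x))$ and taking $\En_{x\sim\cdist}$ yields $\Dkl{\pistar}{\pihat}\leq c(1+\log\Wmax)\,\Dhels{\pistar}{\pihat}$. On the event of probability at least $1-\delta$ from \cref{prop:mle-hellinger}, we have $\Dhels{\pistar}{\pihat}\approxleq\inf_{\veps>0}\crl*{\tfrac{\log\covinf(\Pi,\veps)}{n}+\veps}$, and combining (using $\Wmax\geq1$ to absorb the additive constant into the $\approxleq$) gives \eqref{eq:mle-KL-bound}.

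\textbf{Step 3: pass to the coverage profile.} For $M\geq2$, \cref{prop:pcov-kl} gives $\Pcov[M](\pihat)\leq\Dkl{\pistar}{\pihat}/(\log M-1+\tfrac1M)$, and $\log M-1+\tfrac1M\asymp\log M$ for $M\geq2$, so plugging in Step 2 yields $\Pcov[M](\pihat)\approxleq\tfrac{\log\Wmax}{\log M}\inf_{\veps>0}\crl*{\tfrac{\log\covinf(\Pi,\veps)}{n}+\veps}$, as claimed. I do not anticipate a genuine obstacle here: the whole argument is a short composition of \cref{prop:mle-hellinger}, \cref{prop:pcov-kl}, and the classical bounded-ratio comparison; the only mildly technical point is the calculus estimate of Step 1, and the only conceptual point worth flagging is that the finiteness of the likelihood ratio is essential there, which is exactly why the $\log\Wmax$ loss is unavoidable in general.
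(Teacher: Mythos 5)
Your proof is correct and follows essentially the same three-step route as the paper's: bound $\Dkl{\pistar}{\pihat}$ by $\Dhels{\pistar}{\pihat}$ up to a $\log\Wmax$ factor under the bounded density-ratio assumption, plug in the Hellinger rate from \cref{prop:mle-hellinger}, and convert to coverage via \cref{prop:pcov-kl}. The only (cosmetic) difference is that the paper invokes Lemma 4 of Yang and Barron (1998), which gives the explicit constant $\Dkl{\pistar}{\pihat}\leq(2+\log\Wmax)\Dhels{\pistar}{\pihat}$, whereas you rederive the same pointwise comparison $r\log r-r+1\leq c(1+\log W)(\sqrt r-1)^2$ from scratch by the one-variable calculus estimate.
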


  We remark that the $\log(\Wmax)$-factor in \cref{eq:mle-KL-bound} can be tight in general. For example, for the class $\Pi$ considered in \cref{prop:autoregressive-kl}, it holds that $\log \covinf(\Pi,\veps)\approxleq \log(1/\veps)\vee{} 1$ and $\nrm*{\frac{\pistar}{\pi}}_{\infty}\leq e^{2H}$. 

\begin{proof}[\pfref{prop:mle-kl}]
  By Lemma 4 of \citet{yang1998asymptotic}, it holds that 
  \begin{align*}
    \Dkl{\pistar}{\pihat} \leq (2+\log(\Wmax))\Dhels{\pistar}{\pihat}.
  \end{align*}
  Therefore, the first bound then follows from \cref{eq:mle-hellinger-bound}. The second bound follows from applying \cref{prop:pcov-kl}.
\end{proof}

\subsection{Autoregressive Models: Coverage and Stopped KL-Divergence}
\label{sec:stopping}

This section shows that we can relate the coverage profile to a ``stopped'' KL-divergence defined in \cref{eq:stopped-KL}. This is a useful result in the context of autoregressive models because the stopped KL-divergence is always bounded, even when KL-divergence itself may not be.

\begin{proposition}\label{prop:KL-to-PCov-H}
  Define the stopped KL-divergence for parameter $N$ as
    \begin{align}
      \label{eq:stopped-KL}
      \Dseq{\pistar}{\pi}
      =&~ \En_{(\xyh[H])\sim \pistar}\brk*{\min\crl*{\log N, \sum_{h=1}^H \kl{\pistar(\cdot\mid x,\yh[h-1])}{\pi(\cdot\mid x,\yh[h-1])}}}.
    \end{align}
  Then as long as $N>e$, it holds that 
  \begin{align}
    \Dcov{\pistar}{\pi}
    \leq \frac{2}{\log N-1}\Dseq{\pistar}{\pi}.
  \end{align}
  \end{proposition}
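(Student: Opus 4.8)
The plan is to reduce the coverage profile of the sequence-level density ratio $\pistar(y\mid x)/\pi(y\mid x)$ to the behavior of the running sum of per-token KL divergences, via a martingale / change-of-measure argument, and then invoke the already-proven \cref{prop:pcov-kl} applied to a suitably ``stopped'' process. Concretely, write $W_h(x,y_{1:h})\ldef\sum_{j=1}^{h}\log\frac{\pistar(y_j\mid x,y_{1:j-1})}{\pi(y_j\mid x,y_{1:j-1})}$, so that $W_H=\log\frac{\pistar(y\mid x)}{\pi(y\mid x)}$ and $\Pcov(\pistar\dmid\pi)=\bbP_{\pistar}[W_H\geq\log N]$. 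The idea is to introduce the stopping time $\tau\ldef\min\crl{h: \sum_{j=1}^{h}\kl{\pistar(\cdot\mid x,y_{1:j-1})}{\pi(\cdot\mid x,y_{1:j-1})}\geq\log N}$ (set $\tau=H$ if the sum never reaches $\log N$), and to compare $W_H$ against the ``KL budget'' accumulated up to $\tau$. The stopped KL-divergence $\Dseq{\pistar}{\pi}$ is exactly $\En_{\pistar}\brk*{\sum_{h=1}^{\tau}\kl{\pistar(\cdot\mid x,y_{1:h-1})}{\pi(\cdot\mid x,y_{1:h-1})}}$ up to the cap at $\log N$.

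First I would split $\Pcov(\pistar\dmid\pi)=\bbP_{\pistar}[W_H\geq\log N]$ according to whether $\tau<H$ (the KL budget is exhausted early) or $\tau=H$ with $\sum_{h=1}^{H}\kl{\cdot}{\cdot}<\log N$. On the event $\crl{\tau<H}$, one gets $\bbP_{\pistar}[\tau<H]\leq\frac{1}{\log N}\En_{\pistar}\brk*{\min\crl{\log N,\sum_h\kl{\cdot}{\cdot}}}=\frac{1}{\log N}\Dseq{\pistar}{\pi}$ by Markov's inequality, since on this event the capped sum equals $\log N$. On the complementary event, the total accumulated KL is at most $\log N$, and I would control $\bbP_{\pistar}[W_H\geq\log N,\ \tau=H]$ by applying the one-dimensional KL-to-coverage bound (\cref{prop:pcov-kl}) to the stopped log-likelihood-ratio process. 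The key observation is that $M_h\ldef\exp(W_h-\sum_{j\leq h}\kl{\cdot}{\cdot})$... actually more directly: the law of $W_\tau$ (the stopped ratio, where the ``measure'' is built from the first $\tau$ coordinates of $\pistar$ vs.\ $\pi$) has KL-divergence (relative entropy between the stopped measures) bounded by $\En_{\pistar}\brk*{\sum_{h=1}^{\tau}\kl{\cdot}{\cdot}}\leq\Dseq{\pistar}{\pi}$ by the chain rule for KL; applying \cref{prop:pcov-kl} to this pair of stopped measures bounds $\bbP_{\pistar}[W_\tau\geq\log N]\leq\frac{\Dseq{\pistar}{\pi}}{\log N-1+1/N}\leq\frac{\Dseq{\pistar}{\pi}}{\log N-1}$. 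Finally, on the event $\crl{\tau=H}$ we have $W_H=W_\tau$, so $\bbP_{\pistar}[W_H\geq\log N,\tau=H]\leq\bbP_{\pistar}[W_\tau\geq\log N]$, and combining the two pieces gives $\Pcov(\pistar\dmid\pi)\leq\frac{1}{\log N}\Dseq{\pistar}{\pi}+\frac{1}{\log N-1}\Dseq{\pistar}{\pi}\leq\frac{2}{\log N-1}\Dseq{\pistar}{\pi}$, as claimed.

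The main obstacle I anticipate is making the ``stopped measure'' comparison rigorous: one needs to define the pair of distributions on trajectories truncated at the (data-dependent) stopping time $\tau$ and verify that the chain rule for KL-divergence gives the correct bound $\kl{\pistar_\tau}{\pi_\tau}=\En_{\pistar}\brk*{\sum_{h=1}^{\tau}\kl{\pistar(\cdot\mid x,y_{1:h-1})}{\pi(\cdot\mid x,y_{1:h-1})}}$, since $\tau$ is itself a function of the prefix and one must be careful that $\tau$ is a stopping time (measurable w.r.t.\ $y_{1:h-1}$, which it is by construction). An alternative, perhaps cleaner route that sidesteps measure-theoretic bookkeeping is to work directly with the supermartingale $M_h=\exp\prn[\big]{W_h-\sum_{j\leq h}\kl{\pistar(\cdot\mid x,y_{1:j-1})}{\pi(\cdot\mid x,y_{1:j-1})}}$ under $\pistar$ (it is in fact a martingale, with $\En_{\pistar}[M_h\mid y_{1:h-1}]=M_{h-1}$ since $\En_{\pistar}\exp(\log\tfrac{\pistar}{\pi})=1$ off the KL term), apply optional stopping at $\tau$, and combine a Ville-type / Markov bound on $M_\tau$ with the event decomposition above. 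I would present whichever of these is shorter in the final write-up; the supermartingale route is likely the most robust.
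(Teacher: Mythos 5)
Your plan follows essentially the same route as the paper's proof: introduce a stopping time governed by the running sum of per-step KL divergences, apply the KL chain rule to the stopped process to control its sequence-level KL by $\Dseq{\pistar}{\pi}$, invoke \cref{prop:pcov-kl} on the stopped pair, bound $\bbP_{\pistar}[\tau<H]$ by Markov's inequality applied to the capped sum, and add the pieces (with the same final arithmetic). Structurally this \emph{is} the paper's argument, so the plan is sound.

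There is one genuine gap in how you instantiate the stopping time, though. You define $\tau$ to be the \emph{first} $h$ with $\sum_{j=1}^{h}\kl{\pistar(\cdot\mid x,y_{1:j-1})}{\pi(\cdot\mid x,y_{1:j-1})}\geq\log N$ and then assert $\En_{\pistar}\bigl[\sum_{h=1}^{\tau}\kl_h\bigr]\leq\Dseq{\pistar}{\pi}$. With this $\tau$ the inequality is false: on $\{\tau<H\}$ one has $\sum_{h=1}^{\tau}\kl_h\geq\log N$ by construction, and the overshoot is the last term $\kl_\tau$, which is not bounded. The fix is exactly what the paper does: define $\tau$ with a one-step look-ahead, stopping \emph{before} the budget is spent (so $\sum_{h=1}^{\tau}\kl_h\leq\log N$ deterministically) while still including $h=H$ as a stop; this keeps the decomposition $\{W_H\geq\log N,\ \tau=H\}\subseteq\{W_\tau\geq\log N\}$ intact and makes the chain-rule bound hold. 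Your ``up to the cap'' remark suggests you sensed this, but as written the step would not survive. Separately, the supermartingale you float as an alternative, $M_h=\exp\bigl(W_h-\sum_{j\le h}\kl_j\bigr)$, is \emph{not} a supermartingale under $\pistar$: the conditional expectation is $M_{h-1}\cdot\bigl(1+\Dchis{\pistar(\cdot\mid\cdot)}{\pi(\cdot\mid\cdot)}\bigr)/\exp(\kl_h)$, and the per-step $\chi^2$ divergence can exceed $e^{\kl}-1$, so the ratio can exceed $1$. The process that is a martingale under $\pistar$, $\exp(-W_h)$, moves the wrong way for bounding $\bbP_{\pistar}[W_H\geq\log N]$ from above; the stopped-process route is the one to use.
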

  
  \begin{proof}[\pfref{prop:KL-to-PCov-H}]
  Consider the stopping time
  \begin{align*}
    \tau\ldef \min\crl*{h: h=H \text{ or }\sum_{j\leq h} \kl{\pistar(y_{j+1}=\cdot\mid x,\yh[j])}{\pi(y_{j+1}=\cdot\mid x,\yh[j])}>\log N }.
  \end{align*}
  Then, for the process $Y^\tau=(x,y_{1:\tau})$, we have the chain rule:
  \begin{align*}
    &~ \kl{\pistar(Y^\tau=\cdot)}{\pi(Y^\tau=\cdot)} \\
    =&~ \En_{\pistar}\brk*{\sum_{h=1}^\tau \kl{\pistar(y_h=\cdot\mid x,\yh[h-1])}{\pi(y_h=\cdot\mid x,\yh[h-1])} } \\
    \leq&~ \En_{\pistar}\min\crl*{\log N, \sum_{h=1}^H \kl{\pistar(y_h=\cdot\mid x,\yh[h-1])}{\pi(y_h=\cdot\mid x,\yh[h-1])}},
  \end{align*}
  where the inequality uses $\sum_{j<\tau} \kl{\pistar(y_{j+1}=\cdot\mid x,\yh[j])}{\pi(y_{j+1}=\cdot\mid x,\yh[j])} \leq \log N$, which follows from the definition of $\tau$. Therefore, by \cref{prop:pcov-kl}, we have
  \begin{align*}
    \bbP_{\pistar}\prn*{ \frac{\pistar(Y^\tau)}{\pi(Y^\tau)}\geq \log N}
    \leq \frac{\kl{\pistar(Y^\tau=\cdot)}{\pi(Y^\tau=\cdot)}}{\log N -1 + 1/N}.
  \end{align*}
  Finally, we bound
  \begin{align*}
    \bbP_{\pistar}\prn*{ \frac{\pistar(\yh[H]\mid x)}{\pi(\yh[H]\mid x)}\geq N}
    \leq&~\bbP_{\pistar} (\tau<H) + \bbP_{\pistar}\prn*{ \frac{\pistar(Y^\tau)}{\pi(Y^\tau)}\geq \log N}.
  \end{align*}
  By Markov's inequality,
  \begin{align*}
    \bbP_{\pistar} (\tau<H)
    \leq&~ \bbP_{\pistar} \prn*{ \sum_{h=1}^H \kl{\pistar(\cdot\mid x,\yh[h-1])}{\pi(\cdot\mid x,\yh[h-1])}>\log N }\\
    \leq&~ \frac{1}{\log N}\En_{\pistar}\min\crl*{\log N, \sum_{h=1}^H \kl{\pistar(\cdot\mid x,\yh[h-1])}{\pi(\cdot\mid x,\yh[h-1])}}.
  \end{align*}
  Combining the inequalities above completes the proof.
\end{proof}

The following result is a sort of partial converse to \cref{prop:KL-to-PCov-H}, showing that the coverage profile can be lower bounded in terms of the tail behavior for a sum of step-wise Hellinger distances.

\begin{proposition}\label{prop:PCov-lower-bound}
For any $N\geq 1$ and $\delta\in(0,1)$, it holds that
\begin{align*}
  \Dcov{\pistar}{\pi}\geq \bbP_{\pistar}\prn*{ \sum_{h=1}^H \Dxyh{\Dhels}{\pistar}{\pi} \geq \log(N/\delta)}-\delta.
\end{align*}
\end{proposition}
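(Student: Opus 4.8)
I would deduce the bound from an exponential supermartingale inequality for the square‑root likelihood ratio, in the spirit of the classical Hellinger analysis of maximum likelihood. Fix a prompt $x$ and abbreviate $D_h^2 \ldef \Dxyh{\Dhels}{\pistar}{\pi}$, the step‑wise squared Hellinger distance at prefix $(x,y_{1:h-1})$. The key identity is that the step‑wise Hellinger affinity
\[
  \rho_h \;\ldef\; \En_{y_h\sim\pistar(\cdot\mid x,y_{1:h-1})}\!\big[\sqrt{\pi(y_h\mid x,y_{1:h-1})/\pistar(y_h\mid x,y_{1:h-1})}\,\big]
  \;=\; 1-D_h^2 ,
\]
which motivates introducing the process
\[
  M_h \;\ldef\; \prn*{\frac{\pi(y_{1:h}\mid x)}{\pistar(y_{1:h}\mid x)}}^{\!1/2}\exp\!\prn*{\sum_{j=1}^{h} D_j^2},\qquad M_0 = 1,
\]
adapted to the filtration generated by $y_{1:h}$ under $\pistar(\cdot\mid x)$.

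\textbf{Step 1 (the process is an $L^1$‑bounded supermartingale).} Peeling off one coordinate and using that $D_h^2$ is measurable with respect to $y_{1:h-1}$, I get $\En_{\pistar}[M_h\mid y_{1:h-1}] = M_{h-1}\,\rho_h\,e^{D_h^2} = M_{h-1}(1-D_h^2)e^{D_h^2}\le M_{h-1}$, where the last inequality is the elementary fact $(1-t)e^{t}\le 1$ for $t\ge 0$. Iterating down to $h=0$ and averaging over $x\sim\mu$ yields $\En_{\pistar}[M_H]\le M_0 = 1$, hence $\bbP_{\pistar}[M_H\ge 1/\delta]\le\delta$ by Markov's inequality.

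\textbf{Step 2 (deterministic implication and union bound).} On the event $\{M_H<1/\delta\}$, taking logarithms and rearranging gives $\log\frac{\pistar(y_{1:H}\mid x)}{\pi(y_{1:H}\mid x)} > 2\sum_{h=1}^{H} D_h^2 - 2\log(1/\delta)$. Hence on $\{M_H<1/\delta\}\cap\big\{\sum_{h} D_h^2\ge\log(N/\delta)\big\}$ we obtain $\log\frac{\pistar(y_{1:H}\mid x)}{\pi(y_{1:H}\mid x)} > 2\log(N/\delta) - 2\log(1/\delta) = 2\log N \ge \log N$ (using $N\ge 1$), so that event is contained in $\big\{\pistar(y_{1:H}\mid x)/\pi(y_{1:H}\mid x)\ge N\big\}$. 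Combining with Step 1,
\[
  \Dcov{\pistar}{\pi} \;\ge\; \bbP_{\pistar}\brk*{\sum_{h=1}^{H} D_h^2\ge\log(N/\delta)} - \bbP_{\pistar}\brk*{M_H\ge 1/\delta} \;\ge\; \bbP_{\pistar}\brk*{\sum_{h=1}^{H} D_h^2\ge\log(N/\delta)} - \delta,
\]
which is the claim.

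\textbf{Main obstacle.} The only genuine content is guessing the right exponential supermartingale $M_h$ — the product of the square‑root likelihood ratio with $\exp(\sum_h D_h^2)$ — and verifying $\En_{\pistar}[M_H]\le 1$ via the one‑line inequality $(1-t)e^{t}\le 1$; everything else is bookkeeping. A minor subtlety worth noting is that the argument actually forces the likelihood ratio above $N^2$ rather than just $N$, and it is precisely the (mildly lossy) threshold $\log(N/\delta)\ge\tfrac12\log N+\log(1/\delta)$ appearing in the statement that makes the deterministic implication go through without extra constants.
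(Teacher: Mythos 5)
Your proposal is correct and follows essentially the same argument as the paper: the paper bounds the moment generating function $\En_{\pistar}\exp\prn[\big]{\sum_h \Dxyh{\Dhels}{\pistar}{\pi} - \tfrac12\log\frac{\pistar(y_{1:H}\mid x)}{\pi(y_{1:H}\mid x)}}\leq 1$ by iterating the conditional inequality $(1-t)e^t\leq 1$ and then applies Markov plus a union bound; your supermartingale $M_h$ is precisely this quantity and the rest of the bookkeeping matches step for step. The only difference is the packaging (explicit martingale language vs.\ a direct expectation computation), and your closing remark that the argument actually forces $\pistar/\pi\geq N^2$ is a correct observation about the slack in the statement that the paper's proof also implicitly has.
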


\begin{proof}[\pfref{prop:PCov-lower-bound}]
By definition,
\begin{align*}
  &~ \En_{y_h\sim \pistar(\cdot\mid \xyhm)}\exp\prn*{ -\frac12\log\frac{\pistar(y_h\mid \xyhm)}{\pi(y\mid \xyhm)} } \\
  =&~ \sum_{y_h\in\cY} \sqrt{\pistar(y_h\mid \xyhm)\cdot \pi(y\mid \xyhm) } \\
  =&~ 1-\Dxyh{\Dhels}{\pistar}{\pi}
  \leq\exp\prn*{ - \Dxyh{\Dhels}{\pistar}{\pi} }.
\end{align*}
Therefore, it holds that
\begin{align*}
  \En_{\pistar}\exp\prn*{ \sum_{h=1}^H \Dxyh{\Dhels}{\pistar}{\pi}- \frac12\log\frac{\pistar(y_h\mid \xyhm)}{\pi(y\mid \xyhm)} }\leq 1.
\end{align*}
By Markov inequality, this implies
\begin{align*}
  \bbP_{\pistar}\prn*{ \frac12 \log\frac{\pistar(\yh[H]\mid x)}{\pi(\yh[H]\mid x)} \leq \sum_{h=1}^H \Dxyh{\Dhels}{\pistar}{\pi} - \log(1/\delta) }\leq \delta.
\end{align*}
To conclude, we note that
\begin{align*}
  &~\bbP_{\pistar}\prn*{ \sum_{h=1}^H \Dxyh{\Dhels}{\pistar}{\pi} \geq \log(N/\delta)} \\
  \leq&~\bbP_{\pistar}\prn*{ \sum_{h=1}^H \Dxyh{\Dhels}{\pistar}{\pi} \geq \frac12 \log\frac{\pistar(\yh[H]\mid x)}{\pi(\yh[H]\mid x)} + \log(1/\delta) }\\
  &~+\bbP_{\pistar}\prn*{ \frac12 \log\frac{\pistar(\yh[H]\mid x)}{\pi(\yh[H]\mid x)} + \log(1/\delta) \geq \log(N/\delta) } \\
  \leq &~ \delta+ \Dcov{\pistar}{\pi}.
\end{align*}
Re-organizing gives the desired result.
\end{proof}

\clearpage

\section{Additional Results}
\label{sec:further}

\subsection{Maximum Likelihood: Better Coverage for Convex Classes}
\label{sec:mle-convex}

In this section, we give an extension to \cref{thm:mle-main} which shows that maximum likelihood can achieve a faster convergence rate for coverage---as well as strong tolerance to misspecification---when the model class is convex.
\begin{assumption}[Convex model class]
  \label{asmp:convex}
  The class $\Pi$ satisfies $\Pi=\crl*{\pi_\theta: \theta\in\Theta}$ for a convex, compact parameter space $\Theta$, and the mapping $\theta\mapsto \pi_\theta(y\mid{}x)$ is concave for all $x\in\cX$, $y\in\cY$.
  \end{assumption}

\begin{theorem}[Fast convergence of coverage for convex classes]
  \label{thm:mle-convex}
Let $\alpha\geq 0$, $N'\geq 1$, $N\geq 2e^{2\alpha}N'$ be given, and suppose that \cref{asmp:convex} holds. Let
\begin{align*}
  \ths\in\argmin_{\theta\in\Theta}\Dkl{\pistar}{\pi_\theta}.
\end{align*}
With probability at least $1-\delta$, the maximum likelihood estimator $\pihat\ldef\argmax_{\pi\in\Pi}\Lhat(\pi)$ satisfies
  \begin{align}
    \label{eq:mle-convex}
    \Pcov[N](\pihat)
    \leq \Pcov[N'](\pi_{\ths}) + C\frac{\log
    \covinf\left(\Pi, \alpha\right)+\log(\delta^{-1})}{n}+\frac{Ce^{2\alpha}N'}{N}\cdot\inf_{\veps>0}\crl*{\frac{\log\covinf(\Pi,\veps)}{n} + \veps},
  \end{align}
  where $C>0$ is an absolute constant.
  \end{theorem}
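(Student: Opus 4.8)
The plan is to adapt the small-ball/anti-concentration argument behind \cref{thm:mle-main}, but to exploit convexity of $\Pi$ (via \cref{asmp:convex}) so that we can compare the MLE $\pihat$ to the best-in-class $\pi_{\ths}$ rather than to $\pistar$ itself, and so that the ``fine-grained'' covering term is downweighted by a factor $N'/N$. The key structural fact I would use is that for a convex class, the log-likelihood-ratio process $\theta\mapsto\Lhat(\pi_\theta)-\Lhat(\pi_{\ths})$ enjoys a one-sided concentration inequality in which the relevant ``variance proxy'' is the Hellinger distance $\Dhels{\pi_{\ths}}{\pi_\theta}$ (not $\Dhels{\pistar}{\pi_\theta}$), because $\pi_{\ths}$ is the information-projection of $\pistar$ onto $\Pi$ and hence $\En_{\pistar}[\pi_\theta/\pi_{\ths}]\leq 1$ for all $\theta\in\Theta$ by the first-order optimality condition (concavity of $\theta\mapsto\pi_\theta(y\mid x)$). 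This is the convex-class analogue of the one-sided tail bound \cref{prop:MLE-Lpos-upper} used in the proof of \cref{thm:mle-main}.

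The steps, in order: (1) Define, as in the proof sketch of \cref{thm:mle-main}, the empirical ``failure to cover'' set $\cSM(\pi_\theta)$ with threshold roughly $N/N'$ (shifted by $e^{2\alpha}$ to absorb the $\alpha$-covering error), and show via a union bound over an $\alpha$-cover of $\Pi$ plus a Bernstein/Chernoff bound that with probability $1-\delta$, every $\theta$ satisfies $\cSM(\pi_\theta)\geq \tfrac12\Pcov[N](\pi_\theta)-C\tfrac{\log\covinf(\Pi,\alpha)+\log\delta^{-1}}{n}$; this produces the middle term of \eqref{eq:mle-convex}. (2) Mirror the chain of inequalities \eqref{eq:proof-step2}: each example where $\pi_\theta$ fails to cover $\pi_{\ths}$ at level $\sim N/N'$ contributes $\gtrsim\log(N/N')$ to $\Lhat(\pi_{\ths})-\Lhat(\pi_\theta)$ after truncation, so that $\Lhat(\pi_{\ths})-\Lhat(\pi_\theta)\gtrsim n\log(N/N')\cdot\Pcov[N](\pi_\theta) - (\text{coarse term})\cdot n\log N + \sum_i \log\tfrac{\pi_\theta}{\pi_{\ths}}\vee(-C)$, where now the comparison anchor is $\pi_{\ths}$. (3) Bound the residual sum $\sum_i\log\tfrac{\pi_\theta(y^i\mid x^i)}{\pi_{\ths}(y^i\mid x^i)}\vee(-C)$ using the convex-class one-sided tail bound: its expectation is controlled by $-n\,\Dhels{\pi_{\ths}}{\pi_\theta}$ plus a complexity term $\ratefine\cdot n$, and crucially $\Dhels{\pi_{\ths}}{\pi_\theta}$ is itself lower bounded (up to constants, via \cref{prop:pcov-hellinger} applied at scale $N/N'$) by $\tfrac{1}{N/N'}\Pcov[N](\pi_\theta)$ — wait, in the right direction we want an \emph{upper} bound on the residual in terms of coverage of $\pi_\theta$ relative to $\pi_{\ths}$, which the one-sided bound supplies with a favorable $N'/N$ weighting after optimizing the truncation level $C\sim\log(N/N')$. (4) Combine: since $\Lhat(\pi_{\ths})-\Lhat(\pihat)\leq 0$, rearranging yields $\Pcov[N](\pihat)\lesssim \tfrac{N'e^{2\alpha}}{N}\ratefine + \text{coarse term}$, and finally a triangle/chain-rule step for the coverage profile ($\Pcov[N](\pihat)$ vs.\ $\Pcov[N'](\pi_{\ths})$ and the coverage of $\pihat$ \emph{relative to} $\pi_{\ths}$, as in \cref{prop:pcov-chain}) converts coverage-relative-to-$\pi_{\ths}$ into the stated $\Pcov[N'](\pi_{\ths})+(\cdots)$ form.

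The main obstacle is Step (3)–(4): making precise the claim that convexity lets the fine-grained term appear multiplied by $e^{2\alpha}N'/N$ rather than at full strength. This requires the one-sided log-loss tail bound to be run with a truncation threshold $C\asymp \log(N/N')$ (so that models failing to cover $\pi_{\ths}$ at level $N/N'$ are penalized by exactly this much), while simultaneously the Hellinger-based variance term in that tail bound must be shown to be dominated by $\tfrac{N'}{N}\cdot(\text{coverage})$ — this is where \cref{prop:pcov-hellinger} (Hellinger-to-coverage at scale $N/N'$, giving a factor $\tfrac{2(N/N')}{(\sqrt{N/N'}-1)^2}\asymp N'/N$ for large $N/N'$) must be deployed in the reverse orientation from how it is used elsewhere, and convexity is what guarantees the relevant excess-risk quantity is genuinely $\Dhels{\pi_{\ths}}{\pihat}$ and not something that reintroduces $\pistar$ and hence density-ratio/horizon blowups. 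I would handle the misspecification tolerance simultaneously: nothing in the above uses $\pistar\in\Pi$, only that $\pi_{\ths}$ is the KL-projection, so the $\Pcov[N'](\pi_{\ths})$ term is exactly the unavoidable approximation-error contribution. The remaining pieces (covering-number discretization, Bernstein concentration, the elementary inequalities $\pos{\,\cdot\,}$ manipulations) are routine and parallel to the proof of \cref{thm:mle-main}.
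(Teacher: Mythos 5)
The proposal correctly identifies one of the two convexity ingredients in the paper's proof — the population first-order optimality condition giving $\En_{\pistar}\brk*{\pi_\theta/\pi_{\ths}}\leq 1$ for all $\theta\in\Theta$, which is indeed used (as $F(\theta)\leq F(\ths)=0$) to make the Freedman-style one-sided MGF bound on $\Lhat(\pi_\theta)-\Lhat(\pi_{\ths})$ go through with $\pi_{\ths}$ in place of $\pistar$. But it misses the other, decisive ingredient, and the substitute you propose does not produce the claimed $N'/N$ improvement.

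The paper's mechanism for replacing the $1/\log N$ factor in \cref{thm:mle-main} by $1/N$ is the \emph{empirical} first-order optimality condition at $\hth$: because $\theta\mapsto\sum_i\log\pi_\theta(y\sups{i}\mid x\sups{i})$ is concave and maximized at $\hth$, the concave function $\wh{F}(\theta)\ldef\sum_i\brk*{\pi_\theta(y\sups{i}\mid x\sups{i})/\pi_{\hth}(y\sups{i}\mid x\sups{i})-1}$ is also maximized at $\hth$, giving $\wh{F}(\ths)\leq 0$, i.e.
\begin{align*}
  \sum_{i=1}^n\brk*{\frac{\pi_{\ths}(y\sups{i}\mid x\sups{i})}{\pihat(y\sups{i}\mid x\sups{i})}-\log\frac{\pi_{\ths}(y\sups{i}\mid x\sups{i})}{\pihat(y\sups{i}\mid x\sups{i})}-1}\;\leq\;\Lhat(\pihat)-\Lhat(\pi_{\ths}).
\end{align*}
Since $x\mapsto x-\log x-1$ is nonnegative and increasing for $x\geq 1$, any example with density ratio $\geq N$ contributes at least $N-\log N-1\asymp N$ to the left-hand side — a \emph{linear} (in $N$) penalty, not the logarithmic $\approx\log N$ penalty you obtain by staying in log space with a truncation threshold $C\asymp\log(N/N')$. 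Dividing by $N-\log N-1$ is exactly how the paper manufactures the $N'/N$ prefactor on $\ratefine$. Your step (2), which mirrors the chain \eqref{eq:proof-step2} of \cref{thm:mle-main} and assigns each failing example a $\gtrsim\log(N/N')$ contribution, therefore reproduces the $1/\log N$-type rate of \cref{thm:mle-main} and cannot yield \eqref{eq:mle-convex} as stated.

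The Hellinger-based workaround in your steps (3)--(4) also does not recover the missing factor: from \cref{prop:pcov-hellinger}, $\Dcov[M]{\cdot}{\cdot}\leq\frac{2M}{(\sqrt M-1)^2}\Dhels{\cdot}{\cdot}$, and the coefficient $\frac{2M}{(\sqrt M-1)^2}\to 2$ as $M\to\infty$, so reversing it gives $\Dhels\gtrsim\Dcov[M]$ with a constant (not $1/M$) factor. There is no $N'/N$ gain hiding there, and your own hedging (``wait, in the right direction...'') reflects that this leg of the argument does not close. In short, the gap is that you never pass to the linearized empirical objective $\wh{F}(\ths)\leq 0$, which is where convexity is actually cashed in; the rest of your outline (covering at scale $\alpha$ for the coarse term via \cref{lem:Cmp-to-Pstar}, the Freedman-type uniform bound exploiting $\En_{\pistar}[\pi_\theta/\pi_{\ths}]\leq 1$, and the final coverage chain-rule step reducing $\Pcov[N](\pihat)$ to $\Pcov[N'](\pi_{\ths})$ plus coverage of $\pihat$ relative to $\pi_{\ths}$) is aligned with the paper.
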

  Note that we allow for misspecification here, as \cref{eq:mle-convex} shows that the coverage of $\pihat$ can be upper bounded by the coverage of $\pi_{\ths}$, the best-in-class approximator of $\pistar$ with respect to KL-divergence. In the well-specified case where $\pistar\in\Pi$, the bound simplifies to
  \begin{align*}
    \Pcov(\pihat)
    \approxleq&~ \frac{1}{N^{1-2c}}\cdot\inf_{\veps>0}\crl*{\frac{\log\covinf(\Pi,\veps)}{n} + \veps}+\frac{\log
    \covinf\left(\Pi, c\log N\right)+\log(\delta^{-1})}{n} \\
    =&~ \frac{\ratefine}{N^{1-2c}}+\ratecoarse,
  \end{align*}
which improves upon the rate $\Pcov(\pihat)\approxleq \frac{\ratefine}{\log N}+\ratecoarse$  in \cref{thm:mle-main}. The proof of \cref{thm:mle-convex} is presented in \cref{appdx:pf-mle-convex}.

\subsection{Lower Bound for Maximum Likelihood under Misspecification}

In the following proposition, we show that without a well-specified model class (\cref{ass:realizability}), maximum likelihood may have coverage profile scaling with $\frac{1}{\log M}\min_{\pi\in\Pi} \Dkl{\pistar}{\pi}$ (cf. \cref{prop:coverage-to-kl}), even when there exists $\pi\in\Pi$ such that $\Pcov(\pi)=0$. %
\begin{proposition}[MLE under misspecification]\label{prop:mispec-MLE-lower}
For any $\alpha\in[0,1]$, $M>e^\alpha$, there exists a problem instance $\pistar$ and class $\Pi=\crl*{\pi_1,\pi_2}$ such that
\begin{align*}
  \sup_{x,y}\abs{\log \pistar(y\mid x)-\log \pi_1(y\mid x)}\leq \alpha, \qquad
  \Pcov(\pi_2)\geq \frac{c\alpha^2}{\log M},
\end{align*}
and for any $n\geq 1$, it holds that with probability at least $\frac14$, the MLE $\pihat=\pi_2$, i.e., $\Pcov(\pihat)=\Omega\prn*{\frac{\alpha^2}{\log M}}$.
\end{proposition}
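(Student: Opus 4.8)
The plan is to build a minimal explicit instance on a single prompt ($\cX=\crl{\perp}$) with a three‑atom response space $\cY=\crl{y_+,y_-,\star}$ on which maximum likelihood provably favours a model $\pi_2$ with bad coverage over a model $\pi_1$ that is uniformly $\alpha$-close to $\pistar$ in log‑density. The structural point driving the construction is that the ``missed'' atom $\star$ must carry probability of order $\alpha^2/\log M$: undercovering it by a factor $M$ then costs only $\approx(\alpha^2/\log M)\cdot\log M=\alpha^2$ of KL, which can be made smaller than the $\Theta(\alpha^2)$ of KL that \emph{any} uniformly $\alpha$-log‑close model is forced to incur once the bulk is spread over more than one atom (the log‑ratio constraint at $\star$ pins $\pi_1(\star)\approx\pistar(\star)$, so $\pi_1$ can only shuffle its ``$\alpha$-budget'' among the remaining atoms, producing second‑order KL). Concretely, fix a small absolute constant $c_0\in(0,\tfrac{1}{100})$, set $p:=c_0\alpha^2/\log M$ (so $p\le c_0\alpha<\tfrac14$, using $\log M>\alpha$), and take $\pistar(\star)=p$, $\pistar(y_\pm)=\tfrac{(1-p)\,e^{\pm\alpha/2}}{e^{\alpha/2}+e^{-\alpha/2}}$, so $\pistar(y_+)/\pistar(y_-)=e^{\alpha}$. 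Let $\pi_1$ swap the two bulk atoms: $\pi_1(y_+)=\pistar(y_-)$, $\pi_1(y_-)=\pistar(y_+)$, $\pi_1(\star)=p$; then $\sup_y\abs{\log\pistar(y)-\log\pi_1(y)}=\alpha$, while $\Dkl{\pistar}{\pi_1}=(1-p)\,\alpha\tanh(\alpha/2)\ge\tfrac{\alpha^2}{3}$ for $\alpha\le1$. Let $\pi_2$ undercover $\star$: $\pi_2(\star)=p/M$ and $\pi_2(y_\pm)=\pistar(y_\pm)\cdot\tfrac{1-p/M}{1-p}$; then $\pistar/\pi_2$ reaches $M$ only at $\star$, so $\Pcov[M](\pi_2)=\pistar(\star)=p=c_0\alpha^2/\log M$, while $\Dkl{\pistar}{\pi_2}=p\log M+(1-p)\log\tfrac{1-p}{1-p/M}\le p\log M=c_0\alpha^2$. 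Hence $\mu:=\En_{\pistar}\brk{\log\tfrac{\pi_2(y)}{\pi_1(y)}}=\Dkl{\pistar}{\pi_1}-\Dkl{\pistar}{\pi_2}\ge(\tfrac13-c_0)\alpha^2\ge\tfrac{\alpha^2}{4}>0$, so $\pi_2$ is the KL‑better candidate.

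Next I would analyse the MLE over $\Pi=\crl{\pi_1,\pi_2}$. Writing $Z_i:=\log\pi_2(y\ind{i})-\log\pi_1(y\ind{i})$, we have $\pihat=\pi_2$ iff $\sum_{i=1}^nZ_i\ge0$; by construction $Z_i=\alpha+\log(1+\delta)>0$ on $y_+$, $Z_i=\log(1+\delta)-\alpha<0$ on $y_-$ (with $\delta=\tfrac{p(1-1/M)}{1-p}\le p<\alpha$), $Z_i=-\log M$ on $\star$, the ``winning'' atom $y_+$ is the likelier bulk outcome ($\wt q:=\bbP_{\pistar}[y_+\mid\text{bulk}]=\tfrac1{1+e^{-\alpha}}>\tfrac12$), and $\En_{\pistar}[Z_i]=\mu\ge\alpha^2/4$. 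The goal is $\bbP[\sum_iZ_i\ge0]\ge\tfrac14$ for \emph{every} $n\ge1$, established in two overlapping ranges. For $n\le\tfrac{\ln2}{-\ln(1-p)}$ (which is of order $\log M/(c_0\alpha^2)$), condition on the event that no sample equals $\star$, of probability $(1-p)^n\ge\tfrac12$; conditionally $\sum_iZ_i=n\log(1+\delta)+\alpha(N_+-N_-)$ with $N_+\sim\Bin(n,\wt q)$, so $\sum_iZ_i>0$ whenever $N_+\ge n/2$, an event of probability $\ge\bbP[\Bin(n,\tfrac{1}{2})\ge n/2]\ge\tfrac12$ (symmetry plus stochastic dominance), giving $\bbP[\sum_iZ_i\ge0]\ge\tfrac12(1-p)^n\ge\tfrac14$. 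For larger $n$ ($n\gtrsim1/\alpha$), condition on the number $N_\star$ of $\star$-samples: on $\crl{N_\star=k}$ one needs $N_+-N_-$ to exceed $\bigl\lceil\tfrac{k\log M-n\log(1+\delta)}{\alpha}\bigr\rceil$, and the conditional median of $N_+-N_-$ is $\ge(n-k)(2\wt q-1)-2\gtrsim n\alpha$ (for $k\le n/2$), which beats that threshold for all $k\le np$, since then $\tfrac{k\log M}{\alpha}\le\tfrac{np\log M}{\alpha}=c_0\,n\alpha$ (using $p\log M=c_0\alpha^2$) is dwarfed by the median once $c_0$ is small; hence $\bbP[\sum_iZ_i\ge0\mid N_\star=k]\ge\tfrac12$ for all such $k$, and $\bbP[N_\star\le np]\ge1-O(c_0)\ge\tfrac34$ by Markov, so $\bbP[\sum_iZ_i\ge0]\ge\tfrac38$. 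Choosing $c_0$ a sufficiently small absolute constant forces the two ranges to overlap for every $n$, whence $\pihat=\pi_2$ with probability $\ge\tfrac14$ and therefore $\Pcov[M](\pihat)=c_0\alpha^2/\log M=\Omega(\alpha^2/\log M)$.

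I expect the main obstacle to be precisely this uniformity in $n$ with a \emph{single} instance: the ``no-$\star$'' argument decays once $np$ passes a constant, while the $N_\star$-conditioning argument needs $n\gtrsim1/\alpha$ to overcome the $O(1)$ slack in the median bound, and one must pick $c_0$ small enough that these thresholds interlock --- equivalently that $1/\alpha\lesssim\log M/(c_0\alpha^2)$, which holds since $\log M>\alpha$. Everything else --- the closeness bound, the coverage identity $\Pcov[M](\pi_2)=\pistar(\star)$, the two KL computations, tracking the negligible $\log(1+\delta)$ corrections, and the degenerate $\alpha=0$ case --- is elementary bookkeeping.
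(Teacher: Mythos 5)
Your construction and the paper's are structurally parallel but not the same: the paper uses two prompts $\cX=\{+,-\}$ with $\pistar(\cdot\mid+)=\pistar(\cdot\mid-)=\Ber(\tfrac12)$, letting $\pi_1$ err by $\alpha$ on the \emph{frequent} prompt $+$ and $\pi_2$ undercover by a factor $M$ on the \emph{rare} prompt $-$ with $\mu(-) = p \asymp \alpha^2/\log M$. Because the bulk conditional is the uniform $\Ber(1/2)$, the single event $E=\{N(+,1)\ge N(+,0),\; N(-)\le 4np\}$ (probability $\ge 1/4$ by symmetry plus Markov) already forces $\Lhat(\pi_2)>\Lhat(\pi_1)$ \emph{deterministically}: conditioned on $N(+,1)\ge N(+)/2$, the bulk contribution is at least $N(+)\cdot \Dkl{\Ber(\tfrac12)}{\Ber(\tfrac{1}{2e^\alpha})}$, which scales as $n\alpha^2$, cleanly dominating the penalty $N(-)\log M\le 4np\log M$. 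There is no case split in $n$. Your three-atom single-prompt instance instead has a \emph{biased} bulk $\Ber(\tilde q)$ with $\tilde q=1/(1+e^{-\alpha})$: conditioning on $N_+\ge N_-$ only yields $\alpha(N_+-N_-)\ge 0$, not $\gtrsim n\alpha^2$, so the paper's one-shot conditioning doesn't transfer, and you must split into a ``no $\star$ sample'' regime (small $n$) and an ``$N_\star$-conditioning plus binomial median'' regime (large $n$). That's a genuine alternative route, and your arithmetic showing the regimes interlock for small enough $c_0$ is sound; the trade-off is a somewhat more delicate argument in exchange for a single-prompt instance.

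One concrete error to fix: the claim $\bbP[N_\star\le np]\ge 1-O(c_0)$ ``by Markov'' is not what Markov gives—at threshold $np$ it is vacuous, and in fact $\bbP[N_\star\le np]\approx\tfrac12$ for large $n$. You should condition on $\{N_\star\le 4np\}$, which Markov bounds to have probability $\ge 3/4$, and push the extra factor $4$ into the constant $c_0$; everything else (the median margin comparison, the interlock with the small-$n$ regime, which needs $1/\alpha \lesssim \log M/(c_0\alpha^2)$, i.e.\ $\alpha \lesssim \log M$) still carries through. As a side remark, the paper's stated $p=\alpha/(32\log M)$ appears to be a typo for $p=\alpha^2/(32\log M)$, since the final display $\frac{\alpha^2}{8}-4p\log M=0$ requires the latter.
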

\begin{proof}[\pfref{prop:mispec-MLE-lower}]
Let $p=\frac{\alpha}{32\log M}$. 
Consider $\mathcal{X}=\crl{+,-}$, $\mathcal{Y}=\crl{0,1}$, $\rho(-)=p, \rho(+)=1-p$, and $\pistar$ is given by
\[\pistar(\cdot\mid +)=\pistar(\cdot\mid +)=\Ber\prn*{\frac12}.\]
We construct the class $\Pi=\crl{\pi_1,\pi_2}$ as
\begin{align*}
  &~ \pi_1(\cdot|+)=\Ber\prn*{\frac{1}{2e^\alpha}}, \qquad 
   \pi_2(\cdot|-)=\Ber\prn*{\frac12},\\
&~ \pi_2(\cdot|+)=\Ber\prn*{\frac12}, \qquad 
\pi_2(\cdot|-)=\Ber\prn*{\frac{1}{2M}}.
\end{align*}
Given the dataset $\cD=\crl*{(x\ind{t}, y\ind{t})}_{t\in[n]}$ sampled from $\pistar$, we define $N(x,y)=\#\crl{t\in[n]: (x^t,y^t)=(x,y)}$ and $N(x)=N(x,0)+N(x,1)$. Then
\begin{align*}
  \Lhat(\pi_2)-\Lhat(\pi_1)=&~ N(+,1)\cdot \alpha + N(+,0)\cdot \log\prn*{\frac{e^\alpha}{2e^\alpha-1}}
  - N(-,1)\cdot \log M + N(-,0)\cdot \log\prn*{2-\frac{1}{M}}.
\end{align*}
By symmetric, it holds that $\bbP(N(+,1)\geq N(+,0))\geq \frac12$. Further, by Markov's inequality, it holds that $\bbP(N(-)\geq 4np)\leq \frac14$. Therefore, for the event $E=\crl*{ N(+,1)\geq N(+,0), N(-)\leq 4np }$, we have $\bbP(E)\geq \frac14$. In the following, we show that $\Lhat(\pi_2)-\Lhat(\pi_1)>0$ under $E$.

We condition on $E$.
We first note that under this event, we have $N(+,1)\geq \frac12N(+), N(+,0)\leq \frac12 N(+)$. Hence, 
\begin{align*}
  \Lhat(\pi_2)-\Lhat(\pi_1)\geq &~ N(+)\brk*{ \frac12 \cdot \alpha + \frac12 \cdot \log\prn*{\frac{e^\alpha}{2e^\alpha-1}} } 
  - N(-)\cdot \log M \\
  =&~ N(+)\cdot \Dkl{\Ber\prn*{\frac12}}{\Ber\prn*{\frac{1}{2e^\alpha}}}  - N(-)\cdot \log M \\
  \geq&~ N(+)\cdot (1-e^{-\alpha})^2 -  N(-)\cdot \log M.
\end{align*}
Finally, using the fact that $1-e^{-\alpha}>\frac12\alpha$ and $N(+)\geq (1-4p)n\geq \frac12n$ under $E$, we have
\begin{align*}
  \Lhat(\pi_2)-\Lhat(\pi_1)> &~ \prn*{\frac{\alpha^2}{8}-4p\log M}n=0.
\end{align*}
Hence, under the event $E$, we have $\pihat=\pi_2$. However, it is clear that 
\[\Pcov(\pi_2)=p, \qquad \Pcov(\pi_1)=0.\]
This completes the proof.
\end{proof}

\newpage
\part{Proofs}

\section{Technical Tools}
\label{sec:tools}

  \paragraph{Notation}
  We denote by $\ball^d(R)\ldef \crl*{v\in\RR^d: \nrm{v}\leq R}$ the $d$-dimensional Euclidean ball of radius $R$. We drop the superscript when the dimension $d$ is clear from context.

\subsection{Concentration Inequalities}

  \begin{lemma}[Freedman's inequality]
    \label{lem:freedman}
    Let $(Z\ind{i})_{i\leq{n}}$ be a real-valued martingale difference
    sequence adapted to a filtration $\prn{\filt_i}_{i\leq{}n}$. If
    $\abs*{Z\ind{i}}\leq{}R$ almost surely, then for any $\eta\in(0,1/R)$,
    with probability at least $1-\delta$, for all $n'\leq{}n$,
      \[
        \sum_{i=1}^{n'}Z\ind{i} \leq{} \eta\sum_{i=1}^{n'}\En_{i-1}\brk*{(Z\ind{i})^{2}} + \frac{\log(\delta^{-1})}{\eta}.
      \]
    \end{lemma}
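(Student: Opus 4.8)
The plan is the classical Chernoff--Cram\'er exponential-supermartingale argument combined with a maximal inequality, arranged so that the resulting deviation bound is one-sided and of Bernstein type (i.e.\ controlled by conditional second moments rather than the worst-case range). The only analytic input I would need is the elementary inequality $e^{y}\leq 1+y+y^{2}$, valid for all $\abs{y}\leq 1$: this follows by writing $e^{y}-1-y=\sum_{k\geq 2}y^{k}/k!$ and noting $\abs[\big]{\sum_{k\geq 2}y^{k}/k!}\leq y^{2}\sum_{k\geq 2}\abs{y}^{k-2}/k!\leq (e-2)\,y^{2}\leq y^{2}$. The hypothesis $\eta\in(0,1/R)$ is used exactly here: it ensures $\abs{\eta Z\ind{i}}\leq \eta R<1$, so the quadratic bound applies with $y=\eta Z\ind{i}$.

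First I would fix $\eta\in(0,1/R)$ and introduce the process
\begin{align*}
  W_{n'}\ldef \exp\prn*{\eta\sum_{i=1}^{n'}Z\ind{i}-\eta^{2}\sum_{i=1}^{n'}\En_{i-1}\brk[\big]{(Z\ind{i})^{2}}},\qquad W_{0}\ldef 1,
\end{align*}
adapted to $\prn{\filt_{n'}}_{n'\leq n}$. Applying the quadratic bound conditionally, then using the martingale-difference property $\En_{i-1}\brk{Z\ind{i}}=0$ to cancel the first-order term, and finally $1+u\leq e^{u}$, gives
\begin{align*}
  \En_{i-1}\brk[\big]{e^{\eta Z\ind{i}}}\;\leq\; 1+\eta^{2}\En_{i-1}\brk[\big]{(Z\ind{i})^{2}}\;\leq\; \exp\prn[\big]{\eta^{2}\En_{i-1}\brk[\big]{(Z\ind{i})^{2}}},
\end{align*}
so that $\En_{i-1}\brk{W_{i}}\leq W_{i-1}$ (both $W_{i-1}$ and the conditional second moment being $\filt_{i-1}$-measurable). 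Hence $(W_{n'})_{0\leq n'\leq n}$ is a nonnegative supermartingale with $W_{0}=1$.

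Finally I would apply Ville's maximal inequality for nonnegative supermartingales (equivalently, Doob's maximal inequality on the finite horizon $n'\leq n$): $\Pr\brk*{\sup_{n'\leq n}W_{n'}\geq \delta^{-1}}\leq \delta\cdot\En\brk{W_{0}}=\delta$. On the complementary event, for every $n'\leq n$ we have $\eta\sum_{i=1}^{n'}Z\ind{i}-\eta^{2}\sum_{i=1}^{n'}\En_{i-1}\brk{(Z\ind{i})^{2}}\leq \log(\delta^{-1})$; dividing by $\eta>0$ gives the claim. I do not expect a genuine obstacle here: the argument is routine, and the only points needing care are (i) keeping everything strictly one-sided---we never pass from $Z\ind{i}$ to $\abs{Z\ind{i}}$, since the cancellation $\En_{i-1}\brk{Z\ind{i}}=0$ is precisely what lets the bound depend only on the conditional second moments---and (ii) the fact that the step-size restriction $\eta<1/R$ is exactly the budget consumed by the inequality $e^{y}\leq 1+y+y^{2}$.
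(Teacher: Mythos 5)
The paper states Freedman's inequality without proof (citing it as a standard concentration result), so there is no internal proof to compare against. Your argument is the canonical one: the quadratic bound $e^{y}\leq 1+y+y^{2}$ for $\abs{y}\leq 1$, the cancellation of the linear term via $\En_{i-1}\brk{Z\ind{i}}=0$, the resulting nonnegative exponential supermartingale $W_{n'}$ with $W_0=1$, and Ville's maximal inequality to get the uniform-in-$n'$ statement. All the steps check out, and you correctly identify that the restriction $\eta\in(0,1/R)$ is exactly what makes the quadratic bound applicable with $y=\eta Z\ind{i}$, and that Ville's inequality (rather than plain Markov at time $n$) is what delivers the ``for all $n'\leq n$'' conclusion. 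This is a complete and correct proof.
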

    The next result is a standard consequence of \cref{lem:freedman}
    (e.g., \citet{foster2021statistical}).
          \begin{lemma}
            \label{lem:mult_freedman}
              Let $(Z\ind{i})_{i\leq{n}}$ be a sequence of random
        variables adapted to a filtration $\prn{\filt_{i}}_{i\leq{}n}$. If
    $0\leq{}Z\ind{i}\leq{}R$ almost surely, then with probability at least
    $1-\delta$, for all $n'\leq{}n$,
    \begin{align}
      &\sum_{i=1}^{n'}Z\ind{i} \leq{}
                          \frac{3}{2}\sum_{i=1}^{n'}\En_{i-1}\brk*{Z\ind{i}} +
                          4R\log(2\delta^{-1}),
      \intertext{and}
        &\sum_{i=1}^{n'}\En_{i-1}\brk*{Z\ind{i}} \leq{} 2\sum_{i=1}^{n'}Z\ind{i} + 8R\log(2\delta^{-1}).
    \end{align}
  \end{lemma}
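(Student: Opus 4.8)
The plan is to derive both inequalities from Freedman's inequality (\cref{lem:freedman}) applied to the centered sequence $\wt{Z}\ind{i} \ldef Z\ind{i} - \En_{i-1}\brk*{Z\ind{i}}$, which is a martingale difference sequence adapted to $(\filt_i)_{i\le n}$. The key elementary observation is that since $0\le Z\ind{i}\le R$ almost surely, we have $\abs{\wt{Z}\ind{i}}\le R$ and, crucially, the conditional second moment is controlled by the conditional mean:
\[
\En_{i-1}\brk*{(\wt{Z}\ind{i})^2} \le \En_{i-1}\brk*{(Z\ind{i})^2} \le R\cdot\En_{i-1}\brk*{Z\ind{i}},
\]
where the last step uses $Z\ind{i}\le R$ pointwise. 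This is exactly what turns the quadratic-variation term in Freedman's bound into a multiple of $\sum_i \En_{i-1}\brk*{Z\ind{i}}$.

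First I would apply \cref{lem:freedman} to $(\wt{Z}\ind{i})_{i\le n}$ with the choice $\eta = \tfrac{1}{2R}$ (admissible since $\eta<1/R$) at confidence level $\delta/2$: with probability at least $1-\delta/2$, simultaneously for all $n'\le n$,
\begin{align*}
\sum_{i=1}^{n'} Z\ind{i} - \sum_{i=1}^{n'}\En_{i-1}\brk*{Z\ind{i}}
&\le \tfrac{1}{2R}\sum_{i=1}^{n'}\En_{i-1}\brk*{(\wt{Z}\ind{i})^2} + 2R\log(2\delta^{-1}) \\
&\le \tfrac12\sum_{i=1}^{n'}\En_{i-1}\brk*{Z\ind{i}} + 2R\log(2\delta^{-1}),
\end{align*}
which rearranges to $\sum_{i=1}^{n'} Z\ind{i} \le \tfrac32 \sum_{i=1}^{n'}\En_{i-1}\brk*{Z\ind{i}} + 2R\log(2\delta^{-1})$, stronger than the stated first bound. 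For the second bound I would run the same argument on $(-\wt{Z}\ind{i})_{i\le n}$, which is again a martingale difference sequence with the same almost-sure and conditional-variance bounds; with probability at least $1-\delta/2$, for all $n'\le n$,
\[
\sum_{i=1}^{n'}\En_{i-1}\brk*{Z\ind{i}} - \sum_{i=1}^{n'}Z\ind{i} \le \tfrac12\sum_{i=1}^{n'}\En_{i-1}\brk*{Z\ind{i}} + 2R\log(2\delta^{-1}),
\]
so that $\tfrac12\sum_{i=1}^{n'}\En_{i-1}\brk*{Z\ind{i}} \le \sum_{i=1}^{n'}Z\ind{i} + 2R\log(2\delta^{-1})$, giving $\sum_{i=1}^{n'}\En_{i-1}\brk*{Z\ind{i}} \le 2\sum_{i=1}^{n'}Z\ind{i} + 4R\log(2\delta^{-1})$, again stronger than stated. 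A union bound over the two events yields both inequalities on an event of probability at least $1-\delta$.

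There is no genuine obstacle here; the only points requiring attention are that \cref{lem:freedman} already delivers a bound uniform over all prefixes $n'\le n$ (so no separate maximal inequality is needed) and that the constants in the statement are intentionally loose, so the precise choice of $\eta$ is immaterial and one could match the stated constants with a cruder bound on $\En_{i-1}\brk*{(\wt{Z}\ind{i})^2}$ if desired.
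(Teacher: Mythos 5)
Your proof is correct and matches the standard derivation that the paper implicitly invokes (it cites \citet{foster2021statistical} rather than spelling it out): apply \cref{lem:freedman} to the centered sequence and its negation with $\eta=1/(2R)$, use $\En_{i-1}[(\wt{Z}\ind{i})^2]\le R\,\En_{i-1}[Z\ind{i}]$ (which holds since $0\le Z\ind{i}\le R$), and union-bound. As you observe, your constants ($2R\log(2\delta^{-1})$ and $4R\log(2\delta^{-1})$) are in fact tighter than those in the statement, so the lemma follows a fortiori.
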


  The following lemma is a uniform version of, e.g., Lemma 23 in \citet{foster2023foundations}. 
  \newcommand{\covinfF}[1]{N(#1;\nrm{\cdot}_{\infty})}
\begin{lemma}\label{lem:Freedman-uniform}
Suppose that $\mu$ is a distribution over $\cZ$, and let $\cF\subseteq (\cZ\to\RR)$ be a function class. We let $\covinfF{\cF,\eps}$ be the $\eps$-covering number of $\cF$ under the norm $\rho(f,f')\ldef \sup_{z\in\cZ} \abs{f(z)-f'(z)}$.
Let $\cD=\crl*{Z\ind{1},\cdots, Z\ind{n}}$ be drawn i.i.d.\ from $\mu$. Then the following holds \whp: 
\begin{align*}
    \sum_{i=1}^n f(Z\ind{i})\leq n\log\En_{\mu}\brk*{\exp\prn*{f(Z)}}+\log(1/\delta)+ \inf_{\eps\geq 0}\crl*{\log \covinfF{\cF,\eps}+2n\eps},\quad\forall{}f\in\cF.
\end{align*}
\end{lemma}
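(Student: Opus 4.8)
The plan is to reduce the uniform statement to the classical one-function exponential-moment (Chernoff--Markov) inequality and then boost to uniformity over $\cF$ via a sup-norm covering argument. First I would fix $f\in\cF$ and use i.i.d.\ tensorization: $\En_{\mu}\brk*{\exp\prn*{\sum_{i=1}^n f(Z\ind{i})}}=\prn*{\En_{\mu}\brk*{\exp(f(Z))}}^n$, so (assuming $\En_{\mu}\brk*{\exp(f(Z))}<\infty$, the statement being vacuous otherwise) the random variable $W_f\ldef\exp\prn*{\sum_{i=1}^n f(Z\ind{i})-n\log\En_{\mu}\brk*{\exp(f(Z))}}$ has $\En[W_f]=1$. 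Markov's inequality applied to $W_f$ then gives, for any $\delta'\in(0,1)$, that with probability at least $1-\delta'$,
\[
\sum_{i=1}^n f(Z\ind{i})\leq n\log\En_{\mu}\brk*{\exp(f(Z))}+\log(1/\delta').
\]

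Next I would make this uniform. Fix $\eps\geq 0$, let $\cF'\subseteq\cF$ be a minimal $\eps$-cover in the norm $\rho(f,f')=\sup_{z\in\cZ}\abs{f(z)-f'(z)}$ so that $\abs{\cF'}=\covinfF{\cF,\eps}$, apply the display above to each $f'\in\cF'$ with $\delta'=\delta/\abs{\cF'}$, and union bound: with probability at least $1-\delta$, every $f'\in\cF'$ obeys $\sum_{i}f'(Z\ind{i})\leq n\log\En_{\mu}\brk*{\exp(f'(Z))}+\log\covinfF{\cF,\eps}+\log(1/\delta)$. I then transfer this to an arbitrary $f\in\cF$ via an $f'\in\cF'$ with $\rho(f,f')\leq\eps$: pointwise $f(z)\leq f'(z)+\eps$ gives $\sum_i f(Z\ind{i})\leq\sum_i f'(Z\ind{i})+n\eps$, and pointwise $f'(z)\leq f(z)+\eps$ gives $\En_{\mu}\brk*{\exp(f'(Z))}\leq e^{\eps}\En_{\mu}\brk*{\exp(f(Z))}$, hence $n\log\En_{\mu}\brk*{\exp(f'(Z))}\leq n\log\En_{\mu}\brk*{\exp(f(Z))}+n\eps$. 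Combining these on the same event yields
\[
\sum_{i=1}^n f(Z\ind{i})\leq n\log\En_{\mu}\brk*{\exp(f(Z))}+2n\eps+\log\covinfF{\cF,\eps}+\log(1/\delta),\qquad\forall f\in\cF.
\]
Since $\eps\geq 0$ is arbitrary and $\log\covinfF{\cF,\eps}+2n\eps$ is a deterministic quantity (independent of $\cD$), I would simply fix at the outset the scale $\eps$ that minimizes it, giving the stated bound with $\inf_{\eps\geq 0}\crl*{\log\covinfF{\cF,\eps}+2n\eps}$.

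I do not expect a genuine obstacle: this is the standard exponential inequality (in the spirit of Lemma~23 of \citet{foster2023foundations}) made uniform, and the argument is routine. The only points that need care are (i) the vacuous case $\En_{\mu}\brk*{\exp(f(Z))}=+\infty$; (ii) making the covering error enter as $2n\eps$ rather than $\eps$ --- the factor $n$ is present because the cumulant term $n\log\En_{\mu}\brk*{\exp(\cdot)}$ carries an explicit $n$, so a sup-norm perturbation of size $\eps$ shifts both $\sum_i f(Z\ind{i})$ and $n\log\En_{\mu}\brk*{\exp(f(Z))}$ by at most $n\eps$; and (iii) the (harmless) observation that optimizing over $\eps$ inside the high-probability event is legitimate precisely because the optimal scale does not depend on the sample.
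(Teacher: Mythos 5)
Your proof is correct and is essentially the same argument as the paper's: a Chernoff--Markov exponential-moment bound for each function in a sup-norm $\eps$-cover, a union bound over the cover, and a Lipschitz transfer to arbitrary $f\in\cF$ incurring $2n\eps$. The paper streamlines the bookkeeping slightly by working with the centered functions $g_j(z)=f_j(z)-\log\En_\mu[\exp(f_j(Z))]$, but the steps and the constants match.
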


\begin{proof}[\pfref{lem:Freedman-uniform}]
Fix $\eps\geq0$ attaining the minimum of $\log \covinfF{\cF,\eps}+2n\eps$, and let $f_1,\cdots,f_J$ be an $\eps$-covering of $\cF$ of size $J=\covinfF{\cF,\eps}$. 
For each $j\in[J]$, we define $g_j(z)\ldef f_j(z)-\log\En_{\mu}\brk*{\exp\prn*{f_j(Z)}}$. Then, it is clear that $\En_\mu\brk*{e^{g_j(Z)}}=1$, and hence
\begin{align*}
  \En\brk*{\exp\prn*{\sum_{i=1}^n g_j(Z\ind{i})}}=1, \qquad \forall j\in[J].
\end{align*}
By Markov's inequality and the union bound, it holds that \whp,
\begin{align}\label{pfeq:freedman-cover}
  \sum_{i=1}^n g_j(Z\ind{i})\leq \log(J/\delta), \qquad \forall j\in[J].
\end{align}
Note that for any $f\in\cF$, there exists $j\in[J]$ such that $\rho(f,f_j)\leq \eps$, and in particular
\[f(Z\ind{i})-\log\En_{\mu}\brk*{\exp\prn*{f(Z)}}\leq 2\eps+f_j(Z\ind{i})-\log\En_{\mu}\brk*{\exp\prn*{f_j(Z)}}=2\eps+g_j(Z\ind{i}), \qquad \forall i\in[n],\]
and hence \cref{pfeq:freedman-cover} implies that $\sum_{i=1}^n f(Z\ind{i})\leq n\log\En_{\mu}\brk*{\exp\prn*{f(Z)}}+\log(J/\delta)+2n\eps$.
By the arbitrariness of $f$, the proof is hence completed.
\end{proof}

\subsection{Information-Theoretic Inequalities}

\begin{lemma}\label{lem:En-diff-to-Var-Dhel}
  For distribution $P, Q\in\Delta(\cX)$, function $f:\cX\to [-B,B]$, it holds that
  \[\abs{\En_P[f]-\En_Q[f]}\leq 4\sqrt{\Var_Q[f]\cdot \Dhels{P}{Q}} + 8B\Dhels{P}{Q}.\]
  More generally, for any $g:\cX\to \ball(B)$, it holds that
  \begin{align}
    \nrm{\En_P[g]-\En_Q[g]} \leq 4\sqrt{\En_Q\nrm{g-\En_Q[g]}^2}\cdot \Dhel{P}{Q}+8B\Dhels{P}{Q}.
  \end{align}
  and
  \begin{align}
    \En_P\nrm{g-\En_P[g]}^2
    \leq 3\En_Q\nrm{g-\En_Q[g]}^2+16B^2\Dhels{P}{Q}.
  \end{align}
  \end{lemma}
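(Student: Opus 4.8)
The plan is to reduce everything to the two vector-valued inequalities. The scalar bound $\abs{\En_P[f]-\En_Q[f]}\le 4\sqrt{\Var_Q[f]\cdot\Dhels{P}{Q}}+8B\Dhels{P}{Q}$ is exactly the case $g=f$ of the second inequality, since $\Var_Q[f]=\En_Q\nrm*{f-\En_Q[f]}^2$ and $\Dhels{P}{Q}=\Dhel{P}{Q}^2$; so I would only prove the two statements for $g:\cX\to\ball(B)$. Throughout I would pass to a common dominating measure $\nu$, writing $p,q$ for the densities of $P,Q$, and use the factorization $\mathrm{d}P-\mathrm{d}Q=(\sqrt p-\sqrt q)(\sqrt p+\sqrt q)\,\mathrm{d}\nu$ together with $\int(\sqrt p-\sqrt q)^2\,\mathrm{d}\nu=2\Dhels{P}{Q}$.

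For the mean inequality, the key step is to center by the $Q$-mean rather than the $P$-mean: set $\bar g\ldef g-\En_Q[g]$, so that $\nrm*{\bar g}\le 2B$ pointwise while $\En_P[g]-\En_Q[g]=\int\bar g\,(\mathrm{d}P-\mathrm{d}Q)$ is unchanged. Writing $\sqrt p+\sqrt q=2\sqrt q+(\sqrt p-\sqrt q)$ splits this into $2\int\bar g\,\sqrt q(\sqrt p-\sqrt q)\,\mathrm{d}\nu+\int\bar g\,(\sqrt p-\sqrt q)^2\,\mathrm{d}\nu$. For the first piece I would apply vector-valued Cauchy--Schwarz, $\nrm*{\int\bar g\,\sqrt q(\sqrt p-\sqrt q)\,\mathrm{d}\nu}\le(\En_Q\nrm*{\bar g}^2)^{1/2}(\int(\sqrt p-\sqrt q)^2\,\mathrm{d}\nu)^{1/2}=\sqrt 2\,(\En_Q\nrm*{\bar g}^2)^{1/2}\Dhel{P}{Q}$; for the second piece, $\nrm*{\bar g}\le 2B$ gives $\nrm*{\int\bar g\,(\sqrt p-\sqrt q)^2\,\mathrm{d}\nu}\le 2B\int(\sqrt p-\sqrt q)^2\,\mathrm{d}\nu=4B\Dhels{P}{Q}$. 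Combining via the triangle inequality and absorbing $2\sqrt 2\le 4$ and $4\le 8$ yields $\nrm*{\En_P[g]-\En_Q[g]}\le 4(\En_Q\nrm*{g-\En_Q[g]}^2)^{1/2}\Dhel{P}{Q}+8B\Dhels{P}{Q}$.

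For the variance inequality I would use that $\En_P[g]$ minimizes $c\mapsto\En_P\nrm*{g-c}^2$, so $\En_P\nrm*{g-\En_P[g]}^2\le\En_P\nrm*{\bar g}^2$ with the same $\bar g$. Then $\En_P\nrm*{\bar g}^2=\En_Q\nrm*{\bar g}^2+\int\nrm*{\bar g}^2(\mathrm{d}P-\mathrm{d}Q)$, and the same split of $\sqrt p+\sqrt q$ controls the cross term: the term $2\int\nrm*{\bar g}^2\sqrt q(\sqrt p-\sqrt q)\,\mathrm{d}\nu$ is at most $2(\int\nrm*{\bar g}^4 q\,\mathrm{d}\nu)^{1/2}(\int(\sqrt p-\sqrt q)^2\,\mathrm{d}\nu)^{1/2}\le 4\sqrt 2\,B(\En_Q\nrm*{\bar g}^2)^{1/2}\Dhel{P}{Q}$ (using $\nrm*{\bar g}^4\le 4B^2\nrm*{\bar g}^2$), and the term $\int\nrm*{\bar g}^2(\sqrt p-\sqrt q)^2\,\mathrm{d}\nu$ is at most $4B^2\int(\sqrt p-\sqrt q)^2\,\mathrm{d}\nu=8B^2\Dhels{P}{Q}$. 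A final AM--GM step, $4\sqrt 2\,B(\En_Q\nrm*{\bar g}^2)^{1/2}\Dhel{P}{Q}\le\En_Q\nrm*{\bar g}^2+8B^2\Dhels{P}{Q}$, gives $\En_P\nrm*{\bar g}^2\le 2\En_Q\nrm*{\bar g}^2+16B^2\Dhels{P}{Q}$, which implies the claim (the stated constant $3$ leaves room to spare).

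I do not anticipate a genuine obstacle: the whole argument is elementary once one commits to the $\sqrt p\pm\sqrt q$ decomposition. The two points needing care are (i) centering by $\En_Q[g]$ rather than $\En_P[g]$, so that every moment on the right-hand side is taken under $Q$ and the range $B$ enters only through the pointwise bound $\nrm*{\bar g}\le 2B$; and (ii) bookkeeping of numerical constants across the two Cauchy--Schwarz applications and the AM--GM step, which is routine but the likeliest place to slip. An alternative route via $\Dtv{P}{Q}\le\sqrt 2\,\Dhel{P}{Q}$ together with truncation at level $\sim B$ would also close the argument, but it would yield a range-dependent leading term instead of the variance-dependent one, so the density-ratio decomposition above is preferable.
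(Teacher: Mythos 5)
Your proof is correct, and it takes a genuinely different route from the paper's. The paper starts by applying Cauchy--Schwarz at the quadratic level, obtaining
$\abs{\En_P[f]-\En_Q[f]}^2\le 4\Dhels{P}{Q}\bigl(\Var_Q[f]+\En_P(f-\En_Q[f])^2\bigr)$,
so the mean bound requires the $P\to Q$ variance transfer as an input. The paper therefore makes a detour: it specializes the quadratic bound to a self-bounding inequality for nonnegative $h\le M$ (yielding $\En_P[h]\le 3\En_Q[h]+4M\Dhels{P}{Q}$), applies it to $h=(f-\En_Q[f])^2$ to get the variance transfer first, and then substitutes back. Your proof instead stays at the linear level, splits $\sqrt{p}+\sqrt{q}=2\sqrt{q}+(\sqrt{p}-\sqrt{q})$, and applies Cauchy--Schwarz to each piece separately; the $Q$-variance drops out directly and the two inequalities decouple. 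This is cleaner, avoids the self-bounding lemma entirely, and in fact yields a slightly sharper variance constant ($2\En_Q\nrm{\bar g}^2$ in place of $3\En_Q\nrm{\bar g}^2$), as you note. Both proofs share the one essential idea of centering by $\En_Q[g]$, so the range $B$ enters only through the pointwise bound $\nrm{\bar g}\le 2B$, and both use the optimality of $\En_P[g]$ to pass from $\En_P\nrm{g-\En_P[g]}^2$ to $\En_P\nrm{\bar g}^2$. Your arithmetic checks out: in the mean bound the leading Cauchy--Schwarz term carries $2\sqrt 2\le 4$, and in the variance bound the AM--GM step $4\sqrt 2\,B a b\le a^2+8B^2b^2$ with $a=(\En_Q\nrm{\bar g}^2)^{1/2}$, $b=\Dhel{P}{Q}$ closes the argument as you describe.
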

  
  \begin{proof}[\pfref{lem:En-diff-to-Var-Dhel}]
  We denote $P(x)$ (resp. $Q(x)$) to be the density function of $P$ (resp. $Q$). Then for any function $f:\cX\to \RR$,
  \begin{align*}
    \abs{\En_P[f]-\En_Q[f]}^2=&~ \prn*{\int_{\cX} (f(x)-\En_Q[f])(P(x)-Q(x)) dx}^2 \\
    \leq&~\int_{\cX} (f(x)-\En_Q[f])^2(\sqrt{P(x)}+\sqrt{Q(x)})^2  dx \cdot \int_{\cX} (\sqrt{P(x)}-\sqrt{Q(x)})^2  dx \\
    \leq&~ 4\Dhels{P}{Q} \cdot \prn*{ \Var_Q[f]+ \En_{P}\prn*{f-\En_Q[f]}^2}.
  \end{align*}
  In particular, when $h:\cX\to [0,M]$, the inequality above implies that 
  \begin{align*}
    \abs{\En_P[h]-\En_Q[h]}\leq 2\Dhel{P}{Q} \sqrt{ M(\En_P[h]+\En_Q[h])}
    \leq \frac{1}{2}(\En_P[h]+\En_Q[h])+2M\Dhels{P}{Q},
  \end{align*}
  and hence it holds that $\En_P[h]\leq 3\En_Q[h]+4M\Dhels{P}{Q}$. 
  
  Now, suppose that $f:\cX\to[-B,B]$. Applying the above inequality to $h(x)=\prn*{f-\En_Q[f]}^2\in[0,4B^2]$ gives
  \begin{align}\label{pfeq:Var-diff}
    \En_{P}\prn*{f-\En_Q[f]}^2 \leq 3\En_{Q}\prn*{f-\En_Q[f]}^2+16B^2\Dhels{P}{Q}.
  \end{align}
  Combining the above inequalities implies that
  \[\abs{\En_P[f]-\En_Q[f]}\leq 4\sqrt{\Var_Q[f]\cdot \Dhels{P}{Q}} + 8B\Dhels{P}{Q}.\]
  To prove the upper bound for a vector-valued function $g:\cX\to \ball(B)$, we can apply the above inequality with $f_v(x)\ldef \tri{v,g(x)}$ and take the maximum over $v\in\ball(1)$. The second upper bound follows similarly by applying \cref{pfeq:Var-diff}.
  \end{proof}

\begin{lemma}\label{lem:log-linear-Var-to-KL}
Suppose that $\phi:\cY\to\ball(B)$ with $B\geq 1$, and for any $\theta\in\ball(1)$, $\pi_\theta\in\Delta(\cY)$ is defined as $\pi_\theta(y)\propto \exp\prn*{\tri{\phi(y),\theta}}$. Then for any $\ths, \theta\in\ball(1)$, it holds that
\begin{align*}
  \En_{y\sim \pi_{\ths}} \tri{\phi(y)-\En_{\pi_{\ths}}[\phi],\theta-\ths}^2 \leq 15B\Dkl{\pi_{\ths}}{\pi_\theta}. 
\end{align*}
\end{lemma}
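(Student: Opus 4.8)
The plan is to treat $\pi_\theta$ as an exponential family with sufficient statistic $\phi$ and log-partition function $A(\theta)\ldef \log\sum_{y}\exp(\langle\phi(y),\theta\rangle)$ (with sums replaced by integrals against a dominating base measure when $\cY$ is not discrete, which changes nothing). Then $\nabla A(\theta)=\En_{\pi_\theta}[\phi]$, and a direct expansion of the log-density ratio shows
\[
  \Dkl{\pi_{\ths}}{\pi_\theta} = A(\theta)-A(\ths)-\langle \nabla A(\ths),\theta-\ths\rangle,
\]
i.e.\ the KL is exactly the Bregman divergence of $A$. Writing $v\ldef\theta-\ths$ and $g(y)\ldef\langle\phi(y),v\rangle$, note that $\langle\phi(y)-\En_{\pi_{\ths}}[\phi],v\rangle = g(y)-\En_{\pi_{\ths}}[g]$, so the left-hand side of the lemma is precisely $\Var_{\pi_{\ths}}(g)$. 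The claim thus reduces to showing $\Dkl{\pi_{\ths}}{\pi_\theta}\geq \tfrac{1}{15B}\Var_{\pi_{\ths}}(g)$.

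Next I would introduce the path $\theta_t\ldef\ths+tv$ for $t\in[0,1]$ and the scalar function $\psi(t)\ldef A(\theta_t)$. Using the tilting identity $\partial_t \pi_{\theta_t}(y)=\pi_{\theta_t}(y)\bigl(g(y)-\En_{\pi_{\theta_t}}[g]\bigr)$, one gets $\psi'(t)=\En_{\pi_{\theta_t}}[g]$, $\psi''(t)=\Var_{\pi_{\theta_t}}(g)$, and $\psi'''(t)=\En_{\pi_{\theta_t}}\bigl[(g-\En_{\pi_{\theta_t}}[g])^3\bigr]$. Taylor's theorem with integral remainder then gives $\Dkl{\pi_{\ths}}{\pi_\theta}=\psi(1)-\psi(0)-\psi'(0)=\int_0^1(1-t)\psi''(t)\,dt$, so it remains to lower bound this integral by $\tfrac{1}{15B}\psi''(0)$.

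The key estimate — and the step I expect to carry the argument — is that $\psi''$ cannot decay too fast along the path. Since $\nrm{v}\leq\nrm{\theta}+\nrm{\ths}\leq 2$ and $\nrm{\phi(y)}\leq B$, we have $|g(y)|\leq 2B$, hence $|g-\En_{\pi_{\theta_t}}[g]|\leq 4B$ pointwise, which yields $|\psi'''(t)|\leq 4B\,\psi''(t)$. Consequently $t\mapsto \psi''(t)e^{4Bt}$ is nondecreasing, so $\psi''(t)\geq \psi''(0)e^{-4Bt}$. Plugging this in and using $4B\geq4$ (as $B\geq1$), the elementary computation $\int_0^1(1-t)e^{-4Bt}\,dt=\frac{1}{4B}\bigl(1-\frac{1-e^{-4B}}{4B}\bigr)\geq\frac{3}{16B}$ gives $\Dkl{\pi_{\ths}}{\pi_\theta}\geq\frac{3}{16B}\psi''(0)=\frac{3}{16B}\Var_{\pi_{\ths}}(g)$, i.e.\ $\Var_{\pi_{\ths}}(g)\leq\frac{16B}{3}\Dkl{\pi_{\ths}}{\pi_\theta}\leq 15B\,\Dkl{\pi_{\ths}}{\pi_\theta}$, as desired.

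The main obstacle to watch out for is exactly this variance-decay bound: a crude comparison of $\pi_{\theta_t}$ with $\pi_{\ths}$ through a pointwise density-ratio estimate would incur a factor of order $e^{O(B)}$, which is far too lossy; it is the differential inequality $|\psi'''|\leq 4B\psi''$ that keeps the final bound linear in $B$. All remaining ingredients (the exponential-family moment identities, the Taylor expansion, and the scalar integral) are routine.
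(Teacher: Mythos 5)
Your proof is correct, and it takes a genuinely different route from the paper's. The paper works directly with the closed-form identity $\Dkl{\pi_{\ths}}{\pi_\theta}=\log\En_{\pi_{\ths}}\brk*{\exp\prn*{\tri{\wb\phi,\theta-\ths}}}$ (with $\wb\phi\ldef\phi-\En_{\pi_{\ths}}[\phi]$), pulls a factor of $B$ out front via Jensen's inequality for $x\mapsto x^B$, and then applies two elementary scalar bounds: $e^x\geq 1+x+\tfrac{1}{10}x^2$ for $x\geq -4$, followed by $\log(1+x)\geq\tfrac34 x$ on $[0,\tfrac85]$. In contrast, you parameterize the segment $\theta_t=\ths+tv$, recognize $\Dkl$ as the Bregman divergence of the log-partition $A$, and establish the differential inequality $\abs{\psi'''}\leq 4B\psi''$ along the path, which you integrate through the Taylor remainder. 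Both arguments hinge on the same pointwise bound $\abs{\tri{\wb\phi,\theta-\ths}}\leq 4B$; the paper converts it into a one-shot moment inequality, while you convert it into a Gr\"{o}nwall-type control on the variance. Your version is more in the spirit of self-concordance arguments and actually gives a slightly sharper constant ($\tfrac{16B}{3}$ versus the paper's intermediate $\tfrac{40B}{3}$, both rounded up to $15B$); the paper's version is shorter and needs no path machinery. The one place to be explicit when writing yours up is the integral remainder identity $\psi(1)-\psi(0)-\psi'(0)=\int_0^1(1-t)\psi''(t)\,dt$ and the verification $\psi'''(t)=\En_{\pi_{\theta_t}}\brk*{(g-\En_{\pi_{\theta_t}}[g])^3}$, but these are both standard exponential-family facts and you state them correctly.
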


\begin{proof}[\pfref{lem:log-linear-Var-to-KL}]
  Denote $\wb\phi(y)\ldef \phi(y)-\En_{\pi_{\ths}}[\phi]$. By definition,
  \begin{align*}
    \Dkl{\pi_{\ths}}{\pi_\theta}
    =&~\log\En_{y\sim \pi_{\ths}}\brk*{\exp\prn*{ \tri{\wb\phi(y),\theta-\ths} }}
    \geq B\log\En_{y\sim \pi_{\ths}}\brk*{\exp\prn*{ \frac1B\tri{\wb\phi(y),\theta-\ths} }}.
  \end{align*}
  Note that for $x\geq -4$, we have $e^x\geq 1+x+\frac{1}{10}x^2$. Therefore, we have
  \begin{align*}
    \frac1B\Dkl{\pi_{\ths}}{\pi_\theta}
    \geq \log\prn*{ 1+ \frac1{10B^2}\En_{y\sim \pi_{\ths}}\tri{\wb\phi(y),\theta-\ths}^2 } 
    \geq \frac{1}{15B^2}\En_{y\sim \pi_{\ths}}\tri{\wb\phi(y),\theta-\ths}^2,
  \end{align*}
  where we use $\log(1+x)\geq \frac{3}{4}x$ for all $x\in[0,\frac85]$.
\end{proof}

\section{Proofs from \cref{sec:cross-entropy}}
\label{sec:proofs_cross-entropy}
\begin{proof}[\pfref{prop:autoregressive-kl}]
  Consider the setting where $d=1$, $\cX=\crl{0,1}$, $\cV=\crl{-1,1}$, the distribution $\mu$ is given by $\mu(1)=1-\mu(0)=\frac{1}{2n}$, and the feature map $\phi:\cX\times\cV^\star\to [-1,1]$ is given by $\phi(0,\cdot)=0$, and $\phi(1,y_{1:h})=y_h$. 

  In the following, we fix any algorithm $\Alg: (\cX\times\cY)^n\to \Delta(\Pi)$. Let $\bbP\sups{\pi_\theta,\Alg}$ be the probability distribution of $(\cD=\crl*{(x\ind{t},y\ind{t})}_{t\in[n]},\pihat)$ where $x\ind{t}\sim \mu, y\ind{t}\sim \pi_\theta(\cdot\mid x\ind{t})$ are sampled i.i.d. and $\pihat\sim \Alg(\cD)$.
  
  Note that under this construction, $\bbP\sups{\pi_\theta,\Alg}(x\sups{t}=0\;\forall t\in[T])\geq 1-n\mu(1)=\frac12$.
  Consider the event $E=\crl{x\sups{t}=0\;\forall t\in[T]}$. Then, for any $\ths\in[-1,1]$, event $A$, it holds that
  \begin{align*}
    \bbP\sups{\pi_{\ths},\Alg}\prn*{A \mid E}=\En\sups{\pi_{0},\Alg}\prn*{A \mid E},
  \end{align*}
  because for any $\theta\in\Theta$, the distribution $\pi_\theta(y_{1:H}=\cdot\mid 0)=\Ber\prn*{\frac12}^{\otimes H}$ is a product of $H$ Bernoulli distributions and does not depend on $\theta$.
  Furthermore, for any $\theta\in[-1,1]$,
  \begin{align*}
    \Dkl{\pi_{\ths}}{\pi_\theta}=&~ \mu(1)\cdot \Dkl{\pi_{\ths}(y_{1:H}=\cdot\mid x=1)}{\pi_\theta(y_{1:H}=\cdot\mid x=1)} \\
    =&~ H\mu(1)\cdot \Dkl{\Ber\prn*{\frac{e^{\ths}}{e^{\ths}+e^{-\ths}}}}{\Ber\prn*{\frac{e^{\theta}}{e^{\theta}+e^{-\theta}}}},
  \end{align*}
  and hence $\theta\mapsto \Dkl{\pi_1}{\pi_\theta} + \Dkl{\pi_{-1}}{\pi}$ is minimized at $\theta=0$, i.e., for any $\pihat\in\Pi$,
  \begin{align*}
    \Dkl{\pi_{1}}{\pihat}+\Dkl{\pi_{-1}}{\pihat} \geq \frac{H}{2n}\cdot 2\Dkl{\Ber\prn*{\frac{e}{e+e^{-1}}}}{\Ber\prn*{\frac12}}\geq \frac{H}{2n}.
  \end{align*}
  Therefore, consider the event $A_\theta\ldef \crl*{ \Dkl{\pi_{\theta}}{\pihat}\geq \frac{H}{4n} }$, and we have shown that $A_{1}^c\subseteq A_{-1}$. Hence, we can lower bound
  \begin{align*}
    \bbP\sups{\pi_1,\Alg}(A_1)+\bbP\sups{\pi_{-1},\Alg}(A_{-1})
    \geq&~ \bbP\sups{\pi_1,\Alg}(E)\bbP\sups{\pi_1,\Alg}(A_1\mid E)+\bbP\sups{\pi_{-1},\Alg}(E)\bbP\sups{\pi_{-1},\Alg}(A_{-1}\mid E) \\
    \geq &~ \frac12 \En\sups{\pi_0,\Alg}\brk*{A_1\mid E}+ \frac12 \En\sups{\pi_0,\Alg}\brk*{A_{-1}\mid E} 
    \geq \frac12.
  \end{align*}
  This gives $\max_{\ths\in\crl{-1,1}}\bbP\sups{\pi_{\ths},\Alg}\prn*{\Dkl{\pi_{\ths}}{\pihat}\geq \frac{H}{4n}}\geq \frac14$, and the desired result follows immediately.
\end{proof}

As a remark, we note that the construction above can be modified so that the variance $\sigs^2$ (defined in \cref{sec:mle-examples}) can be bounded as $\sigs^2\approxleq \frac{He^{-2B}}{n}$. In particular, as long as $B\approxgeq \log H$, it holds that $\sigs\leq 1$, implying that KL can converge slowly even when the ``inherent variance'' $\sigs$ is small.

\section{Proofs from \cref{sec:mle}}
\label{sec:proofs_mle}

\subsection{Proof of \cref{thm:mle-main} (Coverage for MLE)}\label{appdx:pf-mle-main}

\begin{thmmod}{thm:mle-main}{$'$}[General version of \cref{thm:mle-main}]
  \label{thm:mle-main-general}
 Let $N\geq 8$ be given. With probability at least $1-\delta$, any
 approximate maximum likelihood estimator $\pihat$ with $\Lhat(\pihat)\geq \max_{\pi\in\Pi} \Lhat(\pi)-n \epsapp$ satisfies
\begin{align}
 \Pcov[N](\pihat)
 \approxleq \frac{\log
 \covinf\left(\Pi, c\log{}N\right)+\log(\delta^{-1})}{n}+\frac{1}{\log
 N}\left( \inf_{\veps>0}\crl*{\frac{\log\covinf(\Pi,\veps)}{n} + \veps}+\epsapp \right),
\end{align}
where $c>0$ is an absolute constant.
\end{thmmod}

In the following, for a fixed threshold $C\geq \log 4$, we define the clipped log loss as
\begin{align}
  \Lpos(\pi)\ldef&~ \sum_{i=1}^n \max\crl*{\log\frac{\pi(\yx[i])}{\pistar(\yx[i])}, -C}, \\
  \Lneg(\pi)\ldef&~ \sum_{i=1}^n \max\crl*{0, \log\frac{\pistar(\yx[i])}{\pi(\yx[i])}-C}. 
\end{align}
Note that $\Lhat(\pi)-\Lhat(\pistar)=\Lpos(\pi)-\Lneg(\pi)$. Furthermore, since $\pistar\in\Pi$, the approximate maximum likelihood estimator satisfies $\Lhat(\pihat)\geq \Lhat(\pistar)-n \epsapp$, and hence
\begin{align*}
  \Lneg(\pihat)\leq \Lpos(\pihat)+n\epsapp.
\end{align*}

In the following, we show that $\Lpos(\pi)$ can be bounded by a one-sided uniform convergence argument, and show that $\Lneg(\pi)$ upper bounds the coverage profile $\Pcov(\pi)$ for any $\pi\in\Pi$ and $\log N>C$.
\begin{proposition}\label{prop:MLE-Lpos-upper}
Suppose that $C\geq \log 4$. Then, \whp, it holds that for any $\pi\in\Pi$,
\begin{align*}
  \Lpos(\pi)\leq \log(1/\delta)+2\inf_{\eps\geq 0}\crl*{\log\covinf(\Pi,\eps)+n\eps}.
\end{align*}
\end{proposition}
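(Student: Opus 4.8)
The plan is to view $\Lpos$ as an empirical process indexed by $\Pi$ and apply the uniform exponential-moment inequality of \cref{lem:Freedman-uniform}. Concretely, we write $\Lpos(\pi)=\sum_{i=1}^n f_\pi(x\sups{i},y\sups{i})$, where $f_\pi(x,y)\ldef \max\crl{\log\tfrac{\pi(y\mid x)}{\pistar(y\mid x)},\,-C}$, and instantiate \cref{lem:Freedman-uniform} with $\cZ=\cX\times\cY$, $\mu=\Law(x,y)$ for $x\sim\cdist$ and $y\sim\pistar(\cdot\mid x)$, and the class $\cF\ldef\crl{f_\pi:\pi\in\Pi}$. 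This reduces the statement to two ingredients: (i) a per-function bound on the exponential moment $\En_{\mu}[\exp(f_\pi)]$, and (ii) a comparison between the sup-norm covering number $\covinfF{\cF,\eps}$ and $\covinf(\Pi,\eps)$.

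For (i), note that $\exp(f_\pi(x,y))=\max\crl{\tfrac{\pi(y\mid x)}{\pistar(y\mid x)},\,e^{-C}}$, hence
\[
\En_{\mu}\brk*{\exp(f_\pi)}
= \En_{x\sim\cdist}\,\En_{y\sim\pistar(\cdot\mid x)}\brk*{\max\crl*{\tfrac{\pi(y\mid x)}{\pistar(y\mid x)},\,e^{-C}}}.
\]
Bounding $\max\crl{a,e^{-C}}\le a+e^{-C}\,\indic\crl{a\le e^{-C}}$ and using the elementary identity $\En_{y\sim\pistar(\cdot\mid x)}\brk[\big]{\tfrac{\pi(y\mid x)}{\pistar(y\mid x)}}\le 1$, the right-hand side is at most $1$ plus the ``missing-mass'' term $e^{-C}\,\bbP_{y\sim\pistar(\cdot\mid x)}\brk[\big]{\tfrac{\pistar(y\mid x)}{\pi(y\mid x)}>e^{C}}$; the hypothesis $C\ge\log 4$ is exactly what is used to keep $\En_{\mu}[\exp(f_\pi)]\le 1$, i.e.\ $\log\En_{\mu}[\exp(f_\pi)]\le 0$. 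This is a \emph{one-sided} tail bound for the logarithmic loss, in the spirit of \citet{Sara00,zhang2006from}: the clipping at $-C$ is what prevents $\En_{\mu}[\exp(f_\pi)]$ from being infinite, and its one-sidedness --- the fact that we never pay for how large the density ratio $\pi/\pistar$ can grow --- is precisely what lets the resulting bound avoid any dependence on the sequence length $H$ or on $\Wmax$.

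For (ii), since $t\mapsto\max\crl{t,-C}$ is $1$-Lipschitz, any $\pi,\pi'$ with $\sup_{x,y}\abs{\log\pi(y\mid x)-\log\pi'(y\mid x)}\le\eps$ satisfy $\sup_{x,y}\abs{f_\pi(x,y)-f_{\pi'}(x,y)}\le\eps$; thus an $\eps$-cover of $\Pi$ in the sense of \cref{def:covering} is an $\eps$-cover of $\cF$ in sup-norm, and $\covinfF{\cF,\eps}\le\covinf(\Pi,\eps)$. Plugging (i) and (ii) into \cref{lem:Freedman-uniform} gives, \whp, for every $\pi\in\Pi$,
\[
\Lpos(\pi)\;\le\; 0+\log(1/\delta)+\inf_{\eps\ge 0}\crl*{\log\covinf(\Pi,\eps)+2n\eps}\;\le\;\log(1/\delta)+2\inf_{\eps\ge 0}\crl*{\log\covinf(\Pi,\eps)+n\eps},
\]
where the last inequality uses $\log\covinf(\Pi,\eps)\ge 0$; this is the claim.

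The step I expect to be the main obstacle is (i): one must check carefully that clipping at the fixed threshold $C\ge\log 4$ genuinely controls the exponential moment, i.e.\ that the clipped log-likelihood ratio inherits the classical one-sided concentration enjoyed by the unclipped log-likelihood ratio in spite of the mass $\pi$ may fail to place relative to $\pistar$. Ingredient (ii) and the final arithmetic are routine.
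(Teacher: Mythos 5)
Your high-level structure---\cref{lem:Freedman-uniform} plus a per-function exponential-moment bound plus a covering comparison---is the right one, and you correctly flag (i) as the delicate step. But your verification of (i) is in fact false, and the error is not a technicality but the crux of the argument.

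You define $f_\pi(x,y)=\max\{\log\tfrac{\pi(y\mid x)}{\pistar(y\mid x)},-C\}$ (no prefactor), so $\exp(f_\pi)=\max\{\tfrac{\pi}{\pistar},e^{-C}\}$, and your decomposition yields
\[
\En_{\mu}[\exp(f_\pi)]\;\le\;\underbrace{\En_{\pistar}\brk[\Big]{\tfrac{\pi}{\pistar}}}_{\le 1}+e^{-C}\,\bbP_{\pistar}\brk[\Big]{\tfrac{\pi}{\pistar}\le e^{-C}},
\]
and you then claim that $C\ge\log 4$ forces this to be $\le 1$. That cannot be: you have exhibited a bound of the form $1+(\text{nonnegative})$, and no choice of $C$ makes it $\le 1$ unless the missing-mass term is exactly zero. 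Concretely, take $\cX$ a singleton, $\pistar=\Unif(\{0,1\})$ and $\pi=\dirac_0$; then $\En_{\pistar}[\max\{\tfrac{\pi}{\pistar},e^{-C}\}]=1+\tfrac12 e^{-C}>1$ for every finite $C$. So with your unscaled $f_\pi$ the needed MGF inequality $\log\En_\mu[\exp(f_\pi)]\le 0$ is simply false, and plugging it into \cref{lem:Freedman-uniform} would leave a residual term of order $n e^{-C}$ that the proposition does not permit (it allows $C$ as small as $\log 4$).

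The paper's proof avoids this by putting a factor $\tfrac12$ in front: it takes $f_\pi=\tfrac12\max\{\log\tfrac{\pi}{\pistar},-C\}$, so $\exp(f_\pi)=\sqrt{\max\{\tfrac{\pi}{\pistar},e^{-C}\}}$. The corresponding moment lemma (\cref{lem:MLE-clipped-log-MGF}) is then genuinely true: splitting on the event $E=\{\tfrac{\pi}{\pistar}\ge e^{-C}\}$ and applying Cauchy--Schwarz gives $\En_{\pistar}[\sqrt{\tfrac{\pi}{\pistar}}\,\indic_E]\le\sqrt{\bbP_{\pistar}(E)}$, hence the moment is at most $\sqrt{p}+\sqrt{u}\,(1-p)$ with $p=\bbP_{\pistar}(E)$, $u=e^{-C}$. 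This is $\le 1$ precisely when $\sqrt{u}\le\tfrac12$, i.e.\ $C\ge\log 4$---that is where the hypothesis actually enters. The square root is essential: it converts the ``good-set'' contribution from something that saturates $1$ to something that saturates $\sqrt{p}$, which leaves room for the missing-mass term. Your linear bound $\En_{\pistar}[\tfrac{\pi}{\pistar}\indic_E]\le 1$ leaves no such room. The $\tfrac12$ also propagates through the covering comparison ($\covinfF{\cF,\eps}\le\covinf(\Pi,2\eps)$) and produces the factor of $2$ in the final display. Your ingredient (ii) and the final arithmetic are fine, but (i) as written does not hold and the proof does not go through without the rescaling.
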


\begin{proposition}\label{prop:MLE-Lneg-to-PCov}
Fix any $\alpha\in(0,\frac{\log N -C}{2})$. Then, \whp, it holds that
\begin{align*}
  \Pcov(\pi)\leq \frac{2}{\log N -C-2\alpha} \cdot \Lneg(\pi)+ \frac{16\log\prn{2\covinf(\Pi,\alpha)/\delta}}{n}.
\end{align*}
\end{proposition}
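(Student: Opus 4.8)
The plan is to show that any model $\pi$ with large coverage profile $\Pcov(\pi)=\bbP_{\pistar}[\pistar(y\mid x)/\pi(y\mid x)\ge N]$ must incur a proportionally large clipped negative log-likelihood $\Lneg(\pi)=\sum_{i=1}^n\max\crl*{0,\log\tfrac{\pistar(y\ind{i}\mid x\ind{i})}{\pi(y\ind{i}\mid x\ind{i})}-C}$ on the sample, up to an additive slack from a union bound over an $\alpha$-cover of $\Pi$. The driving fact is elementary: if a draw $(x,y)$ satisfies $\pistar(y\mid x)/\pi(y\mid x)\ge M$ with $M\ge e^{C}$, then its contribution to $\Lneg(\pi)$ is at least $\log M-C$, so summing over the sample points on which $\pi$ fails to cover $\pistar$ turns an empirical ``uncovered count'' into a lower bound on $\Lneg(\pi)$. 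The only delicacy is that, to make the count-versus-probability comparison uniform over $\pi\in\Pi$, we route through a cover, losing a factor $e^{\alpha}$ on the density-ratio threshold on the population side and a further factor $e^{\alpha}$ on the empirical side; this is exactly why $2\alpha$ enters the denominator and why we require $\alpha<\tfrac{\log N-C}{2}$ (ensuring $\log N-C-2\alpha>0$ and $Ne^{-2\alpha}>e^{C}$).

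I would carry this out in three steps. \emph{(i) Cover.} Fix an $\alpha$-cover $\Pi'$ of $\Pi$ of size $\covinf(\Pi,\alpha)$ as in \cref{def:covering}; then each $\pi\in\Pi$ admits $\pi'\in\Pi'$ with $\sup_{x,y}\abs{\log\pi(y\mid x)-\log\pi'(y\mid x)}\le\alpha$, equivalently $e^{-\alpha}\pi'\le\pi\le e^{\alpha}\pi'$ pointwise. \emph{(ii) Concentrate on the cover.} For each fixed $\pi'\in\Pi'$ apply the multiplicative deviation inequality — the second bound of \cref{lem:mult_freedman}, with range $R=1$ — to the i.i.d.\ indicators $Z\ind{i}=\indic\crl*{\pistar(y\ind{i}\mid x\ind{i})/\pi'(y\ind{i}\mid x\ind{i})\ge Ne^{-\alpha}}$, whose common mean is $\bbP_{\pistar}[\pistar/\pi'\ge Ne^{-\alpha}]$; a union bound over $\Pi'$ then gives, \whp and uniformly over $\pi'\in\Pi'$, $\bbP_{\pistar}[\pistar/\pi'\ge Ne^{-\alpha}]\le\tfrac{2}{n}\#\crl*{i:\pistar(y\ind{i}\mid x\ind{i})/\pi'(y\ind{i}\mid x\ind{i})\ge Ne^{-\alpha}}+\tfrac{16\log(2\covinf(\Pi,\alpha)/\delta)}{n}$. \emph{(iii) Transfer back and compare to $\Lneg$.} On the population side, $\pistar/\pi\ge N$ forces $\pistar/\pi'\ge Ne^{-\alpha}$, so $\Pcov(\pi)\le\bbP_{\pistar}[\pistar/\pi'\ge Ne^{-\alpha}]$; on the empirical side, $\pistar(y\ind{i}\mid x\ind{i})/\pi'(y\ind{i}\mid x\ind{i})\ge Ne^{-\alpha}$ forces $\pistar(y\ind{i}\mid x\ind{i})/\pi(y\ind{i}\mid x\ind{i})\ge Ne^{-2\alpha}$, so the count in (ii) is at most $\#\crl*{i:\pistar(y\ind{i}\mid x\ind{i})/\pi(y\ind{i}\mid x\ind{i})\ge Ne^{-2\alpha}}$; and since $Ne^{-2\alpha}>e^{C}$, each such index contributes at least $\log N-2\alpha-C>0$ to $\Lneg(\pi)$ (the remaining terms of $\Lneg$ being nonnegative), so that count is at most $\Lneg(\pi)/(\log N-C-2\alpha)$. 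Chaining these bounds yields $\Pcov(\pi)\le\tfrac{2}{\,n(\log N-C-2\alpha)\,}\Lneg(\pi)+\tfrac{16\log(2\covinf(\Pi,\alpha)/\delta)}{n}$, which is the proposition.

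I do not expect a genuine obstacle here — the argument is a standard pairing of covering with multiplicative concentration. The step that takes the most care is the threshold bookkeeping in (iii): keeping straight that one factor $e^{-\alpha}$ is lost pushing the population event onto the cover element and a second $e^{-\alpha}$ is lost pulling the empirical event back, which is precisely what pins down the denominator $\log N-C-2\alpha$ and the admissible range of $\alpha$. It is also worth recording why the \emph{multiplicative} form of Freedman's inequality (rather than an additive Hoeffding bound) is used: it keeps the additive slack at the level $\log\covinf(\Pi,\alpha)/n$ with the covering number evaluated only at the coarse resolution $\alpha$, which — under the eventual choice $\alpha\asymp\log N$ — is exactly the coarse-grained term $\ratecoarse$ of \cref{thm:mle-main}.
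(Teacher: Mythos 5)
Your argument is correct and is essentially the paper's own: take an $\alpha$-cover of $\Pi$, apply the multiplicative Freedman bound (\cref{lem:mult_freedman}) to the uncovered-event indicator for each cover element with a union bound, transfer the density-ratio threshold through the cover in both directions at cost $e^{-\alpha}$ each (whence the $2\alpha$ in the denominator), and observe that each uncovered sample contributes at least $\log N - C - 2\alpha$ to $\Lneg(\pi)$; the paper merely packages the covering-plus-concentration step into the reusable \cref{lem:Cmp-to-Pstar} and performs a rescaling $N\leftarrow e^{-2\alpha}N$ at the end, whereas you set the thresholds to land at $N$ directly, a cosmetic reordering. One small remark: your chained bound (correctly) carries a factor $1/n$ on the $\Lneg(\pi)$ term, which the displayed proposition omits --- evidently a typo, since the paper's own invocation of this bound in the proof of \cref{thm:mle-main-general} requires it.
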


The proof of \cref{thm:mle-main} and \cref{thm:mle-main-general} is completed by combining the propositions above and setting $\alpha=\frac14\log N$. In what follows, we prove the propositions.
\qed

\begin{proof}[\pfref{prop:MLE-Lpos-upper}]
This is a direct corollary of \cref{lem:Freedman-uniform}. For each $\pi\in\Pi$, we let $f_\pi(x,y)\ldef \frac12\max\crl*{\log\frac{\pi(\yx)}{\pistar(\yx)}, -C}$ and consider the function class $\cF=\crl*{f_\pi: \pi\in\Pi}$. Then, $\covinfF{\cF,\eps}\leq \covinf(\Pi,2\eps)$ for any $\eps\geq 0$. Applying \cref{lem:Freedman-uniform} with \cref{lem:MLE-clipped-log-MGF} (stated and proved below) gives the desired upper bound.
\end{proof}

\begin{lemma}\label{lem:MLE-clipped-log-MGF}
As long as $C\geq \log 4$, it holds that
\begin{align}
  \En_{(x,y)\sim \pistar}\exp\prn*{ \frac12\max\crl*{\log\frac{\pi(\yx)}{\pistar(\yx)}, -C} }\leq 1.
\end{align}
\end{lemma}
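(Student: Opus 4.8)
The plan is to pass to the density ratio $Z := \frac{\pi(y\mid x)}{\pistar(y\mid x)}$, which is well-defined $\pistar$-almost surely, and then use the single fact that $\En_{(x,y)\sim\pistar}[Z] = \En_{x\sim\mu}\bigl[\sum_{y}\pi(y\mid x)\bigr]\le 1$ (with equality when $\pi$ is a genuine conditional distribution). Since $t\mapsto e^{t/2}$ is increasing, one has the pointwise identity $\exp\bigl(\tfrac12\max\crl{\log Z,\,-C}\bigr) = \max\crl{\sqrt{Z},\,e^{-C/2}}$, so it suffices to show $\En_{\pistar}\bigl[\max\crl{\sqrt{Z},\,e^{-C/2}}\bigr]\le 1$ whenever $C\ge\log 4$, i.e.\ whenever $e^{-C/2}\le\tfrac12$.

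The key move is to split according to the clipping threshold. Since $\sqrt{Z}\ge e^{-C/2}$ exactly when $Z\ge e^{-C}$, we may write $\max\crl{\sqrt{Z},\,e^{-C/2}} = \sqrt{Z}\,\indic\crl{Z\ge e^{-C}} + e^{-C/2}\,\indic\crl{Z< e^{-C}}$. Setting $p := \bbP_{\pistar}\bigl[Z\ge e^{-C}\bigr]$, I would bound the first term by Cauchy--Schwarz (not by Jensen applied to $\sqrt{\cdot}$, which is too lossy here): $\En_{\pistar}\bigl[\sqrt{Z}\,\indic\crl{Z\ge e^{-C}}\bigr]\le\sqrt{\En_{\pistar}[Z]}\cdot\sqrt{p}\le\sqrt{p}$. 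The second term is precisely $e^{-C/2}(1-p)$, so altogether $\En_{\pistar}\bigl[\max\crl{\sqrt{Z},\,e^{-C/2}}\bigr]\le g(p)$, where $g(p):=\sqrt{p}+e^{-C/2}(1-p)$.

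It then remains to verify $g(p)\le 1$ for all $p\in[0,1]$ under the assumption $e^{-C/2}\le\tfrac12$. Here I would simply note that $g(1)=1$ and $g'(p)=\tfrac{1}{2\sqrt{p}}-e^{-C/2}\ge\tfrac12-\tfrac12=0$ on $(0,1]$ (using $p\le 1$ together with $e^{-C/2}\le\tfrac12$), so $g$ is nondecreasing on $[0,1]$ and hence $g(p)\le g(1)=1$, which closes the argument.

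I do not expect a genuine obstacle in this lemma; the only subtle point is the decomposition at the threshold $e^{-C}$ combined with Cauchy--Schwarz, which is exactly what saves the constant. In contrast, the naive bound $\max\crl{\sqrt{Z},\,e^{-C/2}}\le\sqrt{Z}+e^{-C/2}$ would give only $\En_{\pistar}[\cdots]\le\sqrt{\En_{\pistar}[Z]}+e^{-C/2}\le 1+e^{-C/2}$, which is never $\le 1$ and hence insufficient; this is precisely the additive slack that the $C\ge\log 4$ threshold is designed to eliminate.
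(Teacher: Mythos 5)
Your proof is correct and follows essentially the same route as the paper's: split at the threshold $e^{-C}$, apply Cauchy--Schwarz to the unclipped part to get $\sqrt{p}$, and verify $\sqrt{p}+e^{-C/2}(1-p)\le 1$ using $e^{-C/2}\le\tfrac12$. The only cosmetic difference is that you apply Cauchy--Schwarz once over the joint measure using $\En_{\pistar}[\pi/\pistar]\le 1$, whereas the paper applies it conditionally on $x$ and then uses Jensen to pull $\sqrt{\cdot}$ outside the $x$-expectation; both yield the same $\sqrt{\bbP_{\pistar}(E)}$ bound, and your final monotonicity argument for $g$ is an equally valid substitute for the paper's algebraic identity $1-p\le 2(1-\sqrt{p})$.
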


\begin{proof}[\pfref{lem:MLE-clipped-log-MGF}]
We denote $u=e^{-C}$ and $E\ldef \crl*{(x,y): \frac{\pi(\yx)}{\pistar(\yx)}\geq u}$. Then it holds that
\begin{align*}
  &~ \En_{(x,y)\sim \pistar}\exp\prn*{ \frac12\max\crl*{\log\frac{\pi(\yx)}{\pistar(\yx)}, -C} } \\
  =&~ \En_{(x,y)\sim\pistar}\brk*{ \sqrt{\frac{\pi(\yx)}{\pistar(\yx)}}\indic\crl{(x,y)\in E} + \sqrt{u}~\indic\crl{(x,y)\not\in E} } \\
  =&~ \En_{x\sim \pistar}\brk*{ \sum_{y: (x,y)\in E} \sqrt{\pi(\yx)\pistar(\yx)} } + \sqrt{u}~\bbP_{\pistar}(E^c).
\end{align*}
For $x\in\cX$, denote $E_x\ldef \crl{y: (x,y)\in E}$. By the Cauchy-Schwarz inequality, we have
\begin{align*}
  \sum_{y: (x,y)\in E} \sqrt{\pi(\yx)\pistar(\yx)}\leq \sqrt{\sum_{y\in E_x} \pi(\yx) \cdot \sum_{y\in E_x} \pistar(\yx)} \leq \sqrt{\bbP_{y\sim \pistar(\cdot\mid x)}(E_x)}.
\end{align*}
Therefore, as long as $u\leq \frac14$ (or equivalently, $C\geq \log 4$), it holds that
\begin{align*}
  \En_{(x,y)\sim \pistar}\exp\prn*{ \frac12\max\crl*{\log\frac{\pi(\yx)}{\pistar(\yx)}, -C} }
  \leq \sqrt{\bbP_{\pistar}(E)}+\frac12\bbP_{\pistar}(E^c) \leq 1,
\end{align*}
where we use $1-p=(1+\sqrt{p})(1-\sqrt{p})\leq 2(1-\sqrt{p})$ for any $p\in[0,1]$.
\end{proof}

\begin{proof}[\pfref{prop:MLE-Lneg-to-PCov}]
Fix any $N\geq 1, \alpha\geq 0$.
By definition, for any $\pi\in\Pi$, 
\begin{align*}
  \Lneg(\pi)=&~ \sum_{i=1}^n \max\crl*{0, \log\frac{\pistar(\yx[i])}{\pi(\yx[i])}-C} \\
  \geq&~ (\log N-C) \abs*{\crl*{i\in[n]: \log\frac{\pistar(\yx[i])}{\pi(\yx[i])}\geq \log N}} \\
  = &~ n(\log N-C) \cdot \Dcovhat[N]{\pistar}{\pi},
\end{align*}
where we recall that (see \cref{eq:pairwise-coverage})
\begin{align*}
  \Dcovhat[N]{\pistar}{\pi}= \frac1n\abs*{\crl*{t\in[n]: \frac{\pistar(y^t\mid x^t)}{\pi(y^t\mid x^t)}\geq N}}.
\end{align*}
Then, by \cref{lem:Cmp-to-Pstar} (stated and proved below), it holds that \whp, for any $\pi\in\Pi$,
\[\Dcovhat[N]{\pistar}{\pi}\geq \frac12\Dcov[e^{2\alpha}N]{\pistar}{\pi}-\frac{8\log\prn{2\covinf(\Pi,\alpha)/\delta}}{n}.\]
Rescaling $N\leftarrow e^{-2\alpha} N$ and reorganizing completes the proof.
\end{proof}

\begin{lemma}\label{lem:Cmp-to-Pstar}
For any model $\pi,\pi'$, we consider the quantities
\begin{align*}
  \Dcovhat{\pi'}{\pi}= \frac1n\abs*{\crl*{t\in[n]: \frac{\pi'(y^t\mid x^t)}{\pi(y^t\mid x^t)}\geq N}}, \qquad
  \Dcovtri{\pistar}{\pi'}{\pi}= \bbP_{\pistar}\prn*{ \frac{\pi'(y\mid x)}{\pi(y\mid x)}\geq M }.
\end{align*}
Fix $\alpha\geq 0$ and model $\pibar$. With probability at least $1-\delta$, for any $\pi\in\Pi$, it holds that
\[\Dcovhat{\pibar}{\pi}\geq \frac12\Dcovtri[e^{2\alpha}N]{\pistar}{\pibar}{\pi}-\frac{8\log\prn{2\covinf(\Pi,\alpha)/\delta}}{n}.\]
Similarly, with probability at least $1-\delta$, for any $\pi\in\Pi$, it holds that
\[\Dcovhat{\pi}{\pibar}\leq 2~\Dcovtri[e^{-2\alpha}N]{\pistar}{\pi}{\pibar}+\frac{8\log\prn{2\covinf(\Pi,\alpha)/\delta}}{n}.\]
\end{lemma}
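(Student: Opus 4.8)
The plan is to reduce both displayed inequalities to a single Bernoulli concentration estimate for a \emph{fixed} model and then pass to all of $\Pi$ via an $\alpha$-net in the $\ell_\infty$ log-density metric. Since both bounds follow from the same template, I would first establish a generic pointwise statement and then specialize, taking care only with the direction in which thresholds shift under the covering approximation.

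First, fix \emph{any} pair of models $\pi,\pi'$ and write $\Dcovhat{\pi'}{\pi}=\frac1n\sum_{t=1}^n Z^t$ with $Z^t\ldef\indic\crl*{\pi'(y^t\mid x^t)/\pi(y^t\mid x^t)\ge N}\in\crl{0,1}$. Since $(x^t,y^t)\iidsim\pistar$, each $Z^t$ has mean $\Dcovtri[N]{\pistar}{\pi'}{\pi}$, so applying \cref{lem:mult_freedman} with $R=1$ to the natural filtration gives, with probability at least $1-\delta'$, both $\Dcovhat{\pi'}{\pi}\ge\tfrac12\Dcovtri[N]{\pistar}{\pi'}{\pi}-\tfrac{4\log(2/\delta')}{n}$ and $\Dcovhat{\pi'}{\pi}\le 2\Dcovtri[N]{\pistar}{\pi'}{\pi}+\tfrac{4\log(2/\delta')}{n}$ (the latter using $\tfrac32\le2$). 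Next, let $\Pi'$ be a minimal $\alpha$-cover of $\Pi$ as in \cref{def:covering}, so $\abs{\Pi'}=\covinf(\Pi,\alpha)$ and every $\pi\in\Pi$ has some $\pitil\in\Pi'$ with $\pi(y\mid x)/\pitil(y\mid x)\in[e^{-\alpha},e^{\alpha}]$ for all $x,y$. I would instantiate the pointwise estimate for each $\pitil\in\Pi'$ (holding $\pibar$ fixed), with $\delta'=\delta/\covinf(\Pi,\alpha)$, and union-bound, so that both estimates hold simultaneously for every $\pitil\in\Pi'$ with probability at least $1-\delta$.

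It then remains to transfer from $\Pi'$ back to $\Pi$ by tracking how the density-ratio thresholds move under the $e^{\pm\alpha}$ approximation. For the first claim, the pointwise bound $\pitil/\pi\in[e^{-\alpha},e^{\alpha}]$ yields the event inclusions $\crl{\pibar/\pitil\ge Ne^{\alpha}}\subseteq\crl{\pibar/\pi\ge N}$, hence $\Dcovhat[Ne^{\alpha}]{\pibar}{\pitil}\le\Dcovhat[N]{\pibar}{\pi}$, and $\crl{\pibar/\pi\ge Ne^{2\alpha}}\subseteq\crl{\pibar/\pitil\ge Ne^{\alpha}}$, hence $\Dcovtri[Ne^{2\alpha}]{\pistar}{\pibar}{\pi}\le\Dcovtri[Ne^{\alpha}]{\pistar}{\pibar}{\pitil}$. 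Chaining these with the lower concentration bound for $\pitil$ at threshold $Ne^{\alpha}$ gives $\Dcovhat[N]{\pibar}{\pi}\ge\tfrac12\Dcovtri[Ne^{2\alpha}]{\pistar}{\pibar}{\pi}-\tfrac{4\log(2\covinf(\Pi,\alpha)/\delta)}{n}$, which is stronger than the stated bound (with a factor $2$ to spare in the last term). The second claim is entirely symmetric: now $Z^t$ is an indicator for $\pi/\pibar$, the relevant inclusions are $\crl{\pi/\pibar\ge N}\subseteq\crl{\pitil/\pibar\ge Ne^{-\alpha}}$ and $\crl{\pitil/\pibar\ge Ne^{-\alpha}}\subseteq\crl{\pi/\pibar\ge Ne^{-2\alpha}}$, and chaining with the upper concentration bound (constant $2$) for $\pitil$ at threshold $Ne^{-\alpha}$ produces $\Dcovhat[N]{\pi}{\pibar}\le 2\Dcovtri[Ne^{-2\alpha}]{\pistar}{\pi}{\pibar}+\tfrac{8\log(2\covinf(\Pi,\alpha)/\delta)}{n}$, as required.

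The only real obstacle is bookkeeping: one must ensure each of the two approximation steps (empirical $\to$ net element, net element $\to$ population) shifts the threshold by $e^{\alpha}$ in the \emph{correct} direction, so that the chain of inequalities closes and the total shift is exactly $e^{2\alpha}$ (resp.\ $e^{-2\alpha}$), matching the statement. Everything else is off-the-shelf: the concentration step is \cref{lem:mult_freedman} and the net is supplied directly by \cref{def:covering}; and since the argument only ever uses pointwise density-ratio bounds, it is immaterial whether the net elements lie in $\Pi$.
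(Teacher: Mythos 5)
Your proposal is correct and matches the paper's own proof in all essentials: both apply \cref{lem:mult_freedman} to the empirical coverage of each net element at a shifted threshold, union-bound over an $\alpha$-cover of $\Pi$ with respect to the $\ell_\infty$ log-density metric, and then transfer to arbitrary $\pi\in\Pi$ via the same two event inclusions, accumulating a total threshold shift of $e^{\pm2\alpha}$. The only differences are cosmetic: you derive a slightly sharper constant ($4$ vs.\ $8$) for the first inequality, and you explicitly note the net elements need not lie in $\Pi$ (which \cref{def:covering} indeed permits, although the paper writes $\Pi'\subseteq\Pi$).
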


\begin{proof}[\pfref{lem:Cmp-to-Pstar}]
We only prove the first inequality. Let $\Pi'\subseteq \Pi$ be an $\alpha$-covering of $\Pi$ with $|\Pi'|=\covinf(\Pi,\alpha)$.
Then, by Freedman's inequality (\cref{lem:mult_freedman}) and union bound, it holds that with probability at least $1-\delta$, for any $\pi'\in\Pi'$,
\[  \Dcovhat[e^{\alpha}N]{\pibar}{\pi'}\geq \frac12\Dcovtri[e^{\alpha}N]{\pistar}{\pibar}{\pi'}-\epstat,\]
where we denote $\epstat=\frac{8\log(2\abs{\Pi'}/\delta)}{n}$.
Then, note that for any $\pi\in\Pi$, there exists $\pi'\in\Pi'$ such that $|\log \pi(y\mid x)-\log\pi'(y\mid x)|\leq \alpha$ for $\forall x,y$, we know
\[\crl*{t\in[n]: \frac{\pibar(y^t\mid x^t)}{\pi'(y^t\mid x^t)}\geq e^{\alpha}N} \subseteq \crl*{t\in[n]: \frac{\pibar(y^t\mid x^t)}{\pi(y^t\mid x^t)}\geq N}\]
and hence $\Dcovhat[e^{\alpha}N]{\pibar}{\pi'}\leq \Dcovhat[N]{\pibar}{\pi}$. Similarly, $\Dcovtri[e^{\alpha}N]{\pistar}{\pibar}{\pi'}\geq \Dcovtri[e^{2\alpha}N]{\pistar}{\pibar}{\pi}$. Hence, under the above event, it holds that %
\begin{align*}
  \Dcovhat[N]{\pibar}{\pi}\geq&~ \Dcovhat[e^{\alpha}N]{\pibar}{\pi'} 
  \geq \frac12\Dcovtri[e^{\alpha}N]{\pistar}{\pibar}{\pi'}-\epstat \\
  \geq&~ \frac12\Dcovtri[e^{2\alpha}N]{\pistar}{\pibar}{\pi}-\epstat.
\end{align*}
Since $\pi\in\Pi$ is arbitrary, the proof is hence completed.
\end{proof}

\subsection{\pfref{thm:mle-convex} (Coverage for MLE with Convex Classes)}\label{appdx:pf-mle-convex}

Let $\alpha\geq 0$, $N'\geq 1$, $N\geq 2e^{2\alpha}N'$ be fixed. By definition and concavity of $\theta\mapsto \pi_\theta(\yx)$, we know $\ths$ is an optimal solution of the following concave problem
  \begin{align*}
    \ths\in\argmax_{\theta\in\Theta} \En_{(x,y)\sim \pistar}\brk*{\log \pi_\theta(\yx)}.
  \end{align*}
  Hence, the optimality of $\ths$ implies that
  \[\tri*{\theta-\ths, - \En_{\pistar}\brk*{\nabla \log \pi_{\ths}(\yx)}}\geq 0, \qquad \forall \theta\in\Theta.\]
  Consider the function $F(\theta)=\En_{\pistar}\brk*{\frac{\pi_\theta(\yx)}{\pi_{\ths}(\yx)}}-1$, which is also concave by \cref{asmp:convex}. For any $\theta\in\Theta$, 
   \[\tri*{\theta-\ths, -\nabla F(\ths)}=\tri*{\theta-\hth, -\En_{\pistar}\brk*{\frac{\nabla \pi_{\ths}(\yx)}{\pi_{\ths}(\yx)}} }
   =\tri*{\theta-\hth, -\En_{\pistar}\brk*{\nabla \log \pi_{\ths}(\yx)} }\geq 0.\] 
  Therefore, $F$ attains its maximum over $\Theta$ at $\ths$, i.e., $F(\theta)\leq F(\ths)=0$ for any $\theta\in\Theta$.
  
  Similarly, it is also clear that $\theta\mapsto \sum_{i=1}^n \log \pi_{\theta}(y\ind{i}\mid x\ind{i})$ is concave, and hence $\pihat=\pi_{\hth}$, where $\hth\in\Theta$ satisfies
  \[\tri*{\theta-\hth, \sum_{i=1}^n -\nabla \log \pi_{\hth}(y\ind{i}\mid x\ind{i})}\geq 0, \qquad \forall \theta\in\Theta.\]
  In particular, we consider the function
  \[\wh{F}(\theta)\ldef \sum_{i=1}^n \brk*{ \frac{\pi_\theta(y\ind{i}\mid x\ind{i})}{\pi_{\hth}(y\ind{i}\mid x\ind{i})} -1}. \]
  Under \cref{asmp:convex}, $\wh{F}$ is concave, and for any $\theta\in\Theta$,
  \[\tri*{\theta-\hth, -\nabla \wh{F}(\hth)}=\tri*{\theta-\hth, -\sum_{i=1}^n \frac{\nabla \pi_{\hth}(y\ind{i}\mid x\ind{i})}{\pi_{\hth}(y\ind{i}\mid x\ind{i})} }
  =\tri*{\theta-\hth, \sum_{i=1}^n -\nabla \log \pi_{\hth}(y\ind{i}\mid x\ind{i})}\geq 0.\] 
  Therefore, $\wh{F}$ attains its maximum over $\Theta$ at $\hth$, and in particular, $\wh{F}(\ths)\leq \wh{F}(\hth)= 0$. This implies
  \begin{align}\label{eq:proof-convex-MLE-1}
  \sum_{i=1}^n \brk*{ \frac{\pi_{\ths}(y\ind{i}\mid x\ind{i})}{\pihat(y\ind{i}\mid x\ind{i})} - \log\frac{\pi_{\ths}(y\ind{i}\mid x\ind{i})}{\pihat(y\ind{i}\mid x\ind{i})} -1}
  \leq \sum_{i=1}^n \log \pihat(y\ind{i}\mid x\ind{i})-\sum_{i=1}^n \log \pi_{\ths}(y\ind{i}\mid x\ind{i}).
  \end{align}
  In the following, we use that $N\geq 2$.
  Note that $x-\log x-1\geq 0$ for any $x> 0$, and $x\mapsto x-\log x -1$ is increasing for $x\geq1$. Therefore, \cref{eq:proof-convex-MLE-1} implies that
  \begin{align}
    (N-\log N-1)\cdot n\cdot \Dcovhat[N]{\pi_{\ths}}{\pihat}\leq \Lhat(\pihat)-\Lhat(\pi_{\ths}).
  \end{align}
  Then, by \cref{lem:Cmp-to-Pstar}, we have \whp, for all $\pi\in\Pi$,
  \begin{align*}
    \Dcovhat[N]{\pi_{\ths}}{\pi}\geq \frac12\cdot \bbP_{\pistar}\prn*{ \frac{\pi_{\ths}(\yx)}{\pi(\yx)}\geq e^{2\alpha}N }-\frac{\log(\covinf(\Pi,\alpha)/\delta)}{n}, \qquad \forall \pi\in\Pi.
  \end{align*}
  Further, by \cref{lem:Freedman-uniform}, the following holds \whp: For any $\theta\in\Theta$,
  \begin{align*}
    \Lhat(\pi_\theta)-\Lhat(\pi_{\ths})=&~ \sum_{i=1}^n \log\frac{\pi_\theta(\yx[i])}{\pi_{\ths}(\yx[i])} \\
    \leq&~ n\log\En_{\pistar}\brk*{\frac{\pi_\theta(\yx)}{\pi_{\ths}(\yx)}}+ \inf_{\eps\geq 0}\crl*{\log(\covinf(\Pi,\eps)/\delta)+2n\eps} \\
    \leq&~ \inf_{\eps\geq 0}\crl*{\log(\covinf(\Pi,\eps)/\delta)+2n\eps},
  \end{align*}
  where we use $\En_{\pistar}\brk*{\frac{\pi_\theta(\yx)}{\pi_{\ths}(\yx)}}=F(\theta)+1\leq1$ for any $\theta\in\Theta$. By union bound, we have shown that \whp[2\delta],
  \begin{align*}
    \bbP_{\pistar}\prn*{ \frac{\pi_{\ths}(\yx)}{\pihat(\yx)}\geq e^{2\alpha}N }
    \approxleq \frac{\log(\covinf(\Pi,\alpha)/\delta)}{n}+\frac{1}{N}\inf_{\eps\geq 0}\crl*{\frac{\log\covinf(\Pi,\eps)}{n}+\eps}.
  \end{align*}
  Note that
  \begin{align*}
    \Pcov[e^{2\alpha}NN'](\pihat)=&~ \bbP_{\pistar}\prn*{ \frac{\pistar(\yx)}{\pihat(\yx)} \geq e^{2\alpha}NN'} \\
    \leq&~  \bbP_{\pistar}\prn*{ \frac{\pi_{\ths}(\yx)}{\pihat(\yx)}\geq e^{2\alpha}N } + \bbP_{\pistar}\prn*{ \frac{\pistar(\yx)}{\pi_{\ths}(\yx)}\geq N' }.
  \end{align*}
  Therefore, the proof is completed by rescaling $N\leftarrow Ne^{-2\alpha}/N'$, $\delta\leftarrow \frac{\delta}{2}$ and combining the inequalities above.
  \qed

\subsection{Proofs for Supporting Results}

\begin{proof}[\pfref{prop:mle-tight} (a)]
  Assume that $B\geq \log(5n)$ and $n\geq d\geq2$.
Consider $\cX=\perp$, $\cY=[d]$ and let the feature map be given by $\phi(y)=Be_y$ for $y\in\cY$, where $(e_1,\ldots,e_d)$ is the coordinate basis of $\RR^d$. We consider $\Theta=\crl*{\theta\in\RR^d: \nrm{\theta}_{\infty}\leq1}$, and we set
\begin{align*}
  \ths=\frac{\log(4n)}{2B}\cdot \prn*{e_1-\sum_{j=2}^d e_j}.
\end{align*}
Then it holds that
\begin{align*}
  \pistar(1)=\frac{4n}{d-1+4n}, \qquad \pistar(y)=\frac{1}{d-1+4n}, \qquad \forall y>1.
\end{align*} 
Given the dataset $\cD=\crl*{y\ind{1},\cdots,y\ind{n}}$, we consider the random variables $n_y=\abs{\crl{i\in[n]: y\ind{i}=y}}$. 
Note that under $\cD\sim\pistar$, it holds that
\begin{align*}
  \En\brk*{\sum_{y>1} n_y}=\En\brk*{\sum_{t=1}^n \indic\crl*{y\sups{t}\neq 1}}
  \leq \frac{n(d-1)}{d-1+4n}\leq \frac{d-1}{4}.
\end{align*}
In particular, with probability at least $0.5$, it holds that $\sum_{y>1} n_y\leq \frac{d-1}{2}$, i.e., the set $\cY_0\ldef \crl*{y\in[d]: n_y=0}$ has cardinality at least $\frac{d-1}{2}$. 

In the following, we condition on this event analyze the MLE $\hth$. By the definition of MLE,
\begin{align*}
  \hth\in \argmax_{\theta\in \Theta} -n\log\prn*{ \sum_{y\in[d]} e^{B\theta_y} }+B\sum_{y\in [d]} n_y \theta_y.
\end{align*}
We denote $p_y\ldef \pi_{\hth}(y)= \frac{e^{B\hth_y}}{\sum_{i\in[d]} e^{B\hth_i}}$.
Then, the KKT conditions imply that for each $y\in[d]$, either $p_y=\frac{n_y}{n}$, or $\hth_y=-1$ and $p_y\geq \frac{n_y}{n}$, or $\hth_y=1$ and $p_y\leq \frac{n_y}{n}$. In particular, for any $y\in\cY_0$, $p_y>0=\frac{n_y}{n}$, and hence it must hold that $\hth_y=-1$. Then, because $\sum_{y\in[d]} p_y=1=\sum_{y\in[d]} \frac{n_y}{n}$, there must exist $j\in[d]$ such that $p_j<\frac{n_j}{n}$, and by the KKT condition we have $\hth_j=1$. Therefore, for any $y\in\cY_0$, it holds that $p_y\leq \frac{e^{-B}}{e^{-B}+e^B}\leq \frac{1}{e^{2B}}$, and in particular $\frac{\pistar(y)}{\pi_{\hth}(y)}\geq \frac{e^{2B}}{4n+d-1}\geq e^B$. This implies that
\begin{align*}
  \Pcov[e^B](\pi_{\hth})=\bbP_{\pistar}\prn*{\frac{\pistar(y)}{\pi_{\hth}(y)}\geq e^B}\geq \bbP_{\pistar}(\cY_0)\geq \frac{d-1}{2(d-1+4n)}\geq \frac{d-1}{10n}.
\end{align*} 
This is the desired lower bound.
\end{proof}

\begin{proof}[\pfref{prop:mle-tight} (b)]
Let $\eps=c_0\sqrt{\frac{d}{n}}$ and $p=\frac{c_0\eps^2}{\log N}$ for a sufficiently small absolute constant $c_0>0$, $\cX=\crl{0,1,\cdots,d}$, $\cY=\crl{0,1}$, and the distribution $\mu$ be given by $\mu(0)=p$, $\mu(1)=\cdots=\mu(d)=\frac{1-p}{d}$. 

Let the data distribution $\pistar$ be $\pistar(\cdot\mid i)=\Ber(1/2)$ for $i\in[d]$ and $\pistar(1\mid 0)=1$. For any $\theta\in\Theta\ldef \crl{+1,-1}^d$, we define $\pi_\theta$ as
\[
\pi_\theta(\cdot \mid 0)=\Ber\prn*{\frac1N}, \qquad
  \pi_\theta(\cdot\mid i)=\Ber\prn*{\frac{1+\eps \theta_i}{2}}, \qquad \forall i\in[d].
  \]
Consider the model class $\Pi=\crl{\pistar} \cup \crl*{\pi_\theta: \theta\in\Theta}$. Note that for any $\theta\in\Theta$, $\Dcov{\pistar}{\pi_\theta}\geq \mu(0)=p$.

Then, we can calculate
\begin{align*}
  \Lhat(\pi_\theta)-\Lhat(\pistar)
  =&~ -C(0,1)\log N 
   +\sum_{i\in[d]} \brk*{C(i,1)\log(1+\eps\theta_i)+C(i,0)\log(1-\eps\theta_i)},
\end{align*}
where we denote $C(x,y)=\abs*{\crl{t\in[n]: (x^t,y^t)=(x,y)}}$. We further write $C(x)=C(x,0)+C(x,1)$. 
Taking maximum over $\theta\in\Theta=\crl{-1,1}^d$ gives
\begin{align*}
  &~\max_{\theta\in\Theta} \Lhat(\pi_\theta)-\Lhat(\pistar) \\
  =&~ -C(0)\log N 
   +\frac12\sum_{i\in[d]} \brk*{ \abs{C(i,1)-C(i,0)}\log\frac{1+\eps}{1-\eps}+C(i)\log(1-\eps^2) } \\
  \geq&~ -C(0)\log N - n\eps^2
   +\frac\eps2\sum_{i\in[d]} \abs{C(i,0)-C(i,1)} ,
\end{align*}
In the following, we denote $\Delta_i=C(i,1)-C(i,0)$ and $\Delta\ldef \sum_{i\in[d]}\Delta_i$. 
Note that for any $i\in[d]$, condition on $C(i)$, $\Delta_i$ is a sum of $C(i)$ i.i.d. random variables drawn from $\Unif(\crl{-1,1})$, and hence
\begin{align*}
  \En[(\Delta_i)^2\mid C(i)]=C(i), \qquad
  \En[\abs{\Delta_i}\mid C(i)]\geq \sqrt{\frac{C(i)}{2}},
\end{align*}
where we apply Khintchine's inequality. In addition, we note that $C(i)\sim B(n, q)$ is a binomial random variable, where $q=\frac{1-p}{d}$. Hence, $\En[C(i)]=nq$, and to lower bound $\En\sqrt{C(i)}$, we invoke \cref{lem:Esqrt-to-sqrtE} (stated and proven in the sequel) to show that $\En\sqrt{C(i)}\geq \sqrt{nq}\prn*{1-\frac{1-q}{2nq}}\geq \frac{\sqrt{nq}}{2}$ (because $n\geq 2d$ and hence $nq\geq 1$). 
Therefore, 
\begin{align*}
  \En[\Delta]=\sum_{i\in[d]} \En[\abs{\Delta_i}]\geq \frac{1}{\sqrt{2}}\sum_{i\in[d]} \En[\sqrt{C(i)}]
  \geq \frac{d\sqrt{nq}}{2\sqrt{2}},
\end{align*}
and we can also bound $\En(\Delta)^2\leq d\sum_{i\in[d]} \En\prn{\Delta_i}^2=d\sum_{i\in[d]} \En[C(i)]=dn(1-p)=d^2nq$. Then, by Paley-Zygmund inequality, it holds that
\begin{align*}
  \bbP(\Delta>b \En[\Delta])\geq (1-b)^2 \frac{(\En[\Delta])^2}{\En[\Delta^2]}\geq \frac{(1-b)^2}{8}, \qquad \forall b\in[0,1].
\end{align*}
We choose $b=1-\sqrt{0.88}$ to be a numeric constant so that $\bbP(\Delta>b \En[\Delta])\geq 0.11$. By Markov's inequality, it also holds that $\bbP(C(0)\geq 100np)\leq 0.01$. In the following, we condition on the event $E=\crl{\Delta > b \En[\Delta]}\cap \crl*{C(0)\leq 100np}$ (note that $\bbP(E)\geq 0.1$). Then, %
we have
\begin{align*}
  \max_{\theta\in\Theta} \Lhat(\pi_\theta)-\Lhat(\pistar)
  \geq&~ -C(0)\log N - n\eps^2
   +\frac\eps2\Delta  
   >\frac{b\eps \sqrt{nd}}{8}-100np\log N-n\eps^2\geq 0,
\end{align*}
as long as $c_0\leq 10^{-4}$.
This implies that there exists $\theta\in\Theta$ such that $\pihat=\pi_\theta$,
and hence $\Pcov(\pihat)\geq p$. This is the desired lower bound.
\end{proof}

\begin{lemma}\label{lem:Esqrt-to-sqrtE}
For non-negative random variable $Z$, it holds that $\En[\sqrt{Z}]\geq \sqrt{\En[Z]}\prn*{1-\frac{\Var[Z]}{2(\En[Z])^2}}$.
\end{lemma}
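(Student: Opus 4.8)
The plan is to reduce the inequality to a one-line bound on the second-order remainder of the map $z\mapsto\sqrt z$, using crucially that the first-order (linear) term integrates to zero. Write $\mu=\En[Z]$; we may assume $\mu>0$, since otherwise $Z=0$ almost surely and both sides vanish.

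First I would invoke the elementary identity $\sqrt{\mu}-\sqrt{Z}=\frac{\mu-Z}{\sqrt{\mu}+\sqrt{Z}}$, valid because $Z\geq 0$. Since $\En[\mu-Z]=0$, I can subtract the linearization coefficient $\tfrac{1}{2\sqrt\mu}$ without changing the expectation:
\[
\En[\sqrt{\mu}-\sqrt{Z}]=\En\!\left[(\mu-Z)\left(\frac{1}{\sqrt{\mu}+\sqrt{Z}}-\frac{1}{2\sqrt{\mu}}\right)\right].
\]
A short computation — applying the same identity a second time to the numerator — simplifies the parenthesized factor to $\frac{1}{\sqrt{\mu}+\sqrt{Z}}-\frac{1}{2\sqrt{\mu}}=\frac{\sqrt{\mu}-\sqrt{Z}}{2\sqrt{\mu}(\sqrt{\mu}+\sqrt{Z})}=\frac{\mu-Z}{2\sqrt{\mu}(\sqrt{\mu}+\sqrt{Z})^{2}}$, so that
\[
\En[\sqrt{\mu}-\sqrt{Z}]=\En\!\left[\frac{(\mu-Z)^{2}}{2\sqrt{\mu}(\sqrt{\mu}+\sqrt{Z})^{2}}\right]\leq\frac{\En[(\mu-Z)^{2}]}{2\mu^{3/2}}=\frac{\Var[Z]}{2\mu^{3/2}},
\]
where the inequality uses $(\sqrt{\mu}+\sqrt{Z})^{2}\geq\mu$ (because $\sqrt{Z}\geq 0$). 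Rearranging gives $\En[\sqrt{Z}]\geq\sqrt{\mu}-\frac{\Var[Z]}{2\mu^{3/2}}=\sqrt{\mu}\bigl(1-\frac{\Var[Z]}{2\mu^{2}}\bigr)$, which is exactly the claim.

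There is no real obstacle; the only subtlety to flag is that the naive pointwise quadratic lower bound $\sqrt{z}\geq\sqrt{\mu}+\frac{z-\mu}{2\sqrt{\mu}}-c(z-\mu)^{2}$ is \emph{false} for $z$ close to $0$ for every finite $c$, since the curvature of $\sqrt{\cdot}$ is unbounded there — so one cannot argue by a pointwise Taylor estimate and then take expectations. Keeping the remainder in the exact form $\frac{(\mu-Z)^{2}}{2\sqrt{\mu}(\sqrt{\mu}+\sqrt{Z})^{2}}$ and only bounding the denominator from below \emph{after} integrating is precisely what neutralizes the problematic small-$z$ region, since there $(\sqrt{\mu}+\sqrt{Z})^{2}$ is comfortably bounded below by $\mu$.
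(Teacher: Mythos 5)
Your proof is correct, and it takes a genuinely different route from the paper's. The paper establishes the pointwise inequality $\sqrt{u}\ge\tfrac{3u-u^2}{2}$ for all $u\ge 0$ (a quadratic tangent to $\sqrt{\cdot}$ at $u=1$, with equality also at $u=0$), substitutes $u=Z/\En[Z]$, and takes expectations; the $\tfrac{3}{2}u$ term contributes $\tfrac32$, the $\tfrac12 u^2$ term contributes $\tfrac12(1+\Var[Z]/\mu^2)$, and subtracting gives the result in one line. Your route instead keeps the identity $\sqrt\mu-\sqrt Z=(\mu-Z)/(\sqrt\mu+\sqrt Z)$ exact, subtracts the linearization (which has zero mean), and arrives at the \emph{exact} representation $\sqrt\mu-\En[\sqrt Z]=\En\!\big[\tfrac{(\mu-Z)^2}{2\sqrt\mu(\sqrt\mu+\sqrt Z)^2}\big]$ before bounding; this makes the tightness structure visible, since the only slack is $(\sqrt\mu+\sqrt Z)^2\ge\mu$.

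One point to fix: your closing remark is wrong, and it is instructive that it is wrong precisely because the paper's proof \emph{is} the pointwise quadratic bound you dismiss. Rescaling $\sqrt u\ge\tfrac{3u-u^2}{2}$ by $\sqrt\mu$ and completing the square shows it is identical to
\[
\sqrt z\;\ge\;\sqrt\mu+\frac{z-\mu}{2\sqrt\mu}-\frac{(z-\mu)^2}{2\mu^{3/2}}\qquad(z\ge0),
\]
i.e.\ your "naive pointwise quadratic lower bound" with the finite constant $c=\tfrac{1}{2\mu^{3/2}}$. At $z=0$ both sides equal $0$, and for small $z>0$ the unbounded derivative of $\sqrt{\cdot}$ makes the left side rise \emph{faster} than the parabola, so it helps rather than hurts; the difference attains its minimum value $0$ only at $z=0$ and $z=\mu$. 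What fails is the cruder Taylor-remainder argument that bounds the second derivative uniformly over $[0,\mu]$ (since $\sup_{\xi\in(0,\mu]}|(\sqrt{\cdot})''(\xi)|=\infty$) — but that is a weakness of a particular proof technique, not of the pointwise inequality itself.
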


\begin{proof}[\pfref{lem:Esqrt-to-sqrtE}]
Note that the inequality $\sqrt{u}\geq \frac{3u-u^2}{2}$ holds for $u\geq 0$. Setting $u=\frac{Z}{\En[Z]}$ and taking expectation completes the proof.
\end{proof}

\section{Proofs for Autoregressive Linear Models}
\label{sec:proofs_sgd}
\newcommandx{\thdt}[2][1={},2={}]{_{\theta\sups{#1}}(\cD\sups{#2})}
\newcommand{\gtil}{\widetilde{g}}
\newcommandx{\gtilt}[2][1={},2={}]{\widetilde{g}(\theta\sups{#1};\cD\sups{#2})}
\newcommandx{\ghatt}[2][1={},2={}]{\ghat(\theta\sups{#1};\cD\sups{#2})}
\newcommand{\hEn}{\widehat{\En}_{\cD}}

\subsection{Organization}

This section contains proofs for all of the results in \cref{sec:mle,sec:sgd,sec:interventions} concerning autoregressive linear models~\eqref{eq:linear}. We begin with the proof of \cref{thm:MLE-autoregressive-linear} (MLE for autoregressive linear models). We then present the proofs for various SGD methods, starting with vanilla SGD (\cref{prop:autoregressive-SGD}; upper and lower bounds), followed by normalized SGD (\cref{prop:autoregressive-normalized-SGD}), test-time training (\cref{thm:test-time-training}), and expert-guided gradient normalization (\cref{thm:expert-guided-gradient-normalization}). The final subsection provides an additional lower bound, showing that the dependence on the parameter $\sigs^2$ is necessary in high dimension.

    Throughout this section, all upper bounds are derived under \cref{asmp:autoregressive-linear,ass:realizability}, i.e., we assume that $\Theta\subseteq \ball(1)$, $\phi:\cX\times\cV^\star\to\ball(R)$, and $\pistar=\pi_{\ths}$ is realized by some parameter $\ths\in\Theta$.

\paragraph{Notation and preliminaries}
For any $f:\cX\times\cV^\star\to\RR$ and dataset $\cD=\crl*{(x\sups{i},y_{1:H}\sups{i})}_{i\in[n]} $, we write
\begin{align*}
    \hEn[f]\ldef \frac{1}{n}\sum_{i=1}^n f(x\sups{i},y_{1:H}\sups{i}),
\end{align*}

For notational simplicity, we denote
\begin{align*}
    \phith(\xyhm)=\En_{y_h\sim \pi_\theta(\cdot\mid\xyhm)}\brk*{\phi(\xyh)},
\end{align*}
and
\begin{align*}
    \phistar(\xyh)\ldef&~  \phi(\xyh)-\phith[\ths](\xyhm), \\ 
    \Varpist{\xyhm}\ldef&~ \En_{y_h\sim \pi_\theta(\cdot\mid\xyhm)}\nrm{\phistar(\xyh)}^2.
\end{align*}
Then, by definition,
\begin{align}
    \nabla \log \pi_\theta(y_{1:H}\mid x) =&~ \sum_{h=1}^H \prn*{ \phi(\xyh)-\phith(\xyhm) } \nonumber\\
    =&~ \sum_{h=1}^H \phistar(\xyh) + \sum_{h=1}^H \prn*{ \phith[\ths](\xyhm)-\phith(\xyhm) },\label{pfeq:grad-log-pi-H}
\end{align}
and it holds that $\sigs^2=\En_{\pistar}\brk*{\sum_{h=1}^H \Varpist{\xyhm}}$.

In addition, we write
\begin{align}
    \eps_\theta(\xyhm)=\KLxyh[h-1]{\pistar}{\pi_\theta}.
\end{align}
For any $\theta\in\Theta$, the key quantity of interest is $\Dseq{\pistar}{\pi_\theta}$, defined via
\begin{align*}
    \Dseq{\pistar}{\pi_\theta}=
    &~\En_{\pistar}\min\crl*{\log N, \sum_{h=1}^H \KLxyh[h-1]{\pistar}{\pi_\theta}} \\
    =&~ \En_{\pistar}\min\crl*{\log N, \sum_{h=1}^H \eps_\theta(\xyhm)}.
\end{align*}
By \cref{prop:KL-to-PCov-H}, it holds that $\Pcov(\pi_\theta)\leq \frac{2}{\log N-1}\Dseq{\pistar}{\pi_\theta}$.

Further, by concavity, we have
\begin{align}\label{pfeq:KL-to-grad}
    \eps_\theta(\xyhm)\leq \tri{ \phith(\xyhm)-\phith[\ths](\xyhm), \theta-\ths }.
\end{align}
By \cref{lem:En-diff-to-Var-Dhel}, it holds that
\begin{align}\label{pfeq:phi-diff-to-KL}
  \nrm{\phith[\ths](\xyhm)-\phith(\xyhm)}\leq 4\sqrt{\Varpist{\xyhm}\cdot\eps_\theta(\xyhm)}+8B\eps_\theta(\xyhm).
\end{align}

\subsection{\pfref{thm:MLE-autoregressive-linear} (Coverage for MLE for Autoregressive Linear Models)}\label{appdx:pf-mle-linear}

We prove the following slightly stronger result. \cref{thm:MLE-autoregressive-linear} follows immediately by combining \cref{thm:H-linear-softmax-cov-KL} and \cref{prop:KL-to-PCov-H}.

\begin{theorem}\label{thm:H-linear-softmax-cov-KL}
    Suppose that \cref{asmp:autoregressive-linear} holds. Then the MLE $\pihat$ achieves
    \begin{align*}
      \En_{\cD}\brk*{ \Dseq{\pistar}{\pihat}
      }
      \approxleq&~  \sqrt{\frac{\sigs^2\log N }{n}}+\frac{B^2\log N}{n},
    \end{align*}
    for any parameter $N\geq 2$, where the divergence $\Dseq{\cdot}{\cdot}$ is defined in \cref{prop:KL-to-PCov-H}.
\end{theorem}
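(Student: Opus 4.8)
The plan is to bound $\Dseq{\pistar}{\pihat}$ directly; the stated bound on $\Pcov(\pihat)$ in \cref{thm:MLE-autoregressive-linear} then follows from \cref{prop:KL-to-PCov-H}. A direct appeal to \cref{thm:mle-main} will not work, since in the overparameterized regime the covering number $\covinf(\Pi,\veps)$ scales polynomially in $H$; the argument must instead be variance-adaptive and exploit the linear parameterization. The first step is a \textbf{martingale decomposition of the log-likelihood}. Write $e_\theta(x,y_{1:H})\ldef\sum_{h=1}^H\eps_\theta(\xyhm)$, so $\Dseq{\pistar}{\pi_\theta}=\En_{\pistar}[\min\crl{\log N,\,e_\theta}]$. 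From the softmax form \eqref{eq:linear} one gets the exact token-level identity $\log\tfrac{\pistar(y_h\mid\xyhm)}{\pi_\theta(y_h\mid\xyhm)}=\eps_\theta(\xyhm)+\tri{\ths-\theta,\,\phistar(\xyh)}$, where $\phistar(\xyh)=\phi(\xyh)-\phibar(\xyhm)$ is the token-level feature fluctuation, whose conditional mean vanishes under $y_h\sim\pistar(\cdot\mid\xyhm)$. Summing over tokens and over the dataset,
\begin{align*}
  \Lhat(\pistar)-\Lhat(\pi_\theta)=n\,\hEn[e_\theta]+\tri*{\ths-\theta,\;\textstyle\sum_{i=1}^n\Phi(x\ind{i},y\ind{i})},\qquad \Phi(x,y)\ldef\sum_{h=1}^H\phistar(\xyh).
\end{align*}
Here $\Phi$ is \emph{independent of $\theta$}, and since $(\phistar(\xyh))_h$ is a martingale-difference sequence under $\pistar$, $\En_{\pistar}[\Phi]=0$ and, by orthogonality, $\En_{\pistar}\nrm{\Phi}^2=\sum_h\En_{\pistar}\nrm{\phistar(\xyh)}^2=\sigs^2$.

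Second, I would obtain a \textbf{covering-free empirical bound} on the truncated KL. By realizability $\pistar=\pi_{\ths}\in\Pi$, so the MLE has $\Lhat(\pihat)\geq\Lhat(\pistar)$; substituting $\theta=\hth$ into the display above and using $\nrm{\hth-\ths}\leq2$,
\begin{align*}
  \hEn[e_{\hth}]\;\leq\;\tfrac1n\tri*{\hth-\ths,\;\textstyle\sum_i\Phi(x\ind{i},y\ind{i})}\;\leq\;2\,\nrm[\big]{\tfrac1n\textstyle\sum_i\Phi(x\ind{i},y\ind{i})}.
\end{align*}
The right side is the norm of a single i.i.d.\ average of mean-zero vectors with second moment $\sigs^2$, so $\En_{\cD}\nrm[\big]{\tfrac1n\sum_i\Phi}\leq\sqrt{\sigs^2/n}$ (with concentration available via $\nrm{\Phi}\leq2BH$). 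Since $\min\crl{\log N,e_{\hth}}\le e_{\hth}$ pointwise, this yields $\En_{\cD}[\hEn[\min\crl{\log N,e_{\hth}}]]\leq\En_{\cD}[\hEn[e_{\hth}]]\leq2\sqrt{\sigs^2/n}$. The point is that \emph{no union bound over $\Theta$} is needed — all the data-randomness funnels through the single vector $\tfrac1n\sum_i\Phi$ — which is exactly how the $H$-dependent metric entropy of $\Pi$ is bypassed.

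The remaining, and hardest, step is to \textbf{transfer from empirical to population}: control $\En_{\pistar}[g_{\hth}]=\Dseq{\pistar}{\pi_{\hth}}$ with $g_\theta\ldef\min\crl{\log N,e_\theta}$, given the bound on $\hEn[g_{\hth}]$. I would prove a one-sided, variance-adaptive uniform deviation bound: with high probability, $\En_{\pistar}[g_\theta]\lesssim\hEn[g_\theta]+\sqrt{\sigs^2\log N/n}+B^2\log N/n$ for all $\theta\in\Theta$. Three ingredients enter: (i) $g_\theta\in[0,\log N]$ has self-bounding variance $\Var_{\pistar}(g_\theta)\leq\log N\cdot\En_{\pistar}[g_\theta]$, feeding a Bernstein/\cref{lem:mult_freedman}-type estimate; (ii) \cref{lem:log-linear-Var-to-KL} bounds the per-token variance of $\tri{\ths-\theta,\phistar(\xyh)}$ by $O(B)\,\eps_\theta(\xyhm)$, so that after summation and truncation at $\log N$ the complexity picks up $\sigs^2$ and $B$ rather than $H$; (iii) $g_\theta$ depends on $\theta$ only through the norm-bounded linear functionals $(x,y_{1:h})\mapsto\tri{\theta,\phi(\xyh)}$ with $\nrm{\phi}\leq B$, whose complexity is dimension-free and $\lesssim B^2$ (Maurey-type / $L_2$-chaining), so that an $\nrm{\cdot}_\infty$-covering of $\Theta$ (which costs $\approx d\log(BHn)$, fatal for large $d$) is replaced by the $B^2\log N/n$ term. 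Applying this at $\theta=\hth$, combining with the previous step, and invoking \cref{prop:KL-to-PCov-H} gives \cref{thm:H-linear-softmax-cov-KL} and \cref{thm:MLE-autoregressive-linear}. \textbf{Main obstacle:} this last step is the crux — the empirical bound is dimension-free only because it collapses to a single vector average, while a naive transfer reintroduces $d$ via covering $\Theta$; resolving this needs a chaining argument in the $L_2$-geometry of the norm-bounded linear class, run simultaneously with the variance self-bounding through the truncation level and \cref{lem:log-linear-Var-to-KL}, and while handling the non-Lipschitz dependence of $\min\crl{\log N,\cdot}$ on $\theta$. The remaining pieces (the token-level identity, martingale orthogonality, and crude a priori bounds such as $\eps_\theta(\xyhm)\leq4B$ and token log-ratios in $[-4B,4B]$) are routine.
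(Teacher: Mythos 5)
Your first two steps are exactly what the paper does. The token-level identity $\log\frac{\pistar(y_h\mid\cdot)}{\pi_\theta(y_h\mid\cdot)}=\eps_\theta(\xyhm)+\tri{\ths-\theta,\phistar(\xyh)}$ is correct, and using the first-order optimality of the MLE to get $\hEn[e_{\hth}]\leq 2\nrm{\frac1n\sum_i\Phi(x\ind{i},y\ind{i})}$ with $\En\nrm{\frac1n\sum\Phi}\leq\sigs/\sqrt n$ is precisely \cref{lem:MLE-auto-upper-1} in the paper. So far so good, and you correctly identify the empirical-to-population transfer as the crux.

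The problem is that the crux is left as a wish-list rather than a proof, and the proposed resolution has a gap that is not cosmetic. You claim the uniform deviation bound can be obtained because ``$g_\theta$ depends on $\theta$ only through norm-bounded linear functionals $\tri{\theta,\phi(\xyh)}$ whose complexity is dimension-free and $\lesssim B^2$.'' But there are $H$ such functionals per example, and $e_\theta$ aggregates them nonlinearly (via per-token KLs), so a generic Maurey/$L_2$-chaining bound over the class $\{\theta\mapsto g_\theta\}$ does not obviously collapse to a single $B^2$-scale quantity rather than picking up an $H$- or $BH$-dependent Lipschitz constant or width. More importantly, the bound you need is variance-adaptive: the deviation term must scale with $\sigs^2$ (the variance at the \emph{ground truth} $\ths$), not with a worst-case supremum over $\theta\in\Theta$, and it is precisely here that a standard localization/chaining argument struggles -- the localization radius would naturally be measured at the candidate $\theta$, not at $\ths$. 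The paper resolves this with a genuinely different device: a \emph{fractional cover} $\theta_1,\dots,\theta_J\iidsim\cN(0,\Delta^2 I)$ with $\log J\asymp 1/\Delta^2$ (dimension-free), combined with the decomposition of the stopped KL via the weight function $\alpha$ in \cref{lem:min-weights} and the key estimate in \cref{lem:Gaussian-prior-frac-refined}, which shows that the approximation error incurred by the Gaussian cover is $O(\Delta^2\sigs^2)$ -- i.e., controlled by the variance at $\ths$ and not by the class-wide worst case. Balancing $\frac{A/\Delta^2}{n}$ against $\Delta^2\sigs^2$ is what produces $\sqrt{\sigs^2\log N/n}$. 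Nothing in your sketch reproduces this adaptivity or explains how the stopping/truncation structure would be threaded through a chaining argument; in the paper that threading is exactly the content of \cref{lem:min-weights}, and you have no analog of it. Until you supply a concrete mechanism achieving both the dimension-free net size \emph{and} the $\sigs^2$-at-$\ths$ approximation error simultaneously with the $\min\{\log N,\cdot\}$ truncation, the third step is a hope, not a proof.
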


\newcommand{\err}{E}
\newcommand{\Diff}[2]{\mathsf{F}\prn*{#1,#2}}

We begin with two central technical lemmas, which are proven in the sequel. The first lemma is a consequence of the fact that the MLE $\pihat=\pi_{\hth}$ maximizes the empirical likelihood, i.e., 
\begin{align}\label{pfeq:MLE-auto-defn}
    \hth=\argmax_{\theta\in\Theta}\hEn\brk*{ \log\pi_\theta(\yh[H]\mid x) },
\end{align}
where we recall that for any dataset $\cD=\crl*{(x\sups{i},y_{1:H}\sups{i})}_{i\in[n]}$, we write $\hEn[f]\ldef \frac{1}{n}\sum_{i=1}^n f(x\sups{i},y_{1:H}\sups{i})$ for any $f:\cX\times\cV^\star\to\RR$.
\cref{lem:MLE-auto-upper-1} shows that in expectation, a sum of per-step conditional KL divergences between $\pistar$ and $\pihat$ is bounded (this does not imply a bound on sequence-level KL divergence, since $\thetahat$ is dependent on the data $\cD$).
\begin{lemma}\label{lem:MLE-auto-upper-1}
Recall that we denote $\eps_\theta(\xyhm)=\KLxyh[h-1]{\pistar}{\pi_\theta}$.
Further, define
\begin{align}
\err_1\ldef  \hEn \brk*{ \sum_{h=1}^H \eps_{\hth}(\xyhm) }
\end{align}
Then it holds that $\En[\err_1]\leq \frac{2\sigs}{\sqrt{n}}$.
\end{lemma}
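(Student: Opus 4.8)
The plan is to run a ``basic inequality'' argument that leans on the log-linear structure of \eqref{eq:linear}, rather than a uniform-convergence bound (which would be lossy and would only yield a high-probability statement). Write $L(\theta)\ldef\hEn\brk*{\log\pi_\theta(\yh[H]\mid x)}$; under \eqref{eq:linear} this is a smooth concave function of $\theta$ (a linear term minus a sum of log-partition functions), and by \eqref{pfeq:MLE-auto-defn} the estimator $\hth$ maximizes $L$ over the convex set $\Theta$. Using $\nabla\log\pi_\theta(\yh[H]\mid x)=\sum_{h=1}^H\prn*{\phi(\xyh)-\phith(\xyhm)}$ from \eqref{pfeq:grad-log-pi-H}, the first-order optimality condition for $\hth$ tested against the competitor $\ths\in\Theta$ reads
\[
  \tri*{\hEn\brk*{\textstyle\sum_{h=1}^H\prn*{\phi(\xyh)-\phith[\hth](\xyhm)}},\ \hth-\ths}\ \geq\ 0.
\]
Separately, concavity of $\theta\mapsto\En_{y_h\sim\pistar(\cdot\mid\xyhm)}\brk*{\log\pi_\theta(y_h\mid\xyhm)}$ --- i.e.\ \eqref{pfeq:KL-to-grad} --- gives the per-prefix bound $\eps_{\hth}(\xyhm)\leq\tri*{\phith[\hth](\xyhm)-\phith[\ths](\xyhm),\ \hth-\ths}$. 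Averaging this over the dataset and using the displayed optimality inequality to swap $\phith[\hth]$ for $\phi$ (the correction term is precisely the nonnegative optimality quantity, so it only helps), everything collapses into
\[
  \err_1\ =\ \hEn\brk*{\textstyle\sum_{h=1}^H\eps_{\hth}(\xyhm)}\ \leq\ \tri*{S,\ \hth-\ths},\qquad S\ \ldef\ \hEn\brk*{\textstyle\sum_{h=1}^H\phistar(\xyh)},
\]
where $\phistar(\xyh)=\phi(\xyh)-\phith[\ths](\xyhm)$ is the feature vector centered at the truth.

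The remaining work is a one-line second-moment estimate. Since $\Theta\subseteq\ball(1)$ we have $\nrm{\hth-\ths}\leq 2$, so $\err_1\leq 2\nrm{S}$ by Cauchy--Schwarz, and it suffices to show $\En\brk*{\nrm{S}^2}\leq\sigs^2/n$. Here I would use realizability: $\En_{y_h\sim\pistar(\cdot\mid\xyhm)}\brk*{\phistar(\xyh)}=0$, so within each sample the centered increments $\prn*{\phistar(\xyh)}_{h=1}^H$ form a martingale-difference sequence for the prefix filtration, and distinct samples are i.i.d.; hence all cross terms vanish in expectation and
\[
  \En\brk*{\nrm{S}^2}\ =\ \frac1n\sum_{h=1}^H\En_{\pistar}\brk*{\nrm{\phistar(\xyh)}^2}\ =\ \frac{\sigs^2}{n},
\]
the last equality being the definition of the inherent variance. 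Combining, $\En[\err_1]\leq 2\En\brk*{\nrm{S}}\leq 2\sqrt{\En\brk*{\nrm{S}^2}}=2\sigs/\sqrt n$, which is the claim.

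The part that needs care --- the ``main obstacle'', such as it is --- is the bookkeeping that turns the two first-order inequalities into the single clean bound $\err_1\leq\tri*{S,\hth-\ths}$: one has to check that after substituting the optimality condition into the per-step concavity bound, the surviving vector is the mean-zero, data-independent object $S$ (whose second moment is the benign $\sigs^2/n$), and not an empirical model-feature mean, whose fluctuations would reintroduce $H$-dependence. Once that algebra is arranged correctly, no concentration machinery is needed, since the conclusion is only an in-expectation bound and Jensen plus Cauchy--Schwarz suffice.
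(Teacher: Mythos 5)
Your argument is correct and is essentially the paper's proof of \cref{lem:MLE-auto-upper-1}: first-order optimality of $\hth$ against the competitor $\ths$, the per-prefix concavity bound \eqref{pfeq:KL-to-grad} to pass from conditional KL to an inner product, cancellation of the $\phibar[\hth]$ term via optimality, Cauchy--Schwarz with $\nrm{\hth-\ths}\le 2$, and then the martingale-difference second-moment computation $\En\nrm{S}^2=\sigs^2/n$ followed by Jensen. No meaningful differences in route or technique.
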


Define $A\ldef\log N$. 
The next lemma is a uniform convergence-like argument which shows that the quantity $E_1$ above---when truncated at a certain level $A$---concentrates around its expectation up to a multiplicative factor. This argument is inspired by the \emph{fractional covering} method introduced in \citet{chen2024assouad,chen2025decision}.
\begin{lemma}\label{lem:MLE-auto-upper-2}
    Fix any $\Delta\in(0,\frac{1}{200B}]$, $\delta\in(0,1)$, and let $J=\exp\prn*{\frac{1}{\Delta^2}+2}\log(1/\delta)$. Let $\Theta'\ldef \crl*{\theta_1,\cdots,\theta_J}$, where $\theta_1,\cdots,\theta_J\sim \cN(0,\Delta^2\Id)$ are sampled i.i.d. Then the following holds \whp~over the randomness of $\Theta'$ and $\cD$:

    (1) For any $j\in[J]$, it holds that
    \begin{align*}
        \En_{\pistar}\min\crl*{A, \sum_{h=1}^H \eps_{\theta_j}(\xyhm)}
        \leq&~ 2\hEn \min\crl*{A, \sum_{h=1}^H \eps_{\theta_j}(\xyhm)} + \frac{8A\log(4J/\delta)}{n}.
    \end{align*}

    (2) There exists $j\in[J]$ such that
    \begin{align}\label{pfeq:autoregressive-mle-chain-1}
        \En_{\pistar}\min\crl*{A, \sum_{h=1}^H \eps_{\hth}(\xyhm)}
        \leq&~ 2\En_{\pistar} \min\crl*{A, \sum_{h=1}^H \eps_{\theta_j}(\xyhm)} + C\Delta^2 \sigs^2, 
    \end{align}
    and 
    \begin{align}\label{pfeq:autoregressive-mle-chain-2}
        \begin{aligned}
            \hEn\min\crl*{A, \sum_{h=1}^H \eps_{\theta_j}(\xyhm)}
        \leq&~ 2\hEn \min\crl*{A, \sum_{h=1}^H \eps_{\hth}(\xyhm)} \\
        &~+ C\Delta^2 \hEn\brk*{\sum_{h=1}^H \Varpist{\xyhm}},
        \end{aligned}
    \end{align}
    where $C=1000$ is a numeric constant.
\end{lemma}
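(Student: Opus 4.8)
The plan is to prove the two parts separately. \textbf{Part~(1)} is a routine uniform concentration statement: condition on $\Theta'$, which is drawn independently of $\cD$; for each fixed $j\in[J]$ the variables $Z\sups{i}_j\ldef \min\{A,\sum_{h=1}^H\eps_{\theta_j}(x\sups{i},y\sups{i}_{1:h-1})\}\in[0,A]$ are i.i.d.\ with mean $\En_{\pistar}\min\{A,\sum_h\eps_{\theta_j}(\xyhm)\}$, so the multiplicative Freedman bound (\cref{lem:mult_freedman}) applied to $(Z\sups{i}_j)_{i\in[n]}$, followed by a union bound over $j\in[J]$, yields the claimed inequality for all $j$ simultaneously with probability $1-\delta$. \textbf{Part~(2)} is the crux: since $\hth$ is data-dependent we cannot cover $\Theta$ by a deterministic net, so we instead use a PAC-Bayes--style \emph{fractional covering} argument (in the spirit of \citet{chen2024assouad,chen2025decision}), showing that with probability $1-\delta$ at least one of the $J$ Gaussian draws $\theta_j\sim\cN(0,\Delta^2\Id)$ lands in the data-dependent ``good set''
\[
E\ \ldef\ \Big\{\theta:\ {\textstyle \En_{\pistar}\min\{A,\sum_h\eps_{\hth}(\xyhm)\}\le 2\En_{\pistar}\min\{A,\sum_h\eps_{\theta}(\xyhm)\}+C\Delta^2\sigs^2\ \text{and}\ \hEn\min\{A,\sum_h\eps_{\theta}(\xyhm)\}\le 2\hEn\min\{A,\sum_h\eps_{\hth}(\xyhm)\}+C\Delta^2\hEn[\sum_h\Varpist{\xyhm}]}\Big\},
\]
since membership of $\theta_j$ in $E$ is exactly \eqref{pfeq:autoregressive-mle-chain-1}--\eqref{pfeq:autoregressive-mle-chain-2}.

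The first step toward Part~(2) is a \emph{perturbation estimate} for the per-step conditional KL of log-linear models. By the exponential-family chain rule, for any base parameter $\theta_0$ and any prefix, $\eps_{\theta_0+\xi}(\xyhm)=\eps_{\theta_0}(\xyhm)+\KLxyh{\pi_{\theta_0}}{\pi_{\theta_0+\xi}}-\tri{\phith[\ths](\xyhm)-\phith[\theta_0](\xyhm),\,\xi}$, whose last term has zero mean under $\xi\sim\cN(0,\Delta^2\Id)$; writing $\KLxyh{\pi_{\theta_0}}{\pi_{\theta_0+\xi}}=\log\En_{y_h\sim\pi_{\theta_0}}\exp(\tri{\phi(\xyh)-\phith[\theta_0](\xyhm),\xi})$ and applying Jensen (concavity of $\log$) plus the Gaussian MGF identity gives $\En_\xi[\KLxyh{\pi_{\theta_0}}{\pi_{\theta_0+\xi}}]\le\log\En_{y_h\sim\pi_{\theta_0}}\exp(\tfrac12\Delta^2\nrm{\phi(\xyh)-\phith[\theta_0](\xyhm)}^2)$. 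The constraint $\Delta B\le\tfrac1{200}$ keeps this exponent below $1$, so linearizing the inner exponential yields $\En_\xi[\eps_{\theta_0+\xi}(\xyhm)]\le\eps_{\theta_0}(\xyhm)+O\big(\Delta^2\En_{\pi_{\theta_0}}\nrm{\phi-\phith[\theta_0]}^2\big)$ while trivially $\En_\xi[\eps_{\theta_0+\xi}(\xyhm)]\ge\eps_{\theta_0}(\xyhm)$ (chain rule plus nonnegativity of KL). Finally I would replace the second moment of $\phi$ under $\pi_{\theta_0}$ by the per-step inherent-variance term $\Varpist{\xyhm}$ (its second moment under $\pistar$) using \cref{lem:En-diff-to-Var-Dhel} together with $\Dhels{\pi_{\theta_0}}{\pistar}\le\tfrac12\eps_{\theta_0}(\xyhm)$ (symmetry of the Hellinger distance and $\Dhelshort\le\tfrac12\Dklshort$), absorbing the resulting $\Delta^2B^2\eps_{\theta_0}$ contribution into a harmless $(1+O(\Delta^2B^2))$ multiplicative factor, so that $\En_\xi[\eps_{\theta_0+\xi}(\xyhm)]\le(1+O(\Delta^2B^2))\,\eps_{\theta_0}(\xyhm)+O(\Delta^2\Varpist{\xyhm})$.

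The second step upgrades this to the set-mass bound $\bbP_{\xi\sim\cN(0,\Delta^2\Id)}[\hth+\xi\in E]\ge\tfrac34$: summing the perturbation estimate over $h$, pushing $\En_\xi$ through the truncation $t\mapsto\min\{A,t\}$ by Jensen (concavity), using $\min\{A,2a+b\}\le 2\min\{A,a\}+b$ for $a,b\ge 0$, and then taking $\En_{\pistar}$ (with $\sigs^2=\En_{\pistar}[\sum_h\Varpist{\xyhm}]$) and the empirical average controls the $\xi$-means appearing in $E$, while the \emph{reverse} inequalities in $E$ follow from $\En_\xi[\eps_{\hth+\xi}]\ge\eps_{\hth}$ together with a Markov bound on the mean-zero fluctuation $\sum_h\tri{\phith[\ths]-\phith[\hth],\xi}$, whose conditional second moment is controlled via \eqref{pfeq:phi-diff-to-KL}; a Markov inequality over $\xi$ (slack absorbed into $C$) then delivers $Q_{\hth}(E)\ge\tfrac34$ for the Gaussian posterior $Q_{\hth}\ldef\cN(\hth,\Delta^2\Id)$. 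The third step is a change of measure: the data-processing inequality for KL applied to the partition $\{E,E^c\}$ gives $\mathrm{kl}(Q_{\hth}(E)\,\|\,P(E))\le\Dkl{Q_{\hth}}{P}=\tfrac{\nrm{\hth}^2}{2\Delta^2}\le\tfrac1{2\Delta^2}$ for $P\ldef\cN(0,\Delta^2\Id)$, and since $Q_{\hth}(E)\ge\tfrac34$ this forces $P(E)\ge\exp(-\tfrac1{\Delta^2}-2)$ for every realization of $\cD$. As $E$ is fixed once we condition on $\cD$ and the $\theta_j$ are i.i.d.\ $P$ and independent of $\cD$, we get $\bbP[\exists\,j:\theta_j\in E]\ge 1-(1-P(E))^J\ge 1-e^{-J\,P(E)}\ge 1-\delta$ by the choice $J=\exp(\tfrac1{\Delta^2}+2)\log(1/\delta)$. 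A union bound with Part~(1) and rescaling $\delta$ complete the proof.

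The \textbf{main obstacle} is the perturbation estimate of the first step. A crude smoothness argument would contribute a per-step error of order $\Delta^2B^2$, hence $\Delta^2B^2H$ after summing, reintroducing exactly the horizon dependence that \cref{thm:MLE-autoregressive-linear} is designed to avoid. Obtaining instead an error controlled by $\Varpist{\xyhm}$ relies on the precise exponential-family structure---so that the Gaussian average of the perturbed log-partition contracts to (a constant times) the local Fisher-information trace---and then on the Hellinger-based comparison \cref{lem:En-diff-to-Var-Dhel} to pass from the variance of $\phi$ under $\pi_{\theta_0}$ to its variance under $\pistar$. Making the truncation at $A$, the two opposite directions of the bounds, and the clean multiplicative constant $2$ in \eqref{pfeq:autoregressive-mle-chain-1}--\eqref{pfeq:autoregressive-mle-chain-2} coexist is where most of the remaining technical care is required.
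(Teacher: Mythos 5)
Part~(1) of your proposal matches the paper exactly: condition on the net, apply the multiplicative Freedman inequality (\cref{lem:mult_freedman}) to the $[0,A]$-valued averages, and union bound over $j\in[J]$. Part~(2) is also conceptually aligned with the paper's fractional-covering/change-of-measure strategy, and your exponential-family perturbation estimate
$\En_\xi[\eps_{\theta_0+\xi}(\xyhm)]\leq(1+O(\Delta^2B^2))\eps_{\theta_0}(\xyhm)+O(\Delta^2\Varpist{\xyhm})$
is essentially the same computation the paper performs inside \cref{lem:Gaussian-prior-frac-refined} (via $E^+_{\theta'}$ and \cref{lem:En-diff-to-Var-Dhel}). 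Your identification of the ``main obstacle'' as getting a per-step error proportional to $\Varpist{\xyhm}$ rather than $\Delta^2B^2$ is, however, slightly off target: that part works out cleanly, while the real danger lies elsewhere.

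The gap is in your treatment of the \emph{reverse} inequality \eqref{pfeq:autoregressive-mle-chain-1}. To lower-bound $\min\{A,\sum_h\eps_{\hth+\xi}\}$ by $\min\{A,\sum_h\eps_{\hth}\}$ minus a small error, you invoke ``a Markov bound on the mean-zero fluctuation $\sum_h\tri{\phith[\ths]-\phith[\hth],\xi}$, whose conditional second moment is controlled via \eqref{pfeq:phi-diff-to-KL}.'' But \eqref{pfeq:phi-diff-to-KL} controls $\nrm{\phith[\ths](\xyhm)-\phith[\hth](\xyhm)}$ in terms of $\sqrt{\Varpist{\xyhm}\,\eps_{\hth}(\xyhm)}$ and $\eps_{\hth}(\xyhm)$, so the fluctuation's conditional variance is of order $\Delta^2\bigl(\sum_h\nrm{\phith[\ths]-\phith[\hth]}\bigr)^2$, which is governed by the \emph{unclipped} sum $\sum_h\eps_{\hth}(\xyhm)$. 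For an MLE solution $\hth$ this unclipped sequence-level KL can be $\Omega(H)$ (cf.\ \cref{prop:autoregressive-kl}), so the fluctuation is not controlled at the scale you need. This is precisely why the paper introduces the data-dependent weight function $\alpha_{\hth}$ (with $\sum_h\alpha_h\,\eps_{\hth}=\min\{A,\sum_h\eps_{\hth}\}\leq A$), invokes \cref{lem:min-weights} for the two-sided comparison, and passes the weighted per-token error through \cref{lem:Gaussian-prior-frac-refined}; the weight caps the effective horizon at the budget $A$, so the fluctuation's second moment is controlled by $A\cdot\sigs^2$ rather than by $\Dkl{\pistar}{\pihat}$. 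Relatedly, the paper's $\Diff{a}{b}=\abs{a-b}-\tfrac12 a$ builds in a ``slack budget'' of $\tfrac12\eps_\theta$ that lets the Markov step deliver the exact constant $2$ in \eqref{pfeq:autoregressive-mle-chain-1}--\eqref{pfeq:autoregressive-mle-chain-2}; applying Markov directly to $\hEn\min\{A,\sum_h\eps_{\hth+\xi}\}$ as you propose would only give an $O(1)$ constant. The constant issue is cosmetic; the missing weight function is not---your sketch needs it, or an equivalent stopping-time truncation, to avoid smuggling horizon dependence back into the bound.
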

Above, the distribution of $\pi_\theta$ under $\theta\sim\cN(0,\Delta^2\Id)$ can be viewed as a fractional cover for $\Pi$ in the sense of \citet{chen2024assouad}. 
    In particular, working with the fractional cover offers the following technical advantages:
    \begin{itemize}
        \item The fractional cover $\cN(0,\Delta^2\Id)$ incurs error $\sigma_\star^2\Delta^2$ (see \cref{lem:Gaussian-prior-frac-refined}) that depends only on the variance at the ground-truth parameter $\ths$. This contrasts with classical coverings, which enforce a uniform bound for all $\theta\in\Theta$.
        \item For $\Theta=\ball^d(1)$, the $L_\infty$ covering number of $\Pi$ (cf. \cref{def:covering}) scales with the dimension $d$. A standard approach to deriving dimension-independent bounds is to apply symmetrization techniques and use a data-dependent $L_2$ covering to show uniform convergence. In contrast, our fractional-covering approach avoids the (technically subtle) symmetrization step because the cover $\{\theta_1,\ldots,\theta_J\}\sim\cN(0,\Delta^2\Id)$ is drawn independently of the dataset $\cD$.
    \end{itemize}

\paragraph{Completing the proof}
Equipped with the lemmas above, we complete the proof as follows. First, we condition on the success event $\cE$ of \cref{lem:MLE-auto-upper-2}, and let $j\in[J]$ be an index such that \eqref{pfeq:autoregressive-mle-chain-1} and \eqref{pfeq:autoregressive-mle-chain-2} hold.
Then, we can upper bound (recall that $A=\log N$ and $\Dseq{\cdot}{\cdot}$ is defined in \cref{prop:KL-to-PCov-H})
\begin{align*}
    \Dseq{\pistar}{\pi_{\hth}}
    =&~ \En_{\pistar}\min\crl*{A, \sum_{h=1}^H \eps_{\hth}(\xyhm)} \\
    \leq&~ 2\En_{\pistar} \min\crl*{A, \sum_{h=1}^H \eps_{\theta_j}(\xyhm)} + C\Delta^2 \sigs^2 \\
    \leq&~ 4\hEn \min\crl*{A, \sum_{h=1}^H \eps_{\theta_j}(\xyhm)} + \frac{16A\log(4J/\delta)}{n}+ C\Delta^2 \sigs^2 \\
    \leq&~ 8\hEn \min\crl*{A, \sum_{h=1}^H \eps_{\hth}(\xyhm)} \\
    &~+ 4C\Delta^2 \hEn\brk*{\sum_{h=1}^H \Varpist{\xyhm}} + 
    \frac{16A\log(4J/\delta)}{n}+ C\Delta^2 \sigs^2.
\end{align*}
where the first inequality uses \eqref{pfeq:autoregressive-mle-chain-1}, the second inequality uses \cref{lem:MLE-auto-upper-2} (1), and the third inequality uses \eqref{pfeq:autoregressive-mle-chain-2}.
Therefore, we denote $\sigma^2(\cD)\ldef \hEn\brk*{\sum_{h=1}^H \Varpist{\xyhm}}$, and we have shown that for any $\delta\in(0,1)$, any $\Delta\in(0,\frac{1}{200B}]$, it holds that
\begin{align*}
    \bbP_{\cD\sim \pistar}\prn*{ \Dseq{\pistar}{\pi_{\hth}} \geq C_1\prn*{\err_1+\Delta^2\sigma^2(\cD)+\Delta^2\sigs^2+\frac{A}{n}\prn*{\frac{1}{\Delta^2}+\log(1/\delta)}}}\leq \delta,
\end{align*}
where $C_1>0$ is an absolute constant.  

Since $\delta\in(0,1)$ is arbitrary, integrating the tail inequality above yields the following bound on the expected value:
\begin{align*}
    \En\brk*{\Dseq{\pistar}{\pi_{\hth}}}
    \leq&~ C_1\prn*{\En[\err_1]+\Delta^2\En[\sigma^2(\cD)]+\Delta^2\sigs^2+\frac{A}{n}\prn*{\frac{1}{\Delta^2}+1}} \\
    \leq&~ 2C_1\prn*{\sqrt{\frac{\sigs^2}{n}}+\Delta^2\sigs^2+\frac{A}{n\Delta^2}}, \qquad \forall 0<\Delta\leq \frac{1}{200B}.
\end{align*}
Choosing $\Delta=\min\crl*{\frac{1}{200B}, \prn*{\frac{A}{\sigs^2 n}}^{1/4}}$ completes the proof. The coverage upper bound follows immediately from \cref{prop:KL-to-PCov-H}.
\qed

\subsubsection{Proofs for Supporting Lemmas}

\begin{proof}[\pfref{lem:MLE-auto-upper-1}]
    Recall that $\pihat=\pi_{\hth}$, where $\hth=\argmax_{\theta\in\Theta} \hEn\brk*{ \log\pi_\theta(\yh[H]\mid x) }$. Then by concavity of the log-likelihood, we have that
    \begin{align*}
        \tri*{ \hEn \brk*{\nabla \log\pi_{\hth}(\yh[H]\mid x) }, \theta-\hth }\leq 0, \qquad \forall \theta\in\Theta.
    \end{align*}
    Using the expression~\eqref{pfeq:grad-log-pi-H} and $\ths\in\Theta$, we know 
    \begin{align*}
        \hEn \brk*{\tri*{ \sum_{h=1}^H \prn*{\phi(\xyh)-\phibar[\hth](\xyhm)}, \ths-\hth } }\leq 0.
    \end{align*}
    Therefore, combining the inequality above with \cref{pfeq:KL-to-grad}, we have
    \begin{align*}
        \hEn \brk*{ \sum_{h=1}^H \eps_{\hth}(\xyhm) }
        =&~\hEn \brk*{ \sum_{h=1}^H \kl{\pistar(\cdot\mid \xyhm)}{\pihat(\cdot\mid \xyhm)}} \\
      \leq&~\hEn \brk*{\sum_{h=1}^H \tri*{  \phibar[\ths](\xyhm)-\phibar[\hth](\xyhm), \ths-\hth } } \\
      \leq&~\hEn \brk*{\sum_{h=1}^H \tri*{  \phibar[\ths](\xyhm)-\phi(\xyh), \ths-\hth } } \\
      \leq&~ 2\nrm*{\hEn \brk*{\sum_{h=1}^H \phistar(\xyh) }  }
      \rdef \err_1',
    \end{align*}
    where we recall that $\phistar(\xyh)\ldef \phi(\xyh)-\phibar[\ths](\xyhm)$.
    By definition, it holds that $\En_{\pistar}\brk*{\phistar(\xyh)\mid \xyhm}=0$, and hence
    \begin{align*}
        \En(\err_1')^2
        =&~\En\nrm*{\hEn \brk*{\sum_{h=1}^H\phistar(\xyh)  } }^2 \\
        =&~\frac{1}{n}\En_{\pistar}\nrm*{\sum_{h=1}^H\phistar(\xyh)}^2
        =\frac1n \En_{\pistar}\brk*{\sum_{h=1}^H\nrm{\phistar(\xyh)}^2} = \frac{\sigs^2}{n}.
    \end{align*}
    This gives the desired upper bound.
\end{proof}

\begin{proof}[\pfref{lem:MLE-auto-upper-2}]
By Freedman's inequality (\cref{lem:mult_freedman}) and the union bound, it follows that (1) holds \whp[\frac{\delta}{2}]. 
In the remainder of the proof, we prove (2). 

Define the following weight function $\alpha=\alpha_{\hth}:\cX\times\cV^\star\to[0,1]$:\footnote{Inspired by the analysis here, we also adopt this weight function in the SGD update \eqref{eq:truncated-sgd} with the \emph{truncated} stochastic gradient estimator.} 
\begin{align*}
\alpha_{\hth}(\xyhm)=\begin{cases}
    1, &~ \sum_{j\leq h-1} \eps_{\hth}(\xyh[j]) \leq A, \\
    0, &~ \sum_{j<h-1} \eps_{\hth}(\xyh[j])\geq A, \\
    \frac{A-\sum_{j<h-1} \eps_{\hth}(\xyh[j])}{\eps_{\hth}(\xyhm)}, &~ \text{otherwise}.
\end{cases}
\end{align*}
We also define $\Diff{a}{b}=\abs{a-b}-\frac12a$. The properties of $\Diff{\cdot}{\cdot}$ and the weight function $\alpha$ are summarized in \cref{lem:min-weights} (stated and proven in the sequel).

Then, by \cref{lem:min-weights}, it holds that for any $\theta\in\Theta$,
\begin{align*}
    \En_{\pistar}\min\crl*{A, \sum_{h=1}^H \eps_{\hth}(\xyhm)}
    \leq&~ 2\En_{\pistar} \min\crl*{A, \sum_{h=1}^H \eps_{\theta}(\xyhm)} \\
    &~ + 2\En_{\pistar}\brk*{\sum_{h=1}^H \alpha(\xyhm) \Diff{\eps_{\hth}(\xyhm)}{\eps_{\theta}(\xyhm)}}, 
\end{align*}
and
\begin{align*}
    \hEn\min\crl*{A, \sum_{h=1}^H \eps_{\theta}(\xyhm)}
    \leq&~ 2\hEn \min\crl*{A, \sum_{h=1}^H \eps_{\hth}(\xyhm)} \\
    &~ + \hEn\brk*{\sum_{h=1}^H \alpha(\xyhm) \Diff{\eps_{\hth}(\xyhm)}{\eps_{\theta}(\xyhm)}}, 
\end{align*}
Therefore, it remains to control the error $\sum_{h=1}^H \alpha(\xyhm) \Diff{\eps_{\hth}(\xyhm)}{\eps_{\theta}(\xyhm)}$ under both $\En_{\pistar}[\cdot]$ and $\hEn[\cdot]$. We next state the following lemma (proven in the sequel), which leverages the structure of Gaussian distribution. This result can be viewed as a fractional covering number bound~\citep{chen2024assouad} and hence generalizes the argument of \citet[Proposition C.4]{chen2025decision}. 
\begin{lemma}\label{lem:Gaussian-prior-frac-refined}
    For any $K\geq 1$, $\Delta\in(0,\frac1{100KB}]$, $\theta\in\ball(1)$, distributions $\rho_1, \cdots, \rho_K$ over $\cZ\ldef \cX\times \cV^\star$, and weight function $\alpha: \cZ\to [0,1]$,
    it holds that
    \begin{align*}
      -\log \bbP_{\theta'\sim \cN(0,\Delta^2)}\prn*{ \forall i\in[K], \En_{z\sim \rho_i} \alpha(z) \Diff{ \eps_{\theta}(z) }{ \eps_{\theta'}(z) }\leq 70K^2\Delta^2 \En_{z\sim \rho_i} \Varpist{z} } 
      \leq \frac{1}{\Delta^2}+2,
    \end{align*}
    where we recall that $\Diff{a}{b}=\abs{a-b}-\frac12a$.
\end{lemma}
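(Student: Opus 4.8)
\textbf{Proof plan for \cref{lem:Gaussian-prior-frac-refined}.}

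The plan is to fix a distribution $\rho_i$ and the weight function $\alpha$ and first control, pointwise in $z$, the quantity $\Diff{\eps_\theta(z)}{\eps_{\theta'}(z)} = \abs{\eps_\theta(z) - \eps_{\theta'}(z)} - \tfrac12\eps_\theta(z)$ when $\theta' = \theta + \xi$ with $\xi \sim \cN(0,\Delta^2\Id)$ small. The starting point is the identity, valid for log-linear conditionals, that for a fixed prefix $z = \xyhm$,
\[
  \eps_{\theta'}(z) - \eps_{\theta}(z)
  = \KLxyh[h-1]{\pistar}{\pi_{\theta'}} - \KLxyh[h-1]{\pistar}{\pi_\theta},
\]
and this difference admits a second-order Taylor-type expansion in the parameter: writing $\wb\phi_\theta(z) = \phith(\xyhm)$ and using that the gradient of $\theta\mapsto\eps_\theta(z)$ at a generic point is $\wb\phi_\theta(z) - \wb\phi_{\ths}(z)$ while the Hessian is the (conditional) feature covariance under $\pi_\theta$, one gets
\[
  \eps_{\theta'}(z) - \eps_{\theta}(z)
  = \tri{\wb\phi_\theta(z) - \phith[\ths](z),\, \xi} + O\!\bigl(B^2\nrm{\xi}^2\bigr),
\]
where the $O(\cdot)$ is uniform because the features are bounded by $B$ (\cref{asmp:autoregressive-linear}) and $\nrm{\xi}$ is controlled. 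The first-order term is linear in $\xi$ and, crucially, by \cref{pfeq:phi-diff-to-KL} its coefficient satisfies $\nrm{\wb\phi_\theta(z) - \phith[\ths](z)} \leq 4\sqrt{\Varpist{z}\cdot\eps_\theta(z)} + 8B\eps_\theta(z)$, which is $O(\sqrt{\Varpist{z}\,\eps_\theta(z)})$ once $\eps_\theta(z)$ is itself bounded (which we can assume on the relevant event, or handle via a truncation argument). Taking absolute value and applying AM-GM, $\abs{\tri{\wb\phi_\theta(z)-\phith[\ths](z),\xi}} \leq c\sqrt{\Varpist{z}\,\eps_\theta(z)}\,\nrm{\xi} \leq \tfrac12\eps_\theta(z) + c'\Varpist{z}\nrm{\xi}^2$; the $\tfrac12\eps_\theta(z)$ is precisely absorbed by the $-\tfrac12\eps_\theta(z)$ in the definition of $\Diff{\cdot}{\cdot}$, so that $\Diff{\eps_\theta(z)}{\eps_{\theta'}(z)} \lesssim (\Varpist{z} + B^2)\nrm{\xi}^2$, and then taking $\En_{z\sim\rho_i}$ gives $\En_{z\sim\rho_i}\alpha(z)\Diff{\eps_\theta(z)}{\eps_{\theta'}(z)} \lesssim (\En_{z\sim\rho_i}\Varpist{z} + B^2)\nrm{\xi}^2$. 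Since we are allowed the slack $70K^2\Delta^2\En_{z\sim\rho_i}\Varpist{z}$, the relevant event on $\xi$ is roughly $\{\nrm{\xi}^2 \lesssim K^2\Delta^2\}$ (the $B^2$ contribution is handled because $\Delta \leq 1/(100KB)$ makes $B^2\nrm{\xi}^2$ negligible relative to the variance term on this event, or one carries $B^2$ along — the constant $70$ gives room).

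Next I would estimate the Gaussian probability that $\xi \sim \cN(0,\Delta^2\Id)$ lands in the intersection event over all $i\in[K]$. By the reduction above it suffices to lower-bound $\bbP(\nrm{\xi}^2 \leq c K^2\Delta^2)$ for an appropriate absolute constant $c$ (the intersection over $i$ is automatic once $\nrm{\xi}$ is small, because the bound was uniform in $i$). This is a standard Gaussian small-ball estimate: $\nrm{\xi}^2/\Delta^2$ is a $\chi^2$ variable in $d$ dimensions, but the point of the construction is that we do \emph{not} want a dimension-dependent bound — instead, we use that $\bbP_{\xi\sim\cN(0,\Delta^2\Id)}(\nrm{\xi} \leq r) \geq $ something like $e^{-1/\Delta^2}$ for $r$ a fixed multiple of $\Delta$, independently of $d$. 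Concretely, one can compare the Gaussian measure of a ball of radius $r\Delta$ against that of a ball of radius $\Delta$ and use a volumetric/covering argument, or invoke a Gaussian anti-concentration bound of the form $\log(1/\bbP(\nrm{\xi}\leq r\Delta)) \leq \tfrac{1}{2\Delta^2}\cdot$const $+\,$const; matching against the target $\tfrac{1}{\Delta^2}+2$ fixes the numerical constants (this mirrors \citet[Proposition C.4]{chen2025decision}, which this lemma generalizes via the $K$-fold intersection and the $\alpha$-weighting). The $K^2$ blow-up in the slack exactly compensates for needing $\nrm{\xi}$ to be a factor $\sim K$ smaller so that $K$ conditions hold simultaneously without a union bound — actually, since the bound per-$i$ holds on the same small-ball event, no union bound or extra $\log K$ is needed, which is why the final exponent is a clean $\tfrac{1}{\Delta^2}+2$.

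\textbf{Main obstacle.} The delicate part is making the pointwise Taylor expansion of $\eps_{\theta'}(z) - \eps_\theta(z)$ fully rigorous with the \emph{right} error term: one needs the error to scale as $\Varpist{z}\nrm{\xi}^2$ (variance-localized, matching the allowed slack) rather than the crude $B^2\nrm{\xi}^2$, since a naive Hessian bound only gives the latter and would destroy the variance-adaptivity that is the whole point of the lemma (and of \cref{thm:MLE-autoregressive-linear}). Getting the variance-localized remainder requires carefully using that the Hessian of $\theta\mapsto\eps_\theta$ at $\theta$ is the conditional feature covariance $\mathrm{Cov}_{\pi_\theta}(\phi)$ together with \cref{pfeq:phi-diff-to-KL} to relate $\Var_{\pi_\theta}$ back to $\Varpist{z} = \Var_{\pi_{\ths}}$ (incurring a correction that is higher-order in $\eps_\theta$), and then the interaction of the quadratic-in-$\xi$ remainder with the weight $\alpha \in [0,1]$ and with the $\min\{A,\cdot\}$-truncation that $\alpha$ encodes must be tracked so that summing over $h$ telescopes correctly — this is where \cref{lem:min-weights} and the specific form of $\Diff{\cdot}{\cdot}$ do the bookkeeping, but verifying the constants line up (the $70K^2$, the $\tfrac12$, the $1/(100KB)$) is the fiddly core of the argument.
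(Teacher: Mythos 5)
Your Taylor-expansion / AM--GM reduction of $\Diff{\eps_\theta(z)}{\eps_{\theta'}(z)}$ is broadly the right instinct (and matches the paper's treatment of the ``linear'' part via \cref{pfeq:phi-diff-to-KL}), but the mechanism you propose for the final probability bound is wrong in a way that cannot be repaired along the lines you suggest. You reduce everything to a Gaussian small-ball event $\{\nrm{\xi}\lesssim K\Delta\}$ with $\xi\sim\cN(0,\Delta^2\Id)$ and then assert that $\bbP(\nrm{\xi}\leq r\Delta)\geq e^{-1/\Delta^2}$ ``independently of $d$.'' No such dimension-free estimate exists: $\nrm{\xi}^2/\Delta^2$ is $\chi^2_d$, and for fixed $r$ (or $r\asymp K$) and $d\gg r^2$ this probability decays roughly like $(r/\sqrt d)^d$, which would reintroduce the dimension and defeat the entire point of the lemma (and of \cref{thm:MLE-autoregressive-linear}). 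The paper never bounds a small-ball probability, and neither does \citet[Proposition C.4]{chen2025decision}.

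The actual argument has two moving parts that are both absent from your plan. First, the error bounds are \emph{in expectation over $\theta'\sim\cN(\theta,\Delta^2\Id)$} (note: Gaussian centered at $\theta$, not at $0$), not pointwise-on-a-ball. Decomposing $\eps_\theta(z)-\eps_{\theta'}(z) = E^-_{\theta'}(z) - E^+_{\theta'}(z)$, the term $E^+_{\theta'}(z) = \kl{\pi_\theta(\cdot\mid z)}{\pi_{\theta'}(\cdot\mid z)}\geq 0$ is bounded via Jensen plus the Gaussian MGF to get $\En_{\theta'}[E^+_{\theta'}(z)]\leq\Delta^2(3\Varpist{z}+16B^2\eps_\theta(z))$, and the linear term $E^-_{\theta'}(z)$ is a one-dimensional Gaussian whose expected absolute value is computed exactly and then split by AM--GM so that the $\frac12\eps_\theta(z)$ is absorbed by $\Diff{\cdot}{\cdot}$. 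No conditioning on $\nrm{\theta'-\theta}$ ever happens. Combined with Markov's inequality across the $K$ distributions, this gives that the good set $\Theta^{+}_\theta$ has $\bbP_{\theta'\sim\cN(\theta,\Delta^2\Id)}(\theta'\in\Theta^+_\theta)\geq 1/2$. Second, and this is the key idea you are missing, one transfers from $\cN(\theta,\Delta^2\Id)$ to $\cN(0,\Delta^2\Id)$ by a \emph{change of measure via the KL data-processing inequality}: since $\kl{\cN(\theta,\Delta^2\Id)}{\cN(0,\Delta^2\Id)}=\nrm{\theta}^2/(2\Delta^2)\leq 1/(2\Delta^2)$, applying data processing to the indicator of $\Theta^+_\theta$ gives $\kl{\Ber(p)}{\Ber(q)}\leq 1/(2\Delta^2)$ with $p\geq 1/2$, and unpacking the binary KL yields $-\log q\leq 1/\Delta^2+2$. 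That is where the exponent $1/\Delta^2$ comes from --- it is $\nrm{\theta}^2/\Delta^2$-type change-of-measure cost, not a ball-volume cost --- and it is dimension-free precisely because the KL between the two Gaussians depends only on the mean shift $\nrm{\theta}\leq 1$, not on $d$. Without this step your plan gives a vacuous bound in high dimension.
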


In the following, we apply \cref{lem:Gaussian-prior-frac-refined} with $K=2$, parameter $\theta=\hth$, weight function $\alpha$, and the distributions $\rho_1, \rho_2$ defined as follows:
\begin{itemize}
  \item Let $\rho_1$ be the distribution of $x'=(x,y_{1:h-1})$ under $x\sim \mu$, $y_{1:H}\sim \pistar(\cdot\mid x)$ and $h\sim \Unif([H])$.
  \item Let $\rho_2$ be the distribution of $x'=(x\sups{t},y_{1:h-1}\sups{t})$ under $t\sim \Unif([n])$ and $h\sim \Unif([H])$.
\end{itemize}
By definition, it holds that 
\begin{align*}
    \En_{z\sim \rho_1} \alpha(z) \Diff{ \eps_{\theta}(z) }{ \eps_{\theta'}(z) } = &~ \frac1H \En_{\pistar}\brk*{\sum_{h=1}^H \alpha(\xyhm) \Diff{\eps_{\hth}(\xyhm)}{\eps_{\theta}(\xyhm)}} ,\\ 
    \En_{z\sim \rho_1} \Varpist{z}=&~ \frac{1}H\En_{\pistar}\brk*{\sum_{h=1}^H \Varpist{\xyhm}} = \frac{\sigs^2}{H}, \\
    \En_{z\sim \rho_2} \alpha(z) \Diff{ \eps_{\theta}(z) }{ \eps_{\theta'}(z) } = &~ \frac1H \hEn\brk*{\sum_{h=1}^H \alpha(\xyhm) \Diff{\eps_{\hth}(\xyhm)}{\eps_{\theta}(\xyhm)}} ,\\ 
    \En_{z\sim \rho_2} \Varpist{z}=&~ \frac{1}H\hEn\brk*{\sum_{h=1}^H \Varpist{\xyhm}}.
\end{align*}
Now, consider the following set for any $\theta\in\Theta$:
\begin{align*}
  \Theta^+_{\theta}\ldef \crl*{ \forall i\in\crl{1,2}, \En_{z\sim \rho_i} \alpha(z) \Diff{ \eps_{\theta}(z) }{ \eps_{\theta'}(z) }\leq 300\Delta^2 \En_{z\sim \rho_i} \Varpist{z} }.
\end{align*}
By \cref{lem:Gaussian-prior-frac-refined}, it holds that
\begin{align*}
    q(\theta)\ldef \bbP_{\theta'\sim \cN(0,\Delta^2\Id)}(\theta'\in\Theta^+_{\theta}) \geq \exp\prn*{ - \frac{1}{\Delta^2}-2}, \qquad \forall \theta\in\Theta, \forall \Delta\in(0,\frac{1}{200B}].
\end{align*}
Therefore, we have
\begin{align*}
    \bbP\prn*{\forall j\in[J], \theta_j\not \in \Theta_{\hth}^+\mid \hth}=&~\bbP_{\theta_1,\cdots,\theta_J\sim \cN(0,\Delta^2\Id)}\prn*{ \forall j\in[J], \theta_j\not \in \Theta_{\hth}^+ } \\
    \leq&~ (1-q(\hth))^J \leq \exp\prn*{-Jq(\hth)}\leq \frac\delta2,
\end{align*}
and hence $\bbP\prn*{\exists j\in[J], \theta_j \in \Theta_{\hth}^+}\geq 1-\frac\delta2$. The proof of \cref{lem:MLE-auto-upper-2} (2) is thus completed, as \cref{pfeq:autoregressive-mle-chain-1} and \cref{pfeq:autoregressive-mle-chain-2} hold for any $j\in[J]$ such that $\theta_j\in \Theta_{\hth}^+$.
\end{proof}

\newcommand{\ytoken}{y_h}

\begin{proof}[\pfref{lem:Gaussian-prior-frac-refined}]
We first fix any $h\in[H]$ and $z=(\xyhm)\in\cX\times\cV^{h-1}$ and analyze the behavior of $\log \pi_{\theta'}(\ytoken\mid z)$ under $\theta'\sim \cN(\theta,\Delta^2\Id)$. 

By definition, we have $\pi_{\theta'}(\ytoken\mid z)\propto_{\ytoken} \pi_{\theta}(\ytoken\mid z)\cdot \exp\prn*{\tri{\theta'-\theta,\phi(z,\ytoken)}}$, i.e.,
\begin{align*}
    \log\pi_{\theta'}(\ytoken\mid z)-\log\pi_{\theta}(\ytoken\mid z)
    =\tri{\theta'-\theta,\phi(z,\ytoken)}-\log\En_{\ytoken\sim \pi_{\theta}(\cdot\mid z)}\exp\prn*{\tri{\theta'-\theta,\phi(z,\ytoken)}}.
\end{align*}
Therefore,
\begin{align*}
\eps_{\theta}(z)-\eps_{\theta'}(z)=&~ \kl{\pistar(\ytoken=\cdot\mid z)}{\pi_{\theta}(\ytoken=\cdot\mid z)} - \kl{\pistar(\ytoken=\cdot\mid z)}{\pi_{\theta'}(\ytoken=\cdot\mid z)} \\
=&~ \En_{\pistar(\cdot\mid z)}\tri{\theta'-\theta,\phi(z,\ytoken)}-\log\En_{\ytoken\sim \pi_{\theta}(\cdot\mid z)}\exp\prn*{\tri{\theta'-\theta,\phi(z,\ytoken)}} \\
=&~ \tri{\theta'-\theta,\phibar[\ths](z)-\phibar[\theta](z)}-\log\En_{\ytoken\sim \pi_{\theta}(\cdot\mid z)}\exp\prn*{\tri{\theta'-\theta,\phi(z,\ytoken)-\phibar[\theta](z)}},
\end{align*}
where we recall that $\phibar[\theta](z)=\En_{\ytoken\sim \pi_{\theta}(\cdot\mid z)}[\phi(z,\ytoken)]$. 

In the following, we denote $\phi_\theta(z,\ytoken)\ldef \phi(z,\ytoken)-\phibar[\theta](z)$, and
\begin{align*}
  E_{\theta'}^+(z)\ldef&~ \log\En_{\ytoken\sim \pi_{\theta}(\cdot\mid z)}\exp\prn*{\tri{\theta'-\theta,\phi_{\theta}(z,\ytoken)}}, \\
  E_{\theta'}^-(z)\ldef&~ \tri{\theta'-\theta,\phibar[\ths](z)-\phibar[\theta](z)}.
\end{align*}

We first bound $E_{\theta'}^+(z)$. By definition, we have $E_{\theta'}^+(z)=\kl{\pi_{\theta}(\cdot\mid z)}{\pi_{\theta'}(\cdot\mid z)}\geq 0$. Further, using Jensen's inequality, for any $z\in\cZ$, we have
\begin{align*}
  \En_{\theta'\sim \cN(\theta, \Delta^2 \Id)}\brk*{E_{\theta'}^+(z)}
  \leq &~ \log \En_{\theta'\sim \cN(\theta, \Delta^2 \Id)}\En_{\ytoken\sim \pi_{\theta}(\cdot\mid z)}\brk*{\exp\prn*{\tri{\theta'-\theta, \phi_{\theta}(z,\ytoken)}}} \\
  =&~ \log \En_{\ytoken\sim \pi_{\theta}(\cdot\mid z)} \exp\prn*{\frac{1}{2}\Delta^2\nrm*{\phi_\theta(z,\ytoken)}^2} \\
  \leq&~ \Delta^2\En_{\ytoken\sim \pi_{\theta}(\cdot\mid z)}\nrm*{\phi_\theta(z,\ytoken)}^2,
\end{align*}
where the last inequality follows from $e^t\leq 1+2t$ for $t\in[0,1]$.
Further, using \cref{lem:En-diff-to-Var-Dhel}, we have
\begin{align*}
    \En_{\ytoken\sim \pi_{\theta}(\cdot\mid z)}\nrm*{\phi_\theta(z,\ytoken)}^2
    =&~\En_{\ytoken\sim \pi_{\theta}(\cdot\mid z)}\nrm*{\phi(z,\ytoken)-\phi_\theta(z)}^2 \\
    \leq&~ 3 \En_{y\sim \pistar(\cdot\mid z)}\nrm*{\phi(z,\ytoken)-\phi_{\ths}(z)}^2+16B^2 \kl{\pistar(\cdot\mid z)}{\pi_\theta(\cdot\mid z)} \\ 
    =&~ 3\Varpist{z}+16B^2\eps_{\theta}(z).
\end{align*}

Next, we bound $\abs{E_{\theta'}^-(z)}$. Under $\theta'\sim \cN(\theta, \Delta^2 \Id)$, it is clear that $\tri{\theta'-\theta,\phibar[\ths](z)-\phibar[\theta](z)}\sim \cN(0, \Delta^2\nrm{\phibar[\ths](z)-\phibar[\theta](z)}^2)$ for any fixed $z$. Therefore, it holds that
\begin{align*}
    \En_{\theta'\sim \cN(\theta,\Delta^2\Id)}\abs*{ E_{\theta'}^-(z) }
    =&~ \sqrt{\frac{2}{\pi}} \Delta \cdot \nrm{\phibar[\ths](z)-\phibar[\theta](z)} \\
    \leq&~ \Delta \cdot \prn*{ 4\sqrt{\Varpist{z}\cdot \eps_\theta(z)}+8B\eps_\theta(z)} \\
    \leq&~ \prn*{\frac{1}{8K}+8B\Delta}\eps_\theta(z)+32K\Delta^2\Varpist{z},
\end{align*}
where the second line uses \cref{pfeq:phi-diff-to-KL}.

Combining the inequalities above and taking expectation of $z\sim \rho_i$, we know that for $i\in[K]$, it holds that
\begin{align*}
&~ \En_{\theta'\sim \cN(\theta,\Delta^2\Id)}\brk*{ \En_{z\sim \rho_i}\brk*{\alpha(z)E_{\theta'}^+(z)} }\leq \Delta^2 \En_{z\sim \rho_i}\brk*{3\Varpist{z}+16B^2\alpha(z)\eps_{\theta}(z)}, \\
&~ \En_{\theta'\sim \cN(\theta,\Delta^2\Id)}\brk*{ \En_{z\sim \rho_i}\brk*{\alpha(z)\abs*{ E_{\theta'}^-(z) } }}\leq \En_{z\sim \rho_i}\brk*{ 32K\Delta^2\Varpist{z} +\prn*{\frac{1}{8K}+8B\Delta}\alpha(z)\eps_\theta(z)},
\end{align*}
and hence by Markov's inequality and $\Delta\leq \frac{1}{100KB}$, it holds that $p\ldef \bbP_{\theta'\sim \cN(\theta,\Delta^2\Id)}\prn*{\theta'\not \in \Theta^-} \geq  \frac12$, where we denote $\Theta^-=\cup_{i\in[K]}\Theta_i^-$, and
\begin{align*}
    \Theta_i^-\ldef \crl*{ \theta'\in\RR^d: \En_{z\sim \rho_i} \alpha(z) \abs{\eps_{\theta}(z)-\eps_{\theta'}(z)}\geq \En_{z\sim \rho_i}\brk*{ (6K+64K^2)\Delta^2\Varpist{z}+\frac12\alpha(z)\eps_{\theta}(z)} }.
  \end{align*}
Note that $\kl{\cN(\theta, \Delta^2 \Id)}{\cN(0, \Delta^2 \Id)}=\frac{\nrm{\theta}^2}{2\Delta^2}\leq \frac{1}{2\Delta^2}$. Hence, by data-processing inequality, we can bound $q\ldef \bbP_{\theta'\sim \cN(0,\Delta^2\Id)}\prn*{\theta'\not \in \Theta^-}$ as
\begin{align*}
    \frac1{2\Delta^2}\geq&~ \kl{\cN(\theta, \Delta^2 \Id)}{\cN(0, \Delta^2 \Id)}
    \geq \kl{\Ber(p)}{\Ber(q)} \\
    =&~p\log\frac{p}{q}+(1-p)\log\frac{1-p}{1-q}
    \geq \frac12\log(1/q)-\log 2.
\end{align*}
This implies that $-\log q\leq \frac{1}{\Delta^2}+2$, giving the desired result.
\end{proof}

\begin{lemma}\label{lem:min-weights}
Suppose that $a_1,\cdots,a_H,b_1,\cdots,b_H\geq 0, A\geq 0$. Define $\Diff{a}{b}=\abs{a-b}-\frac12a$. Let
\begin{align*}
  \alpha_h=\begin{cases}
    1, &~ \sum_{j\leq h} a_j\leq A, \\
    0, &~ \sum_{j<h} a_j>A, \\
    \frac{A-\sum_{j<h} a_j}{a_h}, &~ \text{otherwise}.
  \end{cases}
\end{align*}
Then clearly $\alpha_h\in[0,1]$ $\forall h\in[H]$, and it holds that $\sum_{h=1}^H \alpha_h a_h=\min\crl*{A,\sum_{h=1}^H a_h}$, and
\begin{align*}
  \min\crl*{A,\sum_{h=1}^H a_h}
  \leq&~ 2\min\crl*{A,\sum_{h=1}^H b_h}+2\sum_{h=1}^H \alpha_h \Diff{a_h}{b_h},
  \intertext{and}
  \min\crl*{A,\sum_{h=1}^H b_h}
  \leq&~ 2\min\crl*{A,\sum_{h=1}^H a_h}+\sum_{h=1}^H \alpha_h \Diff{a_h}{b_h}.
\end{align*}

\end{lemma}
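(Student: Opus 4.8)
The plan is to first record the two structural facts about the weights $\alpha_h$ (that they lie in $[0,1]$ and that $\sum_h \alpha_h a_h = \min\{A,\sum_h a_h\}$), and then to use the defining identity $\Diff{a}{b} = |a-b| - \tfrac12 a$ to eliminate $\Diff{\cdot}{\cdot}$ from both target inequalities, reducing each to an elementary bound that follows from $\alpha_h \in [0,1]$ and the pointwise inequality $a_h - |a_h - b_h| \le b_h$.

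\textbf{The weight function.} First I would introduce the stopping index $h^\star \ldef \min\{h : \sum_{j\le h} a_j > A\}$, with the convention $h^\star \ldef H+1$ when $\sum_h a_h \le A$. Inspecting the three branches of the definition shows $\alpha_h = 1$ for $h < h^\star$, $\alpha_h = 0$ for $h > h^\star$, and $\alpha_{h^\star} = (A - \sum_{j<h^\star} a_j)/a_{h^\star}$ when $h^\star \le H$. The one point worth spelling out is that the ``otherwise'' branch is invoked exactly when $\sum_{j<h} a_j \le A < \sum_{j\le h} a_j$, which forces $a_h > 0$ (otherwise $\sum_{j\le h}a_j = \sum_{j<h}a_j \le A$), so the quotient is well defined and lies in $[0,1)$; hence $\alpha_h \in [0,1]$ for all $h$. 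Telescoping the partial sums then gives $\sum_{h=1}^H \alpha_h a_h = \sum_{j<h^\star}a_j + (A - \sum_{j<h^\star}a_j) = A$ if $h^\star \le H$, and $\sum_h a_h$ otherwise; either way $\sum_h \alpha_h a_h = \min\{A,\sum_h a_h\} \rdef m$.

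\textbf{Reducing the two inequalities.} Multiplying $|a_h-b_h| = \Diff{a_h}{b_h} + \tfrac12 a_h$ by $\alpha_h \ge 0$ and summing (using the previous step) yields the identity $\sum_h \alpha_h \Diff{a_h}{b_h} = \sum_h \alpha_h|a_h-b_h| - \tfrac12 m$. Substituting this into the two claimed bounds and cancelling, it suffices to establish $m \le \min\{A,\sum_h b_h\} + \sum_h \alpha_h|a_h-b_h|$ and $\min\{A,\sum_h b_h\} \le \tfrac32 m + \sum_h \alpha_h|a_h-b_h|$. For the first, I would write $m - \sum_h \alpha_h|a_h-b_h| = \sum_h \alpha_h(a_h - |a_h-b_h|) \le \sum_h \alpha_h b_h \le \sum_h b_h$, using $a_h - |a_h-b_h| \le b_h$ pointwise together with $\alpha_h \in [0,1]$ and $b_h \ge 0$; since also $m \le A$, the left-hand side is at most $\min\{A,\sum_h b_h\}$, which is the claim. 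For the second, $\min\{A,\sum_h b_h\} \le A$, so when $m = A$ (i.e.\ $\sum_h a_h \ge A$) the bound is immediate, while when $\sum_h a_h \le A$ all weights equal $1$, so $\sum_h \alpha_h|a_h-b_h| = \sum_h |a_h-b_h| \ge \sum_h b_h - \sum_h a_h = \sum_h b_h - m$, and the bound follows.

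\textbf{Main obstacle.} There is no substantive obstacle here; the only point that needs care is that $\sum_h \alpha_h b_h$ cannot in general be bounded by $A$ (only by $\sum_h b_h$), so the ``$\min$ with $A$'' in the first reduced inequality must be supplied by $m \le A$ rather than by the $b$-side. This is precisely why the truncated form of $\alpha$---engineered so that $\sum_h \alpha_h a_h = m \le A$---is the right weighting; the remainder is just bookkeeping of the constants $2$ and $\tfrac32$ across the two directions.
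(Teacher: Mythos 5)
Your proof is correct and follows essentially the same strategy as the paper's: establish $\sum_h \alpha_h a_h = m \ldef \min\{A,\sum_h a_h\}$, then use the identity $\sum_h \alpha_h |a_h-b_h| = \sum_h \alpha_h\Diff{a_h}{b_h} + \tfrac12 m$ to reduce both claims to inequalities involving $\sum_h \alpha_h |a_h-b_h|$. The only cosmetic difference is in the second inequality: you prove it by a direct case split on whether $\sum_h a_h \leq A$, whereas the paper first isolates the auxiliary monotonicity fact that $\min\{A,\sum_h c_h\} \leq \sum_h \alpha_h c_h$ whenever $c_h \geq a_h$ and then applies it with $c_h = a_h + |a_h-b_h|$. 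These are the same argument in substance, and your version is arguably slightly more self-contained since it avoids stating the auxiliary claim.
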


\begin{proof}[\pfref{lem:min-weights}]
Fix the sequence $a_1,\cdots,a_H$. We first prove that \begin{align}\label{pfeq:min-weights-1}
    \sum_{h=1}^H \alpha_h a_h=\min\crl*{A,\sum_{h=1}^H a_h}.
\end{align} To do so, we consider two cases.

\textbf{\underline{Case 1:} $\sum_{h=1}^H a_h\leq A$.} In this case, $\alpha_h=1\forall h\in[H]$, and the equation holds trivially.

\textbf{\underline{Case 2:} $\sum_{h=1}^H a_h>A$.}
In this case, we let $\ell\in[H]$ be the maximal index such that $\alpha_\ell>0$. Then, by definition, $\sum_{j<\ell} a_j\leq A$ and $\sum_{j\leq \ell}a_j>A$, and $\alpha_\ell=\frac{A-\sum_{j<\ell} a_j}{a_\ell}$. Hence,
\begin{align*}
    \sum_{h=1}^H \alpha_h a_h=&~ 
    \sum_{h=1}^\ell \alpha_h a_h= \sum_{j<\ell} a_j+\alpha_\ell a_\ell=A.
\end{align*} 

We also note that from the proof above, we also know that for any sequence $(c_1,\cdots,c_H)$ such that $c_h\geq a_h$ for $h\in[H]$, we have \begin{align}\label{pfeq:min-weights-2}
    \min\crl*{A, \sum_{h=1}^H c_h}\leq \sum_{h=1}^H \alpha_hc_h.
\end{align}

Equipped with these results, we prove the inequalities in the lemma statement. We note that
\begin{align*}
    \sum_{h=1}^H \alpha_h \Diff{a_h}{b_h}
    =\sum_{h=1}^H \alpha_h\abs{a_h-b_h}-\frac12\sum_{h=1}^H \alpha_h a_h,
\end{align*}
or equivalently,
\begin{align*}
    \sum_{h=1}^H \alpha_h\abs{a_h-b_h}=\sum_{h=1}^H \alpha_h \Diff{a_h}{b_h}+\frac12\min\crl*{A,\sum_{h=1}^H a_h}.
\end{align*}
Therefore,
\begin{align*}
    \min\crl*{A,\sum_{h=1}^H a_h}
    =&~\sum_{h=1}^H \alpha_h a_h
    \leq \min\crl*{A,\sum_{h=1}^H b_h}+\sum_{h=1}^H \alpha_h\abs{a_h-b_h} \\
    =&~ \min\crl*{A,\sum_{h=1}^H b_h}+\sum_{h=1}^H \alpha_h \Diff{a_h}{b_h}+\frac12\min\crl*{A,\sum_{h=1}^H a_h}.
\end{align*}
Re-organizing yields the first inequality. Similarly, we have
\begin{align*}
    \min\crl*{A,\sum_{h=1}^H b_h}
    \leq&~ \min\crl*{A,\sum_{h=1}^H \prn*{a_h+\abs{a_h- b_h} }}
    \leq \sum_{h=1}^H \alpha_h \prn*{a_h+\abs{a_h- b_h} } \\
    =&~\frac32\min\crl*{A,\sum_{h=1}^H a_h}+ \sum_{h=1}^H \alpha_h \Diff{a_h}{b_h}.
\end{align*}
The proof is hence completed.
\end{proof}

\subsection{\pfref{prop:autoregressive-SGD} (Vanilla SGD: Coverage Upper Bound)}\label{appdx:pf-SGD}

\newcommand{\pigrad}[2]{\nabla\log \pi_{\theta\sups{#1}}(y\sups{#2}\mid x\sups{#2})}

We first invoke the following standard lemma.
\begin{lemma}\label{lem:gd-upper}
Suppose that the sequence $(\theta\ind{t},g\ind{t})_{t\geq 1}$ satisfies $\theta\ind{t+1}=\Proj_{\Theta}(\theta\ind{t}+\eta g\ind{t})$ for $t\geq 1$. Then it holds that for any $\ths\in\Theta$, $T\geq 1$,
\begin{align}
  \sum_{t=1}^T \tri*{-g\ind{t}, \theta\ind{t}-\ths} 
  \leq \frac{\nrm{\ths-\theta\ind{0}}^2}{2\eta}+ \frac{\eta}{2} \sum_{t=1}^T \nrm*{g\ind{t}}^2 .
\end{align}
\end{lemma}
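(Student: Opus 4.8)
The statement is the textbook regret bound for projected (stochastic) gradient ascent, so the plan is to run the classical one-line potential argument with potential $\Phi\ind{t}\ldef\nrm{\theta\ind{t}-\ths}^2$. Note that the argument is purely deterministic given the sequence $(\theta\ind{t},g\ind{t})_{t\ge 1}$, so no probabilistic tools enter; the lemma will later be applied with $g\ind{t}$ equal to a stochastic gradient, but that plays no role here.

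First I would use that Euclidean projection onto the convex set $\Theta$ is \emph{non-expansive}. Since $\ths\in\Theta$ by hypothesis, $\Proj_\Theta(\ths)=\ths$, and hence for every $t\ge 1$,
\[
\nrm{\theta\ind{t+1}-\ths}^2
=\nrm*{\Proj_\Theta(\theta\ind{t}+\eta g\ind{t})-\Proj_\Theta(\ths)}^2
\le \nrm{\theta\ind{t}+\eta g\ind{t}-\ths}^2.
\]
Expanding the square on the right-hand side gives
\[
\nrm{\theta\ind{t+1}-\ths}^2\le \nrm{\theta\ind{t}-\ths}^2+2\eta\,\tri{g\ind{t},\theta\ind{t}-\ths}+\eta^2\nrm{g\ind{t}}^2,
\]
which rearranges into the per-step inequality
\[
\tri{-g\ind{t},\theta\ind{t}-\ths}\le \frac{1}{2\eta}\prn*{\nrm{\theta\ind{t}-\ths}^2-\nrm{\theta\ind{t+1}-\ths}^2}+\frac{\eta}{2}\nrm{g\ind{t}}^2.
\]

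Then I would sum over $t=1,\dots,T$: the squared-distance terms telescope, leaving $\frac{1}{2\eta}\prn*{\nrm{\theta\ind{0}-\ths}^2-\nrm{\theta\ind{T+1}-\ths}^2}\le \frac{1}{2\eta}\nrm{\ths-\theta\ind{0}}^2$, where we used that the final term is nonnegative and identified the initial iterate with $\theta\ind{0}$. This yields exactly
\[
\sum_{t=1}^T\tri{-g\ind{t},\theta\ind{t}-\ths}\le \frac{\nrm{\ths-\theta\ind{0}}^2}{2\eta}+\frac{\eta}{2}\sum_{t=1}^T\nrm{g\ind{t}}^2,
\]
as claimed. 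There is no real obstacle: the only two points requiring a word of care are (i) correctly invoking non-expansiveness of the projection, which is precisely where $\ths\in\Theta$ is needed, and (ii) lining up the telescoping sum with the intended initialization index.
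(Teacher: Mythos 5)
Your argument is exactly the paper's: non-expansiveness of $\Proj_\Theta$ (valid since $\ths\in\Theta$), expand the square, rearrange into a per-step inequality, and telescope. The minor indexing mismatch in the lemma statement (the sequence is indexed from $t\ge 1$ but the bound references $\theta\ind{0}$) is handled the same way in both proofs, by identifying the initial iterate with $\theta\ind{0}$.
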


Specializing \cref{lem:gd-upper} to the SGD update \eqref{eq:autoregressive-linear-SGD} and taking expectation, we have 
\begin{align}\label{eq:SGD-proof-primitive}
  \En\brk*{ \sum_{t=1}^T \tri*{-\pigrad{t}{t}, \theta\ind{t}-\ths} }
  \leq \frac{2}{\eta}+ \frac{\eta}{2} \En\brk*{ \sum_{t=1}^T \nrm*{\pigrad{t}{t}}^2 }.
\end{align}
Note that $(x\sups{t},y\sups{t})\mid \theta\sups{t} \sim \pistar$, and hence
\[\En\brk*{\pigrad{t}{t}\mid \theta\sups{t}}=\En_{(x,y)\sim \pistar}\brk*{\pigrad{t}{}}=\nabla_\theta \kl{\pistar}{\pi_{\theta}} |_{\theta=\theta\sups{t}}.\]
Further, by convexity, it holds that for any $\theta\in\Theta$,
\[G(\theta)\ldef \En_{\pistar}[\tri{-\pigrad{}{}, \theta-\ths}]= \tri*{\nabla_\theta \kl{\pistar}{\pi_\theta}, \theta-\ths}\geq \kl{\pistar}{\pi_\theta}.\]
Therefore, we have
\begin{align*}
    \En\brk*{ \sum_{t=1}^T \kl{\pistar}{\pi_{\theta\sups{t}}} }
    \leq \En\brk*{ \sum_{t=1}^T G(\theta\sups{t}) }
    \leq \frac{2}{\eta}+\frac{\eta}{2} \En\brk*{ \sum_{t=1}^T \En_{(x,y)\sim \pistar} \nrm*{\pigrad{t}{}}^2 }.
\end{align*}
On the other hand, using the fact that $\log\pi_\theta(y\mid x)$ is concave and $(HB^2)$-smooth (i.e., $-HB^2\Id\preceq \nabla^2 \log \pi_\theta(y\mid x)\preceq 0$), 
\[\nrm{\nabla \log\pi_\theta(y\mid x)-\nabla \log\pi_{\ths}(y\mid x)}^2 \leq HB^2 \cdot \tri*{ \theta-\ths, \nabla \log\pi_{\ths}(y\mid x)-\nabla \log\pi_\theta(y\mid x)}\]
Taking expectation of $(x,y)\sim \pistar$ and using the fact that $\En_{\pistar}[\nabla \log\pi_{\ths}(y\mid x)]=0$, we have
\[\En_{\pistar}\nrm{\nabla \log\pi_\theta(y\mid x)-\nabla \log\pi_{\ths}(y\mid x)}^2 \leq HB^2\cdot G(\theta), \qquad \forall \theta\in\Theta.\]
Further, note that $\En_{\pistar}\nrm{\nabla \log\pi_{\ths}(y\mid x)}^2=\sigs^2$, it holds that
\begin{align}\label{pfeq:SGD-var-self-bound}
    \En_{\pistar}\nrm{\nabla \log\pi_{\theta}(y\mid x)}^2 \leq 2\sigs^2+ 2HB^2\cdot G(\theta), \qquad \forall \theta\in\Theta.
\end{align}
Combining the inequalities above, we can conclude that
\begin{align*}
    \En\brk*{ \sum_{t=1}^T G(\theta\sups{t}) }
    \leq \frac{2}{\eta}+\eta HB^2 \En\brk*{ \sum_{t=1}^T G(\theta\sups{t}) } + \eta T\sigs^2 .
\end{align*}
We conclude that as long as $\eta\leq \frac{1}{2HB^2}$, it holds
\begin{align*}
    \frac{4}{\eta} + 2\eta T\sigs^2 \geq \En\brk*{ \sum_{t=1}^T G(\theta\sups{t}) }
    \geq \En\brk*{ \sum_{t=1}^T \kl{\pistar}{\pi_{\theta\sups{t}}} }.
\end{align*}
This is the desired upper bound.
\qed

\begin{proof}[\pfref{lem:gd-upper}]
A standard result (e.g., \citet{hazan2016introduction}) is that because the projection operator $\Proj_\Theta$ is an contraction, we have that for all $t\in[T]$, the update satisfies
\begin{align}
    \begin{aligned}
        &~ \nrm*{\theta\ind{t}-\ths}^2-\nrm*{\theta\ind{t+1}-\ths}^2 \\
  \geq &~\nrm*{\theta\ind{t}-\ths}^2-\nrm*{\theta\ind{t}+\eta g\ind{t}-\ths}^2 \\
  =&~ 2\eta \tri*{-g\ind{t}, \theta\ind{t}-\ths} - \eta^2 \nrm*{g\ind{t}}^2.
    \end{aligned}
\end{align}
Summing this inequality across steps $t=1,2,\cdots,T$, telescoping, and taking expectation, we have
\begin{align}
  \sum_{t=1}^T \tri*{-g\ind{t}, \theta\ind{t}-\ths} 
  \leq \frac{\nrm{\ths-\theta\ind{0}}^2-\nrm{\ths-\theta\ind{T+1}}^2}{2\eta}+ \frac{\eta}{2} \sum_{t=1}^T \nrm*{g\ind{t}}^2.
\end{align}
This gives the desired upper bound.
\end{proof}

\subsection{\pfref{prop:autoregressive-SGD} (Vanilla SGD: Coverage Lower Bound)}\label{appdx:pf-SGD-lower}

\newcommand{\etabar}{\wb{\eta}}
\newcommand{\Fhat}{\wh{F}}
\newcommand{\Bbar}{\wb{B}}

In the following, we construct $\cX=[\frac{8}{HB},+\infty)\sqcup\crl{-,+}$, $\cV=\crl{-1,0,1}$ and $\Theta=\ball(1)$ with $d= 2$. We fix parameters $B\geq \Bbar \geq 1$.

\paragraph{Construction of $\phi$}
We first construct a map $v:\cX\times\cV\to\RR^2$ as follows. For any $\eta\geq \frac{8}{HB}$,
we define $\alpha_\eta=\frac{\eta HB}{2(\eta HB-1)}\leq \frac{5}{8}$ and let
\begin{align*}
    v(\eta,0)=[1;0], \qquad 
    v(\eta,1)=[\alpha_\eta;\sqrt{1-\alpha^2_\eta}], \qquad
    v(\eta,-1)=[\alpha_\eta;-\sqrt{1-\alpha^2_\eta}].
\end{align*}
We further define
\[v(+,a)=\frac{1}{B}[\Bbar a; 0], \qquad  v(-,a)=\frac{1}{B}[0;\Bbar a] \qquad \forall a\in\cV=\crl{-1,0,1}.\]
For $x\in\cX, y_{1:h}\in\cV^h$, we define $\phi(x,y_{1:h})=Bv(x,y_h)$.\footnote{In other words, for any $\theta\in\Theta$, $y_{1:H}\sim \pi_\theta(\cdot\mid x)$ are sampled i.i.d.
with $y\sim P_\theta(\cdot\mid x)$,
where $P_\theta$ is defined as $P_\theta(a\mid x)=\frac{\exp\prn*{B\tri{v(x,a), \theta}}}{\sum_{a'\in\cV}\exp\prn*{B\tri{v(x,a'), \theta}}}$.
}

Under this construction of $\phi$, we then prove the lower bound by considering two cases based on the value of $\eta$.

\begin{lemma}\label{lem:sgd-lower-1}
Suppose that $\eta\geq \frac{8}{HB}$, $\log N\leq \frac{HB}{8}$, and $B\geq c_B\log(TH)$ for a large constant $c_B>1$. Then, with the distribution $\mu$ being supported on $x=\eta$ and $\ths=[1;0]$, the following holds. 

(1) The variance of such an instance is bounded: $\sigs\leq 1$.

(2) There exists $\theta\sups{0}\in\Theta$ such that with probability at least $0.5$, the SGD sequence $(\theta\sups{t})$ satisfies $\Pcov(\pi_{\theta\sups{t}})\geq 1-\frac1{2T}$ for all $t\in[T]$.
\end{lemma}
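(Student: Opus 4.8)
\textbf{Proof plan for \cref{lem:sgd-lower-1}.}

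The plan is to analyze the one-dimensional projection of the SGD trajectory onto the direction orthogonal to $\ths = [1;0]$, and show that it stays bounded away from $\ths$. Write $\theta\ind{t} = [u\ind{t}; w\ind{t}]$; the coverage deficit is governed by how large $|w\ind{t}|$ can be. First I would compute the per-token sampling distribution $P_{\theta}(\cdot\mid x=\eta)$ explicitly: it is the softmax over $\crl{v(\eta,0), v(\eta,1), v(\eta,-1)}$ scaled by $B$, and at the ground truth $\ths=[1;0]$ it reduces to a symmetric distribution putting most mass on the three tokens depending on $\alpha_\eta$. The key algebraic fact to extract is that the single-token gradient $\nabla \log P_{\theta}(y\mid x=\eta) = B(v(\eta,y) - \En_{y'\sim P_\theta}[v(\eta,y')])$ has, in the second coordinate, a contribution of magnitude $\asymp B\sqrt{1-\alpha_\eta^2}$ whenever $y=\pm1$, and the sequence-level gradient sums $H$ such contributions. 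The choice $\alpha_\eta = \frac{\eta H B}{2(\eta H B - 1)}$ is engineered precisely so that the effective second-coordinate step size $\eta \cdot H B \sqrt{1-\alpha_\eta^2}$ combined with the curvature/restoring force is, in expectation, neutral or expanding rather than contracting — i.e., SGD cannot drive $w\ind{t}$ toward $0$.

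The main steps I would carry out: (i) \textbf{Variance bound.} Compute $\sigs^2 = \En_{\pistar}[\sum_{h=1}^H \nrm{\phi(x,y_{1:h}) - \phibar(x,y_{1:h-1})}^2]$; since each token is i.i.d.\ from $P_{\ths}(\cdot\mid x=\eta)$, this is $H$ times the per-token variance of $B v(\eta, y)$ under $P_{\ths}$. Using $B^2 \nrm{v}^2 \le B^2$ pointwise would give $\sigs^2 \le HB^2$, which is too weak; instead I would use that at $\ths = [1;0]$ the softmax logits are $B\tri{v(\eta,a),[1;0]} = B$ for $a=0$ and $B\alpha_\eta$ for $a=\pm 1$, so $P_{\ths}(0\mid\eta) = \frac{1}{1+2e^{-B(1-\alpha_\eta)}}$, making $y=\pm1$ exponentially rare (mass $\lesssim e^{-B(1-\alpha_\eta)} \le e^{-3B/8}$); the variance is then dominated by that rare event and bounded by $\asymp HB^2 e^{-cB}$. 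With $B \ge c_B \log(TH)$ this is $\le 1$, giving (1). (ii) \textbf{Trajectory analysis.} Initialize $\theta\ind{0} = [\sqrt{1-\beta^2}; \beta]$ for a suitable constant $\beta$ (to be chosen). Track the one-step drift $\En[w\ind{t+1} - w\ind{t} \mid \theta\ind{t}]$ and the conditional second moment; show via the projection-is-contraction argument (as in \cref{lem:gd-upper}) together with the drift computation that $w\ind{t}$ remains $\ge \beta/2$ (say) with high probability on the whole horizon $t\in[T]$, by a union bound / Azuma-type argument over $T$ steps (the fluctuations are controlled because the gradient in the second coordinate is, with overwhelming probability $1 - O(e^{-cB})$, tiny, and only occasionally of size $\asymp B$; a Freedman bound over $T$ steps with $B \ge c_B\log(TH)$ makes the total perturbation $o(1)$ except on an event of probability $\le 1/2$). (iii) \textbf{From $|w\ind{t}|$ bounded below to coverage deficit.} Once $\theta\ind{t}$ has second coordinate $\ge \beta/2$ while $\ths = [1;0]$, the distributions $P_{\theta\ind{t}}(\cdot\mid x=\eta)$ and $P_{\ths}(\cdot\mid x=\eta)$ differ: under $\ths$ the token $y=0$ has mass $\to 1$, but under $\theta\ind{t}$ a different token carries comparable mass, so the density ratio $\pistar(y\mid x)/\pi_{\theta\ind{t}}(y\mid x)$ at the sequence level is $\ge N$ with probability $\ge 1 - \frac{1}{2T}$ — this is where the condition $\log N \le HB/8$ enters, since we need $H$ tokens' worth of accumulated ratio to exceed $\log N$, and each pivotal token contributes $\asymp B\beta$ to the log-ratio. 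Concretely $\Pcov[N](\pi_{\theta\ind{t}}) = \bbP_{\pistar}[\sum_h \log\frac{\pistar(y_h\mid\cdot)}{\pi_{\theta\ind{t}}(y_h\mid\cdot)} \ge \log N] \ge 1 - \frac{1}{2T}$ by a Chernoff bound on the sum of $H$ i.i.d.\ bounded log-ratio terms with positive mean $\gtrsim 1$.

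\textbf{Main obstacle.} The delicate part is step (ii): showing the \emph{one-sided} control that $w\ind{t}$ does not collapse to zero for \emph{all} $t \in [T]$ simultaneously. The drift computation must be done carefully — the expected gradient in the second coordinate under $\theta\ind{t}$ is not exactly zero (only under $\ths$), and one must verify that with the engineered value of $\alpha_\eta$ the restoring force toward $w=0$ is weaker than the noise floor, so that the projection onto $\Theta=\ball(1)$ keeps $\theta\ind{t}$ on (or near) the unit sphere at a roughly fixed angle from $\ths$. I expect this to require a Lyapunov-function argument on $1 - \tri{\theta\ind{t}, [1;0]}$ or directly on $w\ind{t}$, combined with a martingale concentration bound whose error term is $\mathrm{poly}(T,H)\cdot e^{-cB}$, absorbed by the hypothesis $B \ge c_B\log(TH)$. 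The remaining steps are comparatively routine softmax/Chernoff calculations.
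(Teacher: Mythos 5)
Your steps (i) and (iii) are essentially on target — the variance bound via exponential concentration of the softmax mass on $y=0$ at $\ths$ matches the paper's Claim 2, and reducing large coverage to a per-token log-ratio of order $B$ accumulated over $H$ tokens is what the paper's Claim 5 does — but step (ii) has the mechanism backwards, and as stated it would not prove the lemma.

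The failure mode you describe (``the restoring force toward $w=0$ is weaker than the noise floor'', so $w\ind{t}$ stays near its initial value) does not occur. Consider the drift: when $\theta\ind{t}$ is close to $v_1$, Claim 1 gives $F(\theta\ind{t})\approx v_1$ and the observed sufficient statistic is $\Fhat\approx v_0$ (since with probability $1-O(TH e^{-cB})$ every token equals $0$), so the full-sequence gradient is $\approx HB(v_0-v_1)$. The step taken is $\eta\cdot HB(v_0-v_1) = \bar\eta(v_0-v_1)$ with $\bar\eta=\eta HB\geq 8$, which has norm $\geq 8\|v_0-v_1\|$. This is not a small step that fails to move $w$ toward $0$; it is a \emph{large} step that \emph{overshoots}. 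The entire reason for the specific choice $\alpha_\eta=\frac{\bar\eta}{2(\bar\eta-1)}$ is the exact algebraic identity
\begin{align*}
v_a + \bar\eta(v_0-v_a) = (\bar\eta-1)\, v_{-a}, \qquad a\in\{-1,1\},
\end{align*}
so that, after projecting back onto $\Theta=\ball(1)$, one SGD step maps $v_1\mapsto v_{-1}$ and $v_{-1}\mapsto v_1$: a period-$2$ oscillation, not a frozen point. In particular your claim that ``$w\ind{t}$ remains $\geq\beta/2$ for all $t\in[T]$'' is false --- $w\ind{t}$ flips sign every step --- and the Lyapunov argument on $w\ind{t}$ that you sketch would not go through. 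The correct invariant is $\min\{\|\theta\ind{t}-v_1\|,\|\theta\ind{t}-v_{-1}\|\}\leq\eps$, and the paper proves it inductively via Claim 4: if $\|\theta\ind{t}-v_a\|\leq\eps$ then $\|\theta\ind{t+1}-v_{-a}\|\leq\eps$ (absorbing the $O(e^{-B/4})$ softmax tail using $B\geq c_B\log(TH)$), with no martingale/Azuma machinery needed on the high-probability event from Claim 2.

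There is a second, related issue: you leave the initialization $\beta$ unspecified (``a suitable constant $\beta$''), but the argument is very rigid about it. If $\beta$ is a generic small constant then $\langle\theta\ind{0},v_0\rangle\approx 1$, the softmax $P_{\theta\ind{0}}$ still concentrates on $y=0$ exactly as $P_{\ths}$ does, the expected gradient is $\approx 0$, and the iterate barely moves --- but then coverage at $\theta\ind{0}$ is small, not large. To obtain a coverage lower bound of $1-\frac{1}{2T}$, you need $P_{\theta\ind{t}}$ to concentrate on a \emph{different} token than $P_{\ths}$, which forces $\theta\ind{0}$ to be within $\eps$ of $v_1$ (or $v_{-1}$), i.e.\ $\beta=\sqrt{1-\alpha_\eta^2}$. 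And at that initialization, the dynamics are precisely the overshoot oscillation your step (ii) does not account for.

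In short: the gap is conceptual, not a missing calculation. You need to replace the ``weak drift + concentration'' picture with the ``overshoot causes a period-$2$ limit cycle between $v_1$ and $v_{-1}$'' picture, which is where the otherwise mysterious definition of $\alpha_\eta$ is actually used.
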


\begin{lemma}\label{lem:sgd-lower-2}
Suppose that $\eta\leq \frac{8}{HB}$, $\log N\leq \frac{HB}{8}$, and $B\geq \Bbar\geq c_B\log(TH)$ for a large constant $c_B>1$. Then, there exists distribution $\mu$ and $\ths\in\Theta$ such that the following holds. 

(1) The variance of such an instance is bounded: $\sigs\leq 1$.

(2) There exists $\theta\sups{0}\in\Theta$ such that with probability at least $0.5$, the SGD sequence $(\theta\sups{t})$ satisfies
\begin{align*}
    \Pcov(\pi_{\theta\sups{t}})\geq c\min\crl*{1, \frac{HB}{T\cdot \Bbar\xspace^2 \log N }}, \qquad \forall t\in[T].
\end{align*}
\end{lemma}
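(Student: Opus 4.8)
To establish \cref{lem:sgd-lower-2}, the plan is to build a hard instance in the same spirit as \cref{lem:sgd-lower-1}, but now exploiting \emph{slow convergence} rather than instability. Since $\eta \le \tfrac{8}{HB}$, a single projected-SGD step moves the parameter by at most $\eta\,\nrm{\nabla\log\pi_\theta(y\mid x)}$, and on the ``small-feature'' prompts $x\in\{-,+\}$ — where $\phi(x,\cdot)=B v(x,\cdot)$ has norm at most $\Bbar$ — this is $O(\eta H\Bbar)=O(\Bbar/B)$. I would take $\mu$ supported on a family of prompts assembled from the blocks $v(+,a)=\tfrac1B[\Bbar a;0]$, $v(-,a)=\tfrac1B[0;\Bbar a]$ (and possibly the continuum $\cX\supseteq[\tfrac8{HB},\infty)$, re-purposed to index a range of prompt difficulties), with ground truth $\ths\in\Theta$ chosen so that on each prompt $P_{\ths}(\cdot\mid x)$ is concentrated on a single token up to probability $e^{-\Omega(\Bbar)}$. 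Because $\Bbar\ge c_B\log(TH)$, two facts follow immediately: (i) the \varname is $\sigs^2=\En_{\pistar}[\sum_{h}\Dxyh{\mathrm{Var}_{\pistar}}{\phi}{\phi}]\le H\Bbar^2 e^{-\Omega(\Bbar)}\le 1$, giving part~(1); and (ii) with probability at least $1-TH\,e^{-\Omega(\Bbar)}\ge \tfrac12$, all $TH$ sampled tokens coincide with the ground-truth-favored token, on which event the SGD trajectory is a \emph{deterministic} function of the sampled prompt sequence because every stochastic gradient equals its conditional mean. This reduces part~(2) to a deterministic dynamics question, exactly as in \cref{lem:sgd-lower-1}.

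On the deterministic side I would track, for each prompt, the single relevant coordinate $s_t$ of $\theta\sups{t}$ (e.g.\ $s_t=\Bbar\,\theta\sups{t}_1$ on prompt $+$). Starting from an adversarial $\theta\sups{0}$, the drift on that prompt is $\eta\,\Bbar H\,(\bar a_{\ths}-\bar a_{\theta\sups{t}})$, and in the relevant range $1-\bar a_{\theta}\asymp e^{-s_t}$, so $s_{t+1}-s_t\asymp \eta\Bbar^2 H e^{-s_t}$. Multiplying by $e^{s_t}$ and telescoping gives $e^{s_t}\lesssim e^{s_0}+\eta\Bbar^2 H t \le O(1)+\tfrac{\Bbar^2 t}{B}$, using $\eta\le\tfrac8{HB}$ (here I also need $\eta\lesssim \tfrac1{H\Bbar^2}$ so that the per-prompt dynamics stays in the monotone, non-overshooting regime where this ODE comparison is valid — this is where $\Bbar^2\lesssim B$ enters). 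Consequently, after $T$ steps the per-token KL/Hellinger gap on a prompt is still $\gtrsim e^{-s_T}\gtrsim \tfrac{B}{\Bbar^2 T}$ whenever $T\gtrsim \tfrac{HB}{\Bbar^2\log N}$, and is $\Omega(1)$-sized — i.e.\ still bad — for \emph{all} $t\le T$ in the complementary regime $T\lesssim \tfrac{HB}{\Bbar^2\log N}$.

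To convert ``$\theta\sups{t}$ is still far from $\ths$'' into a coverage lower bound I would invoke \cref{prop:PCov-lower-bound}: for any $\delta\in(0,1)$, $\Dcov{\pistar}{\pi_{\theta}}\ge \bbP_{\pistar}\prn*{\sum_{h=1}^H \Dxyh{\Dhels}{\pistar}{\pi_\theta}\ge \log(N/\delta)}-\delta$. Choosing $\mu$ so that the per-prompt sums $\sum_h \Dhelshort^2$ are spread across a range of scales — so that a $\Theta\prn*{\tfrac{HB}{T\Bbar^2\log N}}$-fraction of prompts have not yet been driven below the $\log(2N)$ threshold after $T$ steps — and setting $\delta=\tfrac12 c\min\{1,\tfrac{HB}{T\Bbar^2\log N}\}$, yields $\Dcov{\pistar}{\pi_{\theta\sups{t}}}\ge \tfrac12 c\min\{1,\tfrac{HB}{T\Bbar^2\log N}\}$ for all $t\in[T]$ on the good event, and a final choice of $\theta\sups{0}$ makes the initial gap maximal. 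I expect the main obstacle to be precisely this last step: engineering the heterogeneous prompt distribution $\mu$ (while keeping $\sigs\le1$ simultaneously, which constrains how strongly each prompt concentrates) so that the fraction of ``still-unfit'' prompts decays as $1/T$ with exactly the constants $\tfrac{HB}{\Bbar^2\log N}$, rather than exhibiting the sharp $\Omega(1)\to 0$ transition in $T$ that a single-prompt construction would produce.
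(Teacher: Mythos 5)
Your high-level plan matches the paper's at the level of mechanism (conditioning on the event that all observed tokens equal the $\pistar$-favored token, so that the SGD trajectory becomes a deterministic function of the prompt sequence, and then reasoning about slow drift of the parameter coordinate), but your proposed construction and the step that bridges ``slow drift'' to a \emph{graded} coverage lower bound both diverge from the paper, and the latter contains a genuine gap.

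First, the discretized-ODE telescoping $e^{s_{t+1}}-e^{s_t}\approx \eta\Bbar^2 H e^{-s_t}\cdot e^{s_t}$ is only a valid approximation when the per-step increment $\Delta s_t = \Bbar(\theta^{t+1}_1-\theta^t_1)\lesssim 1$, which (as you note) forces $\eta\lesssim 1/(H\Bbar^2)$, i.e.\ $\Bbar^2\lesssim B$. But the lemma only assumes $B\geq \Bbar$, not $B\gtrsim\Bbar^2$; in the regime $B\asymp\Bbar\gg 1$ your ODE comparison fails outright. The paper sidesteps this entirely: instead of tracking the dynamics of $e^{s_t}$ step-by-step, it just uses monotonicity (the gradient component is nonnegative, so $\theta^t_1$ only increases) to bound the per-step drift by its value at $\theta^0$, and bounds the \emph{total accumulated} drift by $(\text{\# occurrences of the hard prompt})\times(\text{per-step bound})$. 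No smoothness-of-dynamics condition is needed.

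Second, your proposed fix for the ``sharp $\Omega(1)\to 0$ transition'' — a heterogeneous prompt distribution spread over a continuum of difficulties — is significantly more complicated than necessary, and as you set it up it likely does not work: with $d=2$, many prompts sharing the same parameter coordinate would all improve simultaneously when SGD updates on any one of them, defeating the multi-scale idea, and fixing this by increasing $d$ changes the object being lower-bounded. The paper's construction is much more elementary. It uses exactly two prompts, $\crl{+,-}$, with $\mu(+)=\min\crl{1, \frac{BH}{512eT\Bbar^2\log N}}$ \emph{chosen as a function of $T$}. The hard prompt $+$ is so rare that with probability $\geq 1/2$ it appears at most $4T\mu(+)$ times over $T$ iterations, so the cumulative drift on $\theta_1$ stays $\leq 1/\Bbar$, which ensures $\theta^t_1\leq r$ (where $e^{r\Bbar}=H/(4\log N)$) for \emph{all} $t\leq T$; and then the coverage is $\geq \mu(+)/2$ directly, because a $\mu(+)$-fraction of prompts is always ``still unfit''. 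In other words, the ``fraction of unfit prompts'' is not engineered to decay gracefully in $t$ --- it is held pinned at $\mu(+)$, and the $1/T$ scaling you want comes from $\mu(+)$ itself, not from a continuum. You correctly flagged this step as the obstacle, but the resolution is a rare hard prompt, not a multi-scale distribution, and this observation is the crux of the proof. Additionally, your drift accounting appears to drop the $\mu(+)$ factor on the per-step drift (only a $\mu(+)$-fraction of steps touch the hard prompt), which is precisely what makes the rare-prompt trick work.
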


The proof of \cref{prop:autoregressive-SGD} (lower bound) is then completed by combining \cref{lem:sgd-lower-1} and \cref{lem:sgd-lower-2}.
\qed

\begin{proof}[\pfref{lem:sgd-lower-1}]
Fix the parameter $\eta\geq \frac{8}{HB}$.
We denote $\etabar\ldef \eta\cdot HB$ and $\alpha=\alpha_\eta=\frac{\etabar}{2(\etabar-1)}\leq \frac{5}{8}$. 
Denote
\begin{align*}
    v_0=[1;0], \qquad 
    v_1=[\alpha;\sqrt{1-\alpha^2}], \qquad
    v_{-1}=[\alpha;-\sqrt{1-\alpha^2}].
\end{align*}
Under our construction, we have 
\begin{align*}
    \pi_\theta(y_h\mid \eta,y_{1:h-1})=\frac{\exp\prn*{ B\tri{\theta,v_{y_h}} }}{ \sum_{a\in\cV}\exp\prn*{ B\tri{\theta,v_{a}} }}\rdef P_\theta(y_h).
\end{align*}

We study the SGD update starting from $\theta\sups{0}=v_1$. By definition, $\phi(\eta,y_{1:h})=Bv(\eta,y_h)$, and hence
\begin{align*}
    \nabla \log \pi_\theta(y_{1:H}\mid \eta)=\sum_{h=1}^H \prn*{ Bv(\eta,y_h)-\EE_{a\sim P_\theta}[Bv(\eta,a)] }=B\sum_{h=1}^H \prn*{ v_{y_h}-\EE_{a\sim P_\theta}[v_a] }.
\end{align*}
In the following, we denote 
\begin{align*}
    \Fhat(y_{1:H})\ldef \frac{1}{H}\sum_{h=1}^H v_{y_h}, \qquad F(\theta)\ldef \EE_{a\sim \pi_\theta}[v_a]=\frac{\sum_{a\in\cV} a\exp\prn*{ B\tri{\theta,v_{a}} }}{ \sum_{a\in \cV}\exp\prn*{ B\tri{\theta,v_{a}} }}.
\end{align*}
Then, the SGD update can be written as
\begin{align*}
    u\sups{t}=\theta\sups{t}+\etabar\prn*{\Fhat(y_{1:H}\sups{t})-F(\theta\sups{t})}, \qquad \theta\sups{t+1}=\Proj_{\Theta}\prn*{u\sups{t}}.
\end{align*}

We make the following claims.

\textbf{Claim 1.} For $a\in\crl{-1,0,1}$ and $\nrm{\theta-v_a}\leq \frac1{16}$, it holds that $1-P_\theta(a)\leq 2e^{-B/4}\rdef \eps_1 $ and hence $\nrm{F(\theta)-v_a}\leq 2\eps_1$. 

\textbf{Claim 2.} Suppose that $\eps_1\leq \min\crl*{\frac{1}{4TH}, \frac{1}{5HB^2}}$. Then it holds that $\sigs\leq 1$.
Further, with probability at least $0.5$, it holds that $\Fhat(y_{1:H}\sups{t})=e_0$ for all $t\in[T]$.

In the following, we condition on this event.

\textbf{Claim 3.} By definition, for $a\in\crl*{-1,1}$, we have $\nrm{v_a+\etabar(v_0-v_a)}=\etabar-1$ and $v_{-1}+v_{1}=\frac{\etabar}{\etabar-1}v_0$.

\textbf{Claim 4.} Let $\eps=16\eps_1$. Suppose that $\eps\leq \frac{1}{16}$. Then for $a\in\crl{-1,1}$, if $\nrm{\theta\sups{t}-v_a}\leq \eps$, then it holds that $\nrm{\theta\sups{t+1}-v_{-a}}\leq \eps$.

\textbf{Claim 5.} Suppose that $\eps_1\leq \frac{1}{2TH}$ and $\log N\leq \frac{HB}{8}$. Then
$\Dcov{\pistar}{\pi_\theta}\geq 1-\frac{1}{2T}$ for $\theta\in\Theta$ such that $\min\crl{\nrm{\theta-v_1}, \nrm{\theta-v_{-1}}}\leq \frac{1}{16}$.

Combining the above claims, we know that there is a constant $C$ such that as long as $B\geq c_B\log(TH)$, it holds that $\sigs\leq 1$. Further, under the success event of claim 2, it holds that for $a\in\crl{-1,1}$, $\nrm{\theta\sups{t}-v_a}\leq \frac{1}{16}$ for all $t\in[T]$ such that $2\mid t-a$.
Therefore, by Claim 5, this gives $\Pcov(\pi_{\theta\sups{t}})\geq \frac12$ as long as $\log N\leq \frac{HB}{8}$. 
\qed

\paragraph{Proof for Claims 1-5}
To prove Claim 1, we note that $\tri{\theta, v_a}\geq 1-\nrm{\theta-v_a}\geq \frac{15}{16}$ and for $i\neq a$, $\tri{\theta,v_{i}}\leq \tri{v_a,v_{i}}+\nrm{\theta-v_a}\leq \alpha+\frac{1}{16}\leq \frac{11}{16}$. Therefore,
\begin{align*}
    1-P_\theta(a)\leq \frac{\sum_{i\neq a}e^{B\tri{\theta,v_i}}}{e^{B\tri{\theta,v_a}}}\leq \frac{2}{e^{B/4}}=\eps_1.
\end{align*}
This completes the proof of Claim 1.

Next, we prove Claim 2. Recall that $\ths=[1;0]=v_0$. By Claim 1, we know $1-P_{\ths}(0)\leq \eps_1$, and hence $\Var_{a\sim P_{\ths}}[v_a]\leq 5\eps_1$. This implies $\sigs^2=HB^2\Var_{a\sim P_{\ths}}[v_a]\leq 5HB^2\eps_1\leq 1$.

We also know $\bbP_{\pistar}(y_h=0\;\forall h\in[H])=P_{\ths}(0)^H\geq (1-\eps_1)^H\geq 1-H\eps_1$. Therefore, taking the union bound, we know $\bbP(y_h\sups{t}=0\;\forall h\in[H], t\in[T])\geq 1-TH\eps_1\geq \frac12$. This completes the proof of Claim 2.

Furthermore, for any $\theta$ such that $\min\crl{\nrm{\theta-v_1}, \nrm{\theta-v_{-1}}}\leq \frac{1}{16}$, as long as $\log N\leq H(\log(1-\eps_1)-\log(\eps_1))$, we have
\begin{align*}
    \Dcov{\pistar}{\pi_\theta}\geq \prn*{1-\frac{1}{2n}}\indic\crl*{H\log\pistar(0)-H\log\pi_\theta(0)\geq \log N}\geq 1-\frac{1}{2n}. 
\end{align*}
In particular, this is ensured when $\log N\leq \frac{HB}{8}$. This completes the proof of Claim 5.

Claim 3 follows immediately from the definition of $\alpha$, $v_0, v_1$ and $v_{-1}$.

Finally, we prove Claim 4. Recall that $u\sups{t}\ldef \theta\sups{t}+\etabar\prn*{\Fhat(y_{1:H}\sups{t})-F(\theta\sups{t})}$. Then it holds that
\begin{align*}
    \nrm{u\sups{t}-(\etabar-1)v_{-a}}=\nrm{u\sups{t}-\etabar v_0+(\etabar-1)v_a}\leq&~ \nrm{\theta\sups{t}-v_a}+\etabar\nrm{\Fhat(y_{1:H}\sups{t})-v_0}+\etabar \nrm{F(\theta\sups{t})-v_a} \\
    \leq&~ \eps+2\etabar\eps_1\rdef \eps'.
\end{align*}
In particular, it holds that $\abs{\nrm{u\sups{t}}-(\etabar-1)}\leq \eps'$ and hence $\nrm{u\sups{t}}\geq \etabar-1-\eps'=(1-2\eps_1)\etabar-1-\eps \geq \frac{\etabar}{2}\geq 1$. Therefore, $\theta\sups{t+1}=\Proj_\Theta(u\sups{t})=\frac{u\sups{t}}{\nrm{u\sups{t}}}$, and we can bound
\begin{align*}
    \nrm{\theta\sups{t+1}-v_{-a}}
    =&~\nrm*{ \frac{u\sups{t}-(\etabar-1)v_{-a}}{\nrm{u\sups{t}}} + v_{-a}\prn*{\frac{\etabar-1}{\nrm{u\sups{t}}}-1} } \\
    \leq&~ \frac{\nrm{u\sups{t}-(\etabar-1)v_{-a}}}{\nrm{u\sups{t}}} + \frac{\abs{\etabar-1-\nrm{u\sups{t}}}}{\nrm{u\sups{t}}} \\
    \leq&~ \frac{2\eps'}{\nrm{u\sups{t}}}
    \leq \frac{4\eps'}{\etabar}
    =\frac{4}{\etabar}\eps+8\eps_1\leq \eps.
\end{align*}
\end{proof}

\begin{proof}[\pfref{lem:sgd-lower-2}]
We again denote $\etabar=HB\eta\leq 8$. We choose $\ths=[\frac12;\frac12]$, and let the distribution $\mu$ be supported on $\crl{-,+}$: 
\[\mu(+)=1-\mu(-)=\min\crl*{1, \frac{BH}{512en\Bbar^2\log N}}.\] 
Note that for $x\in\crl{-,+}$, 
$\pistar(1\mid x)=\frac{e^{\Bbar/2}}{e^{-\Bbar/2}+1+e^{\Bbar/2}}$, and hence $1-\pistar(y_1=1\mid x)\leq 2e^{-\Bbar/2}$. Therefore, similar to Case 1, we have the following claims.

\textbf{Claim 1.} Suppose that $\Bbar\geq c_B\log(TH)$ for a large constant $c_B>0$. Then it holds that $\sigs\leq 1$, and with probability at least $0.5$, it holds that $\sum_{t=1}^T \indic\crl{x\sups{t}=+}\leq 4T\mu(1)$, and $y_h\sups{t}=1$ for all $h\in[H], t\in[T]$. 

In the following, we condition on this event. We choose $r\leq \frac12$ such that $e^{r\Bbar}=\frac{H}{4\log N}$, and we let $\theta\sups{0}=[r-\frac{1}{\Bbar};\frac14]$.

\textbf{Claim 2.} For any $\theta\in\Theta\subset \RR^2$, it holds that $1-P_\theta(1\mid +)\leq \frac{2}{e^{\theta[1] \Bbar}}$ (where $w[1]$ denotes the first coordinate of a vector $w\in\RR^2$). Hence, when $x\sups{t}=+$, using $y_h\sups{t}\equiv 1$, we have $\nabla \log \pi_\theta(y\sups{t}\mid x\sups{t})[2]=0$ and
\begin{align*}
    0\leq \nabla \log \pi_\theta(y\sups{t}\mid x\sups{t})[1]=H\Bbar(1-\En_{a\sim P_\theta(\cdot\mid +)}[a])
    \leq 2H\Bbar\prn{1-P_\theta(1\mid +)}\leq \frac{4H\Bbar}{e^{\theta[1] \Bbar}}.
\end{align*}
Similarly, when $x\sups{t}=-$, we have
\begin{align*}
    \nabla \log \pi_\theta(y\sups{t}\mid x\sups{t})[1]=0, \qquad 
    0\leq \nabla \log \pi_\theta(y\sups{t}\mid x\sups{t})[2] \leq \frac{4H\Bbar}{e^{\theta[2] \Bbar}}.
\end{align*}
Then, combining the inequalities above with Claim 1, we can inductively show that for any $t\in[T]$,
\begin{align*}
    \theta\sups{t}[1]-\theta\sups{0}[1]\leq \sum_{t=1}^T \indic\crl{x\sups{t}=+}\cdot \frac{4\eta H\Bbar}{e^{\theta\sups{0}[1] \Bbar}}
    \leq T\cdot \mu(+) \frac{16e\etabar \Bbar}{Be^{r \Bbar}}\leq \mu(+)\cdot \frac{512e\Bbar n\log N}{BH} \leq \frac{1}{\Bbar}.
\end{align*}
Therefore, we have $\theta\sups{t}[1]\leq r$ for $t\leq T$. It remains to prove the following claim.

\textbf{Claim 3.} Suppose that $e^{\theta[1] \Bbar}\leq \frac{H}{4\log N}$. Then it holds that $\Pcov(\pi_\theta)\geq \frac{\mu(+)}2$.

To prove Claim 3, we note that similar to Claim 2, $\bbP_{\pistar}(y_h=1\;\forall h\in[H]\mid x=+)\geq \frac{1}{2}$. Further, $\log\pistar(y_1=1\mid +)\geq \log(1-2e^{-\Bbar/2})\geq -3e^{-\Bbar/2}$ and $\log\pi_\theta(y_1=1\mid +)\leq -\frac{1}{3e^{\theta[1] \Bbar}}$. Hence, for $y^\star\in\cV^H$ being $y^\star_h=1$ for $h\in[H]$, it holds that 
\[\log\pistar(y^\star\mid +)-\log\pi_\theta(y^\star\mid +)\geq H\cdot \prn*{\frac{1}{3e^{\theta[1]\Bbar}}-3e^{-\Bbar/2}}\geq \log N. \]
The immediately yields
\[\Pcov(\pi_\theta)\geq \mu(+)\cdot \bbP_{\pistar}(y=y^\star\mid x=+)\geq \frac{\mu(+)}{2}.\]
\end{proof}

\subsection{\pfref{prop:autoregressive-normalized-SGD} (Coverage for Normalized SGD)}\label{appdx:pf-normalized-SGD}

We denote $M\ldef \log N$. We analyze the normalized SGD iterates assuming $\lambda\geq 8BM$ and $\frac{\lambda\eta}{M}\leq \frac1{16}$. 

Denote
\begin{align*}
    \gtilt\ldef \frac{\ghatt}{\lambda+\nrm{\ghatt}}.
\end{align*}
Then the normalized SGD update can be rewritten as $\theta\sups{t+1}=\Proj_{\Theta}(\theta+\eta\gtilt[t][t])$.
Specializing \cref{lem:gd-upper} to the normalized SGD update and using $\Theta\subseteq \ball(1)$ yields 
\begin{align*}
    \sum_{t=1}^T \tri*{-\gtilt[t][t], \theta\ind{t}-\ths}\leq \frac{2}{\eta}+\eta\sum_{t=1}^T \nrm{\gtilt[t][t]}^2.
\end{align*}
Taking an expectation on both sides and noting that $\cD\ind{t}\sim \pistar$ is generated independently, we have 
\begin{align}\label{pfeq:normalized-SGD-1}
    \En\brk*{\sum_{t=1}^T \En_{\cD\sim \pistar}\tri*{-\gtil(\theta\sups{t};\cD), \theta\ind{t}-\ths} }\leq \frac{2}{\eta}+\eta\En\brk*{\sum_{t=1}^T \En_{\cD\sim \pistar}\nrm{\gtil(\theta\sups{t};\cD)}^2 }.
\end{align}

In what follows, we prove a number of upper and lower bounds for the expressions involving $\gtil(\theta;\cD)$ above, then combine them with \cref{pfeq:normalized-SGD-1} to complete the proof. 

\paragraph{Intermediate bounds}
Recall that we write $\eps_\theta(\xyhm)=\KLxyh[h-1]{\pistar}{\pi_\theta}$. Also recall that we adopt the notation that for any function $f$ and dataset $\cD$, we write $\hEn[f]\ldef \frac{1}{\abs{\cD}}\sum_{(\xyh[H])\in\cD} f(\xyh[H])$.

Denote (recall that $\Dseq{\cdot}{\cdot}$ is defined in \cref{prop:KL-to-PCov-H})
\begin{align*}
     \eps\thdt\ldef \hEn\brk*{\sum_{h=1}^H \eps_\theta(\xyhm)}, \qquad
     \Delta_\theta \ldef \En_{\pistar}\min\crl*{M,\eps\thdt}.
\end{align*}
Using the key structural result in \cref{prop:KL-to-PCov-H} (recall $M\ldef \log N$), we can bound the coverage in terms of the expected sum of stopped KL divergences as follows:
\begin{align}\label{pfeq:Pcov-to-Delta}
    \begin{aligned}
        \Pcov(\pi_\theta)
    \leq&~ \frac{2}{M-1}\Dseq{\pistar}{\pi_{\theta}}
    = \frac{2}{M-1}\En_{\pistar}\min\crl*{M, \sum_{h=1}^H \eps_\theta(\xyhm)} \\
    \leq&~ \frac{2}{M-1}\En_{\cD\sim\pistar}\min\crl*{M, \hEn\brk*{\sum_{h=1}^H \eps_\theta(\xyhm)}}
    =\frac{2}{M-1}\Delta_\theta.
    \end{aligned}
\end{align}
Therefore, it remains to derive upper bounds on $\Delta_\theta$ for $\theta\in\crl{\theta\sups{1},\cdots,\theta\sups{T}}$.

\begin{lemma}\label{lem:normalized-sgd-1}
Suppose that $\lambda\geq 8BM$. It holds that for any $\theta\in\Theta$,
\begin{align*}
    \En_{\pistar}\nrm{\gtilt}^2 
    \leq&~  \frac{2\Delta_\theta}M+\frac{4M\sigs^2}{\lambda^2} 
    + \frac{\sigs}{\lambda\sqrt{K}}.
\end{align*}
\end{lemma}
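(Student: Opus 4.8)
The plan is to bound $\nrm{\gtilt}^2$ pointwise using the normalization, then split the batch gradient $\ghatt$ into a zero-mean ``noise'' part and a ``bias'' part, handling each with a \emph{different} consequence of the clipping induced by the $\lambda+\nrm{\ghatt}$ in the denominator. Since $\nrm{\gtilt}=\nrm{\ghatt}/(\lambda+\nrm{\ghatt})$, we have the elementary bound $\nrm{\gtilt}^2\le\min\{1,\nrm{\ghatt}^2/\lambda^2\}$. Using the gradient decomposition \eqref{pfeq:grad-log-pi-H}, write $\ghatt=S_1+S_2$ with $S_1\ldef\hEn[\sum_{h=1}^H\phistar(\xyh)]$ (the term with conditional mean zero under $\pistar$) and $S_2\ldef\hEn[\sum_{h=1}^H(\phith[\ths](\xyhm)-\phith(\xyhm))]$ (the drift toward $\ths$). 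Then $\nrm{\ghatt}^2\le 2\nrm{S_1}^2+2\nrm{S_2}^2$, so $\min\{1,\nrm{\ghatt}^2/\lambda^2\}\le 2\min\{1,\nrm{S_1}^2/\lambda^2\}+2\min\{1,\nrm{S_2}^2/\lambda^2\}$, and it suffices to bound the expectation of each piece.

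For the noise term, orthogonality across $h$ (since $\phistar(\xyh)$ has conditional mean zero given $(x,y_{1:h-1})$ under $\pistar$) gives $\En_{\pistar}\nrm{\sum_h\phistar(\xyh)}^2=\En_{\pistar}\brk*{\sum_h\nrm{\phistar(\xyh)}^2}=\sigs^2$, hence $\En\nrm{S_1}^2=\sigs^2/K$ by independence of the $K$ minibatch samples. The crucial point is that the normalization lets us replace the quadratic $\nrm{S_1}^2/\lambda^2$ by the linear $\nrm{S_1}/\lambda$, using $\min\{1,x^2\}\le x$ for $x\ge 0$, so that Jensen's inequality yields $\En\brk*{\min\{1,\nrm{S_1}^2/\lambda^2\}}\le\En[\nrm{S_1}]/\lambda\le\sqrt{\En\nrm{S_1}^2}/\lambda=\sigs/(\lambda\sqrt K)$; this is the source of the $\sigs/(\lambda\sqrt K)$ term and of the benefit of minibatching.

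For the bias term, apply the triangle inequality followed by the per-step feature-to-divergence bound \eqref{pfeq:phi-diff-to-KL}, giving $\nrm{S_2}\le\hEn\brk*{\sum_h\prn{4\sqrt{\Varpist{\xyhm}\,\eps_\theta(\xyhm)}+8B\,\eps_\theta(\xyhm)}}$, and then Cauchy--Schwarz over the $(i,h)$ pairs yields $\nrm{S_2}\le 4\sqrt{V(\theta;\cD)\,\eps(\theta;\cD)}+8B\,\eps(\theta;\cD)$, where $V(\theta;\cD)\ldef\hEn[\sum_h\Varpist{\xyhm}]$ and $\eps(\theta;\cD)\ldef\hEn[\sum_h\eps_\theta(\xyhm)]$; note $\En_{\cD}[V(\theta;\cD)]=\sigs^2$ (equivalently, is controlled by $\sigs^2$ up to lower-order terms via \cref{lem:En-diff-to-Var-Dhel}) and $\En_{\cD}\brk*{\min\{M,\eps(\theta;\cD)\}}=\Delta_\theta$. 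Squaring gives $\nrm{S_2}^2\lesssim V\eps+B^2\eps^2$, and the heart of the argument is to split $\eps=\min\{M,\eps\}+[\eps-M]_+$ inside $\min\{1,\cdot\}$: on the event $\eps\le M$ we use $\lambda\ge 8BM$ to get $\min\{1,B^2\eps^2/\lambda^2\}\lesssim\eps/M$ and $\min\{1,V\eps/\lambda^2\}\le VM/\lambda^2$, while on the event $\eps>M$ both $\min\{1,\cdot\}$ terms are trivially $\le 1\le 2\min\{M,\eps\}/M$. In all cases $\min\{1,\nrm{S_2}^2/\lambda^2\}\lesssim VM/\lambda^2+\min\{M,\eps\}/M$, so taking $\En_{\cD}$ yields $\lesssim M\sigs^2/\lambda^2+\Delta_\theta/M$ — the remaining two terms, up to constants that can be tightened by the precise choice of threshold on $\lambda$. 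Combining the three bounds completes the proof.

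The main obstacle is the bias term: $\eps(\theta;\cD)$ is an empirical version of $\Dkl{\pistar}{\pi_\theta}$, which can be as large as $\Theta(H)$, so a naive argument would reintroduce exactly the horizon dependence that normalized SGD is designed to avoid. The clip from the normalization is what rescues us, but one must be careful that the overflow part $[\eps-M]_+$ is absorbed cleanly — via $\min\{1,\cdot\}\le 1=M/M\le\min\{M,\eps\}/M$ whenever $\eps\ge M$ — rather than producing a term scaling with $V\cdot[\eps-M]_+$, which would again be horizon-sensitive; and the condition $\lambda\ge 8BM$ must be invoked exactly at the point where the quadratic-in-$\eps$ contribution is converted to a linear-in-$\eps$ one.
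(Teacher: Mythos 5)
Your proof is correct in substance and recovers the bound up to constants, but the route diverges from the paper's in one clean way: the paper exploits the bound $\nrm{\gtilt}^2 \leq \nrm{\gtilt} \leq \min\{1, \nrm{\ghatt}/\lambda\}$ (using that $\nrm{\gtilt}\le 1$), which keeps everything \emph{linear} in $\nrm{\ghatt}$. It then decomposes $\nrm{\ghatt} \le \nrm{\gbar\thdt} + \nrm{z(\cD)}$, resolves the $\min$ via the event $\{\eps\thdt \le M\}$ vs.\ $\{\eps\thdt > M\}$, and is left with a single cross-term $\frac{4}{\lambda}\En\sqrt{\sigma^2(\cD)\min\{M,\eps\thdt\}}$ which is decoupled only at the very end via Cauchy--Schwarz and AM--GM. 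Your version starts from the quadratic $\min\{1,\nrm{\ghatt}^2/\lambda^2\}$, splits $\nrm{\ghatt}^2 \lesssim \nrm{S_1}^2 + \nrm{S_2}^2$ \emph{inside} the $\min$, uses $\min\{1,x^2\}\le x$ to linearize the noise piece, and then squares the bias bound before applying the same event split. Both work, and the key idea---clipping prevents the empirical per-sequence KL $\eps(\theta;\cD)$ (which can be order $H$) from ever contributing more than $\min\{M,\eps\}/M$, with $\lambda\ge 8BM$ invoked precisely to convert the $B\,\eps$ piece into $\eps/M$---is identical in both. The paper's linear route is a little slicker and reproduces the stated constants, whereas your quadratic route loses roughly a factor of $2$ from $\nrm{\ghatt}^2\le 2\nrm{S_1}^2+2\nrm{S_2}^2$ and another constant from squaring $4\sqrt{V\eps}+8B\eps$, so you end up with something like $\frac{4\Delta_\theta}{M}+\frac{64M\sigs^2}{\lambda^2}+\frac{2\sigs}{\lambda\sqrt K}$ rather than the displayed $\frac{2\Delta_\theta}{M}+\frac{4M\sigs^2}{\lambda^2}+\frac{\sigs}{\lambda\sqrt K}$; this is harmless downstream (Theorem 5.5 only needs the bound up to constants), but worth knowing the source of the slack is structural to the squaring, not fixable by tuning the $\lambda\ge 8BM$ threshold.

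One minor correction: you hedge with ``$\En_{\cD}[V(\theta;\cD)]=\sigs^2$ (equivalently, is controlled by $\sigs^2$ up to lower-order terms via Lemma F.1)''---the identity $\En_{\cD}[V(\theta;\cD)]=\sigs^2$ holds exactly by linearity of expectation (each $(x\ind{i},y\ind{i})$ has marginal $\pistar$), so no appeal to that lemma is needed here; that lemma enters elsewhere, in relating $\Var_{\pi_\theta}$ to $\Var_{\pistar}$.
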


\begin{lemma}\label{lem:normalized-sgd-2}
Suppose that $\lambda\geq 8BM$. Denote $\Lambda_\theta\ldef \tri*{-\gtilt, \theta-\ths}$. Then it holds that for any $\theta\in\Theta$,
\begin{align*}
    \Delta_\theta \leq&~8\lambda \Lambda_\theta+\frac{240B}{K}+8\prn*{\frac{M\sigs}{\lambda}}^2.
\end{align*}
\end{lemma}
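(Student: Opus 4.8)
The plan is to run a per-minibatch analysis of the normalized inner product $\tri*{-\gtilt,\theta-\ths}=\frac{\tri*{-\ghat(\theta;\cD),\theta-\ths}}{\lambda+\nrm*{\ghat(\theta;\cD)}}$, reading $\Lambda_\theta$ and $\Delta_\theta$ as expectations over a fresh batch $\cD$ of $K$ i.i.d.\ draws from $\pistar$ (as they enter \eqref{pfeq:normalized-SGD-1}). First I would decompose the numerator using \eqref{pfeq:grad-log-pi-H}: since $\nabla\log\pi_\theta(y_{1:H}\mid x)=\sum_h\phistar(\xyh)-\sum_h\prn*{\phith[\theta](\xyhm)-\phith[\ths](\xyhm)}$, averaging over $\cD$ gives $\tri*{-\ghat(\theta;\cD),\theta-\ths}=A(\cD)+Z(\cD)$ with systematic part $A(\cD)\ldef\hEn\brk*{\sum_h\tri*{\phith[\theta](\xyhm)-\phith[\ths](\xyhm),\theta-\ths}}\geq0$ and fluctuation $Z(\cD)\ldef-\hEn\brk*{\sum_h\tri*{\phistar(\xyh),\theta-\ths}}$. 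The step-wise convexity bound \eqref{pfeq:KL-to-grad} gives $A(\cD)\geq\hEn\brk*{\sum_h\eps_\theta(\xyhm)}=\eps_\theta(\cD)\geq\min\crl*{M,\eps_\theta(\cD)}$, while the martingale structure ($\En\brk*{\phistar(\xyh)\mid\xyhm}=0$, so cross terms vanish) gives $\En_\cD\brk*{Z(\cD)}=0$ and $\En_\cD\brk*{Z(\cD)^2}\leq\nrm*{\theta-\ths}^2\sigs^2/K\leq 4\sigs^2/K$.

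Next I would self-bound the normalizer. Writing $\ghat(\theta;\cD)=W(\cD)-P(\cD)$ with $W(\cD)\ldef\hEn\brk*{\sum_h\phistar(\xyh)}$ and $P(\cD)\ldef\hEn\brk*{\sum_h\prn*{\phith[\theta](\xyhm)-\phith[\ths](\xyhm)}}$, the step-wise bound \eqref{pfeq:phi-diff-to-KL} combined with Cauchy--Schwarz over $h$ and over the $K$ samples yields $\nrm*{\ghat(\theta;\cD)}\leq\nrm*{W(\cD)}+4\sqrt{V(\cD)\,\eps_\theta(\cD)}+8B\,\eps_\theta(\cD)$, where $V(\cD)\ldef\hEn\brk*{\sum_h\Varpist{\xyhm}}$ obeys $\En_\cD\brk*{V(\cD)}=\sigs^2$ and $\En_\cD\nrm*{W(\cD)}^2=\sigs^2/K$. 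The crucial point is that the only denominator piece that can blow up, $8B\,\eps_\theta(\cD)$, is exactly tamed by the hypothesis $\lambda\geq 8BM$ in the regime $\eps_\theta(\cD)\leq M$.

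Then I would split on whether $\nrm*{\ghat(\theta;\cD)}\leq 7\lambda$. On the good event, $\lambda+\nrm*{\ghat}\leq 8\lambda$ and $A(\cD)\geq0$ give $8\lambda\tri*{-\gtilt,\theta-\ths}\geq A(\cD)-8\abs*{Z(\cD)}\geq\min\crl*{M,\eps_\theta(\cD)}-8\abs*{Z(\cD)}$, and on the bad event I keep only $8\lambda\tri*{-\gtilt,\theta-\ths}\geq-8\abs*{Z(\cD)}$; taking $\En_\cD$ and rearranging gives $\Delta_\theta\leq 8\lambda\Lambda_\theta+\En_\cD\brk*{\min\crl*{M,\eps_\theta(\cD)}\indic\crl*{\nrm*{\ghat}>7\lambda}}+8\En_\cD\abs*{Z(\cD)}$. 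On $\crl*{\nrm*{\ghat}>7\lambda\geq 56BM}$ the self-bounding inequality forces one of $\nrm*{W(\cD)}$, $4\sqrt{V(\cD)\eps_\theta(\cD)}$, $8B\eps_\theta(\cD)$ to exceed $\tfrac{7}{3}\lambda$: in the $8B\eps_\theta(\cD)$ branch one has $\eps_\theta(\cD)>M$ with $A(\cD)\geq\eps_\theta(\cD)$ large, so the normalized inner product is still $\gtrsim M/\lambda$ up to a $V(\cD)$-tail; the $\sqrt{V(\cD)\eps_\theta(\cD)}$ branch is dispatched by Markov on $V(\cD)$ (using $\En_\cD V(\cD)=\sigs^2$), costing $O(M^2\sigs^2/\lambda^2)$; and the $\nrm*{W(\cD)}$ branch by Chebyshev on $\nrm*{W(\cD)}^2$ together with the deterministic per-token bound $\eps_\theta(\xyhm)\leq 4B$ (hence $\eps_\theta(\cD)\lesssim B$ on the relevant sub-event), costing $O(B/K)$. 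Finally $8\En_\cD\abs*{Z(\cD)}\leq16\sigs/\sqrt K$ is folded into these two terms by AM--GM, and tracking constants produces the $\tfrac{240B}{K}$ and $8(M\sigs/\lambda)^2$ in the statement.

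The step I expect to be the main obstacle is the coupled regime in which $\eps_\theta(\cD)$ and $\nrm*{\ghat(\theta;\cD)}$ are simultaneously large: there a union bound is useless, since this event can be typical (e.g.\ when $\kl{\pistar}{\pi_\theta}$ is large and $\Delta_\theta\approx M$), so one must instead argue the normalized inner product stays $\gtrsim\min\crl*{M,\eps_\theta(\cD)}/(8\lambda)$. This is exactly where normalization earns its keep: because $A(\cD)\geq\eps_\theta(\cD)$ grows at least as fast as the dominant $8B\eps_\theta(\cD)$ piece of the denominator (using $\lambda\geq 8BM$), only the lower-order $\sqrt{V(\cD)\eps_\theta(\cD)}$ and $\nrm*{W(\cD)}$ contributions actually need tail control, and those are precisely what yield the $M^2\sigs^2/\lambda^2$ and $B/K$ error terms. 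Threading the $\min\crl*{M,\cdot}$ truncation, the self-bounding inequality, and the minibatch variance together so that no spurious $H$- or $\kl{\pistar}{\pi_\theta}$-dependence leaks in is the delicate part.
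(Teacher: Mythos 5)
The overall skeleton you propose---decompose $-\ghat=\gbar-z$, use the convexity inequality \cref{pfeq:KL-to-grad} to lower-bound the systematic part by $\eps_\theta(\cD)$, self-bound the normalizer via \cref{pfeq:phi-diff-to-KL}, and exploit $\lambda\geq 8BM$ to tame the $8B\eps_\theta(\cD)$ piece---is exactly the right idea and matches the paper's strategy. But two of your concrete steps do not close.

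First, the fluctuation term. You bound $\En_\cD\abs{Z(\cD)}$ by Chebyshev from $\En_\cD[Z(\cD)^2]\leq 4\sigs^2/K$, giving a contribution of order $\sigs/\sqrt{K}$. This cannot be absorbed into the claimed $\frac{240B}{K}+8(M\sigs/\lambda)^2$: take $\sigs\asymp B$, $\lambda=8BM$, $K\asymp B^2$; then $\sigs/\sqrt{K}\asymp 1$ while $\frac{B}{K}+\frac{\sigs^2}{B^2}\asymp \frac1B+1$, and any AM--GM split of $\sigs/\sqrt{K}$ into those two shapes produces a spurious $\lambda^2/K\asymp B^2 M^2/K$ term, far larger than $B/K$. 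The paper avoids this by never bounding $\En\abs{\tri{z(\cD),\theta-\ths}}$ outright. Instead it controls the \emph{offset} quantity $\En_{\pistar}\brk*{\prn*{\tri{z(\cD),\theta-\ths}-\tfrac12\eps_\theta(\cD)}_+}\leq 30B/K$ (\cref{lem:normalized-sgd-details}(3)), via Freedman's inequality together with the self-normalization lemma $\Var[\tri{\phistar,\theta-\ths}]\leq 15B\cdot\mathrm{KL}$ (\cref{lem:log-linear-Var-to-KL}), which is a structural fact about the log-linear model. That subtraction of $\tfrac12\eps_\theta(\cD)$ inside the positive part is precisely what upgrades $1/\sqrt K$ to $1/K$, and it is not recoverable from a pure second-moment bound on $Z$.

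Second, your split on $\crl*{\nrm{\ghat}\leq 7\lambda}$ is internally inconsistent in the dominant sub-branch. You first declare that on the bad event you keep only $8\lambda\tri{-\gtilt,\theta-\ths}\geq -8\abs{Z}$, which makes the bad-event contribution $\En\brk*{\min\crl{M,\eps_\theta(\cD)}\indic\crl{\nrm{\ghat}>7\lambda}}$ an additive loss. But in the $8B\eps_\theta(\cD)$ branch this loss is not small: whenever $\kl{\pistar}{\pi_\theta}\gg M$, this event has constant probability and the term is $\Theta(M)$. You then say the normalized inner product is "still $\gtrsim M/\lambda$ up to a $V(\cD)$-tail" in that branch---which is the correct intuition, but is a different argument than "keep $-8|Z|$", and the sketch does not make it precise (one must lower-bound $\frac{A+Z}{\lambda+\nrm{\ghat}}$ by $\min\crl{M,\eps}/\lambda$ uniformly in $\eps_\theta(\cD)$, not merely on a sub-event). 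The paper sidesteps all of this with two deterministic identities: $\min(M,x)\max(M,x)=Mx$ converts the $8B\eps_\theta(\cD)$ piece of the denominator into $\leq\lambda\cdot\tfrac{\max\crl{M,\eps}}{M}$ (which then cancels against the numerator), and $\En[X/Y]\geq(\En X)^2/\En[XY]$ (Cauchy--Schwarz) handles the numerator/denominator coupling in one shot, without any case split on $\nrm{\ghat}$. These are the two moves your plan is missing.
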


\paragraph{Putting everything together} 
Under the notation of \cref{lem:normalized-sgd-1} and \cref{lem:normalized-sgd-2}, \cref{pfeq:normalized-SGD-1} can be rewritten as
\begin{align}\label{pfeq:normalized-SGD-2}
    \En\brk*{\sum_{t=1}^T \Lambda_{\theta\sups{t}} }\leq \frac{2}{\eta}+\frac\eta2\En\brk*{\sum_{t=1}^T \En_{\cD\sim \pistar}\nrm{\gtil(\theta\sups{t};\cD)}^2 }.
\end{align}
Applying \cref{lem:normalized-sgd-1} and \cref{lem:normalized-sgd-2}, we have
\begin{align*}
    \frac{1}{T}\En\brk*{\sum_{t=1}^T \Delta_{\theta\sups{t}}}-\frac{240B}{K}-8\prn*{\frac{M\sigs}{\lambda}}^2
    \leq&~ \frac{8\lambda}{T}\En\brk*{\sum_{t=1}^T \Lambda_{\theta\sups{t}}} \\
    \leq&~ \frac{16\lambda }{T\eta}+ \frac{4\eta\lambda}{T}\En\brk*{\sum_{t=1}^T \En_{\cD\sim \pistar}\nrm{\gtil(\theta\sups{t};\cD)}^2 } \\
    \leq&~ \frac{16\lambda }{T\eta} + \frac{8\eta\lambda}{MT}\En\brk*{\sum_{t=1}^T \Delta_{\theta\sups{t}}} + \frac{16\eta M\sigs^2}{\lambda} 
    + \frac{4\eta \sigs}{\sqrt{K}},
\end{align*}
where the first inequality uses \cref{lem:normalized-sgd-2}, the second inequality follows from \cref{pfeq:normalized-SGD-2}, and the last inequality uses \cref{lem:normalized-sgd-1}.
Therefore, as long as $\eta\lambda\leq \frac{M}{16}$, it holds that
\begin{align*}
    \frac{1}{T}\En\brk*{\sum_{t=1}^T \Delta_{\theta\sups{t}}}\approxleq 
    \frac{B}{K}+\prn*{\frac{M\sigs}{\lambda}}^2 + \frac{\lambda }{T\eta} + \frac{\eta M\sigs^2}{\lambda} 
    + \frac{\eta \sigs}{\sqrt{K}}.
\end{align*}

\paragraph{Simplifying the upper bound}
In the following, we require $\eta\leq \frac{1}{128B}$ and choose $\lambda=\frac{M}{16\eta}$. Then, it holds that
\begin{align*}
    \frac{1}{T}\En\brk*{\sum_{t=1}^T \Delta_{\theta\sups{t}}} \approxleq 
    \frac{B}{K}+\prn*{\eta \sigs}^2 + \frac{M}{T\eta^2} + \frac{\eta \sigs}{\sqrt{K}} 
    \approxleq \frac{B}{K}+\prn*{\eta \sigs}^2 + \frac{M}{T\eta^2},
\end{align*}
where we use AM-GM inequality and $B\geq 1$.
Finally, we may choose $\eta=\min\crl*{\frac{1}{128B}, \prn*{\frac{M}{\sigs^2 T}}^{1/4}}$. 
Recall that $M=\log N$, and hence our choice of $\eta$ gives
\begin{align*}
    \frac1T\En\brk*{\sum_{t=1}^T \Dseq{\pistar}{\pi_{\theta\ind{t}}}}
    \leq 
    \frac1T\En\brk*{\sum_{t=1}^T \Delta_{\theta\sups{t}}} \approxleq 
    \sqrt{\frac{\sigs^2 \log N}{T}} + \frac{B^2\log N}{T}+\frac{B}{K},
\end{align*}
which implies (by \cref{pfeq:Pcov-to-Delta})
\begin{align*}
    \frac{1}{T}\En\brk*{\sum_{t=1}^T \Pcov(\pi_{\theta\sups{t}})} \leq \frac{1}{T}\En\brk*{\sum_{t=1}^T \frac{2}{\log N-1}\Dseq{\pistar}{\pi_{\theta\ind{t}}}}
    \approxleq \sqrt{\frac{\sigs^2 }{T\log N}} + \frac{B^2}{T}+\frac{B}{K\log N}.
\end{align*}
This is the desired upper bound.
\qed

\begin{proof}[\pfref{lem:normalized-sgd-1}]
Note that $\nrm{\gtilt}\leq \min\crl*{1, \frac{\nrm{\ghatt}}{\lambda}}$. Recall that
\begin{align*}
    \ghat(\theta;\cD)=\hEn[\nabla \log \pi_{\theta}(y\mid x)], \qquad
    \nabla \log \pi_{\theta}(y\mid x)=\sum_{h=1}^H \prn*{ \phi(\xyh)-\phith(\xyhm)},
\end{align*}
with the notation introduced at the beginning of \cref{sec:proofs_sgd}.

We decompose $\ghat(\theta;\cD)$ by introducing
\begin{align}\label{pfeq:def-gbar}
\gbar\thdt\ldef \hEn\brk*{\sum_{h=1}^H \prn*{ \phith(\xyhm)-\phith[\ths](\xyhm) }},
\end{align}
and
\begin{align}\label{pfeq:def-zD}
z(\cD)\ldef \hEn\brk*{ \sum_{h=1}^H \phistar(\xyh) } = \hEn\brk*{ \sum_{h=1}^H \prn*{ \phi(\xyh)-\phith[\ths](\xyhm)} } .
\end{align}
Then, by definition, $-\ghat(\theta;\cD)=\gbar\thdt-z(\cD)$. In the following, we first analyze $\nrm{\gbar\thdt}$ and $\nrm{z(\cD)}$ separately under $\cD=\crl*{(x\sups{i},y_{1:H}\sups{i})}_{i\in[K]} \sim \pistar$ and summarize the corresponding upper bounds on in \cref{lem:normalized-sgd-details} (stated and proven in the sequel).

Now, using $\nrm{\gtilt}\leq \min\crl*{1, \frac{\nrm{\ghatt}}{\lambda}}$, we know
\begin{align*}
    \nrm{\gtilt}^2\leq&~ \indic\crl*{ \eps\thdt> M } + \indic\crl*{ \eps\thdt\leq M }\cdot \frac{\nrm{\ghatt}}{\lambda} \\
    \leq&~ \indic\crl*{ \eps\thdt> M }+\frac{\indic\crl*{ \eps\thdt\leq M }}{\lambda}\cdot \prn*{4\sqrt{\sigma^2(\cD)\cdot \eps\thdt} +8B\eps\thdt}+\frac{1}{\lambda}\nrm{z(\cD)} \\
    \leq&~ \frac1M\min\crl*{M,\eps\thdt}+\frac{4}{\lambda}\sqrt{\sigma^2(\cD)\cdot \min\crl*{M,\eps\thdt}}+\frac{1}{\lambda}\nrm{z(\cD)},
\end{align*}
where the second inequality uses $\nrm{\ghatt}\leq \nrm{\gbar\thdt}+ \nrm{z(\cD)}$ and \cref{lem:normalized-sgd-details} (2), and the last inequality uses $\lambda\geq 8BM$ and $\frac1M\min\crl*{M,\eps\thdt}=1$ when $\eps\thdt> M$.
Taking expectation of $\cD\sim \pistar$, we have
\begin{align*}
    &~ \En_{\pistar}\nrm{\gtilt}^2 \\
    \leq&~ \frac1M\En_{\pistar}\min\crl*{M,\eps\thdt}+\frac{4}{\lambda}\En_{\pistar} \sqrt{\sigma^2(\cD)\cdot \min\crl*{M,\eps\thdt}} + \frac{\sigs}{\lambda\sqrt{K}} \\
    \leq&~  \frac1M\En_{\pistar}\min\crl*{M,\eps\thdt}+ \frac{4\sigs}{\lambda}\sqrt{\En_{\pistar} \min\crl*{M,\eps\thdt}} +\frac{\sigs}{\lambda\sqrt{K}} \\
    =&~ \frac{\Delta_\theta}{M}+\frac{4\sigs}{\lambda}\sqrt{\Delta_\theta}+\frac{\sigs}{\lambda\sqrt{K}}.
\end{align*}
where the second inequality follows from Cauchy-Schwarz inequality, \cref{lem:normalized-sgd-details} (3) and the fact that $\En[\sigma^2(\cD)]=\sigs^2$. By AM-GM inequality, it holds that $\frac{4\sigs}{\lambda}\sqrt{\Delta_\theta}\leq \frac{\Delta_\theta}{M}+\frac{4M\sigs^2}{\lambda^2}$, and the desired upper bound follows immediately.
\end{proof}

\begin{lemma}\label{lem:normalized-sgd-details}
For any $\theta\in\Theta$, the following holds:

(1) It holds that
\begin{align*}
    \tri{\gbar\thdt, \theta-\ths}
    \geq&~ \hEn\brk*{\sum_{h=1}^H \eps_\theta(\xyhm)}
    =: \eps\thdt
\end{align*}

(2) Denote $\sigma^2(\cD)\ldef \hEn\brk*{\sum_{h=1}^H \Varpist{\xyhm} }$. Then
\begin{align*}
    \nrm{\gbar\thdt}
    \leq&~ 4\sqrt{\sigma^2(\cD)\cdot \eps\thdt} +8B\eps\thdt.
\end{align*}

(3) It holds that $\En_{\cD\sim \pistar}\nrm{z(\cD)}^2=\frac{\sigs^2}{K}$ and
\begin{align*}
    \En_{\cD\sim \pistar} \prn*{ \tri{ z(\cD), \theta-\ths } - \frac1{2}\eps\thdt }_+ \leq \frac{30B}{K}\rdef \alpha.
\end{align*}
\end{lemma}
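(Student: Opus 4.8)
The plan is to establish the three claims separately, each by reducing to the per-step identities and variance bounds already assembled in the preliminaries of \cref{sec:proofs_sgd}. For part (1): expanding $\tri*{\gbar\thdt,\theta-\ths}$ via the definition \cref{pfeq:def-gbar} of $\gbar\thdt$ turns it into $\hEn\brk*{\sum_{h=1}^H \tri*{\phith(\xyhm)-\phith[\ths](\xyhm),\,\theta-\ths}}$, and the convexity inequality \cref{pfeq:KL-to-grad} lower bounds each summand by $\eps_\theta(\xyhm)$, so summing over $h$ and averaging over $\cD$ yields $\tri*{\gbar\thdt,\theta-\ths}\ge \hEn\brk*{\sum_h\eps_\theta(\xyhm)}=\eps\thdt$. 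For part (2): the triangle inequality pulls the norm through the empirical average and the sum, $\nrm{\gbar\thdt}\le \hEn\brk*{\sum_{h=1}^H\nrm{\phith(\xyhm)-\phith[\ths](\xyhm)}}$; I would then bound $\nrm{\phith[\ths](\xyhm)-\phith(\xyhm)}\le 4\sqrt{\Varpist{\xyhm}\,\eps_\theta(\xyhm)}+8B\,\eps_\theta(\xyhm)$ using \cref{pfeq:phi-diff-to-KL} (itself a consequence of \cref{lem:En-diff-to-Var-Dhel}), split the two resulting sums, and apply Cauchy--Schwarz in the form $\hEn\brk*{\sum_h\sqrt{\Varpist{\xyhm}\,\eps_\theta(\xyhm)}}\le\sqrt{\hEn\brk*{\sum_h\Varpist{\xyhm}}}\cdot\sqrt{\hEn\brk*{\sum_h\eps_\theta(\xyhm)}}=\sqrt{\sigma^2(\cD)\,\eps\thdt}$, which gives exactly $\nrm{\gbar\thdt}\le 4\sqrt{\sigma^2(\cD)\,\eps\thdt}+8B\,\eps\thdt$.

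For part (3), the second-moment identity is direct: write $z(\cD)=\frac1K\sum_{i=1}^K\xi_i$ with $\xi_i\ldef\sum_{h=1}^H\phistar(x\sups{i},y_{1:h}\sups{i})$; since $\En[\phistar(\xyh)\mid \xyhm]=0$ (the same martingale structure used in \cref{pfeq:grad-log-pi-H} and in the proof of \cref{lem:MLE-auto-upper-1}), the $\xi_i$ are i.i.d.\ mean zero with $\En\nrm{\xi_i}^2=\En_{\pistar}\brk*{\sum_h\nrm{\phistar(\xyh)}^2}=\sigs^2$, so $\En_{\cD\sim\pistar}\nrm{z(\cD)}^2=\sigs^2/K$. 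For the positive-part bound I would set $w_{ih}\ldef\tri*{\phistar(x\sups{i},y_{1:h}\sups{i}),\theta-\ths}$, $W_i\ldef\sum_{h=1}^H w_{ih}$ and $V_i\ldef\sum_{h=1}^H\eps_\theta(x\sups{i},y_{1:h-1}\sups{i})$, so that $\tri*{z(\cD),\theta-\ths}=\frac1K\sum_i W_i$ and $\eps\thdt=\frac1K\sum_i V_i$. For each $i$, $(w_{ih})_h$ is a martingale difference sequence with $\abs{w_{ih}}\le 4B$ (since $\nrm{\phistar}\le2B$ and $\nrm{\theta-\ths}\le2$) and, crucially, $\En[w_{ih}^2\mid (\xyhm)]\le 15B\,\eps_\theta(\xyhm)$ by \cref{lem:log-linear-Var-to-KL} applied to the conditional laws at the prefix $(\xyhm)$. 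Iterating the one-step bound $\En[e^{\lambda w_{ih}}\mid(\xyhm)]\le e^{\lambda^2\En[w_{ih}^2\mid(\xyhm)]}\le e^{15B\lambda^2\eps_\theta(\xyhm)}$, valid for $\lambda\le\frac1{8B}$ and using that $\eps_\theta(\xyhm)$ is prefix-measurable, gives $\En\brk*{\exp(\lambda W_i-15B\lambda^2 V_i)}\le1$; choosing $\lambda=\frac1{30B}$ makes $15B\lambda^2\le\tfrac12\lambda$, hence $\En\brk*{\exp(\lambda(W_i-\tfrac12V_i))}\le1$, and the blocks $W_i-\tfrac12V_i$ are independent across $i$, so $\En\brk*{\exp(\lambda\sum_i(W_i-\tfrac12V_i))}\le1$. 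Finally, using $(a)_+\le\lambda^{-1}e^{\lambda a}$ together with $\prn*{\tfrac1K\sum_i(W_i-\tfrac12V_i)}_+=\tfrac1K\prn*{\sum_i(W_i-\tfrac12V_i)}_+$ and taking expectations yields $\En_{\cD\sim\pistar}\prn*{\tri*{z(\cD),\theta-\ths}-\tfrac12\eps\thdt}_+\le\tfrac1{\lambda K}=\tfrac{30B}{K}$.

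I expect the exponential-moment step in part (3) to be the main obstacle: one must pick the Freedman-type truncation scale $\lambda=\Theta(1/B)$ precisely so that the quadratic variance proxy $15B\lambda^2V_i$ is absorbed into the offset $\tfrac12V_i$ that appears in the statement, and the $1/K$ factor in the final bound must be tracked with care --- it survives because $z(\cD)$ and $\eps\thdt$ are empirical \emph{averages}, so the positive part of the average equals $1/K$ times the positive part of the sum of the independent per-sample blocks $W_i-\tfrac12V_i$. Parts (1) and (2), by contrast, are routine bookkeeping on top of \cref{pfeq:KL-to-grad} and \cref{pfeq:phi-diff-to-KL}.
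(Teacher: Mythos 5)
Your proof is correct and follows essentially the same path as the paper: parts (1) and (2) are identical applications of \cref{pfeq:KL-to-grad}, \cref{pfeq:phi-diff-to-KL}, and Cauchy--Schwarz, and part (3) uses the same martingale structure and the same variance-to-KL bound from \cref{lem:log-linear-Var-to-KL}. The only cosmetic difference is that in part (3) the paper invokes the high-probability Freedman inequality (\cref{lem:freedman}) with $\gamma=\tfrac1{30B}$ and then integrates the resulting exponential tail to recover an expectation bound, whereas you build the same supermartingale MGF estimate $\En[\exp(\lambda(W_i-\tfrac12 V_i))]\le 1$ directly and close with $(a)_+\le \lambda^{-1}e^{\lambda a}$ --- the same underlying argument, just skipping the tail-integration step.
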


\begin{proof}[\pfref{lem:normalized-sgd-details}]
\cref{lem:normalized-sgd-details} (1) follows immediately from \cref{pfeq:KL-to-grad}:
\begin{align*}
    \tri{\gbar\thdt, \theta-\ths}
    =&~ \hEn\brk*{\sum_{h=1}^H \tri*{ \phith(\xyhm)-\phith[\ths](\xyhm), \theta-\ths }} \\
    \geq&~ \hEn\brk*{\sum_{h=1}^H \eps_\theta(\xyhm)}
    =: \eps\thdt.
\end{align*}
\cref{lem:normalized-sgd-details} (2) follows immediately from \cref{pfeq:phi-diff-to-KL}:
\begin{align*}
    \nrm{\gbar\thdt}
    \leq&~ \hEn\brk*{\sum_{h=1}^H \nrm{ \phith(\xyhm)-\phith[\ths](\xyhm)}} \\
    \leq&~ \hEn\brk*{\sum_{h=1}^H 4\sqrt{\Varpist{\xyhm}\cdot\eps_\theta(\xyhm)}+8B\eps_\theta(\xyhm) } \\
    \leq&~ 4\sqrt{\sigma^2(\cD)\cdot \eps\thdt} +8\eps\thdt.
\end{align*}

It remains to prove \cref{lem:normalized-sgd-details} (3).
Note that $K\cdot z(\cD)=K\cdot \hEn\brk*{ \sum_{h=1}^H \phistar(\xyh) } =\sum_{i=1}^K \sum_{h=1}^H \phistar(x\sups{i},\yh\sups{i})$ is a sum of the martingale difference sequence $\crl*{\phistar(x\sups{i},\yh\sups{i})}_{i\in[K],h\in[H]}$. Therefore, we can calculate
\begin{align*}
    \En\nrm{z(\cD)}^2
    =\frac{1}{K}\En_{\pistar}\brk*{ \sum_{h=1}^H \nrm{\phistar(\xyh)}^2}=\frac{\sigs^2}{K}.
\end{align*}
Furthermore, by Freedman's inequality (\cref{lem:freedman}), for any fixed vector $v$, parameter $\gamma\in(0,\frac{1}{B})$ and $\delta\in(0,1)$, it holds that
\begin{align*}
    \bbP\prn*{ \sum_{i=1}^K \sum_{h=1}^H \prn*{ \tri{ \phistar(x\sups{i},\yh\sups{i}), v }- \gamma \En\brk*{ \tri{ \phistar(x\sups{i},\yh\sups{i}), v }^2 \mid  x\sups{i}, \yh[h-1]\sups{i}} } \geq \gamma^{-1}\log(1/\delta)}\leq \delta.
\end{align*}
Note that for $v=\theta-\ths$, by \cref{lem:log-linear-Var-to-KL}, we have
\begin{align*}
    &~ \En\brk*{ \tri{ \phistar(x\sups{i},\yh\sups{i}), v }^2 \mid  x\sups{i}, \yh[h-1]\sups{i}} \\
    =&~ 
    \En_{y_h\sim \pistar(\cdot\mid x\sups{i}, \yh[h-1]\sups{i})} \tri{ \phi(x\sups{i},\yh[h-1]\sups{i},y_h)-\phith[\ths](x\sups{i},\yh[h-1]\sups{i},y_h), \theta-\ths}^2 \\
    \leq&~ 15B\Dkl{\pistar(\cdot\mid x\sups{i}, \yh[h-1]\sups{i})}{\pi_{\theta}(\cdot\mid x\sups{i}, \yh[h-1]\sups{i})}
    =15B\eps_\theta(x\sups{i}, \yh[h-1]\sups{i}).
\end{align*}
Therefore, setting $\gamma=\frac{1}{30B}$, we have shown that for any $\delta\in(0,1)$, it holds that
\begin{align*}
    \bbP_{\pistar}\prn*{  \tri{ z(\cD), \theta-\ths } \geq \frac1{2}\hEn\brk*{\sum_{h=1}^H \eps_\theta(\xyhm)} +\frac{30B\log(1/\delta)}{K}}\leq \delta.
\end{align*}
Recall that we denote $\eps\thdt\ldef \hEn\brk*{\sum_{h=1}^H \eps_\theta(\xyhm)}$. 
Then, for the random variable $V\ldef \frac{K}{30B}\prn*{\tri{ z(\cD), \theta-\ths } - \frac1{2}\eps\thdt}$, the above inequality implies that for any $u>0$, $\bbP(V\geq u)\leq e^{-u}$, and hence $\bbP((V)_+\geq u)\leq e^{-u}$. Therefore, integrating out the above inequality gives $\En[(V)_+]\leq 1$, or equivalently,
\begin{align*}
    \En_{\pistar} \prn*{ \tri{ z(\cD), \theta-\ths } - \frac1{2}\eps\thdt }_+ \leq \frac{30B}{K}\rdef \alpha.
\end{align*}
\end{proof}

\begin{proof}[\pfref{lem:normalized-sgd-2}]
Recall that we can decompose $-\ghat(\theta;\cD)=\gbar\thdt-z(\cD)$, where $\gbar\thdt$ and $z(\cD)$ are defined in \cref{pfeq:def-gbar} and \cref{pfeq:def-zD}, respectively. Then, we know
\begin{align*}
    \Lambda_\theta\ldef &~\En_{\pistar}\tri*{-\gtilt, \theta-\ths} \\
    =&~ \En_{\pistar}\brk*{ \frac{\tri{\gbar\thdt,\theta-\ths}-\tri{z(\cD),\theta-\ths}}{\lambda+\nrm{\ghat\thdt}} } \\
    \geq&~ \En_{\pistar}\brk*{ \frac{\eps\thdt-\tri{z(\cD),\theta-\ths}}{\lambda+\nrm{\ghat\thdt}} } \\
    \geq &~\frac12\En_{\pistar}\brk*{ \frac{\eps\thdt}{\lambda+\nrm{z(\cD)}+\nrm{\gbar\thdt}} } - \frac1\lambda \En_{\pistar}\brk*{ \prn*{\tri{z(\cD),\theta-\ths}-\frac12\eps\thdt}_+} 
    \\
    \geq&~ \frac12\En_{\pistar}\brk*{ \frac{\eps\thdt}{\lambda+\nrm{z(\cD)}+\nrm{\gbar\thdt}} } - \frac{\alpha}\lambda,
\end{align*}
where the first inequality uses \cref{lem:normalized-sgd-details} (1) and the last inequality uses \cref{lem:normalized-sgd-details} (3) and we recall that $\alpha=\frac{30B}{K}$.
Note that by \cref{lem:normalized-sgd-details} (2),
\begin{align*}
    &~ \lambda+\nrm{z(\cD)}+\nrm{\gbar\thdt} \\
    \leq&~ \lambda+\nrm{z(\cD)}+4\sqrt{\sigma^2(\cD)\cdot \eps\thdt}+8B\eps\thdt \\
    \leq&~ \frac{\max\crl{M,\eps\thdt}}{M}\cdot \brk*{ 2\lambda+\nrm{z(\cD)} + 4M\sqrt{\frac{\sigma^2(\cD)}{\min\crl*{M,\eps\thdt} } } },
\end{align*}
where we use $\min\crl*{M,x}\max\crl{M,x}=Mx$, and $\lambda\geq 8BM$. Combining these two inequalities, we have
\begin{align*}
    2\Lambda_\theta+\frac{2\alpha}{\lambda}
    \geq&~ \En_{\pistar}\brk*{ \frac{\eps\thdt}{\lambda+\nrm{z(\cD)}+\nrm{\gbar\thdt}} } \\
    \geq&~ \En_{\pistar}\brk*{ \frac{\min\crl*{M,\eps\thdt}}{2\lambda+\nrm{z(\cD)}+4M\sqrt{\sigma^2(\cD)/\min\crl*{M,\eps\thdt} } } } \\
    \geq&~ \frac{ \prn*{\En_{\pistar}\min\crl*{M,\eps\thdt}}^2}{\En_{\pistar}\brk*{\min\crl*{M,\eps\thdt}(2\lambda+\nrm{z(\cD)})+ 4M\sqrt{\sigma^2(\cD)\cdot \min\crl*{M,\eps\thdt}}}} \\
    \geq&~ \frac{ \prn*{\En_{\pistar}\min\crl*{M,\eps\thdt}}^2}{2\lambda\En_{\pistar}\min\crl*{M,\eps\thdt} + M\sqrt{\sigs^2/K} + 4M\sqrt{\sigs^2 \En_{\pistar}\min\crl*{M,\eps\thdt}}} \\
    =&~ \frac{ \Delta_\theta ^2}{2\lambda\Delta_\theta + M\sigs\brk*{ \frac{1}{\sqrt{K}} + 4\sqrt{\Delta_\theta}}},
\end{align*}
where the last two inequalities follow from Cauchy-Schwarz inequality. 
Therefore, there are two cases: (a) $\Delta_\theta\leq \frac{1}{K}$, and the desired upper bound is trivially true. (b) $\Delta_\theta\geq \frac{1}{K}$, and then it holds that
\begin{align*}
    2\lambda\Delta_\theta + M\sigs\brk*{ \frac{1}{\sqrt{K}} + 4\sqrt{\Delta_\theta}}
    \leq 2\lambda\Delta_\theta + 5M\sigs\sqrt{\Delta_\theta}
    \leq 3\lambda \Delta_\theta + \frac{8(M\sigs)^2}{\lambda},
\end{align*}
where we use AM-GM inequality.
Hence, it holds that
\begin{align*}
    \Delta_\theta ^2 \leq 8\prn*{\lambda \Lambda_\theta+\alpha}\max\crl*{\Delta_\theta, 8\prn*{\frac{M\sigs}{\lambda}}^2},
\end{align*}
and reorganizing yields
\begin{align*}
    \Delta_\theta \leq&~ 8\max\crl*{\prn*{\lambda \Lambda_\theta+\alpha}, \sqrt{\prn*{\lambda \Lambda_\theta+\alpha}\prn*{\frac{M\sigs}{\lambda}}^2}} \\
    \leq&~8\prn*{\lambda \Lambda_\theta+\alpha}+8\prn*{\frac{M\sigs}{\lambda}}^2.
\end{align*}
This is the desired result.
\end{proof}

\subsection{\pfref{thm:test-time-training} (Test-Time Training)}
\label{appdx:pf-ttt}

While the algorithm in \cref{thm:test-time-training} might seem somewhat complicated and mysterious, the proof is actually a based on a fairly simple online-to-batch conversion argument. We use a number of basic inequalities already found in the proof of \cref{prop:autoregressive-SGD} (cf. \cref{appdx:pf-SGD}).

We first note that we can specialize \cref{lem:gd-upper} to the token-level SGD update \eqref{eq:tokne-SGD}, and taking expectation gives
\begin{align}\label{pfeq:ttt-1}
  \En\brk*{ \sum_{t=1}^T\sum_{h=1}^{H} \tri*{-\nabla \log\pi_{\theta\sups{t,h}}(y_h\sups{t}\mid x\sups{t}, y_{1:h-1}\sups{t}), \theta\ind{t,h}-\ths} }
  \leq \frac{2}{\eta}+\frac\eta2 \En\brk*{ \sum_{t=1}^T\sum_{h=1}^{H} \nrm*{\nabla \log\pi_{\theta\sups{t,h}}(y_h\sups{t}\mid x\sups{t}, y_{1:h-1}\sups{t})}^2 }.
\end{align}
In the following, we denote
\[\eps_{t,h}\ldef \En\brk*{  \tri*{-\nabla \log\pi_{\theta\sups{t,h}}(y_h\sups{t}\mid x\sups{t}, y_{1:h-1}\sups{t}), \theta\ind{t,h}-\ths} }. \]
By triangle inequality, 
\begin{align*}
  &~\nrm{\nabla \log\pi_{\theta}(y_h\mid \xyhm)}^2  \\
  \leq&~  2\nrm{\nabla \log\pi_{\ths}(y_h\mid \xyhm)}^2+2\nrm{\nabla \log\pi_{\theta}(y_h\mid \xyhm)-\nabla \log\pi_{\ths}(y_h\mid \xyhm)}^2.  
\end{align*}
Using the fact that $\theta\mapsto \log\pi_\theta(y_h\mid \xyhm)$ is concave and $B^2$-smooth, it holds that for any $\theta$, 
\begin{align*}
  &\nrm{\nabla \log\pi_{\theta}(y_h\mid \xyhm)-\nabla \log\pi_{\ths}(y_h\mid \xyhm)}^2 \\
\leq&~ B^2\cdot \tri{\theta-\ths, \nabla \log\pi_{\ths}(y_h\mid \xyhm)-\nabla \log\pi_{\theta}(y_h\mid \xyhm)}.
\end{align*}
Combining the two inequalities above gives that for all $t\in[T]$, $h\in[H]$,
\begin{align*}
  &~\nrm*{\nabla \log\pi_{\theta\sups{t,h}}(y_h\sups{t}\mid x\sups{t}, y_{1:h-1}\sups{t})}^2 \\
  \leq&~ 2\nrm*{\nabla \log\pi_{\ths}(y_h\sups{t}\mid x\sups{t}, y_{1:h-1}\sups{t})}^2 \\
  &~+2B^2\tri*{ \nabla \log\pi_{\ths}(y_h\sups{t}\mid x\sups{t}, y_{1:h-1}\sups{t})-\nabla \log\pi_{\theta\sups{t,h}}(y_h\sups{t}\mid x\sups{t}, y_{1:h-1}\sups{t}), \theta\sups{t,h}-\ths }
\end{align*}
Note that the conditional distribution of $y_h\sups{t}\mid (x\sups{t}, y_{1:h-1}\sups{t}, \theta\sups{t,h})$ is given by $y_h\sups{t}\sim \pistar(\cdot\mid x\sups{t}, y_{1:h-1}\sups{t})$. Hence, taking the expectation over the entire learning process, we have
\begin{align*}
  \En\brk*{\nrm*{\nabla \log\pi_{\theta\sups{t,h}}(y_h\sups{t}\mid x\sups{t}, y_{1:h-1}\sups{t})}^2} 
  \leq&~ 2\En_{\pistar}\nrm*{\nabla \log\pi_{\ths}(y_h\mid \xyhm)}^2 \\
  &~+2B^2\En\brk*{\tri*{ -\nabla \log\pi_{\theta\sups{t,h}}(y_h\sups{t}\mid x\sups{t}, y_{1:h-1}\sups{t}), \theta\sups{t,h}-\ths }} \\
  =&~ 2\En_{\pistar}\brk{\Varpist{\xyhm}}+2B^2\eps_{t,h}.
\end{align*}
Plugging the above inequality to \cref{pfeq:ttt-1} yields
\begin{align*}
  \sum_{t=1}^T\sum_{h=1}^{H} \eps_{t,h} 
  \leq&~ \frac{2}{\eta}+\frac\eta2 \En\brk*{ \sum_{t=1}^T\sum_{h=1}^{H} \nrm*{\nabla \log\pi_{\theta\sups{t,h}}(y_h\sups{t}\mid x\sups{t}, y_{1:h-1}\sups{t})}^2 } \\
  \leq&~ \frac{2}{\eta} + \eta T\En_{\pistar}\brk*{\sum_{h=1}^H\Varpist{\xyhm}}+ \eta B^2\sum_{t=1}^T\sum_{h=1}^{H} \eps_{t,h}.
\end{align*}
Therefore, as long as $\eta\leq \frac{1}{2B^2}$, it holds that 
\begin{align*}
  \sum_{t=1}^T\sum_{h=1}^{H} \eps_{t,h}\leq \frac{4}{\eta} + 2\eta T\En_{\pistar}\brk*{\sum_{h=1}^H\Varpist{\xyhm}}
  =\frac{4}{\eta} + 2\eta T \sigs^2.
\end{align*}
By \cref{pfeq:KL-to-grad}, it also holds that
\begin{align*}
  \eps_{t,h}=\En\brk*{\tri*{ -\nabla \log\pi_{\theta\sups{t,h}}(y_h\sups{t}\mid x\sups{t}, y_{1:h-1}\sups{t}), \theta\sups{t,h}-\ths }}
  \geq&~ \En\kl{\pistar(\cdot\mid x\ind{t}, y\ind{t}_{1:h-1})}{\pi_{\theta\ind{t,h}}(\cdot\mid x\ind{t}, y\ind{t}_{1:h-1})}.
\end{align*}
Combining the inequalities above, as long as $\eta\leq \frac{1}{2B^2}$, we have that
\begin{align}
  \label{eq:ttt-kl-bound}
  \En\brk*{ \sum_{t=1}^T \sum_{h=1}^H \kl{\pistar(\cdot\mid x\ind{t}, y\ind{t}_{1:h-1})}{\pi_{\theta\ind{t,h}}(\cdot\mid x\ind{t}, y\ind{t}_{1:h-1})}}
  \leq \sum_{t=1}^T\sum_{h=1}^{H} \eps_{t,h}\leq \frac{4}{\eta}+ 2\eta T\sigs^2.
\end{align}

Finally, we note that
\begin{align*}
  \theta\sups{t,h}=\thGD(x\sups{t},y\sups{t}_{h-1}; \theta\sups{t}),
\end{align*}
and that for all $t$ and $h$, $x\sups{t},y\sups{t}_{h-1}\mid \theta\sups{t} \sim \pistar$. Therefore, we have the following key identity:
\begin{align*}
  &~ \En\brk*{ \kl{\pistar(\cdot\mid x\ind{t}, y\ind{t}_{1:h-1})}{\pi_{\theta\ind{t,h}}(\cdot\mid x\ind{t}, y\ind{t}_{1:h-1})} \mid \theta\sups{t} } \\
  =&~ \En_{(x,y)\sim \pistar}\brk*{ \kl{\pistar(\cdot\mid \xyhm)}{\pi_{\thGD(\xyhm;\theta\sups{t})}(\cdot\mid \xyhm)}} \\
  =&~ \En_{(x,y)\sim \pistar}\brk*{ \kl{\pistar(\cdot\mid \xyhm)}{\piGD_{\theta\sups{t}}(\cdot\mid \xyhm)}}.
\end{align*}
Combined with \cref{eq:ttt-kl-bound}, this implies that
\begin{align*}
  \frac{4}{\eta}+ 2\eta T\sigs^2
  \geq&~ \En\brk*{ \sum_{t=1}^T \sum_{h=1}^H \kl{\pistar(\cdot\mid x\ind{t}, y\ind{t}_{1:h-1})}{\pi_{\theta\ind{t,h}}(\cdot\mid x\ind{t}, y\ind{t}_{1:h-1})}} \\
  =&~ \En\brk*{ \sum_{t=1}^T \En\brk*{ \sum_{h=1}^H \kl{\pistar(\cdot\mid x\ind{t}, y\ind{t}_{1:h-1})}{\pi_{\theta\ind{t,h}}(\cdot\mid x\ind{t}, y\ind{t}_{1:h-1})} ~\Big|~ \theta\sups{t} } } \\
  =&~ \En\brk*{ \sum_{t=1}^T \En_{\pistar}\brk*{\sum_{h=1}^H \kl{\pistar(\cdot\mid \xyhm)}{\piGD_{\theta\sups{t}}(\cdot\mid \xyhm)} } } \\
  =&~ \En\brk*{ \sum_{t=1}^T \kl{\pistar}{\piGD_{\theta\sups{t}} } },
\end{align*}
where the last equality uses the chain rule for KL divergence.

In particular, we may choose $\eta=\min\crl*{\frac{1}{2B^2}, \prn*{\frac{1}{\sigs^2T}}^{1/2}}$ to derive $\frac1T\En\brk*{ \sum_{t=1}^T \kl{\pistar}{\piGD_{\theta\sups{t}} } }\approxleq \sqrt{\frac{\sigs^2}{T}}+\frac{B^2}{T}$.
\qed

\subsection{\pfref{thm:expert-guided-gradient-normalization} (Gradient Normalization for Distillation)}\label{appdx:pf-guided-sgd}

Specializing \cref{lem:gd-upper} to the update \eqref{eq:truncated-sgd} and taking expectation gives
\begin{align}\label{pfeq:TGD-sum-alpha-theta-gd}
  \En\brk*{ \sum_{t=1}^T \tri*{-\En_{(x,y)\sim \pistar}\brk*{\ghat_{\theta\ind{t}}(y\mid x)}, \theta\ind{t}-\ths} }
  \leq \frac{2}{\eta}+\frac\eta2 \En\brk*{ \sum_{t=1}^T \En_{(x,y)\sim\pistar} \nrm*{\ghat_{\theta\ind{t}}(y\mid x)}^2 }.
\end{align}
In the following, we analyze $\tri*{-\En_{(x,y)\sim \pistar}\brk*{\ghat_{\theta}(y\mid x)}, \theta\ind{t}-\ths}$ and $\En_{(x,y)\sim\pistar} \nrm*{\ghat_{\theta\ind{t}}(y\mid x)}^2$ for any $\theta\in\Theta$, following the proof of \cref{prop:autoregressive-SGD} (cf. \cref{appdx:pf-SGD}).

\paragraph{Relating the gradient to stopped KL divergence}
Recall that the estimator $\ghat$ is defined in \cref{eq:ghat-theta}:
\[\ghat_\theta(y\mid x)=\sum_{h=1}^H \alpha_\theta(\xyhm)\nabla \log \pi_\theta(y_h\mid \xyhm),\]
and the weight function $\alpha_\theta$ is defined in \cref{eq:alpha-theta}.

We first recall an elementary property of the quantity $\alpha_\theta$.
By  \cref{lem:min-weights}, we have 
\begin{align}\label{pfeq:TGD-sum-alpha-theta-0}
  \sum_{h=1}^H \alpha_\theta(\xyhm) \eps_\theta(\xyhm)=\min\crl*{A, \sum_{h=1}^H \eps_\theta(\xyhm)},
\end{align}
and hence
\begin{align}\label{pfeq:TGD-sum-alpha-theta}
\begin{aligned}
  \En_{(x,y) \sim \pistar }\brk*{ \sum_{h=1}^H \alpha_\theta(\xyhm) \eps_\theta(\xyhm) }
=&~ \En_{(x,y) \sim \pistar }\min\crl*{A, \sum_{h=1}^H \eps_\theta(\xyhm)}  \\
=&~ \Dseq{\pistar}{\pi_{\theta}},
\end{aligned}
\end{align}
where we recall that $\Dseq{\pistar}{\pi_{\theta}}$ is defined in \cref{prop:KL-to-PCov-H} and we denote $A=\log N$.
Hence, 
\begin{align}\label{pfeq:TGD-sum-alpha-theta-1}
  \begin{aligned}
      &~ \tri*{-\En_{(x,y) \sim \pistar }\brk*{ \ghat_\theta(y\mid x) }, \theta-\ths} \\
=&~ \En_{(x,y) \sim \pistar }\brk*{ \sum_{h=1}^H \alpha_\theta(\xyhm)\tri*{ \phith(\xyhm)-\phith[\ths](\xyhm), \theta-\ths }} \\
\geq&~ \En_{(x,y) \sim \pistar }\brk*{ \sum_{h=1}^H \alpha_\theta(\xyhm) \eps_\theta(\xyhm) }
= \Dseq{\pistar}{\pi_{\theta}},
  \end{aligned}
\end{align}
where the inequality uses \cref{pfeq:KL-to-grad}.

In addition, the following lemma shows that $\En_{(x,y)\sim \pistar}\nrm*{\ghat_{\theta}(y\mid x)}^2$ is well-controlled.

\begin{lemma}[Gradient error bound]\label{lem:truncated-sgd-1}
  For any $\theta\in\Theta$, it holds that
  \begin{align*}
  \En_{(x,y)\sim \pistar}\nrm*{\ghat_{\theta}(y\mid x)}^2
  \leq (64A+2)\sigs^2+256AB^2\Dseq{\pistar}{\pi_{\theta}}.
\end{align*}
\end{lemma}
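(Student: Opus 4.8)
The plan is to bound the single-sample stochastic gradient $\ghat_\theta(y\mid x)=\sum_{h=1}^H\alpha_\theta(\xyhm)\nabla\log\pi_\theta(y_h\mid\xyhm)$ by splitting it into a conditionally mean-zero ``noise'' part and a deterministic ``bias'' part, and controlling the latter using the defining property of the truncation weights. Using the per-token identity $\nabla\log\pi_\theta(y_h\mid\xyhm)=\phi(\xyh)-\phith(\xyhm)=\phistar(\xyh)+\prn*{\phith[\ths](\xyhm)-\phith(\xyhm)}$ (cf.\ \cref{pfeq:grad-log-pi-H}), I would write
\begin{align*}
\ghat_\theta(y\mid x)=\underbrace{\sum_{h=1}^H\alpha_\theta(\xyhm)\,\phistar(\xyh)}_{=:Z}+\underbrace{\sum_{h=1}^H\alpha_\theta(\xyhm)\prn*{\phith[\ths](\xyhm)-\phith(\xyhm)}}_{=:W},
\end{align*}
so that $\En_{(x,y)\sim\pistar}\nrm*{\ghat_\theta(y\mid x)}^2\le 2\,\En_{\pistar}\nrm*{Z}^2+2\,\En_{\pistar}\nrm*{W}^2$, and treat the two pieces separately.

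For $Z$, the point is that $\alpha_\theta(\xyhm)$ depends only on the prefix $\xyhm$ while $\En_{y_h\sim\pistar(\cdot\mid\xyhm)}\brk*{\phistar(\xyh)\mid\xyhm}=0$, so $\crl*{\alpha_\theta(\xyhm)\phistar(\xyh)}_h$ is a martingale difference sequence and all cross terms vanish:
\begin{align*}
\En_{\pistar}\nrm*{Z}^2=\sum_{h=1}^H\En_{\pistar}\brk*{\alpha_\theta(\xyhm)^2\nrm*{\phistar(\xyh)}^2}\le\sum_{h=1}^H\En_{\pistar}\nrm*{\phistar(\xyh)}^2=\sigs^2,
\end{align*}
where I use $\alpha_\theta\in[0,1]$ and the definition of $\sigs^2$.

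For $W$, I would bound it pointwise via the variance-aware perturbation estimate \cref{pfeq:phi-diff-to-KL}, $\nrm*{\phith[\ths](\xyhm)-\phith(\xyhm)}\le 4\sqrt{\Varpist{\xyhm}\,\eps_\theta(\xyhm)}+8B\,\eps_\theta(\xyhm)$, then a weighted Cauchy--Schwarz step. Writing $S\ldef\sum_h\alpha_\theta(\xyhm)\eps_\theta(\xyhm)$ and $V\ldef\sum_h\Varpist{\xyhm}$, the triangle inequality and $\sum_h\alpha_\theta\sqrt{\Varpist{}\eps_\theta}\le\sqrt{\sum_h\alpha_\theta\Varpist{}}\cdot\sqrt{\sum_h\alpha_\theta\eps_\theta}\le\sqrt{VS}$ give $\nrm*{W}\le 4\sqrt{VS}+8BS$, hence $\nrm*{W}^2\le 32\,VS+128\,B^2S^2$. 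Now I invoke \cref{lem:min-weights}, which yields the exact identity $S=\min\crl*{A,\sum_h\eps_\theta(\xyhm)}$ with $A=\log N$; in particular $S\le A$, so $VS\le AV$ and $S^2\le AS$, i.e.\ $\nrm*{W}^2\le 32A\,V+128AB^2\,S$ pointwise. Taking $\En_{\pistar}$ and using $\En_{\pistar}[V]=\sigs^2$ together with $\En_{\pistar}[S]=\En_{\pistar}\min\crl*{A,\sum_h\eps_\theta(\xyhm)}=\Dseq{\pistar}{\pi_\theta}$ (the definition from \cref{prop:KL-to-PCov-H}) gives $\En_{\pistar}\nrm*{W}^2\le 32A\,\sigs^2+128AB^2\,\Dseq{\pistar}{\pi_\theta}$. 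Combining with the bound on $Z$ produces exactly $(64A+2)\sigs^2+256AB^2\,\Dseq{\pistar}{\pi_\theta}$.

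The only genuinely delicate point is the handling of $W$: a crude per-step bound $\nrm*{\phith[\ths]-\phith}\le 2B$, or a Cauchy--Schwarz against $\sum_h\alpha_\theta\le H$, would inject a spurious factor of $H$, so it is essential (i) to use the variance-refined estimate \cref{pfeq:phi-diff-to-KL} rather than a Lipschitz/Pinsker bound, and (ii) to exploit $\prn*{\min\crl*{A,x}}^2\le A\min\crl*{A,x}$ so that squaring $S$ still leaves only a single power of $\Dseq{\pistar}{\pi_\theta}$ after taking expectations, rather than $A^2$. Everything else is routine bookkeeping of constants.
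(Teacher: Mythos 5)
Your proof is correct and takes essentially the same route as the paper's: decompose the gradient into a conditionally mean-zero ``noise'' part and a ``bias'' part, control the noise via the martingale structure and $\alpha_\theta\in[0,1]$, control the bias via \cref{pfeq:phi-diff-to-KL}, weighted Cauchy--Schwarz, and the identity $\sum_h\alpha_\theta\eps_\theta=\min\{A,\sum_h\eps_\theta\}$ together with $S^2\le AS$. The constants match exactly.

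One small remark: the paper phrases the split as $\ghat_\theta=\ghat_{\ths}+(\ghat_\theta-\ghat_{\ths})$, and then bounds $\nrm{\ghat_\theta-\ghat_{\ths}}\le\sum_h\alpha_\theta(\xyhm)\nrm{\phith[\ths](\xyhm)-\phith(\xyhm)}$. Read literally, with $\ghat_{\ths}=\sum_h\alpha_{\ths}(\xyhm)\nabla\log\pi_{\ths}(y_h\mid\xyhm)=\sum_h\phistar(\xyh)$ (since $\eps_{\ths}\equiv 0$ forces $\alpha_{\ths}\equiv 1$), the difference also contains the term $\sum_h(\alpha_\theta-1)\phistar(\xyh)$, so that inequality is not quite right as stated; the paper is implicitly treating $\ghat_{\ths}$ as the $\alpha_\theta$-weighted object $\sum_h\alpha_\theta(\xyhm)\phistar(\xyh)$. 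Your explicit $Z+W$ decomposition with $Z=\sum_h\alpha_\theta\phistar$ is exactly that weighted object and makes the step airtight (at the cost of $\En\nrm{Z}^2\le\sigs^2$ becoming an inequality rather than an equality, which is harmless), so if anything your write-up is the cleaner of the two.
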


\paragraph{Putting everything together}
Finally, combining the inequalities above, we know that  
\begin{align*}
  \En\brk*{ \sum_{t=1}^T \Dseq{\pistar}{\pi_{\theta\sups{t}}} }
  \leq&~ \En\brk*{ \sum_{t=1}^T \tri*{-\En_{(x,y)\sim \pistar}\brk*{\ghat_{\theta\ind{t}}(y\mid x)}, \theta\ind{t}-\ths} } \\
  \leq&~
  \frac{2}{\eta}+\frac\eta2 \En\brk*{ \sum_{t=1}^T \En_{(x,y)\sim\pistar} \nrm*{\ghat_{\theta\ind{t}}(y\mid x)}^2 } \\
  \leq&~\frac{2}{\eta}+\eta T(32A+1)\sigs^2 + 128AB^2\En\brk*{ \sum_{t=1}^T \Dseq{\pistar}{\pi_{\theta\sups{t}}} },
\end{align*}
where the first inequality uses \cref{pfeq:TGD-sum-alpha-theta-1}, the second inequality follows from \cref{pfeq:TGD-sum-alpha-theta-gd}, and the third inequality uses \cref{lem:truncated-sgd-1}.
Therefore, as long as $\eta\leq \frac{1}{2(32A+1)B^2}$, it holds that 
\begin{align*}
  \En\brk*{ \sum_{t=1}^T \Dseq{\pistar}{\pi_{\theta\sups{t}}} }\approxleq \frac{1}{\eta} + \eta TA\sigs^2.
\end{align*}
In particular, we may choose $\eta=\min\crl*{\frac{1}{(64\log N+2)B^2}, \prn*{\frac{1}{T\sigs^2\log N}}^{1/2}}$ and derive
\[\En\brk*{ \frac1T\sum_{t=1}^T \Dseq{\pistar}{\pi_{\theta\sups{t}}} }\approxleq \sqrt{\frac{\sigs^2\log N}{T}}+\frac{B^2\log N}{T}.\]
By \cref{prop:KL-to-PCov-H}, this implies
\begin{align*}
  \En\brk*{ \frac1T \sum_{t=1}^T \Pcov(\pi_{\theta\ind{t}})}\approxleq \sqrt{\frac{\sigs^2}{T\log N}}+\frac{B^2}{T} .
\end{align*}
\qed

\begin{proof}[\pfref{lem:truncated-sgd-1}]
  Fix any $\theta\in\Theta$.
  By triangle inequality, it holds that 
\begin{align*}
  &~ \nrm*{\ghat_\theta(y\mid x) - \ghat_{\ths}(y\mid x)} \\
  \leq&~ \sum_{h=1}^H \alpha_\theta(\xyhm)\nrm*{ \phith[\ths](\xyh)-\phith(\xyhm)} \\
  \leq&~ \sum_{h=1}^H \alpha_\theta(\xyhm)\prn*{ 4\sqrt{\Varpist{\xyhm} \cdot \eps_\theta(\xyhm)} + 8B\eps_\theta(\xyhm)} \\
  \leq&~ 4\prn*{\sum_{h=1}^H \alpha_\theta(\xyhm)\Varpist{\xyhm} }^{1/2}\prn*{ \sum_{h=1}^H \alpha_\theta(\xyhm) \eps_\theta(\xyhm) }^{1/2} \\
  &~ + 8B\sum_{h=1}^H \alpha_\theta(\xyhm) \eps_\theta(\xyhm) \\
  \leq&~ 4\sqrt{A\cdot \sum_{h=1}^H \Varpist{\xyhm} } + 8B\min\crl*{A, \sum_{h=1}^H \eps_\theta(\xyhm)}.
\end{align*}
where the second inequality follows from \cref{pfeq:phi-diff-to-KL}, the third inequality follows from Cauchy-Schwarz inequality, and the final lines follow from the property \eqref{pfeq:TGD-sum-alpha-theta-0} of the weight function $\alpha_\theta\in[0,1]$. 
Hence, using $(a+b)^2\leq 2a^2+2b^2$, we have
\begin{align*}
  &~ \nrm*{\ghat_\theta(y\mid x) - \ghat_{\ths}(y\mid x)}^2 \\
  \leq&~ 32A\prn*{\sum_{h=1}^H \Varpist{\xyhm} }+128AB^2 \min\crl*{A, \sum_{h=1}^H \eps_\theta(\xyhm)}.
\end{align*}
Therefore, taking expectation of $(x,y)\sim \pistar$ and using $\En_{\pistar}\nrm*{\ghat_{\ths}(y\mid x)}^2\leq \sigs^2$ and \cref{pfeq:TGD-sum-alpha-theta}, it holds that
\begin{align*}
  \En_{(x,y)\sim \pistar}\nrm*{\ghat_{\theta}(y\mid x)}^2
  \leq (64A+2)\sigs^2+256AB^2\Dseq{\pistar}{\pi_{\theta}}.
\end{align*}
This is the desired upper bound.
\end{proof}

\subsection{Necessity of Variance Dependence in High Dimension}

We generalize \cref{prop:autoregressive-kl} to show that in the worst case (where $\sigs^2\asymp HB^2$), the scaling $\Pcov(\pihat)=\Omega(\frac{H}{n\log N})$ can be unavoidable for autoregressive linear model. This implies that the dependence on $\sigs^2$ is generally necessary to achieve upper bounds that do not explicitly scale with $H$.

\begin{proposition}\label{prop:sigma-star-necessary}
Let $H, B, N, n\geq 1$, and assume $\log N\leq c\min\crl{H,B^2}$ for a sufficiently small constant $c>0$.
There exists an instance of the autoregressive linear model class $\Pi$ with $d=H$, $\phi:\cX\times\cV^\star\to\ball(B)$, and $\Theta=\ball(1)$, such that 
for any proper algorithm $\Alg$ with output $\pihat=\pi_{\hth}$ for $\thetahat\in\Theta$, there exists $\pistar\in\Pi$, such that under $\pistar$, it holds that
\begin{align*}
    \En\sups{\pistar,\Alg}\brk*{\Dcov{\pistar}{\pihat}}\geq c\cdot \min\crl*{1,\frac{H}{n\cdot \log N}}.
\end{align*}
\end{proposition}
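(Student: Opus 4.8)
The plan is to generalize the construction in the proof of \cref{prop:autoregressive-kl} in two ways: (i) replace the single hard direction with a $\Theta(H)$-dimensional hidden sign vector that requires on the order of $H/\log N$ \emph{informative} samples to recover, and (ii) gate the informative prompt by a rare prompt event—in the spirit of the \emph{missing mass} phenomenon of \cref{rem:missing}—so that the $\Omega(1)$ per-prompt failure probability is diluted to exactly the target rate. Concretely, I would take $d=H$, $\Theta=\ball(1)$, $\cV=\{-1,+1\}$, $\cX=\{0,1\}$ with $\mu(1)=q_1\ldef\min\{1,\tfrac{c_0 H}{n\log N}\}$ for a small absolute constant $c_0$. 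The ``cheap'' prompt $x=0$ is made completely uninformative about the hidden parameter—e.g.\ $\phi(0,\cdot)\equiv 0$, so $\pi_\theta(\cdot\mid x=0)$ is uniform on $\cV^H$ independent of $\theta$—so that the $\approx n$ cheap samples carry \emph{zero} information about the hidden sign vector (optionally one can instead place $\phi(0,\cdot)$ on a frozen coordinate block of scale $B$ to additionally force $\sigs^2\asymp HB^2$, matching the surrounding discussion, but this plays no role in the formal statement). The prompt $x=1$ encodes a sign vector $u\in\{\pm1\}^{H}$ through \emph{memoryless} features $\phi(1,y_{1:h})=\beta y_h e_h$, with $\pistar$ given by $\theta_h=\rho u_h$; I calibrate $\beta\rho=\sqrt{C_2\log N/H}$ for a large constant $C_2$, which is feasible precisely because $\log N\le c\min\{H,B^2\}$ guarantees $\beta\le B$, $\rho\le 1/\sqrt H$ (hence $\theta\in\ball(1)$), and $\beta\rho\ll 1$ (so the per-token distributions stay bounded away from $\{0,1\}$). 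The memorylessness is important: all the step-wise conditional Hellinger distances below are then deterministic given $\hth$.

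The argument then has three steps. \textbf{Step 1 (information-theoretic failure).} Only $x=1$ samples inform $u$, and each contributes Fisher information $\Theta((\beta\rho)^2)=\Theta(\log N/H)$ per coordinate; since the expected number of such samples is $nq_1\le \tfrac{c_0H}{\log N}$, the product (expected informative samples)$\times(\beta\rho)^2$ is $O(c_0C_2)$. Taking $c_0$ small relative to $C_2$, the total-variation distance between the data laws under $u_j=+1$ and $u_j=-1$—averaged over the other coordinates and over the random pattern of prompts, via $D^2_{\mathsf H}(\Ber(\sigma(\beta\rho)),\Ber(\sigma(-\beta\rho)))\asymp(\beta\rho)^2$ and tensorization—is at most $1/2$ for every coordinate $j$. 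A routine Assouad argument then yields a sign vector $u^\star$ for which the proper estimator $\hth$ has $\En[\#\{j:\operatorname{sign}(\hth_j)\neq u^\star_j\}]\ge H/4$, and hence, by reverse Markov, at least $H/8$ coordinates of $\hth$ have the wrong sign with probability $\Omega(1)$. \textbf{Step 2 (wrong signs force large coverage on $x=1$).} If coordinate $j$ has the wrong sign, then $\pistar$ and $\pihat$ put mass on opposite sides of $1/2$ at the $j$-th token, so $D^2_{\mathsf H}(\pistar(\cdot\mid 1,\cdot),\pihat(\cdot\mid1,\cdot))\gtrsim(\beta\rho)^2$ there; summing over the $\ge H/8$ bad coordinates gives $\sum_h D^2_{\mathsf H}\gtrsim H(\beta\rho)^2\asymp \log N$, which exceeds $2\log N$ once $C_2$ is chosen large. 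Applying \cref{prop:PCov-lower-bound} to the $x=1$ sub-instance with a suitable constant $\delta$ (using that this sum is deterministic) gives $\Dcov{\pistar(\cdot\mid1)}{\pihat(\cdot\mid1)}\ge \Omega(1)$ on this event. \textbf{Step 3 (dilution).} Since $\Dcov{\pistar}{\pihat}\ge q_1\cdot\Dcov{\pistar(\cdot\mid1)}{\pihat(\cdot\mid1)}$, combining Steps 1--2 yields $\En^{\pistar,\Alg}[\Dcov{\pistar}{\pihat}]\gtrsim q_1=\min\{1,\tfrac{c_0H}{n\log N}\}\gtrsim \min\{1,\tfrac{H}{n\log N}\}$; in the regime $n\lesssim H/\log N$ one simply has $q_1=1$ and the same computation gives the $\Omega(1)$ lower bound directly.

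I expect the main obstacle to be Step 2 together with the constant bookkeeping it imposes. One must take $C_2$ large enough that the $\Omega(H)$ wrong-sign coordinates push $\sum_h D^2_{\mathsf H}$ past $2\log N$, while \emph{simultaneously} the Assouad step in Step 1 needs (expected informative samples)$\times(\beta\rho)^2=\Theta(c_0C_2)$ to be \emph{small}; these pull in opposite directions, and are reconciled only by choosing the gating constant $c_0$ sufficiently small once $C_2$ is fixed, and then the final constant $c$ smaller still. A secondary subtlety is making the construction genuinely decouple the hidden parameter from the cheap prompt (so that the plentiful $x=0$ samples are truly useless), which is why $\phi(0,\cdot)$ is taken trivial/frozen and why the $x=1$ features are kept memoryless—the latter also being exactly what lets \cref{prop:PCov-lower-bound} convert an information-theoretic failure into a \emph{coverage} lower bound (rather than merely a lower bound on KL divergence, as in \cref{prop:autoregressive-kl}), by rendering the relevant sum of step-wise Hellinger distances a deterministic function of $\hth$.
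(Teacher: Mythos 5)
Your proposal reproduces the paper's argument essentially verbatim up to cosmetic reparametrization: the same two-prompt construction with uninformative features on the common prompt and a memoryless $H$-dimensional hidden sign vector on the rare one (scale calibrated so $B\eps\asymp\beta\rho\asymp\sqrt{\log N/H}$ and prompt probability $p\asymp q_1\asymp\min\{1,H/(n\log N)\}$), the same Assouad step yielding $\Omega(H)$ wrong coordinates with constant probability, the same conversion to a coverage lower bound via \cref{prop:PCov-lower-bound}, and the same dilution by the rare-prompt probability. The one small refinement is that you invoke \cref{prop:PCov-lower-bound} conditionally on the informative prompt with a fixed constant $\delta$ and then dilute, rather than applying it to the mixture with $\delta\propto p$ as the paper does, which makes the constant bookkeeping marginally cleaner but does not change the underlying argument.
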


\begin{proof}[\pfref{prop:sigma-star-necessary}]
We consider $\cX=\crl*{+,-}$, $\cV=\crl{0,1}$, and the distribution $\mu$ be given by $\mu(+)=1-\mu(-)=p$, where $p\in[0,1]$ is a parameter to be chosen later. Let the feature map $\phi$ be given by $\phi(-, y_{1:h})=0$,  $\phi(+,y_{1:h})=By_he_h$, where $(e_1,\cdots,e_H)$ is a fixed orthonormal basis of $\RR^{H}$. Note that with this construction, we have $\pi_\theta(y_h=\cdot\mid -,\yh[h-1])=\Ber\prn{1/2}$, and
\begin{align*}
    \pi_\theta(y_h=\cdot\mid +,\yh[h-1])=\Ber\prn*{ \frac{e^{B\theta_h}}{1+e^{B\theta_h}} }\rdef \pi_{\theta,h}.
\end{align*}
Note that for any $h\in[H]$, we can bound
\begin{align*}
    C_0B\abs{\theta_h-\theta'_h}\leq \Dhel{\pi_{\theta,h}}{\pi_{\theta',h}}\leq C_1B\abs{\theta_h-\theta'_h},
\end{align*}
as long as $\theta_h\in[-\frac1B,\frac1B]$.

We fix $\eps\in[0,\frac{1}{\max\crl{\sqrt{H},B}}]$ to be determined later, and for any $v\in\crl{-1,1}^H$, we let $\theta_v\ldef \eps\sum_{h=1}^H v_he_h$, and
\begin{align*}
    \Theta_0\ldef \crl*{\theta_v: v\in\crl{-1,1}^H}\subset \ball(1), \qquad
    \Pi_0\ldef \crl*{ \pi_\theta: \theta\in\Theta_0 }.
\end{align*}
Then a direct argument (see e.g., \citep[Section 15.3]{wainwright2019high}) shows that when $pn\leq \frac{c_0}{B^2\eps^2}$ for a sufficiently small constant $c_0$, %
there exists $\ths\in\Theta_0$ such that under $\pistar=\pi_{\ths}$, it holds that
\begin{align*}
    \sum_{h=1}^H \bbP\sups{\pistar,\Alg}\prn*{ \abs{\hth_h-\ths_h}\geq \eps }\geq cH. %
\end{align*}
Therefore, with probability at least $\frac{c}{2}$, it holds that $\sum_{h=1}^H \indic\crl*{ \abs{\hth_h-\ths_h}\geq \eps } \geq \frac{cH}{2}$, and this in turn implies
\begin{align*}
    \sum_{h=1}^H \Dhels{\pi_{\ths,h}}{\pi_{\hth,h}}\geq c_1HB^2\eps^2.
\end{align*}
Then, by \cref{prop:PCov-lower-bound}, we know that under the above event, as long as $\log N\leq \frac{c_1HB^2\eps^2}{2}$, we have $\Pcov(\pihat)\geq \frac{p}{2}$. Choosing $\eps=\sqrt{\frac{4\log N}{c_1HB^2}}$ and $p=\min\crl*{1, \frac{c_0}{nB^2\eps^2}}$ gives the desired lower bound.
\end{proof}

\section{Proofs from \cref{sec:interventions}}
\label{sec:proofs_interventions}

\subsection{\pfref{thm:vanilla-tournament} (Simple Tournament)}

Fix $N, N'\geq 1, \alpha > 0$, and let $\pibar\in \argmin_{\pi\in\Pi} \Dcov[N']{\pistar}{\pi}$. We study the estimator  
\begin{align}
  \pihat\coloneq \argmin_{\pi\in\Pi}\max_{\pi'\in\Pi} \Dcovhat[N]{\pi'}{\pi}.
\end{align}

Recall that we denote $\Dcovtri{\pistar}{\pi'}{\pi}= \bbP_{\pistar}\prn*{ \frac{\pi'(y\mid x)}{\pi(y\mid x)}\geq N }$ (cf. \cref{lem:Cmp-to-Pstar}).
By \cref{lem:Cmp-to-Pstar}, \whp[\frac{\delta}{2}], it holds that 
\begin{align}\label{pfeq:t1-1}
  \Dcovhat[N]{\pibar}{\pi}\geq \frac12\Dcovtri[e^{2\alpha}N]{\pistar}{\pibar}{\pi} - \epstat, \qquad \forall \pi\in\Pi,
\end{align}
where $\epstat=\frac{8\log \prn{4\covinf(\Pi,\alpha)/\delta}}{n}$.
Next, again by \cref{lem:Cmp-to-Pstar},
\whp[\frac{\delta}{2}], it holds that 
\begin{align}\label{pfeq:t1-2}
  \Dcovhat[N]{\pi}{\pibar}\leq 2\Dcovtri[e^{-2\alpha}N]{\pistar}{\pi}{\pibar} +\epstat, \qquad\forall \pi\in\Pi.
\end{align}
In the following, we condition on the success event of \cref{pfeq:t1-1} and \cref{pfeq:t1-2}. 
Then, we can bound
\begin{align*}
  \frac12\Dcovtri[e^{2\alpha}N]{\pistar}{\pibar}{\pihat} - \epstat
  \leq&~ \Dcovhat[N]{\pibar}{\pihat}\leq \max_{\pi'\in\Pi} \Dcovhat[N]{\pi'}{\pihat} \\
  =&~ \min_{\pi\in\Pi} \max_{\pi'\in\Pi} \Dcovhat[N]{\pi'}{\pi}
     \leq \max_{\pi'\in\Pi} \Dcovhat[N]{\pi'}{\pibar} \\
  \leq &  2\max_{\pi'\in\Pi}\Dcovtri[e^{-2\alpha}N]{\pistar}{\pi'}{\pibar} +\epstat.
\end{align*}
Reorganizing yields
\begin{align}\label{pfeq:tournament-1}
  \Dcovtri[e^{2\alpha}N]{\pistar}{\pibar}{\pihat} \leq 4\max_{\pi\in\Pi} \Dcovtri[e^{-2\alpha}N]{\pistar}{\pi}{\pibar}+4\epstat.
\end{align}
Note that for any $N''$ and models $\pi$, $\pi'$, $\pi''$, 
\begin{align}\label{pfeq:Dcov-tri-ineq}
  \Dcovtri[N'N'']{\pistar}{\pi'}{\pi}\leq \Dcovtri[N']{\pistar}{\pi'}{\pi''}+\Dcovtri[N'']{\pistar}{\pi''}{\pi}.
\end{align}
Hence, for any model $\pi\in\Pi$, 
\begin{align}
  \Dcovtri[e^{2\alpha}NN']{\pistar}{\pistar}{\pi}\leq &~  \Dcovtri[N']{\pistar}{\pistar}{\pibar}+\Dcovtri[e^{2 \alpha}N]{\pistar}{\pibar}{\pi}, \label{pfeq:Pcov-tri-1}\\
\Dcovtri[e^{-2\alpha}N]{\pistar}{\pi}{\pibar} \leq &~  \Dcovtri[N']{\pistar}{\pi}{\pistar}+\Dcovtri[e^{-2\alpha}N/N']{\pistar}{\pistar}{\pibar}. \label{pfeq:Pcov-tri-2}
\end{align}
Therefore, combining the inequalities above, we see that
\begin{align*}
  \Pcov[e^{2\alpha}NN'](\pihat)
  =&~
  \Dcovtri[e^{2\alpha}NN']{\pistar}{\pistar}{\pihat} \\
  \leq &~ \Dcovtri[N']{\pistar}{\pistar}{\pibar}+\Dcovtri[e^{2 \alpha}N]{\pistar}{\pibar}{\pihat} \\
  \leq &~ \Dcovtri[N']{\pistar}{\pistar}{\pibar} + 4\max_{\pi\in\Pi} \Dcovtri[e^{-2\alpha}N]{\pistar}{\pi}{\pibar}+4\epstat \\
  \leq &~ 5\Dcovtri[N']{\pistar}{\pistar}{\pibar} + 4 \max_{\pi\in\Pi} \Dcovtri[e^{-2\alpha} N/N']{\pistar}{\pi}{\pistar} + 4\epstat\\
  \leq &~ 5\Dcovtri[N']{\pistar}{\pistar}{\pibar} + \frac{e^{2\alpha}N'}{N} + 4\epstat,
\end{align*}
where the first inequality uses \cref{pfeq:Pcov-tri-1}, the second inequality uses \cref{pfeq:tournament-1}, the third inequality uses \cref{pfeq:Pcov-tri-2}, and the last inequality follows from the fact that $\Dcovtri[A]{\pistar}{\pi}{\pistar}=\bbP_{\pistar}\prn*{\frac{\pi(y\mid x)}{\pistar(y\mid x)}\geq A}\leq \frac{1}{A}$.

The claimed bound~\eqref{eq:vanilla-tournament-rate-full} follows by setting  $\alpha = c \log N$, and $N' = N^a$. 
\arxiv{\qed}%

\subsection{\pfref{thm:offset-tournament} (Offset Tournament)}

\paragraph{Divergence}
For distributions $P, Q\in\Delta(\cY)$, we define the following divergence for $N\geq 1$:\footnote{This divergence is inspired by \citet{huang2025best}, but our definition differs slightly from the standard $\cE_M$-divergence~\citep{polyanskiy2010channel,block2023sample}.}
\begin{align*}
  \DEgamma[N]{P}{Q}\ldef \max\crl*{ \En_{y\sim P} \prn*{\frac{dQ}{dP}-N}_+, \En_{y\sim Q}\prn*{\frac{dP}{dQ}-N}_+}\in[0,1].
\end{align*}
Then, for models $\pi, \pi': \cX\to \Delta(\cY)$, we further define
\begin{align*}
  \DEgamma[N,\mu]{\pi}{\pi'}\ldef \En_{x\sim \mu} \DEgamma[N]{\pi(\cdot\mid x)}{\pi'(\cdot\mid x)}.
\end{align*}
Under this divergence, it holds that for any event $E$, 
\begin{align}
  \bbP_{\mu,\pi}\prn*{E}\leq&~ N\cdot \bbP_{\mu,\pi'}(E)+\DEgamma[N,\mu]{\pi}{\pi'}, \label{pfeq:diff-P-to-E-gamma-1}\\
  \bbP_{\mu,\pi'}\prn*{E}\leq&~ N\cdot \bbP_{\mu,\pi}(E)+\DEgamma[N,\mu]{\pi}{\pi'}, \label{pfeq:diff-P-to-E-gamma-2}
\end{align}
where $\bbP_{\mu,\pi}$ is the probability under $x\sim \mu$ and $y\sim \pi(\cdot\mid x)$. Furthermore, we can bound
\begin{align}
  \Pcov[2N](\pi)=\bbP_{\mu,\pistar}\prn*{\frac{\pistar(y\mid x)}{\pi(y\mid x)}\geq 2N}\leq \DEgamma[N,\mu]{\pistar}{\pi}. \label{pfeq:Pcov-to-E-gamma}
\end{align}

\newcommand{\hPn}[1]{\bbP_{\mu_n,#1}}

\begin{thmmod}{thm:offset-tournament}{$'$}[General version of \cref{thm:offset-tournament}]
  \label{thm:offset-tournament-general}
Fix $N,\gamma\geq 1$ such that $N\geq 8\gamma^2$. Consider the estimator
  \begin{align}
\pihat := \argmin_{\pi \in \Pi} \max_{\pi' \in \Pi} \left\{ \Dcovhat{\pi'}{\pi} - 2\gamma\cdot  \Dcovhtri{\pi}{\pi'}{\pi}\right\}\label{eq:tournament-2-restate}.
  \end{align}
  Then with probability $1-\delta$, it holds that
  \begin{align*}
    \Pcov[2N\gamma](\pihat) \approxleq \min_{\pi\in\Pi}\DEgamma[\gamma]{\pistar}{\pi}+ \frac{\log (\abs{\Pi}/\delta)}{n}.
  \end{align*}
\end{thmmod}

Note that $\DEgamma[\gamma]{\pistar}{\pi}=0$ when $\abs{\log \pistar(y\mid x)-\log\pibar(y\mid x)}\leq \log \gamma$ for any $x\in\cX, y\in\cY$. Therefore,
\cref{thm:offset-tournament} is an immediate corollary by setting $\gamma=N^a$. %

\begin{proof}[Proof of \cref{thm:offset-tournament-general}]
  \newcommand{\mun}{\wh{\bbP}_{n}}
For $\pi, \pi' \in \Pi$, we define the set 
\begin{align}
\cC_N(\pi, \pi') = \left\{ (x,y) \mid \frac{\pi(y\mid x)}{\pi'(y\mid x)} \geq N \right\}\nonumber.
\end{align}
Suppose an i.i.d.~dataset $\cD=\crl*{(x\sups{i}, y\sups{i})}_{i\in[n]} \sim \pistar$ is drawn. We write $\mun = \frac 1n \sum_{i=1}^n \delta_{(x\sups{i},y\sups{i})}$ and $\mu_n = \frac 1n \sum_{i=1}^n \delta_{x\sups{i}}$ to denote the empirical measures (i.e., $\mun$ is the uniform distribution over $\cD$), and let $\hPn{\pi}$ be the probability under the distribution $x\sim \mu_n, y\sim \pi(\cdot\mid x)$. 
Under this notation, we have $\Dcovhat{\pi'}{\pi} = \mun(\cC_N(\pi', \pi))$ and
we also recall that 
\begin{align*}
  \Dcovhtri{\pibar}{\pi'}{\pi} := \frac1n\sum_{i=1}^n\bbP_{y\sim \pibar(\cdot\mid x\sups{i})} \left( \frac{\pi'(y\mid x\sups{i})}{\pi(y\mid x\sups{i})} \geq N \right)=\hPn{\pibar}\prn*{\cC_N(\pi', \pi)}.
\end{align*}

Thus, the tournament estimator in \cref{eq:tournament-2-restate} can be expressed as
\begin{align}
\pihat \ldef \argmin_{\pi\in\Pi}\max_{\pi'\in\Pi} \cL(\pi,\pi'),\label{eq:tournament-as-an}
\end{align}
where
\begin{align}
  \cL(\pi,\pi')\ldef \mun(\cC_N(\pi', \pi)) - 2\gamma \cdot \hPn{\pibar}\prn*{\cC_N(\pi', \pi)}.
\end{align}

As an immediate consequence of \cref{lem:mult_freedman} and the union bound, we have the following lemma.
\begin{lemma}
  \label{lem:freedman-for-tournament}
  Fix $\delta\in(0,1)$, and define $\epstat=\frac{16\log(16\abs{\Pi}/\delta)}{n}$. With probability $1-\delta$, the following bounds hold simultaneously:
  
  (1) For all $\pi, \pi' \in \Pi$, it holds that
  \begin{align}
  2 \bbP_{\mu,\pistar}(\cC_N(\pi', \pi)) + \epstat \geq  \mun(\cC_N(\pi', \pi)) \geq & \frac 12 \bbP_{\mu,\pistar}(\cC_N(\pi', \pi)) -\epstat, \label{pfeq:freedman-for-tournament-1}\\
  2 \hPn{\pistar}(\cC_N(\pi', \pi)) + \epstat \geq  \mun(\cC_N(\pi', \pi)) \geq & \frac 12 \hPn{\pistar}(\cC_N(\pi', \pi)) -\epstat. \label{pfeq:freedman-for-tournament-2}
  \end{align}
  (2) For any $\pi\in\Pi$, it holds that $\DEgamma[\gamma,\mu_n]{\pistar}{\pi}\leq 2\DEgamma[\gamma,\mu]{\pistar}{\pi}+\epstat$.
\end{lemma}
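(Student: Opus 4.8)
\textbf{Proof plan for \cref{lem:freedman-for-tournament}.}

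The plan is to recognize that both parts are essentially direct applications of the multiplicative Freedman inequality (\cref{lem:mult_freedman}) combined with a union bound over the finite class $\Pi$. Both bounds in part (1) and the bound in part (2) are about empirical averages of $[0,1]$-valued quantities concentrating around their means, so the structure is uniform across all three.

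For part (1), fix $\pi,\pi'\in\Pi$. For \cref{pfeq:freedman-for-tournament-1}, consider the random variables $Z\sups{i}\ldef \indic\crl{(x\sups{i},y\sups{i})\in\cC_N(\pi',\pi)}\in[0,1]$, which are i.i.d.\ with mean $\En_{i-1}[Z\sups{i}]=\bbP_{\mu,\pistar}(\cC_N(\pi',\pi))$ since $(x\sups{i},y\sups{i})\sim\pistar$. Applying both halves of \cref{lem:mult_freedman} with $R=1$ gives $\sum_i Z\sups{i}\leq \frac32 n\,\bbP_{\mu,\pistar}(\cC_N(\pi',\pi))+4\log(2/\delta')$ and $n\,\bbP_{\mu,\pistar}(\cC_N(\pi',\pi))\leq 2\sum_i Z\sups{i}+8\log(2/\delta')$; dividing by $n$ and noting $\mun(\cC_N(\pi',\pi))=\frac1n\sum_i Z\sups{i}$, and that $\frac32\leq 2$, yields the claimed two-sided bound with the $\epstat$ absorbing the constant. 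For \cref{pfeq:freedman-for-tournament-2}, the subtlety is that $\hPn{\pistar}(\cC_N(\pi',\pi))=\frac1n\sum_i \bbP_{y\sim\pistar(\cdot\mid x\sups{i})}((x\sups{i},y)\in\cC_N(\pi',\pi))$ is itself random through $x\sups{i}$; here I set $W\sups{i}\ldef \indic\crl{(x\sups{i},y\sups{i})\in\cC_N(\pi',\pi)}$ again, but now condition on the filtration $\filt_i$ generated by $x\sups{1:i},y\sups{1:i-1}$ together with $x\sups{i}$, so that $\En_{i-1}[W\sups{i}]=\bbP_{y\sim\pistar(\cdot\mid x\sups{i})}((x\sups{i},y)\in\cC_N(\pi',\pi))$ and $\sum_i \En_{i-1}[W\sups{i}]=n\,\hPn{\pistar}(\cC_N(\pi',\pi))$; then \cref{lem:mult_freedman} applies verbatim to the martingale-difference structure. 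For part (2), recall $\DEgamma[\gamma,\mu_n]{\pistar}{\pi}=\frac1n\sum_i \DEgamma[\gamma]{\pistar(\cdot\mid x\sups{i})}{\pi(\cdot\mid x\sups{i})}$ where each summand lies in $[0,1]$ and has conditional mean (over $x\sups{i}\sim\mu$) equal to $\DEgamma[\gamma,\mu]{\pistar}{\pi}$; applying the first half of \cref{lem:mult_freedman} with $R=1$ gives $\frac1n\sum_i \DEgamma[\gamma]{\pistar(\cdot\mid x\sups{i})}{\pi(\cdot\mid x\sups{i})}\leq \frac32\DEgamma[\gamma,\mu]{\pistar}{\pi}+\frac{4\log(2/\delta')}{n}\leq 2\DEgamma[\gamma,\mu]{\pistar}{\pi}+\epstat$.

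To make all of these hold simultaneously, I take $\delta'$ in each application to be a small enough fraction of $\delta$: there are at most $\abs{\Pi}^2$ pairs $(\pi,\pi')$ for the part (1) bounds and $\abs{\Pi}$ choices of $\pi$ for part (2), and a handful of inequalities per pair, so choosing $\delta'\asymp \delta/\abs{\Pi}^2$ and union-bounding keeps the total failure probability below $\delta$. The $\log\abs{\Pi}$ that appears (rather than $\log\abs{\Pi}^2=2\log\abs{\Pi}$) is harmless up to constants and is absorbed into the definition $\epstat=\frac{16\log(16\abs{\Pi}/\delta)}{n}$. Since this is a routine concentration bookkeeping argument, I do not anticipate a real obstacle; the only point requiring minor care is the conditioning in \cref{pfeq:freedman-for-tournament-2} so that $\hPn{\pistar}$ is correctly identified as the predictable quadratic-variation-like term rather than an empirical average over fresh randomness.
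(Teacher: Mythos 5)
Your proposal is correct and matches the paper's approach exactly: the paper states the lemma is "an immediate consequence of \cref{lem:mult_freedman} and the union bound," and your argument—applying both halves of the multiplicative Freedman inequality to the indicator (resp.\ bounded) random variables, handling \eqref{pfeq:freedman-for-tournament-2} via the filtration that reveals $x\sups{i}$ before $y\sups{i}$ so that $\hPn{\pistar}$ becomes the predictable conditional mean, and union-bounding over the $O(\abs{\Pi}^2)$ pairs—is precisely this. The constant bookkeeping works out ($\frac{8\log(2/\delta')}{n}\leq\epstat$ with $\delta'\asymp\delta/\abs{\Pi}^2$), so no gaps.
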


\newcommand{\epsapx}{\varepsilon_{\sf apx}}

In the following, we fix $\delta\in(0,1)$ and condition on the success event of \cref{lem:freedman-for-tournament}. 
Let $\pibar\in \argmin_{\pi\in\Pi} \DEgamma[\gamma, \mu]{\pistar}{\pi}$. We denote $\epsapx=\DEgamma[\gamma, \mu]{\pistar}{\pibar}$ and $\epsapx'=\DEgamma[\gamma, \mu_n]{\pistar}{\pibar}$. Note that by \cref{lem:freedman-for-tournament}, we have $\epsapx'\leq 2\epsapx+\epstat$.

Then, for any $\pi'\in\Pi$, 
\begin{align*}
  \cL(\pibar,\pi')\leq&~ 2\hPn{\pistar}(\cC_N(\pi',\pibar))- 2\gamma \hPn{\pibar}(\cC_N(\pi',\pibar)) +\epstat  \\
  \leq&~ 2\DEgamma[\gamma, \mu_n]{\pistar}{\pibar}+\epstat
  =\epsapx'+\epstat.
\end{align*}
where the first inequality uses \cref{pfeq:freedman-for-tournament-2}, and the second inequality uses \cref{pfeq:diff-P-to-E-gamma-1}. 

Therefore,  we have
\begin{align*}
  \max_{\pi'\in\Pi}\cL(\pihat,\pi')=\min_{\pi\in\Pi}\max_{\pi'\in\Pi} \cL(\pi,\pi')
  \leq \max_{\pi'\in\Pi} \cL(\pibar,\pi')\leq \epstat+\epsapx'.
\end{align*}
In particular, we know $\cL(\pihat,\pibar)\leq \epstat+\epsapx'$. Then, we can bound
\begin{align*}
  \mun(\cC_N(\pibar, \pihat)) -\cL(\pihat,\pibar) 
  =&~  2\gamma \hPn{\pihat}\prn*{\cC_N(\pibar, \pihat)} \\
  \leq&~  \frac{2\gamma}{N} \hPn{\pibar}\prn*{\cC_N(\pibar, \pihat)} \\
  \leq&~  \frac{2\gamma}{N}\brk*{\gamma\hPn{\pistar}\prn*{\cC_N(\pibar, \pihat)} + \epsapx'} \\
  \leq&~  \frac{2\gamma}{N}\brk*{2\gamma\prn*{ \mun(\cC_N(\pibar, \pihat))+\epstat }+ \epsapx'},
\end{align*}
where the first inequality follows from the fact that $\pibar(y\mid x) \geq N\pihat(y\mid x)$ for $(x,y)\in \cC_N(\pibar, \pihat)$, the second inequality uses \cref{pfeq:diff-P-to-E-gamma-2}: $\hPn{\pibar}\prn*{E}-\gamma\hPn{\pistar}\prn*{E}\leq \DEgamma[\gamma,\mu_n]{\pistar}{\pibar}=\epsapx'$ for any event $E$, and the third inequality uses \cref{pfeq:freedman-for-tournament-2}. 
Therefore, using $N\geq 8\gamma^2$, we know $\mun(\cC_N(\pibar, \pihat))\leq 5\epstat+2\epsapx'$. Then, using \cref{pfeq:freedman-for-tournament-1}, we have
\begin{align*}
  \Dcovtri[N]{\pistar}{\pibar}{\pihat}=\bbP_{\mu,\pistar}\prn*{\cC_N(\pibar, \pihat)}\leq 2\mun(\cC_N(\pibar, \pihat))+2\epstat\leq 12\epstat+4\epsapx'.
\end{align*}
By \cref{pfeq:Dcov-tri-ineq}, it holds that
\begin{align*}
  \Pcov[2N\gamma](\pihat)=\Dcovtri[2N\gamma]{\pistar}{\pistar}{\pihat}
  \leq \Dcovtri[2\gamma]{\pistar}{\pistar}{\pibar}+\Dcovtri[N]{\pistar}{\pibar}{\pihat},
\end{align*}
and we also have $\Dcovtri[2\gamma]{\pistar}{\pistar}{\pibar}=\Pcov[2\gamma](\pibar)\leq \DEgamma[\gamma,\mu]{\pistar}{\pibar}= \epsapx$ by \cref{pfeq:Pcov-to-E-gamma}. Combining the inequalities above, we can conclude that
\begin{align*}
  \Pcov[2N\gamma](\pihat)\leq \Dcovtri[N]{\pistar}{\pibar}{\pihat}+\epsapx
  \leq 12\epstat+4\epsapx'+\epsapx.
\end{align*}
Finally, using \cref{lem:mult_freedman}, we have $\epsapx'\leq 2\epsapx+\epstat$. This is the desired upper bound.
\end{proof}

\end{document}